\def\eqref#1{equation~\ref{#1}}
\def\1{\bm{1}}
\def\vtheta{{\bm{\theta}}}
\def\vb{{\bm{b}}}
\def\ve{{\bm{e}}}
\def\vu{{\bm{u}}}
\def\vv{{\bm{v}}}
\def\vw{{\bm{w}}}
\def\vx{{\bm{x}}}
\def\vy{{\bm{y}}}
\def\vz{{\bm{z}}}
\def\mA{{\bm{A}}}
\def\mB{{\bm{B}}}
\def\mD{{\bm{D}}}
\def\mH{{\bm{H}}}
\def\mI{{\bm{I}}}
\def\mP{{\bm{P}}}
\def\mQ{{\bm{Q}}}
\def\mS{{\bm{S}}}
\def\mT{{\bm{T}}}
\def\mX{{\bm{X}}}
\def\mY{{\bm{Y}}}
\def\mZ{{\bm{Z}}}
\def\mLambda{{\bm{\Lambda}}}
\DeclareMathAlphabet{\mathsfit}{\encodingdefault}{\sfdefault}{m}{sl}
\SetMathAlphabet{\mathsfit}{bold}{\encodingdefault}{\sfdefault}{bx}{n}
\def\gN{{\mathcal{N}}}
\newcommand{\E}{\mathbb{E}}
\newcommand{\R}{\mathbb{R}}
\def\1{\bm{1}}
\def\vtheta{{\bm{\theta}}}
\def\vb{{\bm{b}}}
\def\vu{{\bm{u}}}
\def\vv{{\bm{v}}}
\def\vw{{\bm{w}}}
\def\vx{{\bm{x}}}
\def\vy{{\bm{y}}}
\def\vz{{\bm{z}}}
\def\mA{{\bm{A}}}
\def\mB{{\bm{B}}}
\def\mD{{\bm{D}}}
\def\mH{{\bm{H}}}
\def\mI{{\bm{I}}}
\def\mP{{\bm{P}}}
\def\mQ{{\bm{Q}}}
\def\mS{{\bm{S}}}
\def\mT{{\bm{T}}}
\def\mX{{\bm{X}}}
\def\mY{{\bm{Y}}}
\def\mZ{{\bm{Z}}}
\def\mLambda{{\bm{\Lambda}}}
\DeclareMathAlphabet{\mathsfit}{\encodingdefault}{\sfdefault}{m}{sl}
\SetMathAlphabet{\mathsfit}{bold}{\encodingdefault}{\sfdefault}{bx}{n}
\def\gN{{\mathcal{N}}}
\newcommand{\inp}[2]{\left\langle #1,#2\right\rangle}
\newlist{assumpenum}{enumerate}{1} 
\setlist[assumpenum]{label=\Alph*., ref=\theassumption\Alph*, leftmargin=*}
\newcommand{\cB}{\mathcal{B}}
\newcommand{\cD}{\mathcal{D}}
\newcommand{\cF}{\mathcal{F}}
\newcommand{\cL}{\mathcal{L}}
\newcommand{\cN}{\mathcal{N}}
\newcommand{\cR}{\mathcal{R}}
\newcommand{\cW}{\mathcal{W}}
\newcommand{\N}{\mathbb{N}}
\newcommand{\tr}{\mathrm{tr}}
\newcommand{\diag}{\mathrm{diag}}
\newtheorem{theorem}{Theorem}[section]
\newtheorem{lemma}{Lemma}[section]
\newtheorem{corollary}{Corollary}[section]
\newtheorem{definition}{Definition}[section]
\newtheorem{assumption}{Assumption}[section]
\crefname{theorem}{Theorem}{Theorems}
\crefname{lemma}{Lemma}{Lemmas}
\crefname{proposition}{Proposition}{Propositions}
\crefname{remark}{Remark}{Remarks}
\crefname{corollary}{Corollary}{Corollaries}
\crefname{definition}{Definition}{Definitions}
\crefname{assumption}{Assumption}{Assumptions}
\crefname{conjecture}{Conjecture}{Conjectures}
\crefname{axiom}{Axiom}{Axioms}
\crefname{example}{Example}{Examples}
\renewcommand{\paragraph}{%
  \@startsection{paragraph}{4}%
  {\z@}{0ex}{-1em}%
  {\normalfont\normalsize\bfseries}%
}
\newcommand{\kaifeng}[1]{{\color{purple}Kaifeng: #1}}
\newcommand{\haodong}[1]{[{\color{orange}Haodong: #1}]}
\newcommand{\fakeE}{\Tilde{\E}}
\newcommand{\bias}{\mathrm{bias}}
\newcommand{\var}{\mathrm{var}}
\newcommand{\deltaA}{\delta_{\mathrm{A}}}
\newcommand{\normnone}[1]{\|#1\|}
\newcommand{\lnormnone}[1]{\left\|#1\right\|}
\title{Larger Datasets Can Be Repeated More: \\ A Theoretical Analysis of Multi-Epoch \\ Scaling in Linear Regression}
\author{
Tingkai Yan\thanks{Equal contribution.}~~$^{\beta}$
\quad Haodong Wen\footnotemark[1]~~$^{\alpha}$
\quad  
Binghui Li\footnotemark[1]~~$^{\gamma}$
\quad \\
\bf Kairong Luo$^\nu$ 
\quad
Wenguang Chen$^{\nu\lambda}$ 
\quad
Kaifeng Lyu\thanks{Corresponding author.}~~$^{\alpha}$\\
$^\alpha$Institute for Interdisciplinary Information Sciences, Tsinghua University\\
$^\beta$School of Mathematical Sciences, Peking University\\
$^\gamma$Center for Machine Learning Research, Peking University \\
$^\nu$Department of Computer Science and Technology, Tsinghua University\\
$^\lambda$Peng Cheng Laboratory\\
\texttt{yantingkai66@gmail.com}, \quad \texttt{whd25@mails.tsinghua.edu.cn} \\
\texttt{libinghui@pku.edu.cn}, \quad \texttt{luokr24@mails.tsinghua.edu.cn}\\
\texttt{\{cwg,klyu\}@mail.tsinghua.edu.cn}
}
\begin{document}

\maketitle

\begin{abstract}
    While data scaling laws of large language models (LLMs) have been widely examined in the one-pass regime with massive corpora,
    their form under limited data and repeated epochs remains largely unexplored.
    This paper presents a theoretical analysis of how a common workaround, training for multiple epochs on the same dataset, reshapes the data scaling laws in linear regression.
    Concretely, we ask: to match the performance of training on a dataset of size $N$ for $K$ epochs, how much larger must a dataset be if the model is trained for only one pass?
    We quantify this using the \textit{effective reuse rate} of the data, $E(K, N)$, which we define as the multiplicative factor by which the dataset must grow under one-pass training to achieve the same test loss as $K$-epoch training.
    Our analysis precisely characterizes the scaling behavior of $E(K, N)$ for SGD in linear regression under either strong convexity or Zipf-distributed data: (1) When $K$ is small, we prove that $E(K, N) \approx K$, indicating that every new epoch yields a linear gain; (2) As $K$ increases, $E(K, N)$ plateaus at a problem-dependent value that grows with $N$ ($\Theta(\log N)$ for the strongly-convex case), implying that larger datasets can be repeated more times before the marginal benefit vanishes.
    These theoretical findings point out a neglected factor in a recent empirical study by \citet{muennighoff2023scaling}, which claimed that training LLMs for up to $4$ epochs results in negligible loss differences compared to using fresh data at each step, \textit{i.e.}, $E(K, N) \approx K$ for $K \le 4$ in our notation. 
    Supported by further empirical validation with LLMs,
    our results reveal that the maximum $K$ value for which $E(K, N) \approx K$ in fact depends on the data size and distribution, 
    and underscore the need to explicitly model both factors in future studies of scaling laws with data reuse.

\end{abstract}

\section{Introduction}

Scaling laws~\citep{hestness2017deep,kaplan2020scaling,hoffmann2022training} have emerged as a central framework for characterizing the behavior of large language model (LLM) pre-training.
The Chinchilla scaling law~\citep{hoffmann2022training} established robust empirical trends in performance as a joint function of model size and dataset size under the one-pass training paradigm, in which each data point is used at most once.
This assumption, however, is becoming increasingly untenable.
The quest for more capable models has driven an unprecedented escalation in data requirements: from fewer than 10 billion tokens for GPT-2, to 300 billion for GPT-3~\citep{brown2020language}, 2 trillion for Chinchilla and LLaMA 2~\citep{hoffmann2022training, touvron2023llama}, and 36 trillion for Qwen3~\citep{yang2025qwen3}.
Projections further suggest that the pool of publicly available data may be exhausted as early as 2028~\citep{villalobos2024run}.

A common response to this emerging data scarcity is to train models for multiple epochs over the same dataset.
Recent empirical studies have begun to examine the consequences of such repetition: for example, \citet{muennighoff2023scaling} and \citet{xue2023repeat} show that moderate reuse can still yield competitive pre-training performance.
Yet the fundamental scaling behavior of multi-epoch training remains poorly understood—particularly from a theoretical standpoint.

In this paper, we study a fundamental question in understanding how multi-epoch training affects the data scaling laws: \textit{To what extent can training for $K$ epochs on $N$ samples be effectively seen as one-pass training with an increased number of data samples?}
Formally, let $\mathcal{L}(K, N)$ denote the expected loss of $K$-epoch training on $N$ samples. We define the \textit{effective dataset size} $N'(K, N)$ as the minimal number of samples in one-pass training that achieves a comparable or lower loss $\mathcal{L}(1, N') \leq \mathcal{L}(K, N)$.
In this paper, we are concerned with the ratio $E(K, N) = N'(K, N)/N$, which we term as the \textit{effective reuse rate} of the data, a key quantity that characterizes how many times larger the dataset must grow to match the same performance as $K$-epoch training (see the detailed version in \Cref{def:E_K}).

\begin{figure}[t]
    \vspace{-0.5in}
    \centering
    \begin{subfigure}[t]{0.32\linewidth}
         \includegraphics[width=\linewidth]{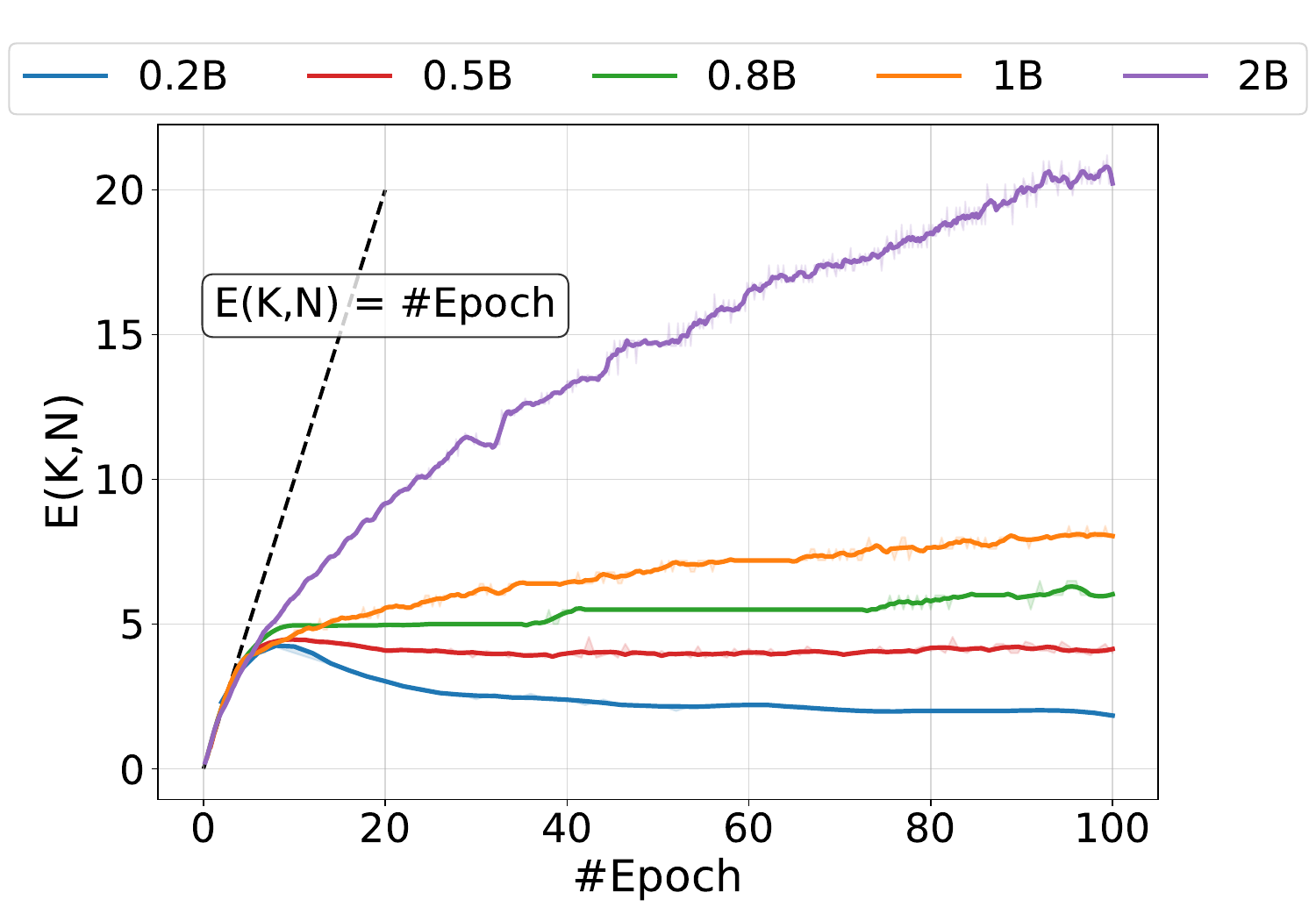}
        \caption{$E(K,N)$ as a function of $K$.}
        \label{fig:ek-k}
    \end{subfigure}  
    \hfill
    \begin{subfigure}[t]{0.32\linewidth}
        \includegraphics[width=\linewidth]{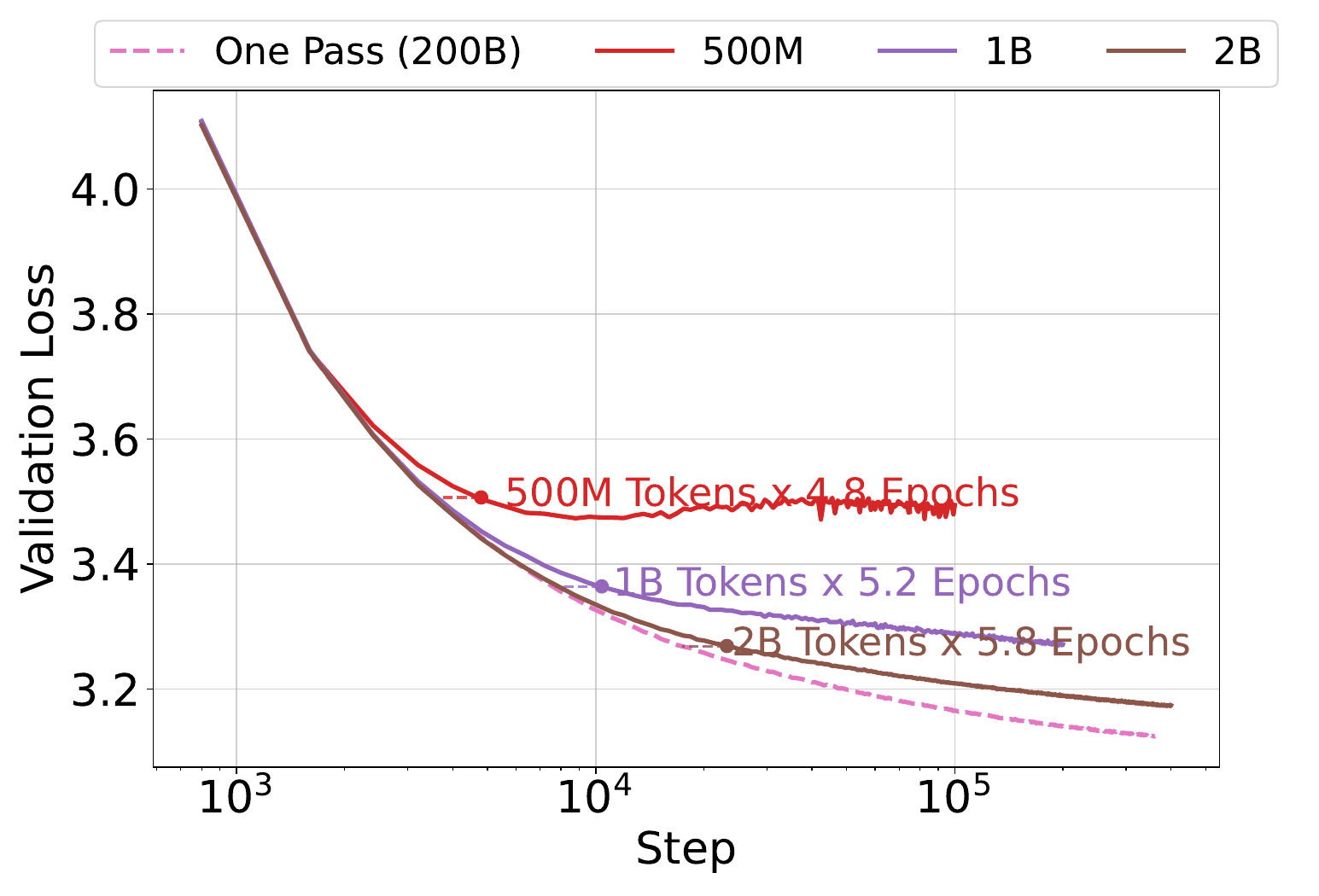}
        \caption{Validation loss of one-pass and multi-epoch training runs.}
        \label{fig:loss-step}
    \end{subfigure}
    \hfill
    \begin{subfigure}[t]{0.33\linewidth}
        \includegraphics[width=\linewidth,trim=0 0.22in 0 0,clip]{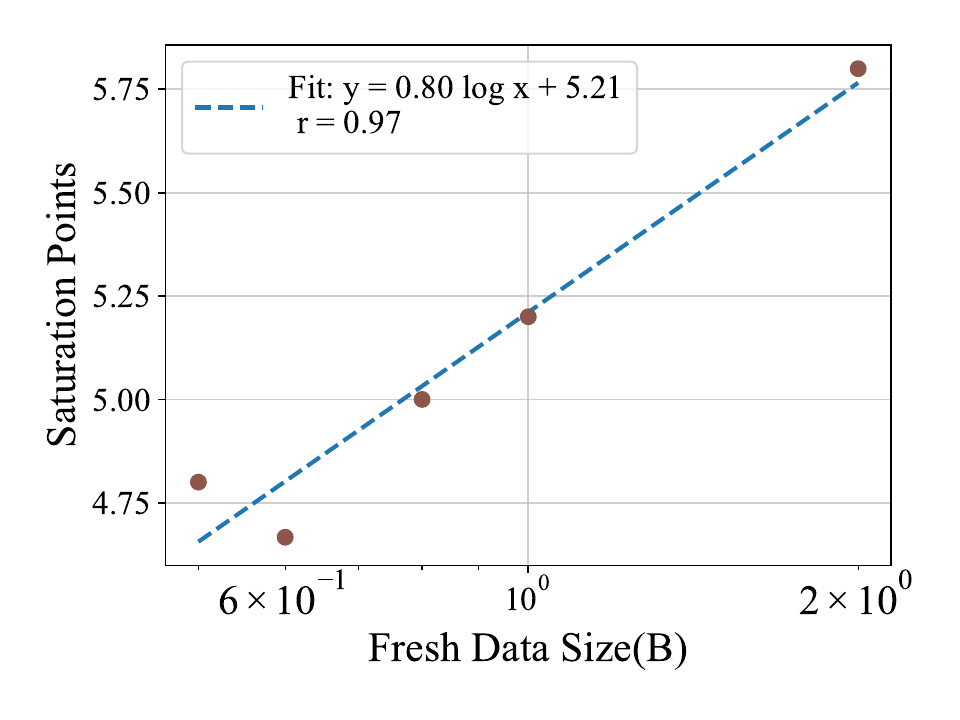}
        \caption{The saturation points grows linearly with $\log N$.}
        \label{fig:ek-n-saturation point}
    \end{subfigure}
    \caption{\small Our LLM experiments show that larger datasets can be repeated more.
    \textbf{(a)} The effective reuse rate $E(K, N) \approx K$ when $K$ is small, but eventually saturates at certain number of epochs. \textbf{(b)} The saturation point shifts to larger $K$ as the dataset size increases. \textbf{(c)} Empirically, the saturation points scales linearly with $\log N$.} \label{fig:main-llm}
    \vspace{-0.1in}
\end{figure}


In a recent study of scaling laws for multi-epoch training, \citet{muennighoff2023scaling} encountered this question and proposed an empirical approximation: $N'(K, N) = \left(1 + R^* (1 - e^{-(K-1)/R^*})\right) \cdot N$,
where $R^*$ is a fitted constant ($R^* \approx 15.39$ in their experiments).
This formula suggests that the benefit of repetition grows with $K$ but saturates exponentially at $(1 + R^*) \cdot N$ as $K$ increases. While supported by some empirical evidence in their study, this approximation still leads to a noticeable gap between scaling law predictions and empirical results (see Figure 3 in their paper). Moreover, the formula implies that the ratio $E(K,N) = N'(K, N)/N$ is independent of $N$, so the benefit of repeating the dataset $K$ times is equivalent to increasing its size by a factor that depends only on $K$, regardless of how large $N$ is. It remains unclear to what extent this independence holds in general.


\paragraph{Our Contributions.}
In this paper, we approach the above question on the effective reuse rate of data in the setting of linear regression, a setting that is simple enough to reveal the key mechanisms of data reuse, while still tractable for precise analysis under stochastic gradient descent (SGD).
We provide a theoretical characterization of $E(K, N)$ in various regimes, and point out a neglected factor in the empirical study of \citet{muennighoff2023scaling}: the effective reuse rate depends not only on the number of epochs $K$, but also on the dataset size $N$.
In fact, we show both theoretically and empirically that \textbf{larger datasets can be repeated more}.
Our main contributions are as follows:
\begin{enumerate}
\item In~\Cref{sec:strongly_convex}, we analyze multi-epoch SGD for linear regression under strong convexity.
We show that when $K$ is small, $E(K, N) \approx K$, so every new epoch leads to a linear gain. As $K$ increases, $E(K, N)$ saturates at a problem-dependent value of order $\Theta(\log N)$, suggesting that larger datasets can be repeated for more epochs before the marginal benefit vanishes.

\item In~\Cref{sec:one_hot}, we go beyond the strongly convex case and study a class of Zipf-law distributed data, and show that $E(K, N)$ exhibits a similar scaling behavior to the strongly convex case, except that the saturation point scales as a power of $N$ instead of $\log N$.

\item Technically, we derive the optimal learning rate (Lemma~\ref{thm:convex-opt-lr}) for multi-epoch SGD in linear regression and its corresponding approximation formula for the expected excess risk up to an $o(1)$ multiplicative error (Lemma~\ref{thm:convex-L-estimate}). These results may be of independent interest.

\item In~\Cref{sec:experiments}, we conduct LLM pretraining experiments with up to $200$B repeated tokens.
Consistent with our theoretical findings, the empirical results show that $E(K, N) \approx K$ for small $K$, but eventually saturates at an epoch number that scales linearly with $\log N$. 
See also~\Cref{fig:main-llm} for details.
\end{enumerate}

\section{Related Work} \label{sec:related}

\paragraph{Data Reuse in LLM Pre-Training.} Empirically, there is a long debate over the effect of data reuse in LLM pre-training. Some works~\citep{lee2021deduplicating,hoffmann2022training,hernandez2022scaling,wang2023data} suggested it may be harmful, while some works~\citep{taylor2022galactica} reported the benefit of data reuse when the number of epochs is small ($K \le 4$).
\citet{xue2023repeat} then discovered a degradation phenomenon in multi-epoch training and investigated relevant factors and regularization methods to tackle it. \citet{muennighoff2023scaling} trained LLMs under different configurations and also found that reusing data is as good as using fresh data in the first few epochs. Yet, as the number of epochs increases, the returns for repetitions diminish. In our work, from a theoretical perspective, we rigorously analyze the effect of data reuse using non-asymptotic techniques, and we define and calculate the effective reuse rate under two cases, shedding light on the theoretical understanding of data reusing in LLM pre-training.

\paragraph{Comparison with \citet{lin2025improved}.} A recent study on linear regression with data reusing~\citep{lin2025improved} is among the most relevant to our results. They showed that when the number of epochs is relatively small (smaller than some power of the dataset size), the order of loss remains the same as one pass SGD for the same iterations, which aligns with our results. However, their results only imply that $E(K,N) = \Theta(K)$ for small $K$, while our analysis directly gives the explicit loss characterization with $o(1)$ relative error bound and a more exact description of the effective reuse rate, which reflects the data reusing scaling behaviour. Our analysis is across various problem setups, and further shows the general scaling trend of data reusing under different problem setups.

\vspace{-0.07in}
\section{Preliminaries}
\label{sec:setup}

\paragraph{Notations.}
We use \( \|\cdot\| \) to denote the \( \ell_2 \)-norm of vectors and the corresponding operator norm of matrices. 
For two sequences \((A_n)_{n=0}^\infty\) and \((B_n)_{n=0}^\infty\), we write 
\(A_n = O(B_n)\), or alternatively $A_n \lesssim  B_n$, $B_n = \Omega(A_n)$, $B_n \gtrsim A_n$, if there exist constants \(C>0, N>0\) such that \(|A_n|\leq C|B_n|\) for all \(n \geq N\). 
We write $A_n = \Theta(B_n)$, or alternatively $A_n \asymp  B_n$, if both $A_n = O(B_n)$ and $A_n = \Omega(B_n)$ hold.
Moreover, for some variable $n$, we write $A_n = o_n(B_n)$ if for every constant \(c>0\), there exists \(n_0>0\) such that \(|A_n| < c|B_n|\) for all \(n \geq n_0\). In this paper, when $n$ is clear from the context, we write $A_n = o(B_n)$ for short. Furthermore, we write $A_n = \omega(B_n)$ if $B_n = o(A_n)$. For matrices $\mA_1, \mA_2,\dots,\mA_n$, we use $\prod_{l=1}^n \mA_l$ to denote the product $\mA_1\mA_2\dots\mA_n$. We define $\|\boldsymbol{u}\|_{\mS} := \sqrt{\boldsymbol{u}^\top \mS\boldsymbol{u}}$ for any vector $\boldsymbol{u}$ and any positive semi-definite (PSD) matrix $\mS$.


\paragraph{Linear Regression Problem.} We focus on a linear regression setup, where data point $(\vx,y) \in \mathbb{R}^{d}\times\mathbb{R}$ follows a joint distribution $\mathcal{D}$ and $\|\vx\|\leq D$ for some constant $D$. W.L.O.G., we assume that the covariance matrix of data input is diagonal, i.e., $\mH := \E[\vx\vx^\top] = \diag \left(\lambda_1, \lambda_2, \dots, \lambda_{d}\right) $, where $\lambda_1 \geq \lambda_2 \geq \dots \geq \lambda_{d}$. A direct corollary is that $\lambda_1 \leq D^2$. For a given data input $\vx$, the label $y$ is generated by $y:= \langle \vw^*, \vx \rangle + \xi$, where $\vw^* \in \mathbb{R}^{d}$ is the ground-truth weight and $\xi$ represents the independent random label noise with $\E[\xi] = 0$ and $\E[\xi^2] = \sigma^2$. We aim to train a linear model $f(\vx; \vw):=\langle \vw, \vx \rangle$ to predict the data label, where $\vw \in \mathbb{R}^{d}$ is the trainable parameter. We use MSE-loss $\ell(\vw; \vx, y):=\frac{1}{2}(f(\vx; \vw)-y)^2$ to measure the fitting error. Then, the population loss is defined as $\cL(\vw):= \E_{(\vx, y) \sim \cD}[\ell(\vw;\vx, y)]$. Further we define the excess risk $\cR(\vw) := \cL(\vw) - \frac{1}{2}\sigma^2$, which is the expected population loss minus the irreducible loss $\frac{1}{2}\sigma^2$.


\paragraph{Multi-Epoch SGD Training Algorithm.} Consider a finite training dataset with $N$ data points $\{(\vx_0,y_0),(\vx_1,y_1),\dots,(\vx_{N-1},y_{N-1})\}$, where the data points $(\vx_i, y_i)$ are i.i.d.~sampled from the distribution $\mathcal{D}$. We use $K$-epoch stochastic gradient descent (SGD) with random shuffling to minimize the loss function. And the initial parameter $\vw_0$ is set to $0$. Formally, we denote $K$ independent random permutations of $[N]$ by $\pi_1, \dots, \pi_K$. And we define $j_t := \pi_{k_t}(i_t)$, where $i_t := t \bmod N$, $k_t := \lfloor t / N \rfloor + 1$. Then we have the update rule for $K$-epoch SGD with $N$ data points
\begin{equation}
    \vw_{t+1} = \vw_{t} - \eta \nabla_{\vw}\ell(\vw_t; \vx_{j_t}, y_{j_t}) =\left(\mI-\eta\vx_{j_t}\vx_{j_t}^{\top}\right)\vw_t+\eta\xi_{j_t}\vx_{j_t}. \nonumber
\end{equation}
Next, given a $K$-epoch SGD over $N$ data points, with learning rate $\eta$, we define $\cW_{K,N,\eta}$ to be the distribution of $\vw_{KN}$. The randomness within $\vw_{KN}$ comes from the random draw of the dataset, label noise $\xi$, and the shuffling in SGD. Based on this, we define the expected excess risk of a given $K$-epoch SGD over $N$ data points, with learning rate $\eta$ as $\bar{\cR}(K, N; \eta) := \E_{\vw \sim \cW_{K,N,\eta}}[\cR(\vw)]$. We assume $\eta \leq D^{-2}$ for training stability.

We also consider multi-epoch training under more practical training configurations, for example, using a class of polynomially decaying learning-rate schedules. See \Cref{app:discussion} for more details.

\paragraph{Comparing Performance under Optimal Learning Rate Regime.} 
To compare the performance of one-pass and multi-epoch SGD, we consider the settings where the learning rates for both methods are optimally tuned.
Formally, we introduce the notion of the \textit{optimal expected excess risk} of $K$-epoch SGD for $N$ samples as $\bar{\cR}^*(K, N):=\min_{\eta \in (0, \frac{1}{D^2}]}\{ \bar{\cR}(K, N; \eta) \}$. To calculate this value analytically, in the next section, we will show that we can get a learning rate choice that can approximately achieve the above optimal expected excess risk $\bar{\cR}^*(K, N)$ both for one-pass and multi-epoch SGD.
Following our discussion in the introduction, we define the \textit{effective reuse rate} as follows:
\begin{definition}[Effective Reuse Rate]\label{def:E_K}
Given $K$-epoch SGD trained with $N$ fresh data samples, the effective reuse rate is defined as: $E(K,N) := \frac{1}{N}\min\{N' \ge 0: \bar{\cR}^*(1, N') \leq\bar{\cR}^*(K, N)\}$.
\end{definition}

\vspace{-0.05in}
That is, the effective reuse rate measures how many times larger the dataset must grow under one-pass training to match the performance of $K$-epoch training, both under the optimal learning rate regime.

\section{Multi-Epoch Scaling in Strongly Convex Linear Regression}
\label{sec:strongly_convex}
In the study of linear regression problems, the strongly convex case is a classical and central theoretical framework, serving as the standard entry point before relaxing to weaker conditions in many analyses~\citep{hastie2009elements,ge2019step}.
In \Cref{sec: main results strongly convex}, we first give the problem setups and the main results of the effective reuse rate. In \Cref{sec:strongly-convex-tech}, we give a proof sketch for our theoretical results, and the detailed proof of this section can be found in Appendix~\ref{sec:proof-convex}. 

\vspace{-0.07in}
\subsection{Main Results}\label{sec: main results strongly convex}

As we focus on the strongly convex case, we make the following assumption on the minimum eigenvalue of the Hessian matrix.
\begin{assumption}[Strong Convexity]\label{asp:strongly-convex}
    We assume that $\lambda_{d} \ge \mu\;$ for some constant $\mu > 0$.
\end{assumption}

\vspace{-0.05in}
For simplicity, we make the following prior for the ground-truth weight $\vw^*$.
\begin{assumption}[Parameter Prior] \label{asp:strongly-convex-parameter-prior}
    The ground truth $\vw^*$ satisfies $w^*_i \ne 0$ for all $i\in[d]$.
\end{assumption}

\vspace{-0.05in}
As the number of samples $N$ can be very large in practice, training on the entire dataset for a large amount of epochs can be computationally expensive.
This motivates us to impose an upper bound on the number of epochs $K$. Technically, this ensures that the accumulated error terms remain manageable in our analysis.
\begin{assumption}[Computationally feasible number of epochs]\label{asp:large-dataset}
We assume that the training dataset size $N$ and number of epochs $K$ satisfy $K = O(N^{0.1})$.
\end{assumption}

\vspace{-0.05in}
Here, the exponent $0.1$ is chosen for ease of calculation, though it may not be tight.




To compute $E(K,N)$, we first precisely characterize the optimal expected excess risk. 
In particular, we derive asymptotic expansions for $\Bar{\cR}^*(K,N)$ in the regimes $K=o(\log N)$ and $K=\omega(\log N)$, each expressed as a leading term accompanied by an explicitly controlled higher\mbox{-}order remainder.

\begin{theorem}[Multi-Epoch Data  Scaling Law]\label{thm:strongly-convex-scaling-law}
    Under \Cref{asp:strongly-convex,asp:strongly-convex-parameter-prior,asp:large-dataset}, for multi-epoch SGD with the number of epochs $K$, dataset size of $N$, it holds that
\[
\Bar{\cR}^*(K,N) = 
\begin{cases}
     \frac{\sigma^2\tr(\mH)}{8\lambda_d}(1+o_N(1)) \cdot \frac{\log(KN)}{KN} & \text{for } K = o(\log N), \\
     \frac{\sigma^2d}{2}(1+o_N(1)) \cdot \frac{1}{N} & \text{for } K = \omega(\log N).
\end{cases}
\]
\end{theorem}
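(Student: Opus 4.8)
The plan is to reduce Theorem \ref{thm:strongly-convex-scaling-law} to the two technical lemmas already advertised in the introduction: the optimal learning rate characterization (Lemma \ref{thm:convex-opt-lr}) and the approximate formula for $\bar{\cR}(K,N;\eta)$ up to an $o(1)$ multiplicative error (Lemma \ref{thm:convex-L-estimate}). Concretely, I expect Lemma \ref{thm:convex-L-estimate} to provide an expression of the form $\bar{\cR}(K,N;\eta) = (1+o_N(1))\bigl(B(K,N;\eta) + V(K,N;\eta)\bigr)$, where $B$ is a bias term decaying geometrically in the number of steps $KN$ with rate governed by $(1-\eta\lambda_i)$, and $V$ is a variance/noise term scaling like $\sigma^2 \eta \tr(\mH)$ modulated by how often each coordinate is refreshed. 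The first step is to plug the optimal $\eta$ from Lemma \ref{thm:convex-opt-lr} into this formula and simplify in each regime.

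For the regime $K = o(\log N)$, the intuition is that the bias term must be driven down to the same order as the variance term, which forces $\eta \asymp \frac{\log(KN)}{\lambda_d KN}$ (the smallest coordinate $\lambda_d \ge \mu$ is the bottleneck for the geometric bias decay). At this learning rate the bias contributes roughly $\|\vw^*\|^2$ times something polynomially small in $KN$, i.e.\ it is dominated, while the variance term becomes $\frac{\sigma^2}{2}\eta\tr(\mH)\cdot\frac{1}{2\lambda_d}\cdot(\text{const})$; matching the constant $\frac{1}{8}$ is then a direct computation. I would carry this out by: (i) writing $\bar{\cR}^*(K,N) = \min_\eta \bar{\cR}(K,N;\eta)$ and showing via Lemma \ref{thm:convex-opt-lr} that the minimizer lies in the stated window; (ii) Taylor-expanding $\log(1-\eta\lambda_i) \approx -\eta\lambda_i$ and bounding the remainder using Assumption \ref{asp:large-dataset} ($K = O(N^{0.1})$) to control cross-epoch correlation terms from random reshuffling; (iii) collecting the leading term as $\frac{\sigma^2\tr(\mH)}{8\lambda_d}\cdot\frac{\log(KN)}{KN}$ and absorbing everything else into $(1+o_N(1))$.

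For the regime $K = \omega(\log N)$, the picture changes: with $K$ epochs on a \emph{fixed} dataset of size $N$, SGD essentially converges (in expectation over the draw of the data) to the empirical ERM solution, and the excess risk is then dominated by the statistical fluctuation of that solution, which is $\frac{\sigma^2 d}{2N}(1+o_N(1))$ by the standard fixed-design-to-random-design passage. The step here is to show that once $K \gg \log N$ the optimization error (bias plus the part of the variance that depends on $\eta$) becomes negligible compared to $\frac{\sigma^2 d}{2N}$ — intuitively because the bias decays like $(1-\eta\lambda_d)^{KN}$ which is superpolynomially small, and the in-sample noise floor is $\Theta(\sigma^2 d/N)$ regardless of $\eta$ — and then identify the constant via the $\chi^2$-type expectation $\E[\tr((\widehat{\mH})^{-1}\mH)]\approx d$ for $N$ large.

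The main obstacle I anticipate is the cross-epoch coupling introduced by random reshuffling without replacement: unlike i.i.d.\ one-pass SGD, the $K$ passes reuse the same $N$ points, so the iterates $\vw_{kN+i}$ are correlated across $k$, and the clean product-of-independent-matrices structure breaks. Controlling this is presumably exactly what Assumption \ref{asp:large-dataset} buys us — $K = O(N^{0.1})$ should make the reshuffling-induced corrections to the bias/variance decomposition lower order — but verifying that the error terms in Lemma \ref{thm:convex-L-estimate} really are $o_N(1)$ \emph{uniformly over the relevant range of $\eta$} (so that the $\min_\eta$ commutes with the approximation) is the delicate part. A secondary subtlety is matching the explicit constants ($\frac{1}{8}$ in the small-$K$ regime, $\frac{d}{2}$ in the large-$K$ regime), which requires being careful about the factor-of-two conventions in the MSE loss $\ell=\frac12(\cdot)^2$ and in the definition of $\cR$.

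Finally, Theorem \ref{thm:strongly-convex-scaling-law} as stated is only about $\bar{\cR}^*(K,N)$, not yet about $E(K,N)$; but I would note that the effective reuse rate follows immediately by inverting the one-pass formula ($K=1$ gives $\bar{\cR}^*(1,N') \asymp \frac{\sigma^2\tr(\mH)}{8\lambda_d}\cdot\frac{\log N'}{N'}$) and solving $\bar{\cR}^*(1,N') = \bar{\cR}^*(K,N)$ for $N'$, which is the content of the downstream corollaries rather than of this theorem.
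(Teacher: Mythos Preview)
Your proposal for the small-$K$ regime is correct and matches the paper closely: plug $\eta' = \frac{\log(\rho KN)}{2\lambda_d KN}$ from Lemma~\ref{thm:convex-opt-lr} into the risk formula of Lemma~\ref{thm:convex-L-estimate}, observe that the bias contribution is $O(1/T)$ while the within-epoch variance term gives $\frac{\eta\sigma^2\tr(\mH)}{4} = \frac{\sigma^2\tr(\mH)\log T}{8\lambda_d T}$, and absorb the rest into $(1+o(1))$. Your identification of the uniformity issue for the $o(1)$ error is also on point; the paper handles it exactly as you anticipate, by first narrowing $\eta^*$ to a window and then approximating within that window.

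For the large-$K$ regime your route diverges from the paper's. You propose to argue that multi-epoch SGD converges to the empirical ERM solution and then invoke the statistical error of ERM via $\E[\tr(\widehat{\mH}^{-1}\mH)]\approx d$. This heuristic is sound and could be made rigorous, but it would require two extra lemmas (constant-step SGD $\to$ ERM up to $O(\eta)$ fluctuation, plus the ERM excess risk computation) that the paper never establishes. The paper instead stays entirely inside the dynamics formula of Lemma~\ref{thm:convex-L-estimate}, which splits the variance into a \emph{within-epoch} piece $\widehat{\cR}_3 \approx \frac{\eta\sigma^2\tr(\mH)}{4}$ and a \emph{cross-epoch} piece
\[
\widehat{\cR}_2(K,N;\eta) = \frac{\sigma^2}{N}\,\tr\!\left(\frac{\bigl(\mI-(\mI-\eta\mH)^{KN}\bigr)\bigl((\mI-\eta\mH)^{N}-(\mI-\eta\mH)^{KN}\bigr)}{\mI+(\mI-\eta\mH)^{N}}\right).
\]
Your model ``$V\sim\sigma^2\eta\tr(\mH)$'' captures only $\widehat{\cR}_3$ and misses $\widehat{\cR}_2$. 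When $K=\omega(\log N)$ and $\eta=\eta'=o(1/N)$, one has $(1-\eta\lambda_i)^{N}=1+o(1)$ and $(1-\eta\lambda_i)^{KN}=o(1)$, so each diagonal entry of the fraction tends to $\tfrac12$ and $\widehat{\cR}_2 \to \frac{\sigma^2 d}{2N}$ directly, while $\widehat{\cR}_3=o(1/N)$. Thus the $\frac{\sigma^2 d}{2N}$ floor falls out of the already-proved decomposition with no ERM argument needed. Your route would land on the same constant, but the paper's is shorter given the lemmas in hand; conversely, your ERM framing makes the \emph{origin} of the floor more transparent.
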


\vspace{-0.1in}
\Cref{thm:strongly-convex-scaling-law} describes how expected excess risk decays with number of epochs $K$ and dataset size $N$ when choosing the optimal learning rate. When $K\ll \log N$, then $\Bar{\cR}^*(K,N) =\Theta\left(\frac{\log T}{T}\right)$ where $T = KN$; by contrast, when $K\gg \log N$, then $\Bar{\cR}^*(K,N) =\Theta\left(\frac{1}{N}\right)$ which does not depend on $K$, showing that endless data reuse turns out to be useless.

Next we propose the expression of $E(K,N)$ by applying \Cref{thm:strongly-convex-scaling-law}.
\begin{theorem}\label{thm:convex-E_K}
    Under \Cref{asp:strongly-convex,asp:strongly-convex-parameter-prior,asp:large-dataset}, for multi-epoch SGD with the number of epochs $K$, dataset size of $N$, it holds that
\[
E(K,N) = 
\begin{cases}
     (1 + o_N(1))\cdot K & \text{for } K = o(\log N), \\
     \frac{\tr(\mH)}{4\lambda_d d}(1+o_N(1)) \cdot \log N & \text{for } K = \omega(\log N).
\end{cases}
\]
\end{theorem}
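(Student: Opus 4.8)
The plan is to derive $E(K,N)$ by directly inverting the multi-epoch scaling law of \Cref{thm:strongly-convex-scaling-law}. By \Cref{def:E_K} we have $E(K,N) = N'/N$ with $N' := \min\{N'' \ge 0 : \bar\cR^*(1,N'') \le \bar\cR^*(K,N)\}$, so the task reduces to locating the one-pass sample size at which the curve $N'' \mapsto \bar\cR^*(1,N'')$ first drops below the threshold $\theta := \bar\cR^*(K,N)$. First I would read off $\theta$ from \Cref{thm:strongly-convex-scaling-law} in each of the two regimes $K = o(\log N)$ and $K = \omega(\log N)$. Then, to describe $\bar\cR^*(1,\cdot)$, I would reapply the same theorem with a single epoch --- legitimate because ``$1 = o(\log N'')$'' holds automatically as $N'' \to \infty$ --- obtaining $\bar\cR^*(1,N'') = \frac{\sigma^2\tr(\mH)}{8\lambda_d}(1+o_{N''}(1))\cdot\frac{\log N''}{N''}$.

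The inversion itself I would carry out by a sandwich argument. Fixing an arbitrary $\eps > 0$, I would exhibit an explicit candidate $\hat N' = \hat N'(K,N)$ and show that for all large $N$,
\[
\bar\cR^*\!\big(1,\,(1+\eps)\hat N'\big) \le \theta < \bar\cR^*\!\big(1,\,N''\big)\quad\text{for every } N'' \le (1-\eps)\hat N',
\]
which pins $N'$ into $\big((1-\eps)\hat N',\,(1+\eps)\hat N'\big]$ and hence, after $\eps \to 0$, gives $E(K,N) = (1+o_N(1))\,\hat N'/N$. The ingredient that makes both inequalities work is that for any fixed $\eps' > 0$ there is $N_0$ so that $\bar\cR^*(1,N'')$ lies between $(1\pm\eps')\frac{\sigma^2\tr(\mH)}{8\lambda_d}\frac{\log N''}{N''}$ for all $N'' \ge N_0$, combined with monotonicity of $x \mapsto \frac{\log x}{x}$ on $[3,\infty)$; the finitely many indices $N'' < N_0$ need only a separate remark, since each has $\bar\cR^*(1,N'') \ge c_0 > 0$ (using $\sigma > 0$) while $\theta \to 0$, so they never satisfy the threshold once $N$ is large.

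It then remains to pick $\hat N'$ and match constants. For $K = o(\log N)$, \Cref{thm:strongly-convex-scaling-law} gives $\theta = \frac{\sigma^2\tr(\mH)}{8\lambda_d}(1+o_N(1))\frac{\log(KN)}{KN}$; since $K \le \log N$ eventually, $\log(KN) = (1+o_N(1))\log N$, so taking $\hat N' = KN$ makes $\frac{\log\hat N'}{\hat N'} = (1+o_N(1))\frac{\log(KN)}{KN}$ and the leading constant $\frac{\sigma^2\tr(\mH)}{8\lambda_d}$ cancels, yielding $E(K,N) = (1+o_N(1))K$. For $K = \omega(\log N)$, $\theta = \frac{\sigma^2 d}{2}(1+o_N(1))\frac{1}{N}$; here I would take $\hat N' = \alpha N\log N$ with $\alpha := \frac{\tr(\mH)}{4\lambda_d d}$, chosen precisely so that $\log\hat N' = (1+o_N(1))\log N$ and $\frac{\sigma^2\tr(\mH)}{8\lambda_d}\cdot\frac{\log\hat N'}{\hat N'} = \frac{\sigma^2\tr(\mH)}{8\lambda_d\,\alpha}\cdot\frac{1+o_N(1)}{N} = \frac{\sigma^2 d}{2}\cdot\frac{1+o_N(1)}{N}$, giving $E(K,N) = (1+o_N(1))\alpha\log N = \frac{\tr(\mH)}{4\lambda_d d}(1+o_N(1))\log N$.

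The main obstacle I anticipate is a bookkeeping one: closing the sandwich requires showing that the several compounding $o(1)$ factors --- from \Cref{thm:strongly-convex-scaling-law} at $(K,N)$, from the same theorem at $(1,(1\pm\eps)\hat N')$, and from $\log(KN) = (1+o_N(1))\log N$ and $\log(\alpha N\log N) = (1+o_N(1))\log N$ --- are all eventually dominated by the fixed multiplicative slack $(1+\eps)^{\pm1}$, which forces the auxiliary accuracy $\eps'$ to be chosen small relative to $\eps$ and the various ``for $N$ large enough'' thresholds to be reconciled. A secondary, milder nuisance is that \Cref{thm:strongly-convex-scaling-law} only pins down $\bar\cR^*(1,\cdot)$ asymptotically, so the exact curve need not be monotone; this is why I would route the inversion through the monotone envelope $\log x/x$ together with the finite-cardinality check at small $N''$, rather than invoking monotonicity of $\bar\cR^*(1,\cdot)$ directly.
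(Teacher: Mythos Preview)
Your proposal is correct and follows essentially the same approach as the paper: both derive $E(K,N)$ by inverting the scaling law of \Cref{thm:strongly-convex-scaling-law}, reducing the problem to locating $T'$ with $\frac{\log T'}{T'}(1+o(1))$ matching the target value, and handling the inversion via the eventual monotonicity of $x\mapsto \frac{\log x}{x}$. The paper packages the inversion slightly differently---first proving a crude sandwich ($T'\asymp T$ in the small-$K$ case, $N\le T'\le N^{3/2}$ in the large-$K$ case) and then refining via convexity/mean-value arguments---whereas you go directly to the $\eps$-sandwich around an explicit candidate $\hat N'$, but these are interchangeable implementations of the same idea.
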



\vspace{-0.1in}
\Cref{thm:convex-E_K} pinpoints two regimes for the effective reuse rate in the strongly convex case. 
The first one is the \textbf{effective‐reuse regime}: when $K \ll \log N$, then $E(K,N)=K\,(1+ o(1))$. This suggests that each extra epoch is essentially as valuable as a fresh pass. 
The second one is the \textbf{limited‐reuse regime}: when $K \gg \log N$, then $E(K,N)=\frac{\tr(\mH)\log N}{4\lambda_d d}(1+o_N(1))$, which means additional epochs yield only logarithmic gains. This further implies that the model has effectively “seen” the dataset enough times that additional repetition is redundant.


Together, these two asymptotic descriptions expose a phase transition when the quantity $\lim_{N \to \infty} \frac{K}{\log N}$ changes from $0$ to $\infty$.  For the former case ($\lim_{N \to \infty} \frac{K}{\log N} = 0$), multi‐epoch training behaves like unlimited data augmentation; for the latter ($\lim_{N \to \infty} \frac{K}{\log N} = \infty$),
the benefits of reusing data all but vanish, capping $E(K,N)$ at $\Theta(\log N)$.

\paragraph{Larger Datasets Can Be Repeated More.} 
Our theorem provides the following insight.
Fixing the data distribution, as we collect more data, the largest possible epoch number $K$ to stay in the effective-reuse regime also increases.
Consequently, larger datasets can sustain more epochs while maintaining linear gains from data reuse.
Specifically, for the setup we study in this section, if we have collected $N$ data points in total, then with multi-epoch training, we can get a performance comparable to one-pass training on $\Theta(N \log N)$ data points, which is superlinear in the number of data points we collected.
This finding highlights a neglected factor in the data-constrained scaling laws proposed in \citet{muennighoff2023scaling}, which assumed a uniform effective number of epochs across different fresh data sizes.
In~\Cref{sec:experiment-llm}, we validate this insight by showing that the effective reuse rate indeed increases with the dataset size in LLM pretraining.

\paragraph{Discussion: Effect of Learning Rate Decay.} The factor $\log N$ is in fact associated with the use of constant learning rate in our setting, where $\eta$ may depend on the total number of steps but remains fixed throughout training.
While our main focus is on the constant learning rate setting, introducing learning rate decay may lead to different scaling behaviors and warrants further theoretical investigation.
Under learning rate decay, for example using $\eta_t = \frac{\eta_0}{1 + b t^\alpha}$ with $b > 0$ and $0.5 < \alpha \le 1$ at step $t$, by a direct calculation based on the setup of Theorem 1 in \citet{ge2019step}, it is not hard to show that one-pass training yields $\Bar{\cR}^*(1,N) = O(\kappa_{\mH} \cdot \frac{\sigma^2 d}{N})$ for sufficiently large $N$ when $\alpha=1$, where $\kappa_{\mH} := \frac{\lambda_1}{\lambda_d}$ is the condition number of $\mH$. 
This bound loses only by a constant factor compared to the statistical minimax rate $\frac{\sigma^2 d}{2N}$ implied by the Cramér--Rao lower bound~\citep{jain2017markov}.
Therefore, under learning rate decay, the effective reuse rate of multi-epoch training is at most $O(\kappa_{\mH})$.
In~\Cref{app:discussion}, we provide a preliminary analysis comparing one-pass and multi-epoch training in the limit of infinitely many epochs, showing that multi-epoch training can almost attain the statistical minimax rate and achieve $E(\infty, N) = \Omega(\kappa_{\mH})$.
It remains an interesting direction to understand how $E(K, N)$ saturates to this value and how it depends on the dataset size $N$.

\vspace{-0.07in}
\subsection{Proof Sketch}\label{sec:strongly-convex-tech}

\vspace{-0.03in}
We now provide a proof sketch of our main results. 
First, we need to compute the optimal expected excess risk $\bar{\cR}^*(K,N)$. This requires us to compute $\bar{\cR}(K,N;\eta)$ and then select the optimal learning rate $\eta^*$ that minimizes $\bar{\cR}(K,N;\eta)$. However, due to the random shuffling and multi-pass processing of the training data, directly analyzing $\bar{\cR}(K,N;\eta)$ is intractable. To overcome this, we seek an analytic approximation of $\bar{\cR}(K,N;\eta)$, which is derived through the following steps.

\paragraph{Step 1: Bias-Variance Decomposition for Training Dynamics.} \label{step:bias-var-decomposition} 
Following the widely-applied bias-variance decomposition approach to analyzing the dynamics of SGD training~\citep{neu2018iterate, ge2019step, zou2021benign, wu2022iterateriskboundssgd}, we define $\vtheta_t = \vw_t - \vw^*$ and examine the following two processes of bias and variance: $\vtheta_{t+1}^{\bias} = \vtheta_t^{\bias} - \eta \inp{\vtheta_t^{\bias}}{\vx_{j_t}}\vx_{j_t}, \vtheta_{t+1}^{\var} = \vtheta_t^{\var} - \eta \inp{\vtheta_t^{\var}}{\vx_{j_t}}\vx_{j_t} + \eta \xi_{j_t}\vx_{j_t}$,
where the two processes are initialized as $\vtheta_0^{\bias} = \vw_0 - \vw^*$ and $\vtheta_0^{\var} = \boldsymbol{0}$.
It follows that $\vtheta_t = \vtheta_t^{\bias} + \vtheta_t^{\var}$, with $\E[\vtheta_t^{\text{var}}] = \boldsymbol{0}$. We can then decompose the excess risk $\cR(\vw_t)$ into two components: the \textit{bias term} and the \textit{variance term}, which we formalize as follows $\cR(\vw_t) = \frac{1}{2}\left\|\vtheta_{t}\right\|_{\mH}^2 =  \frac{1}{2}\left\|\vtheta_t^{\bias}\right\|_{\mH}^2 + \frac{1}{2}\left\|\vtheta_t^{\var}\right\|_{\mH}^2.$

\paragraph{Step 2: Analytic Risk Approximation by Matrix Concentration.} A key challenge in tracking the dynamics of multi-epoch SGD training arises from the non-commutative nature of the matrices in the weight updates, which depend on randomly shuffled and multi-pass data. 
For example, the bias weight evolves as
$\vtheta_{KN}^{\bias} = \left(\prod_{k=1}^{K} \left(\prod_{l=1}^{N} \left(\mI-\eta \vx_{\pi_k(l)}\vx_{\pi_k(l)}^{\top}\right)\right)\right) \vtheta_{0}^{\bias},$
where we can see that one data point appears more than once across different epochs. Thus, the above matrix multiplication involves massive correlated data, which makes calculating the bias term $\E\left[\left\|\vtheta_{KN}^{\bias}\right\|_{\mH}^2\right]$ intractable.
To resolve this issue, we borrow tools from concentration inequalities for matrix products~\citet{huang2022matrix}. Specifically, we use the following result:
\begin{lemma}[Corollary of Theorem 7.1 in \citet{huang2022matrix}]
\label{lem:un-shuffle-A-EA-expectation-main}
    Given $n$ data points such that $\vz_0, \cdots \vz_{n-1} \overset{i.i.d}{\sim} \cN(0, \mH)$, and defining $\mA = \prod_{j=0}^{n-1}\left(\mI-\eta \vz_{j}\vz_{j}^{\top}\right)$, we have
    $\E\|\mA-\E\mA\|^l \leq \left(\sqrt{\deltaA\eta^2nl}\right)^l,$
    where $\deltaA:= \Tilde{C}8eD^4\log d$ for some absolute constant $\Tilde{C} > 0$.
\end{lemma}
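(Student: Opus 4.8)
Since the lemma is stated as a corollary of Theorem~7.1 of \citet{huang2022matrix}, the plan is to instantiate that matrix‑product concentration bound for the i.i.d.\ rank‑one factors $\mM_j := \mI - \eta\vz_j\vz_j^\top$, $j = 0,\dots,n-1$, whose product is $\mA$. So the first thing I would do is pin down the two ingredients Theorem~7.1 needs: a contraction bound on the expected factors, and a per‑factor fluctuation parameter. The expected factor is $\E[\mM_j] = \mI - \eta\mH$, and since $\mH$ is p.s.d.\ with $\|\mH\| = \lambda_1 \le D^2$ and $\eta \le D^{-2}$, we have $\mathbf{0} \preceq \eta\mH \preceq \mI$, hence $\|\E[\mM_j]\| = \|\mI - \eta\mH\| \le 1$. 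This is exactly the ``non‑expansive in expectation'' hypothesis that keeps the product from blowing up, so no rescaling of the factors is needed.

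Next I would control the centered factor $\mM_j - \E[\mM_j] = -\eta(\vz_j\vz_j^\top - \mH)$. Its matrix second moment is computed directly from Gaussian (Isserlis) moment identities: for $\vz_j \sim \cN(\mathbf{0},\mH)$ one has $\E[\|\vz_j\|^2\vz_j\vz_j^\top] = (\Tr\mH)\mH + 2\mH^2$, hence
\[
\E\!\left[(\vz_j\vz_j^\top - \mH)^2\right] = (\Tr\mH)\,\mH + \mH^2 \preceq \lambda_1\bigl(\Tr\mH + \lambda_1\bigr)\,\mI ,
\]
and the higher moments of $\vz_j\vz_j^\top - \mH$ are sub‑exponential and bounded the same way (by an $\varepsilon$‑net argument on the sphere, or directly from the moment form of matrix Rosenthal), which is what lets Theorem~7.1 be run at the level of $l$‑th moments rather than just second moments. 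Feeding this per‑factor control into Theorem~7.1, the $n$ independent fluctuations combine---their variance proxies adding across factors, with the usual $l$‑th‑moment (Khintchine/Rosenthal) constant---to yield a bound of the form $\E\|\mA - \E\mA\|^l \le (\text{const}\cdot\eta^2 n l)^{l/2}$. Absorbing the universal constants, the $\eta\le D^{-2}$ stability bound, the per‑factor scale $\lambda_1(\Tr\mH+\lambda_1)\lesssim D^4$, and the dimensional overhead that Theorem~7.1 charges (rendered as $\log d$) into the single constant $\Tilde{C}$---i.e.\ writing the combined per‑factor parameter as $\deltaA/(8e) = \Tilde{C}D^4\log d$---gives exactly $\E\|\mA - \E\mA\|^l \le \bigl(\sqrt{\deltaA\,\eta^2 n l}\bigr)^l$.

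The step I expect to be the main obstacle is matching the precise hypotheses and normalization of Theorem~7.1, and in particular pinning down which dimensional factor it contributes. Theorem~7.1 is naturally phrased either under a uniform bound $\|\mM_j\|\le 1$ a.s.---which here \emph{fails}, since $\|\vz_j\vz_j^\top\| = \|\vz_j\|^2$ is unbounded for Gaussian $\vz_j$---or under a moment condition; so I would either invoke its moment version directly with the sub‑exponential bounds above, or first truncate onto $\{\|\vz_j\|^2 \le R\}$ with $R$ chosen so that $\eta R \le 2$ while the excluded tail is polynomially small in $n$ and $d$ (crudely bounding the product off that event), and then apply the bounded version. Either route is where the $\log d$---and, if it appears, a harmless $\log n$ absorbed by \Cref{asp:large-dataset}---enters $\deltaA$; obtaining $\log d$ rather than a full factor of $d$ is the delicate point, and I would expect it to rely on the multiplicative structure Theorem~7.1 exploits. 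The rest---collapsing the absolute constants into $\Tilde{C}$ and converting between the ``$(\E\|\cdot\|^l)^{1/l}$'' form and the normalization of Theorem~7.1---is routine bookkeeping.
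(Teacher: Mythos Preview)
Your plan takes the Gaussian hypothesis in the lemma statement at face value and then works hard to cope with unbounded factors (Isserlis moments, truncation, a hypothetical ``moment version'' of Theorem~7.1). That is the wrong fork. In the paper's problem setup (\Cref{sec:setup}) the data are assumed to satisfy $\|\vx\|\le D$ almost surely; the $\cN(0,\mH)$ in the lemma statement is a notational slip, and the actual proof (\Cref{lem:un-shuffle-A-EA-expectation}) simply uses boundedness. With $\|\vx_j\|\le D$ one has the deterministic bound $\|\mY_j-\E\mY_j\|=\eta\|\vx_j\vx_j^\top-\mH\|\le 2\eta D^2=:\sigma_j$ and $\|\E\mY_j\|\le 1=:m_j$, so Theorem~7.1 of \citet{huang2022matrix} applies in its \emph{bounded} form and delivers a sub-Gaussian \emph{tail} inequality
\[
\mathbb{P}\{\|\mA-\E\mA\|\ge t\}\le d\,\exp\!\Bigl(-\tfrac{t^2}{8eD^4\eta^2 n}\Bigr),
\]
not a direct moment bound. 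Your plan to ``run Theorem~7.1 at the level of $l$-th moments'' is not how the paper proceeds, and it is not clear such a version exists in the form you need.

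The second step---and the place where the $\log d$ you were unsure about actually enters---is a tail-to-moment conversion. The paper splits $\E\|\mQ\|^l$ at the threshold $t_\star^2=16eD^4\eta^2 n\log d$: below $t_\star$ one trivially pays $t_\star^{\,l}$; above $t_\star$ the prefactor $d$ in the tail is absorbed into the exponent, leaving a clean sub-Gaussian tail $\exp(-t^2/(16eD^4\eta^2 n))$, whose $l$-th moment is $\le (C\cdot 16eD^4\eta^2 n\,l)^{l/2}$ by the standard sub-Gaussian moment bound (Proposition~2.5.2 in \citet{vershynin2018high}). Summing the two pieces and absorbing constants gives $(\deltaA\eta^2 n l)^{l/2}$ with $\deltaA=\tilde C\cdot 8eD^4\log d$. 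So: drop the Gaussian/Isserlis computations and the truncation scheme, use the a.s.\ bound $\|\vx\|\le D$ to feed Theorem~7.1 directly, and then integrate the resulting tail; that is the whole argument.
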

However, several obstacles prevent us from directly applying \Cref{lem:un-shuffle-A-EA-expectation-main} to our problem. For example, we actually need to control error terms like $\E\left\|\prod_{i=K}^{k+1}\mA^{(i)} - (\mathbb{E}\mA)^l \right\|$, where $\mA^{(i)}$ represents the product of sequential updates through all samples in epoch $i$ (see the formal definition in \Cref{app:A_def}, Appendix~\ref{app:notation}). To address this, our main idea is to derive a tight upper bound for the original term, and decompose this upper bound into the sum of a series of sub-terms for which we can apply \Cref{lem:un-shuffle-A-EA-expectation-main}. (see the detailed derivation in Appendix~\ref{app:var-term-proof} and Appendix~\ref{app:bias-proof})

Finally, we derive an error bound on matrix deviations based on our calculations, which is a higher-order infinitesimal of the main term when $\eta \in \left[\Omega\left(T^{-1}\right), o(T^{-\frac{3}{4}})\right]$ and $K = o\left(\eta^{-1} T^{-\frac{3}{4}}\right)$, with $T:=KN$ denoting the total number of training steps. This provides a theoretical guarantee for us to approximate the risk function with a tractable expression. For the bias term, we have
\begin{align*}
    \E\left[\left\|\vtheta_{KN}^{\bias}\right\|_{\mH}^2\right]
    &= \E\left[\left\|\left(\prod_{k=1}^{K} \left(\prod_{l=0}^{N-1} \left(\mI-\eta \vx_{\pi_k(l)}\vx_{\pi_k(l)}^{\top}\right)\right)\right)\vtheta_0\right\|_{\mH}^2\right] \\
    &\approx \left\|\left(\prod_{k=1}^{K} \E\left[\prod_{l=0}^{N-1} \left(\mI-\eta \vx_{\pi_k(l)}\vx_{\pi_k(l)}^{\top}\right)\right]\right)\vtheta_0\right\|_{\mH}^2 \\
    &=\left\|\left((\mI-\eta\mH)^{KN}\right)\vtheta_0\right\|_{\mH}^2,
\end{align*}
where the approximation step follows from \Cref{lem:un-shuffle-A-EA-expectation-main}, and the last equation follows the facts that $\E\left[\vx_{\pi_k(l)}\vx_{\pi_k(l)}^{\top}\right] = \mH$ and $\vx_i$ is uncorrelated with $\vx_j$ for $i \neq j$.
For the variance term, the data correlation issue is similar to what we met in the bias term case. Again, leveraging \Cref{lem:un-shuffle-A-EA-expectation-main} and following a similar analysis, we can get an approximation formula for the variance term as shown:
\begin{align*}
        \E\left[\left\|\vtheta_{KN}^{\var}\right\|_{\mH}^2\right]&\approx\frac{2\sigma^2}{N} tr\left(\frac{\left(\mI -(\mI -\eta\mH)^{KN}\right)\left((\mI -\eta\mH)^{N} -(\mI -\eta\mH)^{KN}\right)}{\mI + (\mI -\eta\mH)^N}\right)\\
    &~~~~+{\eta\sigma^2}\inp{\mH}{(\mI - (\mI - \eta \mH)^{2KN})(2\mI-\eta\mH)^{-1}}.
\end{align*}

\paragraph{Step 3: Narrowing the Range for Optimal Learning Rate.}
However, although we have an analytic approximation for risk, it is important to note that this approximation holds only for a specific range of parameters.
For a detailed discussion, refer to Lemma~\ref{thm:convex-L-estimate}.
To mitigate this, we first determine a reasonable range for the optimal learning rate in two steps: First, we choose $\Tilde{\eta} = \frac{\log KN}{2\lambda_d KN}$ as a reference learning rate; Then, by comparing the losses for the reference learning rate and other candidate learning rates, we can eliminate a large range of values. 
This analysis helps narrow down the potential range of learning rates (Lemma~\ref{lem:convex-opt-lr-range-small-K} for small $K$ and Lemma~\ref{lem:convex-opt-lr-range-large-K} for large $K$). Within this range, we further simplify the risk approximation to make it more tractable for optimization, as shown in the following lemmas:
\begin{lemma}[Small $K$]
\label{lem:convex-L-approx-small-K-new}
    Let $\mH = \mP\mD\mP^{\top}$ be the canonical form of $\mH$ under similarity, and let $\Tilde{\theta}_{d}^2 := \sum_{l=d-n_d+1}^d(\mP\vtheta_0)_{l}^2$.
    Under \Cref{asp:strongly-convex} and \ref{asp:large-dataset}, for learning rate $\eta \in \left[\frac{\log KN}{3\lambda_d KN}, \frac{D^2 \tr(\mH)\log KN}{\lambda_d \tr(\mH^2)KN} \right]$, $K = o(\log N)$, we have $\Bar{\cR}(K, N;\eta) = M(K, N;\eta)(1+ o(1))$ with $M(K, N;\eta) := \frac{1}{2}\Tilde{\theta}_d^2 \lambda_d \exp(-2\lambda_d \eta KN)+\frac{\eta\tr(\mH)\sigma^2}{4}.$
\end{lemma}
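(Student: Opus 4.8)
The plan is to treat $M(K,N;\eta)$ as the sum of a bias part $M_{\bias}:=\tfrac12\tilde\theta_d^2\lambda_d\exp(-2\lambda_d\eta KN)$ and a variance part $M_{\var}:=\tfrac{\eta\tr(\mH)\sigma^2}{4}$, and to show, \emph{uniformly over the stated $\eta$-window}, that the bias term $\tfrac12\E\|\vtheta_{KN}^{\bias}\|_{\mH}^2$ and the variance term $\tfrac12\E\|\vtheta_{KN}^{\var}\|_{\mH}^2$ from the Step~1 decomposition are $(1+o(1))M_{\bias}$ and $(1+o(1))M_{\var}$ respectively; adding these gives $\bar\cR(K,N;\eta)=(M_{\bias}+M_{\var})(1+o(1))=M(K,N;\eta)(1+o(1))$. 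The starting point is the analytic risk approximation of Step~2 (Lemma~\ref{thm:convex-L-estimate}), valid for $\eta\in[\Omega(T^{-1}),o(T^{-3/4})]$ and $K=o(\eta^{-1}T^{-3/4})$ with $T=KN$, which replaces the bias term by $\tfrac12\|(\mI-\eta\mH)^{KN}\vtheta_0\|_{\mH}^2$ and the variance term by an explicit trace expression. So the first task is to verify the window $\eta\in\bigl[\tfrac{\log KN}{3\lambda_d KN},\ \tfrac{D^2\tr(\mH)\log KN}{\lambda_d\tr(\mH^2)KN}\bigr]$ lies inside this validity region: the lower bound gives $\eta T=\omega(1)$; Cauchy--Schwarz gives $\tr(\mH)/\tr(\mH^2)\le d/\tr(\mH)\le 1/\lambda_d$, so the upper bound yields $\eta\lambda_d\lesssim\tfrac{\log KN}{KN}$ and $\eta\lambda_1\le\eta D^2\lesssim\tfrac{\log KN}{KN}=o(1)$, whence $\eta T^{3/4}\lesssim(KN)^{-1/4}\log KN=o(1)$ and $\eta^{-1}T^{-3/4}\gtrsim(KN)^{1/4}/\log KN=\omega(N^{0.1})=\omega(K)$ by \Cref{asp:large-dataset}. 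I will also record $\log KN=(1+o(1))\log N$, since $\log K=o(\log\log N)$ under $K=o(\log N)$.

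For the bias term, write $\tfrac12\|(\mI-\eta\mH)^{KN}\vtheta_0\|_{\mH}^2=\tfrac12\sum_i\lambda_i(1-\eta\lambda_i)^{2KN}(\mP\vtheta_0)_i^2$ in the basis that puts $\mH$ in its grouped-diagonal form $\mD$. The claim is that this sum is dominated by the block of coordinates with eigenvalue $\lambda_d$: for any $i$ with $\lambda_i>\lambda_d$, the elementary estimate $\log\tfrac{1-\eta\lambda_i}{1-\eta\lambda_d}\le-\eta(\lambda_i-\lambda_d)$ gives $(1-\eta\lambda_i)^{2KN}\le(1-\eta\lambda_d)^{2KN}\exp\!\bigl(-2KN\eta(\lambda_i-\lambda_d)\bigr)\le(1-\eta\lambda_d)^{2KN}(KN)^{-\frac{2(\lambda_i-\lambda_d)}{3\lambda_d}}$ using $\eta\ge\tfrac{\log KN}{3\lambda_d KN}$, which is $o(1)$ times the $\lambda_d$-block factor. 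Since there are finitely many coordinates and $\tilde\theta_d^2=\sum_{l=d-n_d+1}^d(w^*_l)^2>0$ by \Cref{asp:strongly-convex-parameter-prior} (recall $\vtheta_0=-\vw^*$), the off-$\lambda_d$ contribution is a $o(1)$ fraction of the total. Finally $(1-\eta\lambda_d)^{2KN}=\exp(-2\lambda_d\eta KN)\exp\!\bigl(-O(KN(\eta\lambda_d)^2)\bigr)=\exp(-2\lambda_d\eta KN)(1+o(1))$ because $KN(\eta\lambda_d)^2\lesssim\tfrac{(\log KN)^2}{KN}=o(1)$. Combining, the bias term equals $M_{\bias}(1+o(1))$.

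For the variance term, Step~2 gives
\[
\tfrac12\,\E\bigl\|\vtheta_{KN}^{\var}\bigr\|_{\mH}^2 = \left(\tfrac{\sigma^2}{N}\tr\!\left(\tfrac{\bigl(\mI-(\mI-\eta\mH)^{KN}\bigr)\bigl((\mI-\eta\mH)^{N}-(\mI-\eta\mH)^{KN}\bigr)}{\mI+(\mI-\eta\mH)^{N}}\right) + \tfrac{\eta\sigma^2}{2}\inp{\mH}{\bigl(\mI-(\mI-\eta\mH)^{2KN}\bigr)(2\mI-\eta\mH)^{-1}}\right)(1+o(1)).
\]
In the second summand each eigenvalue contributes $\lambda_i\cdot\tfrac{1-(1-\eta\lambda_i)^{2KN}}{2-\eta\lambda_i}=\tfrac{\lambda_i}{2}(1+o(1))$ uniformly, since $\eta\lambda_i\le\eta\lambda_1=o(1)$ and $(1-\eta\lambda_i)^{2KN}\le(1-\eta\lambda_d)^{2KN}\le(KN)^{-2/3}=o(1)$ (using again $\eta\ge\tfrac{\log KN}{3\lambda_d KN}$); hence this summand is $\tfrac{\eta\sigma^2\tr(\mH)}{4}(1+o(1))=M_{\var}(1+o(1))$. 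For the first summand, bound each of the $d$ eigenvalue terms crudely by $\tfrac{\sigma^2}{N}$ (the two bracketed factors lie in $[0,1]$ and the denominator is $\ge1$ since $0\le\eta\lambda_i<1$), so it is at most $\tfrac{d\sigma^2}{N}$; comparing with $M_{\var}\ge\tfrac{\eta d\lambda_d\sigma^2}{4}\ge\tfrac{d\sigma^2\log KN}{12KN}$, the first summand is at most $\tfrac{12K}{\log KN}\,M_{\var}=o(1)\,M_{\var}$ — this is exactly where $K=o(\log N)$ enters. Adding, the variance term is $M_{\var}(1+o(1))$, and together with the bias estimate the lemma follows.

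The step I expect to be the main obstacle is not any single estimate but the \emph{uniformity bookkeeping}: the three ``$1+o(1)$'' errors above are each governed by a different constraint — the lower end of the $\eta$-window concentrates the bias onto the $\lambda_d$-eigenspace and forces $(1-\eta\lambda_d)^{2KN}\to0$; the upper end together with \Cref{asp:large-dataset} keeps us inside the Step~2 validity region and kills the $(\eta\lambda_d)^2$ correction in the exponent; and $K=o(\log N)$ is precisely what makes the cross term $\tfrac{\sigma^2}{N}\tr(\cdots)$ negligible against $\eta\tr(\mH)\sigma^2$ — so the delicate part is checking that all of these hold simultaneously and uniformly across the entire stated $\eta$-range, rather than only pointwise in $\eta$.
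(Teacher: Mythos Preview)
Your proof is correct and follows essentially the same approach as the paper: both invoke the Step-2 analytic approximation (Lemma~\ref{thm:convex-L-estimate}), collapse the bias sum onto the $\lambda_d$-eigenspace via the exponential gap in $(1-\eta\lambda_i)^{2KN}$, show the within-epoch variance term yields $\tfrac{\eta\sigma^2\tr(\mH)}{4}(1+o(1))$, and use $K=o(\log N)$ to absorb the cross-epoch $O(1/N)$ term into $M_{\var}$. Your treatment is in fact slightly more careful than the paper's in two respects---you explicitly verify that the stated $\eta$-window lies inside the Step-2 validity region, and you invoke \Cref{asp:strongly-convex-parameter-prior} to ensure $\tilde\theta_d^2>0$, a dependency the paper's Lemma~\ref{lem:convex-L-approx-small-K} leaves implicit.
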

\begin{lemma}[Large $K$]
    \label{lem:convex-L-approx-large-K-new}
    We define $\Tilde{\theta}_{d}^2$ as the same as \Cref{lem:convex-L-approx-small-K-new}. Under \Cref{asp:strongly-convex} and \ref{asp:large-dataset}, for learning rate $\eta \in [\frac{\log KN}{3\lambda_d KN}, o\left(\frac{1}{N}\right)]$ and $K = \omega(\log N)$, we have $\Bar{\cR}(K, N;\eta) = M(K, N;\eta)(1+ o(1))$ with $M(K, N;\eta) = \frac{1}{2}\Tilde{\theta}_d^2 \lambda_d \exp(-2\lambda_d \eta KN) + \frac{\eta\tr(\mH)\sigma^2}{4}+\frac{\sigma^2d}{2N}.$
\end{lemma}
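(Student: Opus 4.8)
The plan is to read the estimate off the analytic risk approximation established in \Cref{thm:convex-L-estimate}, by specializing its bias and variance formulas to the regime $K=\omega(\log N)$, $\eta\in[\tfrac{\log KN}{3\lambda_d KN},\,o(1/N)]$, and then performing a careful, \emph{uniform} asymptotic simplification of the three surviving terms. Write $T=KN$ and $B:=\mI-\eta\mH=\diag(1-\eta\lambda_1,\dots,1-\eta\lambda_d)$. The first task is to check that this window lies inside the validity range of \Cref{thm:convex-L-estimate}: its lower endpoint $\tfrac{\log T}{3\lambda_d T}$, which is finite and positive since $\lambda_d\ge\mu>0$ by \Cref{asp:strongly-convex}, is $\Omega(T^{-1})$ because $\log T\to\infty$; \Cref{asp:large-dataset} ($K=O(N^{0.1})$) gives $1/N=o(T^{-3/4})$ and $K=o(\eta^{-1}T^{-3/4})$, placing the window inside $[\Omega(T^{-1}),o(T^{-3/4})]$; and $K=\omega(\log N)$ makes the lower endpoint itself $o(1/N)$, so the window is nonempty. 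Granting this, \Cref{thm:convex-L-estimate} gives $\bar{\cR}(K,N;\eta)$ equal, up to a \emph{uniform} $(1+o(1))$ multiplicative factor, to $\tfrac12\|B^{KN}\vtheta_0\|_\mH^2$ (the bias) plus the two-part variance expression displayed in Step~2.

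For the variance, the lower bound $\eta\ge\tfrac{\log T}{3\lambda_d T}$ forces every entry of $B^{KN}$ and $B^{2KN}$ to be at most $e^{-\eta\lambda_d T}\le T^{-1/3}=o(1)$, while $\eta=o(1/N)$ gives $\eta\lambda_i N=o(1)$ for all $i$, hence $B^N=\mI+o(1)$ and $(2\mI-\eta\mH)^{-1}=\tfrac12\mI+o(1)$, all uniformly over the window. Substituting into the first variance term gives $\tfrac{2\sigma^2}{N}\tr\!\left(\tfrac{(\mI+o(1))(\mI+o(1))}{2\mI+o(1)}\right)=\tfrac{\sigma^2 d}{N}(1+o(1))$, and into the second gives $\eta\sigma^2\inp{\mH}{\tfrac12\mI}(1+o(1))=\tfrac{\eta\sigma^2\tr(\mH)}{2}(1+o(1))$; halving these reproduces the summands $\tfrac{\sigma^2 d}{2N}$ and $\tfrac{\eta\tr(\mH)\sigma^2}{4}$ of $M$.

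For the bias, $\tfrac12\|B^{KN}\vtheta_0\|_\mH^2=\tfrac12\sum_i\lambda_i(1-\eta\lambda_i)^{2KN}(\vtheta_0)_i^2$. I would first replace $(1-\eta\lambda_i)^{2KN}$ by $e^{-2\eta\lambda_i KN}$ at a multiplicative cost $\exp(-KN(\eta\lambda_i)^2(1+o(1)))=1+o(1)$, uniform because $KN\eta^2\le K\eta_{\max}\cdot N\eta_{\max}=o(N^{-0.9})$ by \Cref{asp:large-dataset}. Then I split the sum according to whether $\lambda_i=\lambda_d$: since $d$ is fixed, the spectral gap $\delta:=\min_{\lambda_j>\lambda_d}(\lambda_j-\lambda_d)$ is a positive constant, and the lower bound on $\eta$ gives $e^{-2\eta\lambda_i KN}\le e^{-2\eta\lambda_d KN}e^{-2\delta\eta KN}\le e^{-2\eta\lambda_d KN}\,T^{-\Omega(1)}$ whenever $\lambda_i>\lambda_d$, so those coordinates contribute only $o(1)\cdot e^{-2\lambda_d\eta KN}$; the coordinates with $\lambda_i=\lambda_d$ contribute exactly $\lambda_d e^{-2\lambda_d\eta KN}\sum_{i:\lambda_i=\lambda_d}(\vtheta_0)_i^2=\lambda_d e^{-2\lambda_d\eta KN}\Tilde{\theta}_d^2$, where $\Tilde{\theta}_d^2>0$ because $\vtheta_0=-\vw^*$ has all coordinates nonzero by \Cref{asp:strongly-convex-parameter-prior}. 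Hence the bias term equals $\tfrac12\lambda_d\Tilde{\theta}_d^2 e^{-2\lambda_d\eta KN}(1+o(1))$, and adding the three nonnegative pieces, each with its own uniform $(1+o(1))$, yields $\bar{\cR}(K,N;\eta)=M(K,N;\eta)(1+o(1))$.

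I expect the main obstacle to be keeping every $o(1)$ uniform across the full window $[\tfrac{\log KN}{3\lambda_d KN},\,o(1/N)]$, and within that the bias step: isolating the $\lambda_d$-eigenspace requires \emph{simultaneously} the lower bound on $\eta$ (so the off-spectrum coordinates decay polynomially in $T$ via the gap $\delta$, not merely $o(1)$ relative to the leading term) and the upper bound $\eta=o(1/N)$ together with $K=O(N^{0.1})$ (so $(1-\eta\lambda_i)^{2KN}$ may be replaced by $e^{-2\eta\lambda_i KN}$). Everything downstream of \Cref{thm:convex-L-estimate} is then routine bookkeeping.
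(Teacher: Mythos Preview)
Your proposal is correct and follows essentially the same route as the paper's proof (given as \Cref{lem:convex-L-approx-large-K} in the appendix): both verify that the window $\eta\in[\tfrac{\log KN}{3\lambda_d KN},o(1/N)]$ lies inside the validity range of \Cref{thm:convex-L-estimate} via $K=O(N^{0.1})$, then simplify the three terms of $\widehat{\cR}$ using $B^N=\mI+o(1)$ (from $\eta N=o(1)$), $B^{KN}=o(1)$ (from $\eta\lambda_d T\ge\tfrac13\log T$), and the spectral-gap argument for the bias (which the paper packages as \Cref{lem:power-exponential-approx}). The only cosmetic difference is that you inline the content of \Cref{lem:power-exponential-approx} rather than citing it.
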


\paragraph{Step 4: Deriving the Approximately Optimal Learning Rate.}
At this point, we have narrowed down the range for the optimal learning rate and simplified the risk approximation. The next step is to approximate the optimal expected excess risk. To achieve this, we differentiate the simplified risk function $M(K, N;\eta)$ in Lemma~\ref{lem:convex-L-approx-small-K-new} and Lemma~\ref{lem:convex-L-approx-large-K-new} with respect to the learning rate $\eta$ and give the critical point $\eta=\eta'(K,N)$, which are presented as follows:
\begin{lemma}[Approximately Optimal Learning Rate]
\label{thm:convex-opt-lr}
    Under \Cref{asp:strongly-convex} and \ref{asp:large-dataset}, we consider $K$-epoch SGD with $N$ fresh data and learning rate $\eta = \eta'(K,N) = \frac{\log \rho KN}{2 \lambda_d KN}$, where $\rho := \frac{4\Tilde{\theta}_d^2\lambda_d}{\tr(\mH)\sigma^2}$. Then it holds for $K =o(\log N)$ or $K =\omega(\log N)$ that $\Bar{\cR}(K,N;\eta'(K,N)) = \Bar{\cR}^*(K,N)\left(1 + o(1)\right).$
\end{lemma}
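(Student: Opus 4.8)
The plan is to verify that the reference learning rate $\eta'(K,N) = \frac{\log \rho KN}{2\lambda_d KN}$ falls inside the admissible windows from Lemmas~\ref{lem:convex-L-approx-small-K-new} and~\ref{lem:convex-L-approx-large-K-new}, so that we may replace $\bar{\cR}(K,N;\eta)$ by its tractable surrogate $M(K,N;\eta)$ up to a $1+o(1)$ factor, and then show that $\eta'(K,N)$ is the exact minimizer of $M(K,N;\eta)$. First I would treat the small-$K$ regime. Writing $M(K,N;\eta) = \frac{1}{2}\tilde\theta_d^2\lambda_d \exp(-2\lambda_d\eta KN) + \frac{\eta\tr(\mH)\sigma^2}{4}$, this is strictly convex in $\eta$, so setting $\frac{\dd M}{\dd\eta} = 0$ gives $\lambda_d^2\tilde\theta_d^2 KN \exp(-2\lambda_d\eta KN) = \frac{\tr(\mH)\sigma^2}{4}$, i.e. $\exp(2\lambda_d\eta KN) = \frac{4\lambda_d^2\tilde\theta_d^2 KN}{\tr(\mH)\sigma^2} = \rho\lambda_d KN$. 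Solving yields $\eta = \frac{\log(\rho\lambda_d KN)}{2\lambda_d KN}$; since $\log(\rho\lambda_d KN) = \log(\rho KN)(1+o(1))$ (because $\lambda_d$ is an absolute constant and $\log\log$-type corrections are lower order), this coincides with $\eta'(K,N)$ up to the allowed relative error — I would make this precise by plugging both into $M$ and checking the ratio tends to $1$. Next I would check the range membership: $\eta'(K,N) = \Theta\!\big(\frac{\log KN}{\lambda_d KN}\big)$, which lies in $\big[\frac{\log KN}{3\lambda_d KN}, \frac{D^2\tr(\mH)\log KN}{\lambda_d\tr(\mH^2)KN}\big]$ provided the constant $\rho\lambda_d$ is neither too small nor too large relative to the window endpoints — this needs the mild observation that $\frac{1}{3} \le \frac{\log(\rho\lambda_d KN)}{2\log KN} \le \frac{D^2\tr(\mH)}{\tr(\mH^2)}$ asymptotically, the lower side because $\log(\rho\lambda_d KN)/\log(KN)\to 1 > 2/3$ and the upper side because $D^2\tr(\mH)/\tr(\mH^2) \ge D^2/\lambda_1 \ge 1$ (using $\lambda_1\le D^2$), again up to a $1+o(1)$ slack absorbed by the window.

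For the large-$K$ regime I would repeat the differentiation with the extra constant term $\frac{\sigma^2 d}{2N}$, which does not depend on $\eta$ and hence does not affect the critical-point equation; the minimizer of $M(K,N;\eta)$ in the $\eta$-direction is again $\eta'(K,N)$. The only new point is verifying that $\eta'(K,N)$ lies in $\big[\frac{\log KN}{3\lambda_d KN}, o(1/N)\big]$: the lower endpoint is handled as above, and for the upper endpoint we have $\eta'(K,N) = \Theta\!\big(\frac{\log KN}{KN}\big) = o(1/N)$ precisely because $\frac{\log KN}{K}\to 0$ whenever $K = \omega(\log N)$ — this is where the large-$K$ hypothesis is used. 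Then Lemma~\ref{lem:convex-L-approx-large-K-new} applies and $\bar{\cR}(K,N;\eta'(K,N)) = M(K,N;\eta'(K,N))(1+o(1))$.

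To close the argument I must connect $M$ at the critical point to $\bar{\cR}^*(K,N) = \min_{\eta\in(0,1/D^2]}\bar{\cR}(K,N;\eta)$. One direction is immediate: $\bar{\cR}^*(K,N) \le \bar{\cR}(K,N;\eta'(K,N)) = M(K,N;\eta'(K,N))(1+o(1))$. For the reverse, I would invoke the learning-rate range-narrowing lemmas (Lemma~\ref{lem:convex-opt-lr-range-small-K} and Lemma~\ref{lem:convex-opt-lr-range-large-K}) which confine the true optimal $\eta^*$ to the same window on which the surrogate approximation holds; on that window $\bar{\cR}(K,N;\eta) = M(K,N;\eta)(1+o(1)) \ge \min_\eta M(K,N;\eta)(1+o(1)) = M(K,N;\eta'(K,N))(1+o(1))$, so $\bar{\cR}^*(K,N) \ge M(K,N;\eta'(K,N))(1+o(1))$. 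Combining the two bounds gives $\bar{\cR}(K,N;\eta'(K,N)) = \bar{\cR}^*(K,N)(1+o(1))$, as claimed. I expect the main obstacle to be the bookkeeping around the $1+o(1)$ factors: one must ensure the $o(1)$ error from the matrix-concentration surrogate (which is only valid uniformly over the restricted window, with a rate depending on how close $\eta$ is to the window endpoints) does not degrade when composed with the $\log(\rho\lambda_d KN)$ versus $\log(\rho KN)$ discrepancy and with the range-narrowing step; carefully tracking that all these relative errors are genuinely $o(1)$ — and in particular that the restricted window from the range-narrowing lemmas is contained in the validity window of the surrogate — is the delicate part of the proof.
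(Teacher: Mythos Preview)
Your proposal is correct and follows essentially the same route as the paper: minimize the surrogate $M(K,N;\eta)$ by differentiation, check that the critical point lies in the window where the surrogate is valid, and then invoke the range-narrowing lemmas together with a two-sided optimality comparison (packaged in the paper as Lemma~\ref{lem:error-bound-analysis}) to transfer the result to $\bar{\cR}^*$. You are in fact slightly more careful than the paper in flagging the $\log(\rho\lambda_d KN)$ versus $\log(\rho KN)$ discrepancy---the paper's derivation silently drops the extra $\lambda_d$, but as you note this only shifts $\eta'$ by $O(1/KN)$ and is absorbed into the $1+o(1)$.
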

Using Lemma~\ref{thm:convex-opt-lr}, we complete the proof as follows. By evaluating the risk at the approximately optimal learning rate $\eta'(K,N) = \frac{\log \rho KN}{2 \lambda_d KN}$, we obtain an approximation of the optimal risk (Theorem~\ref{thm:strongly-convex-scaling-law}), based on which we derive the effective reuse rate (Theorem~\ref{thm:convex-E_K}).

\vspace{-0.07in}
\section{A Solvable Case with Zipf-distributed Data}
\label{sec:one_hot}

Natural data distributions often exhibit power law structures. To capture this phenomenon, we go beyond the strongly convex case and analyze a stylized linear regression model with Zipf-distributed data, where the excess risk admits a closed-form expression and the effective reuse rate can be characterized explicitly.

Through this setup, we can see that the effective reuse rate exhibits a similar scaling behavior: 
as the number of epochs $K$ increases, 
$E(K,N)$ initially grows linearly but eventually saturates at a problem-dependent value that increases with $N$. In contrast to the strongly convex case, however, the saturation point does not scale as $\sim \log N$ but instead scales as a power of $N$.

\paragraph{Problem Setup.}
We use the same notation for excess risk, one-pass and multi-epoch SGD, and $i.i.d.$ training data as in \Cref{sec:setup}.
We specify the data distribution as a Zipf distribution over $d$ one-hot data points, where the $i$-th data point is $\vx^{(i)} = \mu_i \ve_i$
for some $\mu_i > 0$
and the probability of sampling the $i$-th data point is $p_i = c \cdot i^{-\alpha}$ for some constants $c>0$ and $\alpha>1$.
The label is generated by $y = \inp{\vw^*}{\vx}$ with no label noise.
The ground-truth weight $\vw^*\in\R^d$ follows an isotropic prior distribution.
\begin{assumption}[Parameter Prior] \label{asp:parameter-prior}
    $\vw^*$ is sampled from a prior distribution with 
    $\E[\vw^*\vw^{*\top}] = \mI$.
\end{assumption}

\paragraph{Interpretation.} 
This setup can be interpreted as a simplified model of real-world data with heavy-tailed feature distributions. Each coordinate represents an atomic feature that appears with Zipf-distributed probability, mimicking the long-tailed statistics observed in domains such as text and natural language. The scaling factors $\mu_i$ encode feature importance, which may reflect, for instance, effects introduced by feature weighting or normalization.

\subsection{Results on Power-Law Spectrum}\label{sec:power-spectrum}
\begin{assumption}[Power-Law Spectrum] \label{asp:prior-probability-and-norm}
    There exist two constants $a,b> 0$ with $a-b>1$ such that the data input distribution satisfies that $p_i = c i^{-(a-b)}$ and $\Lambda_i = i^{-b}$, where $c = \left(\sum_{i=1}^d \frac{1}{i^{a-b}}\right)^{-1}$.
\end{assumption}
Here we establish matching upper and lower bounds for $\Bar{\cR}^*(K,N)$ in the small-$K$ and large-$K$ regimes, given the solvable model. Comparing with the strongly convex case, we observe a different scaling behavior: when $K \ll N^{\tfrac{b}{a-b}}$, $\Bar{\cR}^*(K,N)$ decays as a power law in $KN$, with exponent $\tfrac{a-1}{a}$;
whereas when $K \gg N^{\tfrac{b}{a-b}}$, $\Bar{\cR}^*(K,N)$ exhibits a power-law decay in $N$ and is independent of $K$.
\begin{theorem} \label{thm:one-hot-scaling-law}
    Consider a $K$-epoch SGD over $N$ fresh data. Under Assumptions~\ref{asp:parameter-prior}-\ref{asp:prior-probability-and-norm}, and given the data dimension $d = \Omega((KN)^{\frac{1}{a}})$, it holds that
    \[
    \Bar{\cR}^*(K,N) \asymp 
    \begin{cases}
         \left(KN\right)^{-\frac{a-1}{a}} & \text{for } K = o(N^{\frac{b}{a-b}}) \\
         N^{-\frac{a-1}{a-b}} & \text{for } K = \omega(N^{\frac{b}{a-b}}).
    \end{cases}
    \]
\end{theorem}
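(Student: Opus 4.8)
The plan is to leverage the one-hot structure, which decouples the problem coordinate-by-coordinate. Because $\vx = \mu_i \ve_i$ with probability $p_i$ and there is no label noise, each coordinate $w_i$ of the iterate evolves as an independent scalar SGD process: in a given step, coordinate $i$ is either untouched or gets contracted by a factor $(1 - \eta \mu_i^2) = (1 - \eta \Lambda_i)$. After $K$ epochs with random shuffling, coordinate $i$ has been hit exactly $K m_i$ times, where $m_i$ is the number of copies of the $i$-th data point in the training set of size $N$ — a $\mathrm{Binomial}(N, p_i)$ random variable. So $\bar{\cR}^*(K,N)$ will reduce to $\E_{\{m_i\}} \big[ \sum_i \Lambda_i (1-\eta\Lambda_i)^{2 K m_i} \big]$ (using $\E[(w^*_i)^2] = 1$ from \Cref{asp:parameter-prior}), minimized over $\eta$, with the mild correction that the error within the last partial epoch is negligible since we evaluate at the end of a full epoch.

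The next step is to handle the expectation over $m_i$. For the coordinates carrying most of the risk, $N p_i$ is large, so $m_i$ concentrates around $N p_i$, and $\E[(1-\eta\Lambda_i)^{2Km_i}]$ is well-approximated by $(1-\eta\Lambda_i)^{2KNp_i} \approx \exp(-2\eta \Lambda_i p_i K N) = \exp(-2\eta \lambda_i KN)$ where $\lambda_i = p_i \Lambda_i = c\, i^{-a}$ is the $i$-th eigenvalue of $\mH$. Thus the risk behaves like $\sum_i \Lambda_i \exp(-2\eta \lambda_i KN)$ up to constants; the tail coordinates where $N p_i \lesssim 1$ contribute the "unlearned" part $\sum_{i : Np_i \lesssim 1} \Lambda_i$, which is the source of the $K$-independent floor. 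Setting $T = KN$ and optimizing $\eta$: the sum $\sum_i i^{-b} \exp(-2\eta c\, i^{-a} T)$ transitions at the index $i^* \asymp (\eta T)^{1/a}$, giving (by comparing to an integral) a value of order $(\eta T)^{-(1-b)/a}$ for the "learned" head plus $(\eta T)^{-(1-b)/a}$-type residual; balancing against the stability constraint and the tail, one finds the optimum at $\eta \asymp 1$ (i.e., $\eta$ of constant order, consistent with $\eta \le D^{-2}$), yielding $\bar{\cR}^*(K,N) \asymp T^{-(a-1)/a} = (KN)^{-(a-1)/a}$ — \emph{provided} this head term dominates the floor $\sum_{i: Np_i \lesssim 1}\Lambda_i \asymp (N)^{-(1-b)/(a-b)} \cdot \text{(something)}$. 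Working out when head beats floor: $(KN)^{-(a-1)/a}$ vs.\ $N^{-(a-1)/(a-b)}$ crosses exactly at $K \asymp N^{b/(a-b)}$, which gives the stated phase boundary, and in the large-$K$ regime the floor $N^{-(a-1)/(a-b)}$ wins and is $K$-independent.

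For the \emph{upper bound} I would: (i) fix the near-optimal $\eta$ suggested above; (ii) split coordinates into "head" ($Np_i \gtrsim \log N$, say), "bulk", and "tail" ($Np_i \lesssim 1$); (iii) on the head use Chernoff concentration of $m_i$ to replace it by $Np_i$ with multiplicative error $1+o(1)$; (iv) bound the bulk and tail crudely by $\sum \Lambda_i$ over the relevant range, evaluated via integral comparison using \Cref{asp:prior-probability-and-norm}; (v) use $d = \Omega((KN)^{1/a})$ to ensure the cutoff index $i^*$ lies within range so that truncating at $d$ loses nothing. For the \emph{lower bound}, for \emph{any} $\eta \le D^{-2}$: either $\eta \lambda_i KN$ is small for a macroscopic block of head indices (then those terms are each $\Omega(\Lambda_i)$ and their sum is $\Omega((KN)^{-(a-1)/a})$ by choosing the block to be $i \lesssim (KN)^{1/a}$), or $\eta$ is large enough that the bias on the smallest-eigenvalue coordinates is controlled but then the tail coordinates with $m_i = 0$ — which occur with constant probability once $Np_i \lesssim 1$ — contribute $\Omega(\sum_{i: Np_i \le 1}\Lambda_i) = \Omega(N^{-(a-1)/(a-b)})$ regardless of $\eta$. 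Taking the max of the two lower bounds matches the upper bound in each regime.

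\textbf{Main obstacle.} The delicate part is step (iii)–(iv): controlling $\E_{m_i}[(1-\eta\Lambda_i)^{2Km_i}]$ \emph{uniformly} across the full spectrum of coordinates and for all admissible $\eta$, since for moderate $Np_i$ (neither large nor $O(1)$) neither concentration nor the "never-sampled" argument is clean — the binomial fluctuations interact nontrivially with the exponential. I expect to need a careful two-sided estimate: for the upper bound, Jensen-type or MGF bounds ($\E[r^{m_i}] = (1 - p_i + p_i r)^N$ with $r = (1-\eta\Lambda_i)^{2K} \le 1$, so $\E[r^{m_i}] = (1 - p_i(1-r))^N \le \exp(-Np_i(1-r))$, and then relate $1 - (1-\eta\Lambda_i)^{2K}$ to $\min\{1, \eta\Lambda_i K\}$); for the lower bound, restricting to the event $m_i \le 2Np_i$ (constant probability) and to $m_i = 0$ in the tail. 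Getting the $\Theta(\cdot)$ constants to line up across regimes through these approximations, while respecting the $d = \Omega((KN)^{1/a})$ and $\eta \le D^{-2}$ constraints, is where the real work lies; everything else is integral comparison against the power-law profile of \Cref{asp:prior-probability-and-norm}.
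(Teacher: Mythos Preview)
Your approach is essentially the paper's, and the overall logic is sound, but two points deserve flagging.

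First, the ``main obstacle'' you identify is not actually an obstacle: the identity you mention in passing, $\E[r^{m_i}] = (1 - p_i + p_i r)^N$ with $r = (1-\eta\Lambda_i)^{2K}$, is \emph{exact}, not merely an upper bound. It immediately collapses your expression to the closed form
\[
\bar\cR(K,N;\eta) \;=\; \tfrac12\sum_i p_i\Lambda_i\,\bigl(1 - p_i(1 - (1-\eta\Lambda_i)^{2K})\bigr)^N,
\]
and this is exactly how the paper proceeds (its Lemma~\ref{thm:one-hot-loss-estimate}, derived via commutativity of the diagonal updates). So there is no need for Chernoff concentration of $m_i$, and no head/bulk/tail split by concentration regime; one simply analyzes this explicit sum via Taylor expansion (small $K$: $(1-\eta/i^b)^{2K} \approx 1 - 2K\eta/i^b$, giving your $e^{-2c\eta KN/i^a}$ approximation) or by dropping $(1-\eta/i^b)^{2K}$ altogether (large $K$), exactly as you sketch at the heuristic level.

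Second, you have a consistent bookkeeping slip: the excess risk is $\tfrac12\|\vtheta\|_{\mH}^2$ with $\mH = \mP\Lambda$, so coordinate $i$ carries weight $p_i\Lambda_i = c\,i^{-a}$, not $\Lambda_i = i^{-b}$. With your stated weighting, the tail floor $\sum_{i:\,Np_i\le 1}\Lambda_i$ would equal $\Theta(N^{-(b-1)/(a-b)})$ (or diverge if $b\le 1$), not $N^{-(a-1)/(a-b)}$; likewise the head sum $\sum_i i^{-b}e^{-2c\eta T/i^a}$ does not yield $(KN)^{-(a-1)/a}$. With the correct weight $c\,i^{-a}$ both computations go through and match the statement. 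Beyond these two points, your reference-$\eta$ reasoning, phase-boundary check, and integral-comparison plan line up with the paper's proof.
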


Then we derive the formula of $E(K, N)$ by first solving the equation $\Bar{\cR}^*(1,T')=\Bar{\cR}^*(K,N)$ based on \Cref{thm:one-hot-scaling-law}, and divide $T'$ by $N$.
\begin{theorem}[Multi-Epoch Scaling Under Power-Law Spectrum] \label{thm:one-hot-E_K-v2}
    Consider a $K$-epoch SGD over $N$ fresh data. Under Assumptions~\ref{asp:parameter-prior}-\ref{asp:prior-probability-and-norm}, and given the data dimension $d = \Omega((KN)^{\frac{1}{a}})$, it holds that
    \[
    E(K, N) =
    \begin{cases}
         K(1 + o(1)) & \text{for } K = o(N^{\frac{b}{a-b}}) \\
         \Theta(N^{\frac{b}{a-b}}) & \text{for } K = \omega(N^{\frac{b}{a-b}}).
    \end{cases}
    \]
\end{theorem}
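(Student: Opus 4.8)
The plan is to derive \Cref{thm:one-hot-E_K-v2} directly from the scaling law in \Cref{thm:one-hot-scaling-law} by the same ``invert the risk curve'' strategy used in the strongly convex case. Recall that by definition $E(K,N) = \frac{1}{N}\min\{N' : \Bar{\cR}^*(1,N') \le \Bar{\cR}^*(K,N)\}$, so the core task is to solve $\Bar{\cR}^*(1, T') \asymp \Bar{\cR}^*(K,N)$ for $T'$ and then divide by $N$. The one-pass curve is obtained from \Cref{thm:one-hot-scaling-law} by setting $K=1$: since $1 = o(N'^{b/(a-b)})$ always holds for $N'$ large, we get $\Bar{\cR}^*(1,N') \asymp N'^{-(a-1)/a}$ (subject to the dimension condition $d = \Omega(N'^{1/a})$, which I would carry along as a hypothesis or absorb into the statement's assumption). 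This single-branch one-pass curve is the reference against which the two-branch $K$-epoch curve is compared.

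\textbf{Small-$K$ regime ($K = o(N^{b/(a-b)})$).} Here $\Bar{\cR}^*(K,N) \asymp (KN)^{-(a-1)/a}$. Setting $N'^{-(a-1)/a} \asymp (KN)^{-(a-1)/a}$ gives $N' \asymp KN$, hence $E(K,N) \asymp K$. To upgrade $\asymp$ to the sharper $K(1+o(1))$ claimed in the theorem, I would note — exactly as in the strongly convex proof sketch for \Cref{thm:convex-E_K} — that the function $T \mapsto T^{-(a-1)/a}$ is strictly decreasing and ``slowly varying enough'' that matching the values of the risk forces $N'/N \to K$ up to lower-order corrections; concretely, if $\Bar{\cR}^*(1,N')$ and $\Bar{\cR}^*(K,N)$ agree to within the $(1+o(1))$ multiplicative factors hidden in the $\asymp$, then $(N'/(KN))^{-(a-1)/a} = 1+o(1)$, so $N'/(KN) = 1 + o(1)$ and $E(K,N) = K(1+o(1))$. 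I would also need to confirm that $N' = \Theta(KN)$ still satisfies the dimension hypothesis $d = \Omega(N'^{1/a}) = \Omega((KN)^{1/a})$, which matches the hypothesis already imposed in \Cref{thm:one-hot-scaling-law}.

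\textbf{Large-$K$ regime ($K = \omega(N^{b/(a-b)})$).} Here $\Bar{\cR}^*(K,N) \asymp N^{-(a-1)/(a-b)}$, independent of $K$. Solving $N'^{-(a-1)/a} \asymp N^{-(a-1)/(a-b)}$ yields $N' \asymp N^{a/(a-b)}$, so $E(K,N) = N'/N \asymp N^{a/(a-b) - 1} = N^{b/(a-b)}$, giving $E(K,N) = \Theta(N^{b/(a-b)})$. One cross-check worth including: at the boundary $K \asymp N^{b/(a-b)}$ the small-$K$ formula predicts $E \asymp K \asymp N^{b/(a-b)}$ and the large-$K$ formula predicts $\Theta(N^{b/(a-b)})$, so the two branches are consistent at the transition, which is reassuring.

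\textbf{Main obstacle.} The genuinely delicate point is that \Cref{thm:one-hot-scaling-law} is stated only up to $\asymp$ (two-sided constants), yet the small-$K$ conclusion of \Cref{thm:one-hot-E_K-v2} asks for the sharp multiplicative form $K(1+o(1))$, not merely $\Theta(K)$. Bridging this gap requires that the implicit constants in the upper and lower bounds of \Cref{thm:one-hot-scaling-law} actually \emph{coincide asymptotically} in the small-$K$ branch — i.e., that $\Bar{\cR}^*(K,N) = c\,(KN)^{-(a-1)/a}(1+o(1))$ for a single constant $c$ independent of whether $K$ is $1$ or larger (so that the same $c$ governs both $\Bar{\cR}^*(1,\cdot)$ and $\Bar{\cR}^*(K,\cdot)$). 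I would therefore need to revisit the proof of \Cref{thm:one-hot-scaling-law} to extract this sharper statement (the closed-form nature of the excess risk in the Zipf model should make this feasible), or alternatively phrase the small-$K$ claim through the leading-order asymptotic expansion of the risk rather than through the $\asymp$ bound alone. The large-$K$ branch has no such issue, since $\Theta(\cdot)$ is all that is claimed there.
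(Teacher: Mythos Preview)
Your proposal is correct and follows the paper's route: invert the one-pass risk curve from \Cref{thm:one-hot-scaling-law}, treating the two regimes separately; the large-$K$ case goes through exactly as you describe, via a sandwich argument on $T'$ using the two-sided $\asymp$ bounds.

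Regarding the obstacle you flag in the small-$K$ branch, the paper resolves it along your second suggested line, but not by exhibiting a single constant $c$ in a power law. Instead, from the proof of \Cref{thm:one-hot-scaling-law} it extracts the explicit approximation $\Bar{\cR}^*(K,N)=\Tilde{S}_2(K,N;2)\,(1+o(1))$ with $\Tilde{S}_2(K,N;\eta)=\tfrac{1}{2}\sum_{i>d_1} c\,i^{-a}\exp(-2KNc\eta/i^{a})$; this depends on $(K,N)$ only through $T=KN$, and the optimal learning rate is shown to concentrate at the same value $2-o(1)$ regardless of $K$. Hence both $\Bar{\cR}^*(1,T')$ and $\Bar{\cR}^*(K,N)$ equal the \emph{same} function $H(\cdot)$ evaluated at $T'$ and $T$, each up to $(1+o(1))$. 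A separate technical lemma (\Cref{lem:E_K}) then shows that $H(T')(1+o(1))=H(T)(1+o(1))$ together with $T'\asymp T$ forces $T'/T\to 1$, using convexity of $H$ and the derivative bound $|H'(T)|\asymp T^{-(2a-1)/a}$ rather than any $(1+o(1))$-sharp power-law form for $H$ itself. Your heuristic ``$(N'/(KN))^{-(a-1)/a}=1+o(1)$'' would require $H(T)=c\,T^{-(a-1)/a}(1+o(1))$ with a fixed $c$, which the paper neither needs nor establishes.
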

Under the assumption of a logarithmic power-law spectrum, the trend of the effective reuse rate as a function of \( K \) approximates the phenomena described in \Cref{thm:convex-E_K} in the strongly convex setting and the trend described in \Cref{thm:one-hot-E_K-v2} under the power-law spectrum assumption.
We still observe an effective-reuse regime $(E(K, N) \approx K)$ when \( K \) is relatively small $( K \ll N^{b/(a-b)})$, and as \( K \) increases, the effective reuse rate undergoes a phase transition, converging to an upper bound determined by \( N \), entering the limited-reuse regime $(E(K, N) =  \Theta(N^{b/(a-b)}))$. 

We can see that the exponent of this power of $N$ is determined by the rate of eigenvalue decay of the Hessian and the rate of norm decay of the parameter with respect to dimension. The proofs of \Cref{thm:one-hot-scaling-law} and \Cref{thm:one-hot-E_K-v2} are given in Appendix~\ref{sec:proof-one-hot-scaling-law-power-spectrum} and Appendix~\ref{sec:proof-one-hot-Ek-power-spectrum} respectively.


\subsection{Results on Logarithmic Power-law Spectrum}\label{sec:log-power-spectrum}


Further, we aim to understand under the same Hessian matrix, how the data distribution correlated with $\mP$ and $\Lambda$ affects the effective reusing rate. By changing the spectrum of $\Lambda$, we can also obtain matching upper lower bounds for $\Bar{\cR}^*(K,N)$ and a characterization for $E(K,N)$, which behave differently from the power-spectrum case. Here we present only the latter; the former can be seen in Appendix~\ref{app:log-spectrum}. 


\begin{assumption}[Logarithmic Power-Law Spectrum] \label{asp:log-power-decay-prior}
    There exist two constants $a>1,b>0$ such that the data input distribution satisfies that $p_i = c i^{-a} \log^b (i+1)$ and $\Lambda_i = 1/{\log^b (i+1)}$, where $c = \left(\sum_{i=1}^d i^{-a}{\log^b (i+1)}\right)^{-1}$.
\end{assumption}

\begin{theorem}[Multi-Epoch Scaling Under Logarithmic Power-Law Spectrum] \label{thm:one-hot-E_K-log-spectrum}
     Under Assumptions~\ref{asp:parameter-prior}, \Cref{asp:log-power-decay-prior}, and given the data dimension $d = \Omega((KN)^{\frac{1}{a}})$ for a one-pass SGD and a $K$-epoch SGD over $N$ fresh data, it holds that
    \[
    E(K, N) = 
    \begin{cases}
         K(1 + o(1)) & \text{for }  K = o(\log^b N) \\
         \Theta(\log^b N) & \text{for } K = \omega(\log^b N).
    \end{cases}
    \]
\end{theorem}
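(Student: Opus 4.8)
The plan is to mirror the argument already used for the power-law spectrum (Theorems~\ref{thm:one-hot-scaling-law} and~\ref{thm:one-hot-E_K-v2}), which itself follows the two-step template of first pinning down $\Bar{\cR}^*(K,N)$ up to constants and then inverting the one-pass scaling law to read off $E(K,N)$. Concretely, under \Cref{asp:log-power-decay-prior} the Hessian is $\mH=\mLambda\mP=\diag\!\bigl(c\,i^{-a}\bigr)$, i.e. the \emph{same} pure power-law eigenvalues $\lambda_i\asymp i^{-a}$ as in \Cref{asp:prior-probability-and-norm}; only the split between the sampling probabilities $p_i\asymp i^{-a}\log^b(i+1)$ and the feature scales $\Lambda_i\asymp 1/\log^b(i+1)$ has changed. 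Since the excess risk for one-hot data is diagonal and decouples coordinate-wise, I would first re-derive (or invoke from Appendix~\ref{app:log-spectrum}) the closed-form expression: each coordinate $i$ contributes a bias-type term governed by a geometric decay factor $(1-\eta\Lambda_i p_i)^{KN}$ from epoch-level contraction, together with a discretization/within-epoch correction that depends on $p_i$ (how often coordinate $i$ is touched per epoch) separately from $\Lambda_i$ (the step magnitude when it is touched). The key structural point is that a coordinate is ``learned'' once $KN\cdot p_i\cdot\Lambda_i\gtrsim 1$, i.e. once $K\gtrsim \frac{1}{N p_i\Lambda_i}\asymp \frac{1}{N}\,i^{a}$, which is the same threshold as the power-law case; but the \emph{residual} risk from an unlearned coordinate, and the saturation behavior, is modulated by the $\log^b$ factors.

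The second step is to extract the two regimes. For the small-$K$ regime I would show $\Bar{\cR}^*(K,N)\asymp (KN)^{-(a-1)/a}$ exactly as before: the optimal $\eta$ balances the bias sum $\sum_i \Lambda_i\exp(-2\eta\Lambda_i p_i KN)$ (with the isotropic prior making $\E[w_i^{*2}]=1$) against the variance/discretization sum, and because the eigenvalues $\lambda_i=\Lambda_i p_i$ are a clean power law, the $\log^b$ factors only shift constants inside the exponentials, not the polynomial rate — so the frontier index is $i^*\asymp (\eta KN)^{1/a}$ and the usual integral estimate gives the $(KN)^{-(a-1)/a}$ rate, valid as long as $K$ is below the threshold at which the \emph{first} (most frequent, $i=1$) coordinate saturates. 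That threshold is where $K\asymp \frac{1}{N p_1 \Lambda_1}$... but more relevantly, the regime boundary is dictated by when increasing $K$ at fixed $N$ stops helping, which happens once the contribution of the hardest-to-reach learnable coordinates is dominated by the ``can never be reached in one epoch'' coordinates; tracing through, this occurs at $K\asymp \log^b N$ rather than $N^{b/(a-b)}$, precisely because here $\Lambda_i\asymp 1/\log^b i$ decays only logarithmically (so the per-epoch progress on coordinate $i$, namely $p_i\Lambda_i\asymp i^{-a}$, versus the single-epoch reach which is $p_i\asymp i^{-a}\log^b i$, differ by a $\log^b$ factor — this is the ratio that sets the saturation epoch count). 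For the large-$K$ regime, I would then show $\Bar{\cR}^*(K,N)$ becomes $K$-independent and equals (up to constants) the one-pass risk at $N$ samples times the appropriate constant, and inverting $\Bar{\cR}^*(1,T')\asymp (T')^{-(a-1)/a}$ against this plateau value yields $E(K,N)=T'/N=\Theta(\log^b N)$.

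The routine parts — the coordinate-wise risk formula, the integral/sum estimates replacing $\sum_{i\le i^*}i^{-b'}$ by its leading term, and the final algebra inverting a power law — are all direct analogues of the power-law-spectrum proof and I would cite Appendix~\ref{app:log-spectrum} and Appendix~\ref{sec:proof-one-hot-Ek-power-spectrum} for the shared machinery. \textbf{The main obstacle} is getting the regime boundary $K\asymp\log^b N$ exactly right: one must carefully separate the role of $p_i$ (controls how many times per epoch a coordinate is updated, hence the \emph{one-epoch} reach $\sim 1/(Np_i)$) from the role of $\Lambda_i$ (controls step size, hence the \emph{contraction rate} $\eta\Lambda_i p_i$), because it is the mismatch between these — a $\log^b$ factor here rather than a polynomial factor as in \Cref{asp:prior-probability-and-norm} — that determines whether the saturation point is $\operatorname{poly}(N)$ or $\operatorname{polylog}(N)$. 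I would need to verify that the within-epoch discretization correction does not introduce an additional $\log$-dependence that shifts the boundary, and that the dimension condition $d=\Omega((KN)^{1/a})$ still guarantees the frontier index $i^*$ lies inside $[d]$ throughout both regimes (it does, since $i^*\lesssim (KN)^{1/a}$ in the relevant range). Matching upper and lower bounds, as in \Cref{thm:one-hot-scaling-law}, then close the argument.
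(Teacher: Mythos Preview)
Your overall plan---compute $\Bar{\cR}^*(K,N)$ up to constants in the two regimes, then invert the one-pass law---is exactly what the paper does, and your identification of the boundary $K\asymp\log^b N$ and the final answers is correct. But the closed-form you write down is wrong in a way that would break the argument: the per-coordinate factor is \emph{not} $(1-\eta\Lambda_i p_i)^{KN}$; by \Cref{thm:one-hot-loss-estimate} it is $\bigl(1-p_i+p_i(1-\eta\Lambda_i)^{2K}\bigr)^N$. This distinction is the entire content of the theorem. Your formula depends only on the product $KN$, so it would give $\Bar{\cR}^*(K,N)=\Bar{\cR}^*(1,KN)$ and hence $E(K,N)=K$ in \emph{all} regimes---no saturation. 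The saturation arises precisely from the asymmetry between $K$ and $N$ in the correct formula: for small $K$ one Taylor-expands $(1-\eta\Lambda_i)^{2K}\approx 1-2K\eta\Lambda_i$ and recovers your $(1-2K\eta p_i\Lambda_i)^N\approx e^{-2KNc\eta/i^a}$, but for large $K$ one has $(1-\eta\Lambda_i)^{2K}\to 0$ and the factor collapses to $(1-p_i)^N$, which no longer depends on $K$ at all. The large-$K$ risk is then $\sum_i p_i\Lambda_i(1-p_i)^N$, and since $p_i\asymp i^{-a}\log^b i$ (not $i^{-a}$), the cutoff is $i^*\asymp (N\log^b N)^{1/a}$, giving $\Bar{\cR}^*(K,N)\asymp (N\log^b N)^{-(a-1)/a}$; inverting against $(T')^{-(a-1)/a}$ yields $T'\asymp N\log^b N$ and $E(K,N)=\Theta(\log^b N)$. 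The transition between the two approximations for coordinate $i$ occurs at $K\asymp 1/(\eta\Lambda_i)\asymp\log^b i$, which for the dominant $i\asymp(N\log^b N)^{1/a}$ gives the $K\asymp\log^b N$ boundary.

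Two smaller corrections. First, there is no label noise in this setup ($\sigma=0$), so there is no variance term---the risk is pure bias, and your ``variance/discretization sum'' and ``within-epoch discretization correction'' do not exist here. Second, the paper needs a technical split $V_1+V_2$ at a threshold $d_2=\Theta(1)$ because for small $i$ the base $1-\eta/\log^b(i+1)$ can be negative (so the Taylor expansion fails); $V_1$ is handled separately by choosing a reference learning rate just below $2\log^b 2$ and showing $V_1$ is lower-order. You should expect to need the same device.
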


\paragraph{The Saturation Point Varies across Different Problem Setups.} The phase transition point where the effectiveness of data reusing changes from the effective-reuse regime to the limited-reuse regime varies across different problem setups.
In strongly convex linear regression problems, this phase transition happens when the limit $\lim_{K \to \infty}\frac{K}{\log N}$ changes from $0$ to $\infty$. And in the above power spectrum and log-power spectrum case, the limit turns to be $\lim_{K \to \infty}\frac{K}{N^{b/(a-b)}}$ and $\lim_{K \to \infty}\frac{K}{\log^bN}$.
\vspace{-0.07in}
\section{Experiments}
\label{sec:experiments}

\begin{figure}[t]
\vspace{-0.4in}
    \centering
    \begin{subfigure}[t]{0.326\linewidth}
        \includegraphics[width=\linewidth]{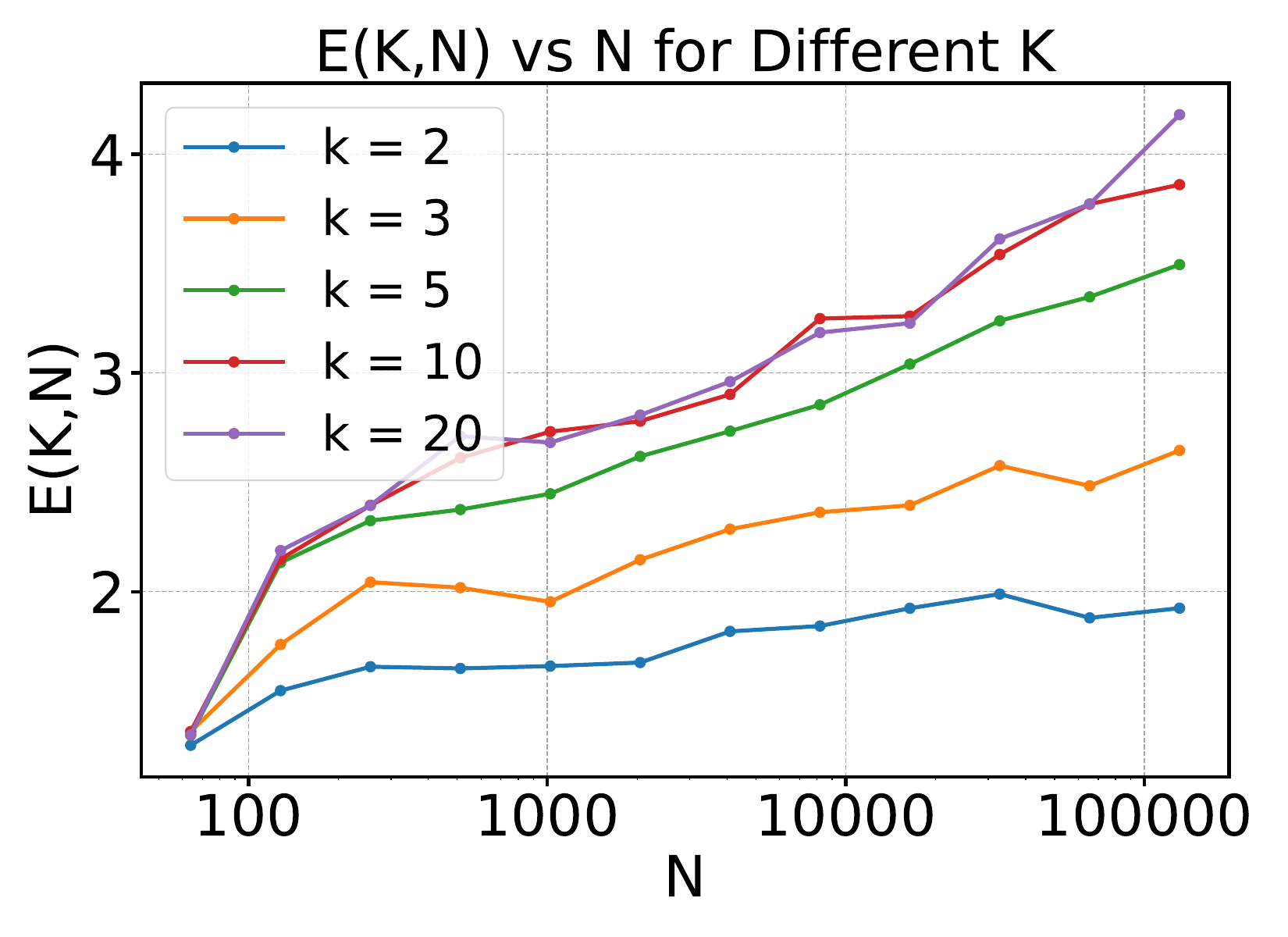}
        \caption{Strongly convex case: $E(K,N)$ with $N$.}
        \label{fig:strongly-convex-EK-for-N}
    \end{subfigure}
    \begin{subfigure}[t]{0.634\linewidth}
        \includegraphics[width=\linewidth]{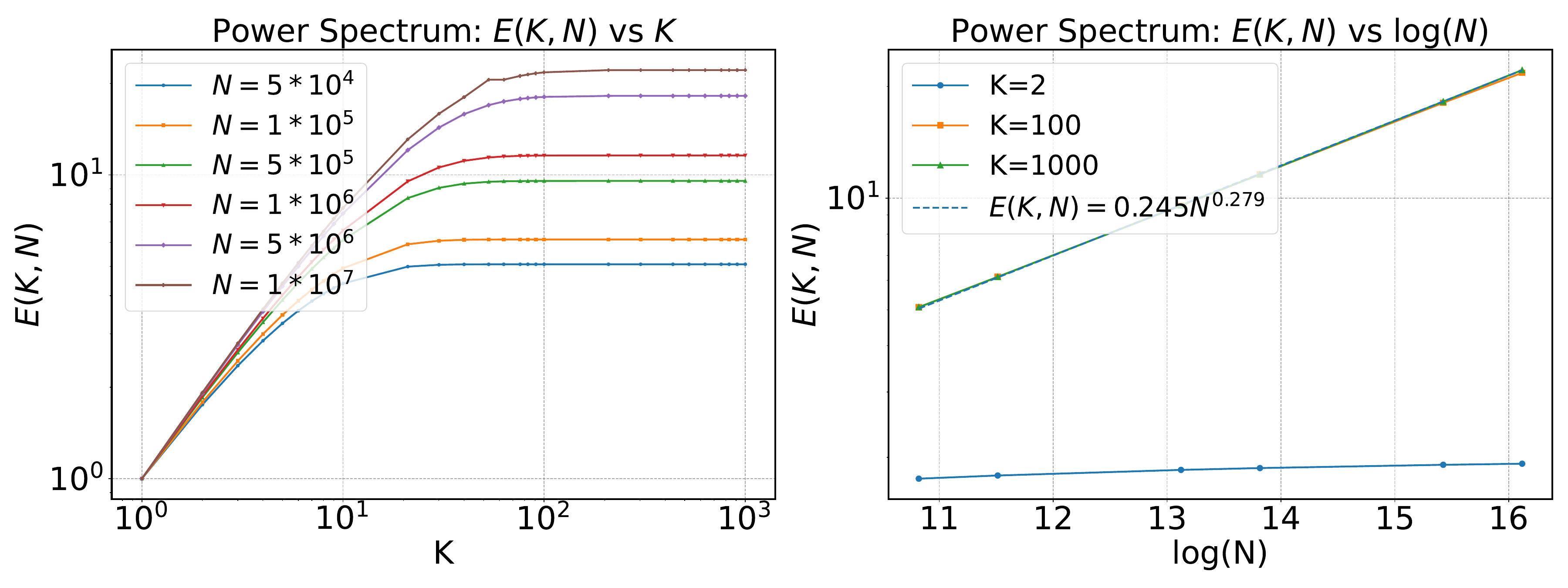}
        \caption{\small The solvable case with Zipf-distributed data and power spectrum: $E(K,N)$ versus $K$ and $N$.}
        \label{fig:one-hot}
    \end{subfigure}
    \caption{\small Simulation experiments for strongly-convex linear regression and the solvable case with Zipf-distributed data and power spectrum. Results show that $E(K,N)$ is approximately proportional to some function of $N$ when $N$ is relatively small, and $E(K,N)\approx K$ when $N$ is relatively large. For the solvable case with Zipf-distributed data and power spectrum, we also fit the effective reuse rate using the formula $E(K,N)=c_1N^{c_2}$ suggested by \Cref{thm:one-hot-E_K-v2}, and the fitted exponent $c_2=0.279\approx\frac{b}{a-b}=\frac{2}{7}$ matches our theory.}
    \label{fig:convex-and-power}
\end{figure} 

\vspace{-0.3cm}
\subsection{Simulations in \Cref{sec:strongly_convex}}

First, we conduct our experiments on synthetic datasets with a strongly convex linear regression to verify the characterization of effective reuse rate $E(K,N)$ in \Cref{thm:convex-E_K}.

\paragraph{Experiments Setup.}  We generate data pairs $(\vx_i,y_i)$ where $\vx_i \overset{i.i.d}{\sim}\gN(0,\mI_d)$ with dimension $d = 100$. For the label $y_i$, we generate it as $y_i = \inp{\vw^*}{\vx_i} + \xi_i$, where $\vw^*$ is the ground truth generated by standard Gaussian with unit variance.
Also, $\xi_i\overset{i.i.d}{\sim}\gN(0,\sigma^2\mI_d)$. Here in our simulation, we set $\sigma$ to $0.1$. To make our simulation aligned with the theoretical setup, we set the learning rate $\eta \propto \frac{\log KN}{KN}$, and we grid search the ratio $c:=\frac{\eta}{\log KN/KN}$ for the $c^*$ which minimizes the final loss given training steps $T=KN$. 

\paragraph{Results.} As shown in \Cref{fig:strongly-convex-EK-for-N}, we plot $E(K,N)$ as a function of $\log N$ for various fixed values of $K$. Each curve corresponds to a fixed number of epochs (e.g., $K=3,5,\dots,20$) and illustrates how the effective reuse rate $E(K,N)$ grows with dataset size. For small data size ($\log N \ll K$), the effective reuse factor increases roughly linearly with $\log N$, indicating that adding more data substantially boosts the one-pass equivalent performance. However, as $N$ becomes large ($\log N \gg K$), each curve flattens out and approaches an asymptote at $E(K,N)\approx K$. In other words, once the dataset is sufficiently large relative to the number of epochs, additional passes through the same data yield no further benefit beyond a factor of $K$. This behavior is exactly as predicted by \Cref{thm:convex-E_K}: when $K$ is much smaller than $\log N$, we have $E(K,N)\approx K$ (nearly full $K$-fold data reuse), whereas when $K$ is large relative to $\log N$, the effective reuse saturates and grows only on the order of $\log N$.

\vspace{-0.4cm}
\subsection{Simulations in \Cref{sec:power-spectrum}} \label{sec:power-spectrum-experiment}
\vspace{-0.3cm}
We now verify the predictions of \Cref{thm:one-hot-E_K-v2} using synthetic data generated under the spectral assumptions of \Cref{sec:one_hot} 
with a power-law decay Hessian spectrum (\Cref{asp:prior-probability-and-norm}).
In all sub-figures of \Cref{fig:one-hot}, we set the data dimension $d$ to $10^5$ and tune all the learning rates to their optimal values. Here we set $a=4.5$ and $b=1$.

\paragraph{Results.} \Cref{fig:one-hot} plots $E(K, N)$ versus $K$ and $\log N$ for the solvable model with Zipf-distributed  data. The curves depicting $E(K,N)$ versus $K$ show that $E(K,N)\approx K$ when $K$ is relatively small and saturate to some value depending on $N$ when $K$ is large. In the right panel, which describes the relationship between $E(K,N)$ and $\log N$, we observe that when $K$ is small (namely $K=2$), $E(K,N)$ increases and approaches $K$ as $\log N$ increases, and the plots overlap when $K$ is large. Those phenomena provide empirical confirmation of the scaling behaviors predicted by \Cref{thm:one-hot-E_K-v2}. We also fit $E(K,N)$ in the large-$K$ regime with a power-form function as stated in \Cref{thm:one-hot-E_K-v2}. The fitted exponent is $0.279 \approx \tfrac{b}{a-b} = \tfrac{2}{7}$, aligning with our theory.
\subsection{Empirical Validation on Large Language Models}\label{sec:experiment-llm}

\paragraph{Experiments Setup.} We conduct experiments on a large language model to empirically validate the hypothesis that larger datasets allow for more effective repetition. We perform pretraining runs with fresh data sizes of 0.2B, 0.5B, 0.8B, 1.0B, and 2B tokens, each trained for 100 epochs. As a control, we also include a run with 200B fresh tokens. For each fresh dataset size $N$ and training epoch $K$, we approximate the effective reuse rate $E(K,N)$ by determining the effective fresh data size $N_f(K,N)$ required to achieve the same validation loss after one pass through the data. The effective reuse rate is then computed as: $E(K,N)=\frac{N_f(K,N)}{N}$.

Our experiments utilize a 0.3B parameter model adapted from the Qwen2.5-0.5B architecture~\citep{qwen2025qwen25technicalreport} and a subset of the DCLM dataset, totaling 200B tokens.
A separate subset of the DCLM dataset is reserved for validation. Crucially, we use a constant learning rate schedule across all experiments to align with our theoretical analysis and mitigate the confounding effects of learning rate schedules, as reported in prior work~\citep{hoffmann2022training,luo2025multi}. ~\Cref{fig:ek-k} depicts the relationship between $E(K,N)$ and $K$. ~\Cref{fig:loss-step} depicts the training curves for different data sizes, and marks the points of different curves where $E(K,N)=\lambda K$, where $\lambda$ controls how strict the criterion is for determining when multi-epoch training begins to underperform one-pass training. Given such $\lambda$, we denote the corresponding number of training epochs as $K(\lambda, N)$, which we refer to as \textit{saturation points}. In our experiments, we take $\lambda = 0.75$. Further, in \Cref{fig:ek-n-saturation point}, we show the precise relationship between $K(\lambda, N)$ and $N$. More details regarding the experiment setup are available in Appendix~\ref{app:setup}.




\paragraph{Previous Work: When $K\le 4$, $E(K,N) \approx K$.} Our theoretical analysis indicates that $E(K,N)$ should be close to $K$ when $K$ is small (e.g., $K\le 4$). 
In Figure~\ref{fig:ek-k}, when the epoch number is small (approximately $\le 5$), we observe that $E(K,N)$ increases at a rate comparable to the epoch number, 
as indicated by the black dashed line. 
Thus our predictions of $E(K,N)$ when $K$ is small aligns with the data-constrained scaling laws~\citep{muennighoff2023scaling}. 

\paragraph{Larger Datasets Allow More Repetition.}
$E(K,N)$ increases with the number of fresh data sizes and eventually saturates 
for sufficiently large fresh datasets. Our results challenge the data-constrained scaling laws proposed by~\citet{muennighoff2023scaling}, which assume a uniform effective number of epochs across different fresh data sizes. In Figure~\ref{fig:loss-step}, we show that at the critical points where one-pass training start to outperform multi-epoch training significantly, $E(K,N)$ increases as $N$ increases. This suggests the continued potential for scaling pretraining through multi-epoch training with larger datasets.

\paragraph{Fitting Experiments.}In \Cref{fig:ek-n-saturation point}, to provide real-world evidence that larger datasets can be repeated more, we plot the saturation point values for different $N$ to illustrate how they vary with $N$. Then we fit them as a function of $N$; see Appendix~\ref{app:fitting experiments} for details of the fitting procedure. 

Surprisingly, though we do not claim that $E(K,N) = \Theta(\log N)$ holds for general LLM trainings when $K$ is large, as we calculated in the strongly convex linear regression case, here we do observe that $K(\lambda,N)$ gradually increases when $N$ increases, and it follows that $K(\lambda,N) \approx 0.80 \log N + 5.21$ with the correlation coefficient being $r = 0.97$. In this formula, the dataset $N$ is measured in billions of tokens (B).

\paragraph{Experiments with Learning Rate Decay.}For further investigation of the scaling behaviour of multi-epoch training, we conduct LLM experiments with a non-constant learning rate schedule, aligning with the common practice in reality. Specifically, we additionally repeat the above analysis with a WSD learning rate schedule with linear decay. The experimental setup and results are described in Appendix~\ref{app:experiment-llm-lrs}.

\vspace{-0.5cm}
\section{Conclusion}
\label{sec:conclusion}
\vspace{-0.4cm}


In this paper, we intrdouce the concept of effective reuse rate $E(K,N)$ and show both theoretically and empirically that larger datasets can be repeated more. More specifically, $E(K, N)$ saturates at later epochs as the dataset size increases.
Several directions remain open for future study. 
(i)~Our analysis is limited to the setting of optimal constant learning rates, and it remains to understand how learning rate decay, as well as the effort spent on tuning the learning rate, affect the effective reuse rate.
(ii)~Our theoretical analysis currently focuses on linear models, and a natural next step would be to extend the framework to more complex settings, such as neural networks with feature learning.
(iii)~We focus on reusing the entire dataset across multiple epochs, but to fully explore the potential of data reuse, one may consider more refined strategies such as selectively reusing or rephrasing high-quality data.
(iv)~Technically, our main results rely on strong convexity. Although we analyze a solvable case under a Zipf-law data distribution, generalizing these proof ideas to broader non-strongly convex settings remains an important theoretical challenge.




\section*{Acknowledgements}
This work is supported by the National Natural Science Foundation of China under Grant Number 62495062. Binghui Li is supported by the Elite Ph.D. Program in Applied Mathematics at Peking University. We would like to thank Licong Lin, Huaqing Zhang, Jingfeng Wu, Kaiyue Wen, Jingzhao Zhang and Jason D. Lee for their insightful comments, and the
anonymous reviewers for their valuable feedback.

\bibliography{iclr2026_conference}

@article{huang2022matrix,
  title={Matrix concentration for products},
  author={Huang, De and Niles-Weed, Jonathan and Tropp, Joel A and Ward, Rachel},
  journal={Foundations of Computational Mathematics},
  volume={22},
  number={6},
  pages={1767--1799},
  year={2022},
  publisher={Springer}
}

@book{vershynin2018high,
  title={High-Dimensional Probability: An Introduction with Applications in Data Science},
  author={Vershynin, Roman},
  year={2018},
  publisher={Cambridge University Press}
}

@article{muennighoff2023scaling,
  title={Scaling data-constrained language models},
  author={Muennighoff, Niklas and Rush, Alexander and Barak, Boaz and Le Scao, Teven and Tazi, Nouamane and Piktus, Aleksandra and Pyysalo, Sampo and Wolf, Thomas and Raffel, Colin A},
  journal={Advances in Neural Information Processing Systems},
  volume={36},
  pages={50358--50376},
  year={2023}
}

@misc{wu2022iterateriskboundssgd,
      title={Last Iterate Risk Bounds of SGD with Decaying Stepsize for Overparameterized Linear Regression}, 
      author={Jingfeng Wu and Difan Zou and Vladimir Braverman and Quanquan Gu and Sham M. Kakade},
      year={2022},
      eprint={2110.06198},
      archivePrefix={arXiv},
      primaryClass={cs.LG},
      url={https://arxiv.org/abs/2110.06198}, 
}

@article{hoffmann2022training,
  title={Training compute-optimal large language models},
  author={Hoffmann, Jordan and Borgeaud, Sebastian and Mensch, Arthur and Buchatskaya, Elena and Cai, Trevor and Rutherford, Eliza and Casas, Diego de Las and Hendricks, Lisa Anne and Welbl, Johannes and Clark, Aidan and others},
  journal={arXiv preprint arXiv:2203.15556},
  year={2022}
}

@article{kaplan2020scaling,
  title={Scaling laws for neural language models},
  author={Kaplan, Jared and McCandlish, Sam and Henighan, Tom and Brown, Tom B and Chess, Benjamin and Child, Rewon and Gray, Scott and Radford, Alec and Wu, Jeffrey and Amodei, Dario},
  journal={arXiv preprint arXiv:2001.08361},
  year={2020}
}

@article{brown2020language,
  title={Language models are few-shot learners},
  author={Brown, Tom and Mann, Benjamin and Ryder, Nick and Subbiah, Melanie and Kaplan, Jared D and Dhariwal, Prafulla and Neelakantan, Arvind and Shyam, Pranav and Sastry, Girish and Askell, Amanda and others},
  journal={Advances in neural information processing systems},
  volume={33},
  pages={1877--1901},
  year={2020}
}

@article{touvron2023llama,
  title={Llama 2: Open foundation and fine-tuned chat models},
  author={Touvron, Hugo and Martin, Louis and Stone, Kevin and Albert, Peter and Almahairi, Amjad and Babaei, Yasmine and Bashlykov, Nikolay and Batra, Soumya and Bhargava, Prajjwal and Bhosale, Shruti and others},
  journal={arXiv preprint arXiv:2307.09288},
  year={2023}
}

@misc{villalobos2024run,
      title={Will we run out of data? Limits of LLM scaling based on human-generated data}, 
      author={Pablo Villalobos and Anson Ho and Jaime Sevilla and Tamay Besiroglu and Lennart Heim and Marius Hobbhahn},
      year={2024},
      eprint={2211.04325},
      archivePrefix={arXiv},
      primaryClass={cs.LG}
}

@article{charton2024emergent,
  title={Emergent properties with repeated examples},
  author={Charton, Fran{\c{c}}ois and Kempe, Julia},
  journal={arXiv preprint arXiv:2410.07041},
  year={2024}
}

@article{xue2023repeat,
  title={To repeat or not to repeat: Insights from scaling llm under token-crisis},
  author={Xue, Fuzhao and Fu, Yao and Zhou, Wangchunshu and Zheng, Zangwei and You, Yang},
  journal={Advances in Neural Information Processing Systems},
  volume={36},
  pages={59304--59322},
  year={2023}
}

@article{lin2024scaling,
  title={Scaling Laws in Linear Regression: Compute, Parameters, and Data},
  author={Lin, Licong and Wu, Jingfeng and Kakade, Sham M and Bartlett, Peter L and Lee, Jason D},
  journal={arXiv preprint arXiv:2406.08466},
  year={2024}
}

@article{luo2025multi,
  title={A Multi-Power Law for Loss Curve Prediction Across Learning Rate Schedules},
  author={Luo, Kairong and Wen, Haodong and Hu, Shengding and Sun, Zhenbo and Liu, Zhiyuan and Sun, Maosong and Lyu, Kaifeng and Chen, Wenguang},
  journal={arXiv preprint arXiv:2503.12811},
  year={2025}
}

@misc{dandi2024benefitsreusingbatchesgradient,
      title={The Benefits of Reusing Batches for Gradient Descent in Two-Layer Networks: Breaking the Curse of Information and Leap Exponents}, 
      author={Yatin Dandi and Emanuele Troiani and Luca Arnaboldi and Luca Pesce and Lenka Zdeborová and Florent Krzakala},
      year={2024},
      eprint={2402.03220},
      archivePrefix={arXiv},
      primaryClass={stat.ML},
      url={https://arxiv.org/abs/2402.03220}, 
}

@misc{arnaboldi2025repetitaiuvantdatarepetition,
      title={Repetita Iuvant: Data Repetition Allows SGD to Learn High-Dimensional Multi-Index Functions}, 
      author={Luca Arnaboldi and Yatin Dandi and Florent Krzakala and Luca Pesce and Ludovic Stephan},
      year={2025},
      eprint={2405.15459},
      archivePrefix={arXiv},
      primaryClass={stat.ML},
      url={https://arxiv.org/abs/2405.15459}, 
}

@misc{sharma2020neuralscalinglawdimension,
      title={A Neural Scaling Law from the Dimension of the Data Manifold}, 
      author={Utkarsh Sharma and Jared Kaplan},
      year={2020},
      eprint={2004.10802},
      archivePrefix={arXiv},
      primaryClass={cs.LG},
      url={https://arxiv.org/abs/2004.10802}, 
}

@misc{paquette202543phasescomputeoptimalneural,
      title={4+3 Phases of Compute-Optimal Neural Scaling Laws}, 
      author={Elliot Paquette and Courtney Paquette and Lechao Xiao and Jeffrey Pennington},
      year={2025},
      eprint={2405.15074},
      archivePrefix={arXiv},
      primaryClass={stat.ML},
      url={https://arxiv.org/abs/2405.15074}, 
}

@misc{jain2024scalinglawslearningreal,
      title={Scaling laws for learning with real and surrogate data}, 
      author={Ayush Jain and Andrea Montanari and Eren Sasoglu},
      year={2024},
      eprint={2402.04376},
      archivePrefix={arXiv},
      primaryClass={cs.LG},
      url={https://arxiv.org/abs/2402.04376}, 
}

@article{bahri2024explaining,
  title={Explaining neural scaling laws},
  author={Bahri, Yasaman and Dyer, Ethan and Kaplan, Jared and Lee, Jaehoon and Sharma, Utkarsh},
  journal={Proceedings of the National Academy of Sciences},
  volume={121},
  number={27},
  pages={e2311878121},
  year={2024},
  publisher={National Academy of Sciences}
}

@article{bordelon2024dynamical,
  title={A dynamical model of neural scaling laws},
  author={Bordelon, Blake and Atanasov, Alexander and Pehlevan, Cengiz},
  journal={arXiv preprint arXiv:2402.01092},
  year={2024}
}

@misc{lin2019optimalratesmultipassstochastic,
      title={Optimal Rates for Multi-pass Stochastic Gradient Methods}, 
      author={Junhong Lin and Lorenzo Rosasco},
      year={2019},
      eprint={1605.08882},
      archivePrefix={arXiv},
      primaryClass={cs.LG},
      url={https://arxiv.org/abs/1605.08882}, 
}

@misc{pillaudvivien2018statisticaloptimalitystochasticgradient,
      title={Statistical Optimality of Stochastic Gradient Descent on Hard Learning Problems through Multiple Passes}, 
      author={Loucas Pillaud-Vivien and Alessandro Rudi and Francis Bach},
      year={2018},
      eprint={1805.10074},
      archivePrefix={arXiv},
      primaryClass={cs.LG},
      url={https://arxiv.org/abs/1805.10074}, 
}

@article{bordelon2024feature,
  title={How Feature Learning Can Improve Neural Scaling Laws},
  author={Bordelon, Blake and Atanasov, Alexander and Pehlevan, Cengiz},
  journal={arXiv preprint arXiv:2409.17858},
  year={2024}
}

@misc{zou2022benefitsimplicitregularizationsgd,
      title={The Benefits of Implicit Regularization from SGD in Least Squares Problems}, 
      author={Difan Zou and Jingfeng Wu and Vladimir Braverman and Quanquan Gu and Dean P. Foster and Sham M. Kakade},
      year={2022},
      eprint={2108.04552},
      archivePrefix={arXiv},
      primaryClass={cs.LG},
      url={https://arxiv.org/abs/2108.04552}, 
}

@misc{wu2022powerlimitationpretrainingfinetuninglinear,
      title={The Power and Limitation of Pretraining-Finetuning for Linear Regression under Covariate Shift}, 
      author={Jingfeng Wu and Difan Zou and Vladimir Braverman and Quanquan Gu and Sham M. Kakade},
      year={2022},
      eprint={2208.01857},
      archivePrefix={arXiv},
      primaryClass={cs.LG},
      url={https://arxiv.org/abs/2208.01857}, 
}

@inproceedings{neu2018iterate,
  title={Iterate averaging as regularization for stochastic gradient descent},
  author={Neu, Gergely and Rosasco, Lorenzo},
  booktitle={Conference On Learning Theory},
  pages={3222--3242},
  year={2018},
  organization={PMLR}
}

@article{ge2019step,
  title={The step decay schedule: A near optimal, geometrically decaying learning rate procedure for least squares},
  author={Ge, Rong and Kakade, Sham M and Kidambi, Rahul and Netrapalli, Praneeth},
  journal={Advances in neural information processing systems},
  volume={32},
  year={2019}
}

@inproceedings{zou2021benign,
  title={Benign overfitting of constant-stepsize sgd for linear regression},
  author={Zou, Difan and Wu, Jingfeng and Braverman, Vladimir and Gu, Quanquan and Kakade, Sham},
  booktitle={Conference on Learning Theory},
  pages={4633--4635},
  year={2021},
  organization={PMLR}
}

@misc{qwen2025qwen25technicalreport,
      title={Qwen2.5 Technical Report}, 
      author={Qwen and : and An Yang and Baosong Yang and Beichen Zhang and Binyuan Hui and Bo Zheng and Bowen Yu and Chengyuan Li and Dayiheng Liu and Fei Huang and Haoran Wei and Huan Lin and Jian Yang and Jianhong Tu and Jianwei Zhang and Jianxin Yang and Jiaxi Yang and Jingren Zhou and Junyang Lin and Kai Dang and Keming Lu and Keqin Bao and Kexin Yang and Le Yu and Mei Li and Mingfeng Xue and Pei Zhang and Qin Zhu and Rui Men and Runji Lin and Tianhao Li and Tianyi Tang and Tingyu Xia and Xingzhang Ren and Xuancheng Ren and Yang Fan and Yang Su and Yichang Zhang and Yu Wan and Yuqiong Liu and Zeyu Cui and Zhenru Zhang and Zihan Qiu},
      year={2025},
      eprint={2412.15115},
      archivePrefix={arXiv},
      primaryClass={cs.CL},
      url={https://arxiv.org/abs/2412.15115}, 
}

@article{hestness2017deep,
  title={Deep learning scaling is predictable, empirically},
  author={Hestness, Joel and Narang, Sharan and Ardalani, Newsha and Diamos, Gregory and Jun, Heewoo and Kianinejad, Hassan and Patwary, Md Mostofa Ali and Yang, Yang and Zhou, Yanqi},
  journal={arXiv preprint arXiv:1712.00409},
  year={2017}
}

@article{henighan2020scaling,
  title={Scaling laws for autoregressive generative modeling},
  author={Henighan, Tom and Kaplan, Jared and Katz, Mor and Chen, Mark and Hesse, Christopher and Jackson, Jacob and Jun, Heewoo and Brown, Tom B and Dhariwal, Prafulla and Gray, Scott and others},
  journal={arXiv preprint arXiv:2010.14701},
  year={2020}
}

@inproceedings{zhai2022scaling,
  title={Scaling vision transformers},
  author={Zhai, Xiaohua and Kolesnikov, Alexander and Houlsby, Neil and Beyer, Lucas},
  booktitle={Proceedings of the IEEE/CVF conference on computer vision and pattern recognition},
  pages={12104--12113},
  year={2022}
}

@article{kadra2023power,
  title={Power Laws for Hyperparameter Optimization},
  author={Kadra, Arlind and Janowski, Maciej and Wistuba, Martin and Grabocka, Josif},
  journal={arXiv preprint arXiv:2302.00441},
  year={2023}
}

@inproceedings{aghajanyan2023scaling,
  title={Scaling laws for generative mixed-modal language models},
  author={Aghajanyan, Armen and Yu, Lili and Conneau, Alexis and Hsu, Wei-Ning and Hambardzumyan, Karen and Zhang, Susan and Roller, Stephen and Goyal, Naman and Levy, Omer and Zettlemoyer, Luke},
  booktitle={International Conference on Machine Learning},
  pages={265--279},
  year={2023},
  organization={PMLR}
}

@article{bi2024deepseek,
  title={Deepseek llm: Scaling open-source language models with longtermism},
  author={Bi, Xiao and Chen, Deli and Chen, Guanting and Chen, Shanhuang and Dai, Damai and Deng, Chengqi and Ding, Honghui and Dong, Kai and Du, Qiushi and Fu, Zhe and others},
  journal={arXiv preprint arXiv:2401.02954},
  year={2024}
}

@article{shuai2024scaling,
  title={Scaling Law for Language Models Training Considering Batch Size},
  author={Shuai, Xian and Wang, Yiding and Wu, Yimeng and Jiang, Xin and Ren, Xiaozhe},
  journal={arXiv preprint arXiv:2412.01505},
  year={2024}
}

@article{kumar2024scaling,
  title={Scaling laws for precision},
  author={Kumar, Tanishq and Ankner, Zachary and Spector, Benjamin F and Bordelon, Blake and Muennighoff, Niklas and Paul, Mansheej and Pehlevan, Cengiz and R{\'e}, Christopher and Raghunathan, Aditi},
  journal={arXiv preprint arXiv:2411.04330},
  year={2024}
}

@article{tissue2024scaling,
  title={Scaling Law with Learning Rate Annealing},
  author={Tissue, Howe and Wang, Venus and Wang, Lu},
  journal={arXiv preprint arXiv:2408.11029},
  year={2024}
}

@article{hutter2021learning,
  title={Learning curve theory},
  author={Hutter, Marcus},
  journal={arXiv preprint arXiv:2102.04074},
  year={2021}
}

@article{maloney2022solvable,
  title={A solvable model of neural scaling laws},
  author={Maloney, Alexander and Roberts, Daniel A and Sully, James},
  journal={arXiv preprint arXiv:2210.16859},
  year={2022}
}

@inproceedings{wei2022more,
  title={More than a toy: Random matrix models predict how real-world neural representations generalize},
  author={Wei, Alexander and Hu, Wei and Steinhardt, Jacob},
  booktitle={International Conference on Machine Learning},
  pages={23549--23588},
  year={2022},
  organization={PMLR}
}

@article{michaud2024quantization,
  title={The quantization model of neural scaling},
  author={Michaud, Eric and Liu, Ziming and Girit, Uzay and Tegmark, Max},
  journal={Advances in Neural Information Processing Systems},
  volume={36},
  year={2024}
}

@article{nam2024exactly,
  title={An exactly solvable model for emergence and scaling laws},
  author={Nam, Yoonsoo and Fonseca, Nayara and Lee, Seok Hyeong and Louis, Ard},
  journal={arXiv preprint arXiv:2404.17563},
  year={2024}
}

@article{atanasov2024scaling,
  title={Scaling and renormalization in high-dimensional regression},
  author={Atanasov, Alexander and Zavatone-Veth, Jacob A and Pehlevan, Cengiz},
  journal={arXiv preprint arXiv:2405.00592},
  year={2024}
}

@article{dohmatob2024tale,
  title={A tale of tails: Model collapse as a change of scaling laws},
  author={Dohmatob, Elvis and Feng, Yunzhen and Yang, Pu and Charton, Francois and Kempe, Julia},
  journal={arXiv preprint arXiv:2402.07043},
  year={2024}
}

@article{zhang2024does,
  title={How Does Critical Batch Size Scale in Pre-training?},
  author={Zhang, Hanlin and Morwani, Depen and Vyas, Nikhil and Wu, Jingfeng and Zou, Difan and Ghai, Udaya and Foster, Dean and Kakade, Sham},
  journal={arXiv preprint arXiv:2410.21676},
  year={2024}
}

@article{ahn2020sgd,
  title={Sgd with shuffling: optimal rates without component convexity and large epoch requirements},
  author={Ahn, Kwangjun and Yun, Chulhee and Sra, Suvrit},
  journal={Advances in Neural Information Processing Systems},
  volume={33},
  pages={17526--17535},
  year={2020}
}

@misc{li2025predictablescalei,
      title={Predictable Scale: Part I -- Optimal Hyperparameter Scaling Law in Large Language Model Pretraining}, 
      author={Houyi Li and Wenzhen Zheng and Jingcheng Hu and Qiufeng Wang and Hanshan Zhang and Zili Wang and Shijie Xuyang and Yuantao Fan and Shuigeng Zhou and Xiangyu Zhang and Daxin Jiang},
      year={2025},
      eprint={2503.04715},
      archivePrefix={arXiv},
      primaryClass={cs.LG},
      url={https://arxiv.org/abs/2503.04715}, 
}

@article{ferbach2025dimension,
  title={Dimension-adapted Momentum Outscales SGD},
  author={Ferbach, Damien and Everett, Katie and Gidel, Gauthier and Paquette, Elliot and Paquette, Courtney},
  journal={arXiv preprint arXiv:2505.16098},
  year={2025}
}

@article{lin2025improved,
  title={Improved Scaling Laws in Linear Regression via Data Reuse},
  author={Lin, Licong and Wu, Jingfeng and Bartlett, Peter L},
  journal={arXiv preprint arXiv:2506.08415},
  year={2025}
}

@article{dieuleveut2017harder,
  title={Harder, better, faster, stronger convergence rates for least-squares regression},
  author={Dieuleveut, Aymeric and Flammarion, Nicolas and Bach, Francis},
  journal={Journal of Machine Learning Research},
  volume={18},
  number={101},
  pages={1--51},
  year={2017}
}

@article{taylor2022galactica,
  title={Galactica: A large language model for science},
  author={Taylor, Ross and Kardas, Marcin and Cucurull, Guillem and Scialom, Thomas and Hartshorn, Anthony and Saravia, Elvis and Poulton, Andrew and Kerkez, Viktor and Stojnic, Robert},
  journal={arXiv preprint arXiv:2211.09085},
  year={2022}
}

@article{hernandez2022scaling,
  title={Scaling laws and interpretability of learning from repeated data},
  author={Hernandez, Danny and Brown, Tom and Conerly, Tom and DasSarma, Nova and Drain, Dawn and El-Showk, Sheer and Elhage, Nelson and Hatfield-Dodds, Zac and Henighan, Tom and Hume, Tristan and others},
  journal={arXiv preprint arXiv:2205.10487},
  year={2022}
}

@article{kazdan2024collapse,
  title={Collapse or thrive? perils and promises of synthetic data in a self-generating world},
  author={Kazdan, Joshua and Schaeffer, Rylan and Dey, Apratim and Gerstgrasser, Matthias and Rafailov, Rafael and Donoho, David L and Koyejo, Sanmi},
  journal={arXiv preprint arXiv:2410.16713},
  year={2024}
}

@article{zucchet2025emergence,
  title={The emergence of sparse attention: impact of data distribution and benefits of repetition},
  author={Zucchet, Nicolas and d'Angelo, Francesco and Lampinen, Andrew K and Chan, Stephanie CY},
  journal={arXiv preprint arXiv:2505.17863},
  year={2025}
}

@misc{hastie2009elements,
  title={The elements of statistical learning: data mining, inference, and prediction},
  author={Hastie, Trevor},
  year={2009},
  publisher={Springer}
}

@inproceedings{wang2023data,
  title={Data efficient neural scaling law via model reusing},
  author={Wang, Peihao and Panda, Rameswar and Wang, Zhangyang},
  booktitle={International Conference on Machine Learning},
  pages={36193--36204},
  year={2023},
  organization={PMLR}
}

@article{lee2021deduplicating,
  title={Deduplicating training data makes language models better},
  author={Lee, Katherine and Ippolito, Daphne and Nystrom, Andrew and Zhang, Chiyuan and Eck, Douglas and Callison-Burch, Chris and Carlini, Nicholas},
  journal={arXiv preprint arXiv:2107.06499},
  year={2021}
}

@InProceedings{pmlr-v139-gurbuzbalaban21a,
  title = 	 {The Heavy-Tail Phenomenon in SGD},
  author =       {Gurbuzbalaban, Mert and Simsekli, Umut and Zhu, Lingjiong},
  booktitle = 	 {Proceedings of the 38th International Conference on Machine Learning},
  pages = 	 {3964--3975},
  year = 	 {2021},
  editor = 	 {Meila, Marina and Zhang, Tong},
  volume = 	 {139},
  series = 	 {Proceedings of Machine Learning Research},
  month = 	 {18--24 Jul},
  publisher =    {PMLR},
  pdf = 	 {http://proceedings.mlr.press/v139/gurbuzbalaban21a/gurbuzbalaban21a.pdf},
  url = 	 {https://proceedings.mlr.press/v139/gurbuzbalaban21a.html},
  abstract = 	 {In recent years, various notions of capacity and complexity have been proposed for characterizing the generalization properties of stochastic gradient descent (SGD) in deep learning. Some of the popular notions that correlate well with the performance on unseen data are (i) the ‘flatness’ of the local minimum found by SGD, which is related to the eigenvalues of the Hessian, (ii) the ratio of the stepsize $\eta$ to the batch-size $b$, which essentially controls the magnitude of the stochastic gradient noise, and (iii) the ‘tail-index’, which measures the heaviness of the tails of the network weights at convergence. In this paper, we argue that these three seemingly unrelated perspectives for generalization are deeply linked to each other. We claim that depending on the structure of the Hessian of the loss at the minimum, and the choices of the algorithm parameters $\eta$ and $b$, the SGD iterates will converge to a \emph{heavy-tailed} stationary distribution. We rigorously prove this claim in the setting of quadratic optimization: we show that even in a simple linear regression problem with independent and identically distributed data whose distribution has finite moments of all order, the iterates can be heavy-tailed with infinite variance. We further characterize the behavior of the tails with respect to algorithm parameters, the dimension, and the curvature. We then translate our results into insights about the behavior of SGD in deep learning. We support our theory with experiments conducted on synthetic data, fully connected, and convolutional neural networks.}
}

@article{xia2024data,
  title={Data deletion for linear regression with noisy sgd},
  author={Xia, Zhangjie and Wang, Chi-Hua and Cheng, Guang},
  journal={arXiv preprint arXiv:2410.09311},
  year={2024}
}

@article{li2025understanding,
  title={Understanding SGD with Exponential Moving Average: A Case Study in Linear Regression},
  author={Li, Xuheng and Gu, Quanquan},
  journal={arXiv preprint arXiv:2502.14123},
  year={2025}
}

@article{meterez2025simplified,
  title={A Simplified Analysis of SGD for Linear Regression with Weight Averaging},
  author={Meterez, Alexandru and Morwani, Depen and Oncescu, Costin-Andrei and Wu, Jingfeng and Pehlevan, Cengiz and Kakade, Sham},
  journal={arXiv preprint arXiv:2506.15535},
  year={2025}
}

@article{yang2025qwen3,
  title={Qwen3 technical report},
  author={An Yang and Anfeng Li and Baosong Yang and Beichen Zhang and Binyuan Hui and Bo Zheng and Bowen Yu and Chang Gao and Chengen Huang and Chenxu Lv and Chujie Zheng and Dayiheng Liu and Fan Zhou and Fei Huang and Feng Hu and Hao Ge and Haoran Wei and Huan Lin and Jialong Tang and Jian Yang and Jianhong Tu and Jianwei Zhang and Jianxin Yang and Jiaxi Yang and Jing Zhou and Jingren Zhou and Junyang Lin and Kai Dang and Keqin Bao and Kexin Yang and Le Yu and Lianghao Deng and Mei Li and Mingfeng Xue and Mingze Li and Pei Zhang and Peng Wang and Qin Zhu and Rui Men and Ruize Gao and Shixuan Liu and Shuang Luo and Tianhao Li and Tianyi Tang and Wenbiao Yin and Xingzhang Ren and Xinyu Wang and Xinyu Zhang and Xuancheng Ren and Yang Fan and Yang Su and Yichang Zhang and Yinger Zhang and Yu Wan and Yuqiong Liu and Zekun Wang and Zeyu Cui and Zhenru Zhang and Zhipeng Zhou and Zihan Qiu},
  journal={arXiv preprint arXiv:2505.09388},
  year={2025}
}

@article{tropp2012user,
  title={User-friendly tail bounds for sums of random matrices},
  author={Tropp, Joel A},
  journal={Foundations of computational mathematics},
  volume={12},
  number={4},
  pages={389--434},
  year={2012},
  publisher={Springer}
}

@article{li2026muon,
  title={Muon in Associative Memory Learning: Training Dynamics and Scaling Laws},
  author={Li, Binghui and Wang, Kaifei and Zhong, Han and Lu, Pinyan and Wang, Liwei},
  journal={arXiv preprint arXiv:2602.05725},
  year={2026}
}

@article{li2026optimal,
  title={Optimal Learning-Rate Schedules under Functional Scaling Laws: Power Decay and Warmup-Stable-Decay}, 
  author={Li, Binghui and Wang, Zilin and Chen, Fengling and Zhao, Shiyang and Zheng, Ruiheng and Wu, Lei},
  journal={arXiv preprint arXiv:2602.06797},
  year={2026}
}

@article{wang2026fast,
  title={Fast Catch-Up, Late Switching: Optimal Batch Size Scheduling via Functional Scaling Laws},
  author={Wang, Jinbo and Li, Binghui and Zhou, Zhanpeng and Wang, Mingze and Sun, Yuxuan and Zhang, Jiaqi and Cai, Xunliang and Wu, Lei},
  journal={arXiv preprint arXiv:2602.14208},
  year={2026}
}

@article{li2025functional,
  title={Functional scaling laws in kernel regression: Loss dynamics and learning rate schedules},
  author={Li, Binghui and Chen, Fengling and Huang, Zixun and Wang, Lean and Wu, Lei},
  journal={arXiv preprint arXiv:2509.19189},
  year={2025}
}

@article{jain2017markov,
  title={A markov chain theory approach to characterizing the minimax optimality of stochastic gradient descent (for least squares)},
  author={Jain, Prateek and Kakade, Sham M and Kidambi, Rahul and Netrapalli, Praneeth and Pillutla, Venkata Krishna and Sidford, Aaron},
  journal={arXiv preprint arXiv:1710.09430},
  year={2017}
}
\bibliographystyle{iclr2026_conference}

\newpage

\tableofcontents

\newpage

\appendix

\section{The Use of Large Language Models (LLMs)}
In this paper, we use LLMs (mainly GPT-5 series) to polish some of the sections in our paper, and to check the grammatical issues. Besides that, we use LLMs to debug our code in LLM experiments (\Cref{sec:experiment-llm}) and simulation experiments for  \Cref{sec:strongly_convex} and \Cref{sec:one_hot}. Also, LLMs are used to help improve the plotting scripts.

\section{Additional Related Works}
\paragraph{Data Reuse in Synthetic Setting.} Besides the real-world LLM pre-training regime, many works also reported the improvement of data reusing under synthetic settings empirically~\citep{charton2024emergent,kazdan2024collapse} or theoretically~\citep{zucchet2025emergence,dandi2024benefitsreusingbatchesgradient,arnaboldi2025repetitaiuvantdatarepetition}. 

\paragraph{Empirical Findings on Scaling Laws.} Scaling laws reveal the relationships between large-scale model training loss and various factors such as model size, data size, and compute budget. These laws were initially observed by \citet{hestness2017deep}, but gained significant influence through the work of \citet{kaplan2020scaling}, and have since been further developed in a series of studies \citep{henighan2020scaling, hoffmann2022training, zhai2022scaling, kadra2023power, aghajanyan2023scaling, muennighoff2023scaling, bi2024deepseek, shuai2024scaling, kumar2024scaling, tissue2024scaling, luo2025multi}. 
Notably, \citet{muennighoff2023scaling} further refined these models by incorporating the number of training epochs into a more complex scaling law, which empirically describes the effect of data reuse. In our work, we provide a theoretical analysis of how the effective reuse rate $E(K,N)$ relies on the epoch number $K$ and fresh data size $N$, highlighting the role of $N$ in the scaling behavior of $E(K,N)$, a factor that was overlooked in \citet{muennighoff2023scaling}. 

\paragraph{Theoretical Explanations for Scaling Laws.} A series of studies \citep{sharma2020neuralscalinglawdimension,hutter2021learning,maloney2022solvable,wei2022more,jain2024scalinglawslearningreal,michaud2024quantization,nam2024exactly,atanasov2024scaling,dohmatob2024tale,bahri2024explaining,bordelon2024dynamical,lin2024scaling,paquette202543phasescomputeoptimalneural,bordelon2024feature,zhang2024does,ferbach2025dimension,li2025functional,li2026muon,li2026optimal,wang2026fast} have sought to theoretically explain scaling laws from various perspectives. Among these, recent works \citep{bordelon2024dynamical,paquette202543phasescomputeoptimalneural,lin2024scaling,bordelon2024feature} have analyzed scaling laws by tracking the training dynamics of SGD through linear regression setup. Specifically, \citet{bordelon2024dynamical} investigated a full-batch gradient flow setup, while \citet{paquette202543phasescomputeoptimalneural} and \citet{bordelon2024feature} focused on online SGD with a sufficiently small constant learning rate. Additionally, \citet{lin2024scaling} studied a geometric decaying learning rate schedule (LRS) \citep{ge2019step,wu2022iterateriskboundssgd}. Recently, \citet{li2025functional} proposed a functional scaling law that characterizes the loss dynamics for general LRSs. However, these scaling law studies did not account for the impact of data reuse. In contrast, our work examines the scaling behavior of multi-epoch SGD training within the context of a linear regression setup. 

\paragraph{SGD Analysis in Linear Regression.} 
The analysis of SGD in linear regression has been extensively studied over the years, encompassing both one-pass and multi-epoch SGD. In the context of one-pass SGD, \citet{zou2021benign,meterez2025simplified} considered an SGD procedure with a constant step size and averaged iterates, offering a sharp risk bound in terms of the eigenvalues of the covariance matrix. \citet{pmlr-v139-gurbuzbalaban21a} examined one-pass SGD with batch size and proved that the distribution of
the SGD iterates will converge to a heavy-tailed
stationary distribution. \citet{zou2022benefitsimplicitregularizationsgd} compared the performance of SGD in the absence of ridge regression. \citet{wu2022iterateriskboundssgd} and \citet{wu2022powerlimitationpretrainingfinetuninglinear} studied SGD in linear regression under covariate shift. \citet{xia2024data} considered SGD updates with noisy gradient and analyzed the perfect deleted point problem. \citet{li2025understanding} considered SGD with exponential moving average in the linear regression setting. For multi-epoch SGD, \citet{lin2019optimalratesmultipassstochastic} examined a scenario in which gradients are sampled uniformly at random and mini-batches are allowed. They analyzed the effects of mini-batch size, number of epochs, and learning rate, carefully combining these parameters to achieve the optimal convergence rate. \citet{pillaudvivien2018statisticaloptimalitystochasticgradient} showed that while single-pass averaged SGD is optimal for a certain class of "easy" problems, multiple passes are required to achieve optimal prediction performance on a different class of "hard" problems, provided that an appropriate step size is chosen. In contrast to the matching upper and lower bounds derived by our theory, however, all the above works were only able to derive an upper bound for the loss.

\section{Additional Experimental Details for LLM Training}

\subsection{Pretraining Setup}\label{app:setup}

In our pretraining experiments, we employ the AdamW optimizer with a weight decay of 0.1 and a gradient clip of 1.0. We set the peak learning rate to 0.001, aligning with the approximate optimal learning rate reported by~\citet{li2025predictablescalei}. Balancing the optimal batch size suggested by~\citet{li2025predictablescalei} with training efficiency, we utilize a sequence batch size of 128, which corresponds to roughly 0.5M data points per batch. We adopt the vocabulary of Qwen2.5~\citep{qwen2025qwen25technicalreport} models. Our pretraining model consists of approximately 117 million non-embedding parameters, consistent with the methodology of~\citet{kaplan2020scaling}, and a total of 331 million parameters following the convention of~\citet{hoffmann2022training}. The detailed hyperparameter configurations are presented in \Cref{tab:hyper}, and the model architecture specifications are provided in \Cref{tab:model}. To ensure a fair comparison by eliminating the influence of batch order variations, we fix the random seed that governs the data stream across all experiments.

    

\begin{table}[htbp!]
\centering
\caption{Model configurations and parameter counts. $d_h$: hidden dimension; $d_f$: feed-forward dimension; $n_l$: number of Transformer layers; $n_h$: number of attention heads; $n_{kv}$: number of key-value heads (for grouped-query attention); Vocab Size: size of tokenizer vocabulary; \#NE params: number of non-embedding parameters (in millions); \#Params: total number of model parameters (in millions).}
\label{tab:model}
\begin{tabular}{lrrrrrrrrr}
\toprule
Name & $d_{h}$ & $d_f$ & $n_l$ & $n_h$ & $n_{kv}$ & Vocab Size & \#NE params & \#Params \\
\midrule
0.5B & 896 & 4864 & 24 & 14 & 2 & 151936 & 355 & 491 \\
0.3B & 640 & 3328 & 16 & 10 & 2 & 151936 & 117 & 331 \\
\bottomrule
\end{tabular}
\end{table}

\begin{table}[htbp!]
\centering
\caption{LLM Experiment Settings\label{tab:hyper}}
\begin{tabular}{ll}
\toprule
\textbf{Parameter} & \textbf{Value} \\
\midrule
\textbf{Data} & \\
\quad Sequence Batch Size & 128 \\
\quad Sequence Length & 4096 \\
\midrule
\textbf{Learning Rate} & \\
\quad Peak Learning Rate & 0.001 \\
\quad Schedule & Constant \\
\quad Warmup Steps & 400 \\
\midrule
\textbf{Optimizer} & \\
\quad Optimizer & AdamW \\
\quad Weight Decay & 0.1 \\
\quad $\beta_1$ & 0.9 \\
\quad $\beta_2$ & 0.95 \\
\quad $\epsilon$ & 1e-8 \\
\quad Gradient Clip & 1.0 \\
\bottomrule
\end{tabular}
\end{table}

\subsection{Fitting Experiments} \label{app:fitting experiments}
To provide real-world evidence that larger datasets can be repeated more, we show how the saturation points can be used to determine the appropriate number of training epochs. Recall that the saturation points are the points at which multi-epoch training first starts to underperform the one-pass baseline. We estimate these points from the pretraining loss curves presented in \Cref{sec:experiment-llm} and fit its dependence on $N$. 

To estimate this quantity from the training curves, we proceed as follows. First, to reduce the impact of noise, we smooth the loss curves with exponential moving average (EMA) with decay coefficient $\alpha = 0.9$ and a window size of 3 checkpoints. Then for each dataset size $N$, we examine the ratio $E(K,N)/K$. A larger ratio requires multi-epoch training to remain very close to the one-pass baseline, whereas a smaller ratio allows more deviation. Next, given a threshold hyperparameter $\lambda$, we identify the closest epoch $K$ at which this ratio first falls below $\lambda$, which we denote as $K(\lambda, N)$. Here we choose $\lambda = 0.75$, and we define $K(\lambda, N)$ as the saturation point.

We fit those points and find that $K(\lambda,N)\approx 0.80\log N + 5.21$ with a correlation coefficient of $r=0.97$. The fitting results are shown in \Cref{fig:ek-n-saturation point}. 


\begin{figure}[t]
    \centering
    \includegraphics[width=0.5\linewidth]{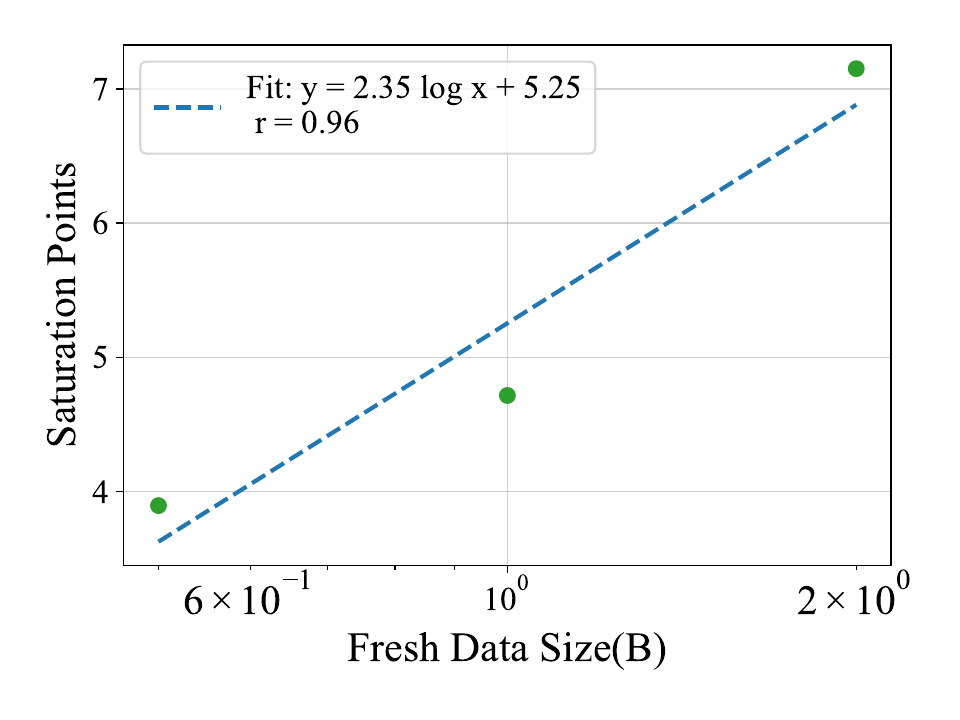}
    \caption{The saturation points $K(\lambda, N)$ as a function of the data size $N$ under a WSD learning rate schedule with linear decay.}
    \label{fig:ek-n-saturation-point-lrs}
\end{figure}
\subsection{Experiments with WSD Learning Rate Schedule}
\label{app:experiment-llm-lrs}

Next, to make our LLM experiments more consistent with real-world pretraining practices, we repeat the LLM experiments under a warmup-stable-decay(WSD) learning rate schedule.

Concretely, we start from the checkpoints obtained in \Cref{sec:experiment-llm} for fresh data sizes
$N \in \{0.2\text{B}, 0.5\text{B}, 1\text{B}, 2\text{B}\}$ after $K \in \{2,4,8,16\}$ epochs of pretraining with a constant learning rate of $10^{-3}$. From each checkpoint, we continue training for one additional epoch while linearly decaying the learning rate from $10^{-3}$ to $10^{-5}$, resulting in a WSD learning rate schedule followed by a linear decay. For the one-pass baseline, we adopt the same schedule as in the $N=2\text{B}$ run.

For each dataset size $N$, this process produces a set of four validation-loss values, each associated with one of the four selected epoch numbers $K$. We model the dependence of the final loss on the training steps $x$ using the parametric form  $\ell(x) = A + \frac{B}{x^{a}}$, where $A,B,a$ are fitted parameters. The fitted curves are then used to predict the final validation loss under this WSD schedule for arbitrary training budgets. Using these predictions, we compute the saturation points following the same procedure as in \Cref{sec:experiment-llm}. Here we still choose $\lambda = 0.75$.

The resulting saturation points are summarized in \Cref{fig:ek-n-saturation-point-lrs}. We observe that, even under this different learning rate schedule, the saturation points still satisfy the logarithmic scaling $K(\lambda,N) = \Theta(\log N).$ Specifically, we have $K(\lambda,N) \approx 2.35 \log N + 5.25$ with a correlation coefficient of $r = 0.96$. This confirms that our message that larger datasets can be repeated more also holds for real LLM training setups.

\section{Additional Results and Simulations for Logarithmic Power-Law Spectrum}\label{app:log-spectrum}
\subsection{Scaling Law for Logarithmic Power-Law Spectrum}
We now present the scaling law for a logarithmic power-law spectrum.
Its proof can be seen in \Cref{sec:proof-one-hot-scaling-law-log-spectrum}.
\begin{theorem} \label{thm:one-hot-log-scaling-law}
    Consider a $K$-epoch SGD over $N$ fresh data. Under Assumptions~\ref{asp:parameter-prior}, \Cref{asp:log-power-decay-prior}, and given the data dimension $d = \Omega((KN)^{\frac{1}{a}})$, it holds that
    \[
    \Bar{\cR}^*(K,N) \asymp 
    \begin{cases}
         \left(KN\right)^{-\frac{a-1}{a}} & \text{for } K = o(\log^b N) \\
         {\left(N\log^b N\right)^{-\frac{a-1}{a}}}& \text{for } K = \omega(\log^b N).
    \end{cases}
    \]
\end{theorem}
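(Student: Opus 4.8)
The plan is to mirror the argument behind the power-law spectrum result \Cref{thm:one-hot-scaling-law}, which is considerably cleaner in this one-hot setting than in the strongly convex case: because every input $\vx^{(i)}=\mu_i\ve_i$ touches only coordinate $i$ and there is no label noise, the SGD iteration decouples across coordinates and is completely insensitive to the shuffling order. Hence after $K$ epochs coordinate $i$ has been updated exactly $Kn_i$ times, where $n_i\sim\mathrm{Bin}(N,p_i)$ is the number of copies of $\vx^{(i)}$ in the training set, and $w_{KN,i}-w^*_i=(1-\eta\Lambda_i)^{Kn_i}(0-w^*_i)$. Taking expectation over the isotropic prior of \Cref{asp:parameter-prior} (so $\E[(w^*_i)^2]=1$) and over $n_i$, and using the binomial moment generating function $\E[t^{n_i}]=(1-p_i(1-t))^N$ with $t=(1-\eta\Lambda_i)^{2K}$, gives the \emph{exact} closed form
\[
\bar\cR(K,N;\eta)=\tfrac12\sum_{i=1}^{d}\lambda_i\,(1-p_iq_i)^N,\qquad q_i:=1-(1-\eta\Lambda_i)^{2K},
\]
with $\lambda_i=p_i\Lambda_i=c\,i^{-a}$ under \Cref{asp:log-power-decay-prior}; this is the analogue of the starting point in the power-law proof, only the shapes of $p_i$ and $\Lambda_i$ change.

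The next step is to collapse this sum to a single ``learned-prefix'' cutoff. Using the elementary bounds $1-(1-x)^{2K}\asymp\min(1,2Kx)$ on $x\in[0,1]$, together with $(1-y)^N\le e^{-Ny}$ and $(1-y)^N\ge(1-1/N)^N\ge\tfrac14$ for $y\le 1/N$, and noting that $i\mapsto\lambda_i$ is decreasing and $i\mapsto q_i$ is decreasing (as $\Lambda_i=1/\log^b(i+1)$ is), one shows that $i\mapsto Np_iq_i$ is, up to absolute constants, decreasing beyond the first $O(1)$ coordinates, so that $\{i:Np_iq_i\gtrsim 1\}$ is an initial segment $[1,i_{\max}(\eta)]$ and
\[
\bar\cR(K,N;\eta)\;\asymp\;\sum_{i>i_{\max}(\eta)}\lambda_i\;\asymp\;i_{\max}(\eta)^{-(a-1)},\qquad Np_iq_i\asymp cN\,i^{-a}\min\!\bigl(2K\eta,\ \log^b(i+1)\bigr).
\]
The hypothesis $d=\Omega((KN)^{1/a})$ ensures $2i_{\max}(\eta)\le d$ for every admissible $\eta\le D^{-2}$ (here $D^{-2}=1/\Lambda_1=\Theta(1)$), so the power-law tail estimates above are legitimate and no coordinates are wasted.

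It remains to solve for $i_{\max}(\eta)$ and optimize over $\eta$. The term $\min(2K\eta,\log^b(i+1))$ switches branch at $i_{\mathrm{cr}}(\eta)=\exp((2K\eta)^{1/b})-1$: on $i<i_{\mathrm{cr}}$ the constraint $Np_iq_i\gtrsim1$ becomes $cN i^{-a}\log^b(i+1)\gtrsim1$, whose solution $i_1$ satisfies $\log i_1=\tfrac1a\log N\,(1+o(1))$ and hence $i_1\asymp(N\log^b N)^{1/a}$; on $i>i_{\mathrm{cr}}$ it becomes $2cKN\eta\,i^{-a}\gtrsim1$, with solution $i_2\asymp(KN\eta)^{1/a}$. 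Comparing $i_{\mathrm{cr}}$ with $i_1,i_2$ shows $i_{\max}(\eta)\asymp\min\bigl((KN\eta)^{1/a},(N\log^b N)^{1/a}\bigr)$, so
\[
\bar\cR(K,N;\eta)\;\asymp\;\max\!\Bigl((KN\eta)^{-\frac{a-1}{a}},\ (N\log^b N)^{-\frac{a-1}{a}}\Bigr),
\]
which decreases in $\eta$ until $\eta$ reaches $\eta_{\mathrm{bd}}\asymp\log^b N/K$ (where $i_{\mathrm{cr}}=i_2\asymp i_1$) and is flat thereafter. Minimizing over $\eta\in(0,D^{-2}]$: if $K=o(\log^b N)$ then $\eta_{\mathrm{bd}}=\omega(1)$ exceeds $D^{-2}$, the first term dominates for all admissible $\eta$, and the optimum is at $\eta\asymp1$, giving $\bar\cR^*(K,N)\asymp(KN)^{-(a-1)/a}$; if $K=\omega(\log^b N)$ then $\eta_{\mathrm{bd}}=o(1)$ is admissible, every $\eta$ gives risk $\gtrsim(N\log^bN)^{-(a-1)/a}$ while $\eta\asymp\log^b N/K$ attains it, giving $\bar\cR^*(K,N)\asymp(N\log^b N)^{-(a-1)/a}$. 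The one-pass law $\bar\cR^*(1,T)\asymp T^{-(a-1)/a}$ is the special case $K=1$.

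I expect the main obstacle to be the bookkeeping in the third step. The self-referential cutoff equation $i_1=(cN\log^b(i_1+1))^{1/a}$ must be resolved accurately enough that the $\log^b$ factor is $(1+o(1))$-exact — this is where the precise threshold $K$ versus $\log^b N$ emerges, and it is more delicate than the purely polynomial cutoff of the power-law case — and one must check that the absolute constants hidden in ``$Np_iq_i\gtrsim1$'' and in the $\min$-branch switch are mutually compatible, so that gluing the two branches at $i_{\mathrm{cr}}$ yields the clean $\max$-formula rather than an intermediate regime. Super-polynomially large $K$ needs a short separate remark, but there $i_{\mathrm{cr}}$ dwarfs $i_2$ and the saturated branch trivially wins, so the conclusion is unchanged.
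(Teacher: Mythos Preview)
Your proposal is correct and reaches the same conclusion as the paper, but by a genuinely different and more conceptual route.

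The paper follows the template of its power-law proof: it splits off the first $d_2=\Theta(1)$ coordinates as $V_1$, and for the tail $V_2$ performs a regime-specific Taylor analysis (approximating $q_i\approx 2K\eta/\log^b(i+1)$ and then $(1-p_iq_i)^N\approx e^{-2KNc\eta/i^a}$ for small $K$; a direct sandwich for large $K$), then introduces a \emph{reference} learning rate $\eta'$ near $2\log^b 2$ (small $K$) or $1.5\log^b 2$ (large $K$), shows $V_1(\eta')=o(V_2(\eta'))$, and uses a contradiction argument to localize $\eta^*$. Your cutoff argument is more unified: encoding both regimes in $p_iq_i\asymp ci^{-a}\min(2K\eta,\log^b(i+1))$ collapses the risk to a tail sum beyond $i_{\max}$, and the single formula $i_{\max}\asymp\min((KN\eta)^{1/a},(N\log^bN)^{1/a})$ replaces the two case-by-case calculations; monotonicity in $\eta$ then supplants the reference-$\eta$ machinery. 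The trade-off is that the paper's approach also pins down $\eta^*=2\log^b(2)(1+o(1))$ in the small-$K$ case, whereas you only get $\eta^*\asymp 1$ --- but since the theorem is stated only up to $\asymp$, your precision suffices. One minor point: in this one-hot section the paper implicitly allows $\eta$ up to $2/\Lambda_1=2D^{-2}$ (cf.\ the commented-out generalization of $\bar\cR^*$), not just $D^{-2}$; your cutoff argument extends there unchanged because $q_i\asymp\min(1,2K\eta\Lambda_i)$ still holds beyond the first $O(1)$ coordinates, so this does not affect the order.
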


\subsection{Simulations in \Cref{sec:log-power-spectrum}}

Now we focus on validating the predictions of \Cref{thm:one-hot-E_K-log-spectrum} using synthetic data generated under the spectral assumptions of \Cref{sec:one_hot} and a log-power decay spectrum (\Cref{asp:log-power-decay-prior}).

\paragraph{Experiments Setup.} Similar to \Cref{sec:power-spectrum-experiment}, in all sub-figures of \Cref{fig:one-hot-log}, we set the data dimension $d$ to $10^5$ and tune all the learning rates to their optimal values. Here we set $a=1.5$ and $b=2$. 

\paragraph{Simulations for the Solvable Model.} \Cref{fig:one-hot-log} plots $E(K, N)$ versus $K$ and $\log N$ for the solvable model. The curves depicting $E(K,N)$ versus $K$ and $E(K,N)$ versus $\log N$ show trends consistent with those in \Cref{sec:power-spectrum-experiment}, aligning with \Cref{thm:one-hot-E_K-log-spectrum}. 
Furthermore, in the large-$K$ regime, we fit the exponent according to \Cref{thm:one-hot-E_K-log-spectrum} and obtain $2.036 \approx b = 2$, which provides strong validation of our theory.

\begin{figure}[t!]
    \centering
    \includegraphics[width=\linewidth]{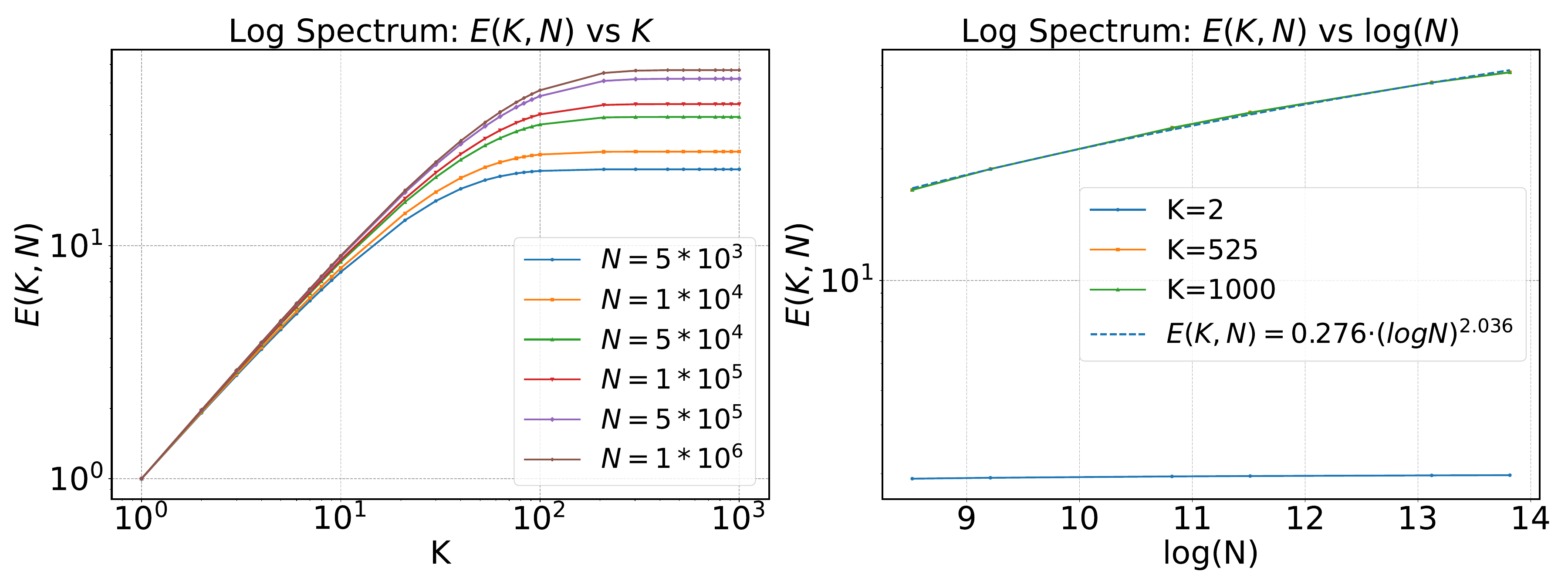}
         \caption{The solvable cases with logarithmic power-law spectrum. $E(K,N)$ exhibits a similar behavior to that presented in \Cref{fig:convex-and-power}. We also fit the effective reuse rate using the formula $E(K,N)=c_1\left(\log N\right)^{c_2}$ suggested by \Cref{thm:one-hot-E_K-log-spectrum}, and the fitted exponent $c_2=2\approx b=2$ matches our theory.}
        \label{fig:one-hot-log}
\end{figure}

    \section{Additional Notations}\label{app:notation}
In this section, we provide some additional notations appeared in the following proof of our main results.

\paragraph{Key Quantities.} We define the following key quantities to analyze the sequential updates. For each epoch $k$, let 
\begin{equation}
    \mA^{(k)} := \prod_{i= N-1}^0 (\mI - \eta \vx_{\pi_k(i)}\vx_{\pi_k(i)}^{\top})\label{app:A_def}
\end{equation}
represent the product of sequential updates through all samples in epoch $k$. More generally, we define the partial product operator:
\begin{align*}
    \mZ^{(k)}_{a \to b} &:= \prod_{i=a}^b(\mI - \eta \vx_{\pi_k(i)}\vx_{\pi_k(i)}^{\top}), \quad \text{with} \quad \mA^{(k)} = \mZ^{(k)}_{N-1 \to 0}.
\end{align*}
We further define that $\mZ^{(k)}_{N-1 \to N} = \mI$. The cumulative effect across epochs is captured by:
\begin{align*}
    \mT^{(k)} &:= \prod_{i=K}^{k+1}\mA^{(i)}, \quad \text{and} \quad \mT^{(K)} = \mI.
\end{align*}

\paragraph{Pseudo-expectation Notation $\fakeE$.} Because matrix multiplication is non-commutative and the shuffling in training introduces statistical dependence, the expectations of the random matrices defined above cannot be written in a tractable closed form. To approximate the population excess risk, we therefore introduce the auxiliary notation $\fakeE$.  By construction, $\fakeE$ computes the expectation of each factor as if the variables were independent, deliberately neglecting the correlations.  We then invoke matrix-concentration inequalities to bound the gap between this ``pseudo''-expectation and the true expectation of the original dependent random variables. Specifically, for the above random matrices used in our proof, here we further define that
\begin{align}
  \fakeE\mZ_{a \to b}^{(k)}
    &:= (\mI - \eta \mH)^{a-b+1},\\
  \fakeE\mA^{(k)}
    &:= (\mI - \eta \mH)^{N},\\
  \fakeE\mT^{(k)}
    &:= (\mI - \eta \mH)^{N(K-k)},\label{equ:faek-T}\\
  \fakeE\mS_l^{(ij)}
    &:= \fakeE\!\Bigl[\mZ_{N-1 \to \pi_i^{-1}(l)+1}^{(i)}\Bigr]\,
        \E\!\bigl[\vx_l\vx_l^{\top}\bigr]\,
        \fakeE\!\Bigl[\mZ_{\pi_j^{-1}(l)+1 \to N-1}^{(j)}\Bigr]. \label{equ:faek-S}
\end{align}



\section{Proof Outline in Strongly Convex Linear Regression}\label{sec:convex-proof-outline}
In this section, we give the outline of \Cref{thm:convex-L-estimate}, \Cref{thm:convex-opt-lr}, and \Cref{thm:convex-E_K}. The main technical challenges and our proof insights are briefly stated in \Cref{sec:strongly-convex-tech}.

\Cref{sec:strongly_convex} centres on \Cref{thm:convex-E_K}, which establishes a scaling law for the effective reuse rate $E(K,N)$ in terms of the relative magnitudes of number of epochs $K$ and dataset size $N$. Its proof unfolds in three stages.

\paragraph{1. An explicit approximation of the expected excess risk.} \Cref{thm:convex-L-estimate} derives a sufficiently accurate asymptotic formula for the expected excess risk of multi-epoch SGD. The argument begins with a bias–variance decomposition, splitting the expected excess risk into a variance term (\Cref{lem:variance-estimate}) and a bias term (\Cref{lem:bias-estimate}).
\begin{itemize}
    \item \textbf{Variance term.} The closed-form approximation relies on concentration properties of matrix contractions together with a careful treatment of data shuffling.
    \item \textbf{Bias term.} The same contraction inequality is employed to obtain an analytic expression, after which tight error bounds are proved for the full range of relative sizes of $K$ and $N$. These bounds hold uniformly over a broad class of learning rates, necessitating detailed case-by-case analysis.
\end{itemize}

\paragraph{2. Selection of a nearly optimal learning rate.}
\Cref{thm:convex-opt-lr} identifies a learning rate whose resulting loss is asymptotically equivalent to the minimum excess risk attained with the optimal learning rate as stated in \Cref{sec:setup}. This ``approximately optimal learning rate'' will be fixed in Appendix~\ref{proof:convex-opt-lr}.

\paragraph{3. Proof of the effective reuse rate scaling law.} With the one-pass and multi-epoch SGD training learning rate set to the near-optimal learning rate obtained above, the proof of \Cref{thm:convex-E_K} proceeds to characterise the behaviour of $E(K,N)$ as $K$ and $N$ vary, yielding the desired scaling relation.
Together, these three components establish \Cref{thm:convex-E_K} and provide a comprehensive description of how reuse efficiency depends on the interplay between $K$ and $N$.

\section{Proof of Main Results in Strongly Convex Linear Regression}\label{sec:proof-convex}

\subsection{Step I: A Concrete Version of Bias-Variance Decomposition}
Before we begin our proof, we first present the following lemma, which provides the formal version of the loss estimate for a specific range of learning rate parameters. We define a $\widehat{\cR}(K,N, \eta)$ as the estimator of $\Bar{\cR}(K,N;\eta)$ 
\begin{align*}
    \widehat{\cR}(K,N;\eta)&:= \underbrace{\widehat{\cR}_1(K,N; \eta)}_{\textit{bias term}} + \underbrace{ \widehat{\cR}_2(K,N; \eta)}_{\textit{var term across epochs}} + \underbrace{ \widehat{\cR}_3(K,N;\eta)}_{\textit{var term within epoch}},
\end{align*}
where
\begin{align*}
    \widehat{\cR}_1(K,N;\eta) := \frac{1}{2}(\vw_0 - \vw^*)^{\top} (\mI-\eta\mH)^{2KN}\mH (\vw_0 - \vw^*),
\end{align*}
\begin{align*}
    \widehat{\cR}_2(K,N;\eta) := \frac{\sigma^2}{N} tr\left(\frac{\left(\mI -(\mI -\eta\mH)^{KN}\right)\left((\mI -\eta\mH)^{N} -(\mI -\eta\mH)^{KN}\right)}{\mI + (\mI -\eta\mH)^N}\right),
\end{align*}
\begin{align*}
    \widehat{\cR}_3(K,N;\eta) :=  \frac{\eta\sigma^2}{2}\inp{\mH}{(\mI - (\mI - \eta \mH)^{2KN})(2\mI-\eta\mH)^{-1}}.
\end{align*}
\subsection{Step II: Risk Approximation and Error Bound Analysis}\label{sec:proof-thm-convex-L-estimate}
In this section, we rigorously formulate the analytic risk approximation in \Cref{thm:convex-L-estimate} and provide its proof. \Cref{thm:convex-L-estimate} indicates that the error bound is of higher order than the main term when the parameters are restricted to a limited range of values.
\begin{lemma}\label{thm:convex-L-estimate}
    Under \Cref{asp:strongly-convex} and \ref{asp:large-dataset}, we further assume that for every $\vx$ in the training set, $\|\vx\| \leq D$ for some constant $D>0$. Consider a $K$-epoch SGD with learning rate $\eta \in \left[\Omega\left(\frac{1}{T}\right), o(T^{-\frac{3}{4}})\right]$, $K = o\left(\eta^{-1} T^{-\frac{3}{4}}\right)$ and data shuffling. Then, after $T = KN$ steps, the estimator of the expected excess risk satisfies:
    \begin{align*}
        \Bar{\cR}(K,N;\eta) = \widehat{\cR}(K,N; \eta)\left(1+o(1)\right).
    \end{align*}
\end{lemma}

Recall from \Cref{step:bias-var-decomposition} that the risk $\Bar{\cR}(K,N;\eta)$ can be decomposed into the \textit{bias term} $\Bar{\cR}^{\bias}(K,N;\eta):= \frac{1}{2}\left\|\vtheta_t^{\bias}\right\|_{\mH}^2$ and \textit{variance term} $\Bar{\cR}^{\var}(K,N;\eta):=\frac{1}{2}\left\|\vtheta_t^{\var}\right\|_{\mH}^2$, which implies that \Cref{thm:convex-L-estimate} is a direct corollary of the following two lemmas:
\begin{lemma}[Variance Term] \label{lem:variance-estimate}
    Suppose that \Cref{asp:strongly-convex} holds. Then for a $K$-epoch SGD with dataset size $N$ and learning rate $\eta \in [\Omega(\frac{1}{T}),o(\frac{1}{T^{\frac{1}{2}}})]$ and shuffling, when $\text{poly}(T)\gtrsim d$, we have the estimator of the variance term $\bar{\cR}^{\var}(K,N;\eta):= \E_{\vw \sim \cW_{K,N,\eta}}\left[\cR(\vw)^{\var}\right]$ after $T:=KN$ steps
    \begin{align*}
        \Tilde{\cR}^{\var}(K,N;\eta)&:=\frac{\sigma^2}{N} tr\left(\frac{\left(\mI -(\mI -\eta\mH)^{KN}\right)\left((\mI -\eta\mH)^{N} -(\mI -\eta\mH)^{KN}\right)}{\mI + (\mI -\eta\mH)^N}\right)\\
    &~~~~+\frac{\eta\sigma^2}{2}\inp{\mH}{(\mI - (\mI - \eta \mH)^{2KN})(2\mI-\eta\mH)^{-1}},
    \end{align*}
    where the expectation is taken on the training set and shuffle, and the estimate error is 
    \begin{align*}
        \left|\Tilde{\cR}^{\var}(K,N;\eta) - \bar{\cR}^{\var}(K,N;\eta)\right| = O(\eta^3T^{\frac{3}{2}}K^2\sqrt{\log d}).
    \end{align*}
    when $K \leq \frac{\log 2}{\eta \sqrt{\Tilde{C}8eD^4T\log d}}$.
\end{lemma}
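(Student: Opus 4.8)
The plan is to reduce $\bar{\cR}^{\var}(K,N;\eta)=\tfrac12\E\|\vtheta_{KN}^{\var}\|_{\mH}^2$ to an explicit function of $\mI-\eta\mH$ in three stages: unroll the variance recursion and exploit independence of the label noise; replace every expectation of a product of \emph{dependent} update matrices by the pseudo-expectation $\fakeE$, controlling the gap with \Cref{lem:un-shuffle-A-EA-expectation-main}; and sum the resulting geometric series. Concretely, I would first unroll $\vtheta_{t+1}^{\var}=(\mI-\eta\vx_{j_t}\vx_{j_t}^{\top})\vtheta_t^{\var}+\eta\xi_{j_t}\vx_{j_t}$ from $\vtheta_0^{\var}=\vzero$ to get $\vtheta_{KN}^{\var}=\eta\sum_{t=0}^{KN-1}P_t\,\xi_{j_t}\vx_{j_t}$, where $P_t$ is the ordered product of update factors with index larger than $t$. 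Expanding $\|\vtheta_{KN}^{\var}\|_{\mH}^2$ and using that the per-example noises $\{\xi_i\}$ are mean-zero, mutually independent, and independent of the features and permutations, only the pairs $(t,t')$ with $j_t=j_{t'}$ survive, each carrying a factor $\sigma^2$. I then split these pairs into (a) the diagonal $t=t'$ and (b) $t\ne t'$ with $j_t=j_{t'}$; because random reshuffling visits each datapoint exactly once per epoch, case (b) forces $t,t'$ into distinct epochs. Group (a) will produce the within-epoch term $\widehat{\cR}_3$ and group (b) the cross-epoch term $\widehat{\cR}_2$, with no bias contribution.

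The hard part is evaluating the surviving expectations, each being an expectation of a product of mutually dependent matrices: a single feature $\vx_m$ reappears once in every epoch through reshuffling and, in the cross-epoch terms, also serves as the rank-one hinge $\vx_m\vx_m^{\top}$. My plan is to factor each propagator $P_t$ into per-epoch blocks $\mA^{(k)}$ (see \Cref{app:A_def}) and a partial tail block $\mZ^{(k_t)}_{\cdot\to\cdot}$ of $t$'s own epoch, introduce the hinge operators $\mS_l^{(ij)}$, and replace each true expectation by its pseudo-expectation, using $\fakeE\mA^{(k)}=(\mI-\eta\mH)^{N}$, $\fakeE\mZ^{(k)}_{a\to b}=(\mI-\eta\mH)^{a-b+1}$, and the $\fakeE\mS_l^{(ij)}$ of \eqref{equ:faek-S}. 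The reappearances of the hinge feature inside its own propagation path contribute only a multiplicative $1+O(K\eta D^2)=1+o(1)$ correction under \Cref{asp:large-dataset} and $\eta=o(T^{-1/2})$, which is absorbed into the error term.

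To bound the cost of each replacement I would telescope $\big\|\prod_{k}\mA^{(k)}-\prod_{k}\fakeE\mA^{(k)}\big\|\le\sum_{k}(\text{product of the other factors' norms})\cdot\|\mA^{(k)}-\fakeE\mA^{(k)}\|$, using that all factors are contractions since $\eta\le D^{-2}$, then invoke \Cref{lem:un-shuffle-A-EA-expectation-main} with $l=1,2$ to bound $\E\|\mA^{(k)}-\fakeE\mA^{(k)}\|\lesssim\eta\sqrt{\deltaA N}$ and apply Hölder to the products of deviations arising in the two-sided cross-epoch terms. The constraint $K\le\frac{\log 2}{\eta\sqrt{\Tilde{C}8eD^{4}T\log d}}$ is exactly what keeps the geometric accumulation of these per-epoch deviations over $\Theta(K)$ blocks bounded by an absolute constant. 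Summing over the $O(KN)$ diagonal terms and the $O(NK^{2})$ cross-epoch terms and collecting constants gives the claimed deviation $O(\eta^{3}T^{3/2}K^{2}\sqrt{\log d})$, with $\sqrt{\log d}$ tracking $\deltaA\propto\log d$ and the hypothesis $\poly(T)\gtrsim d$ keeping the dimension-dependent trace factors polynomially controlled.

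Once every expectation is replaced by a pseudo-expectation, all matrices are polynomials in $\mI-\eta\mH$, hence commuting and simultaneously diagonalizable, so the remaining sums are elementary. Averaging over the uniform position of each datapoint within an epoch produces $\sum_{r=0}^{N-1}(\mI-\eta\mH)^{r}=\big(\mI-(\mI-\eta\mH)^{N}\big)(\eta\mH)^{-1}$; summing group (a) over all steps and telescoping $\eta\sum_{s}(\mI-\eta\mH)^{2s}$ yields exactly $\widehat{\cR}_{3}=\frac{\eta\sigma^{2}}{2}\inp{\mH}{(\mI-(\mI-\eta\mH)^{2KN})(2\mI-\eta\mH)^{-1}}$, while summing group (b) over all ordered epoch pairs reduces, via the identity $\sum_{0\le b<a\le K-1}u^{a+b}=\frac{u(\mI-u^{K-1})(\mI-u^{K})}{(\mI-u)^{2}(\mI+u)}$ with $u:=(\mI-\eta\mH)^{N}$, to $\widehat{\cR}_{2}=\frac{\sigma^{2}}{N}\tr\!\big(\frac{(\mI-u^{K})(u-u^{K})}{\mI+u}\big)$. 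Adding the two recovers $\Tilde{\cR}^{\var}=\widehat{\cR}_{2}+\widehat{\cR}_{3}$ and finishes the proof. I expect the main obstacle to be the third stage: the careful bookkeeping that shows the matrix-concentration residuals generated by a feature that appears in $\Theta(K)$ non-commuting blocks (and as the hinge) sum to a genuinely lower-order quantity under the stated bound on $K$, in particular decomposing each exact term into an $\fakeE$-computable part plus $O(K^{2})$ residuals each fit for \Cref{lem:un-shuffle-A-EA-expectation-main}.
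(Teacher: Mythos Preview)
Your proposal is correct and follows the same route as the paper: unroll the variance iterate, use noise independence to split the surviving pairs into within-epoch ($\widehat{\cR}_3$) and cross-epoch ($\widehat{\cR}_2$) contributions, replace dependent matrix products by their pseudo-expectations, bound the gap via \Cref{lem:un-shuffle-A-EA-expectation-main}, and sum the resulting geometric series. The paper's error analysis is slightly cleaner than your hinge-reappearance heuristic: it uses a two-stage replacement (first swap each $\mT^{(i)}$ for $\fakeE\mT^{(i)}$ while bounding $\|\mS_l^{(ij)}\|$ pointwise so that the hinge/propagator correlation is irrelevant, then swap $\mS$ for $\fakeE\mS$ with the propagators already deterministic), which sidesteps that correlation entirely and yields the stated $O(\eta^3T^{3/2}K^2\sqrt{\log d})$ bound directly.
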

\begin{lemma}[Bias Term] \label{lem:bias-estimate}
Under \Cref{asp:strongly-convex}, for a $K$-epoch SGD with dataset size $N$, learning rate $\eta$ and shuffling, when $\text{poly}(T)\gtrsim d$, we have the estimator of the bias term $\bar{\cR}^{\bias}(K,N;\eta):= \E_{\vw \sim \cW_{K,N,\eta}}\left[\cR(\vw)^{\bias}\right]$ after $T:=KN$ steps
\begin{align*}
    \Tilde{\cR}^{\bias}(K,N;\eta)&:= \frac{1}{2}(\vw_0 - \vw^*)^{\top} (\mI-\eta\mH)^{2KN}\mH (\vw_0 - \vw^*).
\end{align*}
Then we have the following estimate errors:
\begin{enumerate}[leftmargin=*]
    \item When $K\geq 2$ and $K = o\left(\frac{N^{\frac{1}{5}}}{\left(\log N\right)^{\frac{6}{5}}}\right)$:
    \begin{enumerate}[leftmargin=*]        
        \item When $\eta \leq \frac{2\log T}{3\lambda_d T}$, the estimate distance is given by
        \begin{gather*}
            \left|\Tilde{\cR}^{\bias}(K,N;\eta) - \bar{\cR}^{\bias}(K,N;\eta)\right| = O\left((1-\eta \lambda_d)^{N(2K-1)}K\sqrt{\eta^2KN}\right).
        \end{gather*} 
        \item When $\eta \geq \frac{2\log T}{3\lambda_d T}$, the estimate distance is given by
        \begin{gather*}
           \left|\Tilde{\cR}^{\bias}(K,N;\eta) - \bar{\cR}^{\bias}(K,N;\eta)\right| = O\left(\frac{1}{T^{\frac{4}{3}}}\right).
        \end{gather*}  
    \end{enumerate}
    \item When $K = 1$:
    \begin{gather*}
       \left|\Tilde{\cR}^{\bias}(1,T;\eta) - \bar{\cR}^{\bias}(1,T;\eta)\right| =  O\left(\eta^2 Te^{-2\lambda_d\eta T}\right).
    \end{gather*}
\end{enumerate}


\end{lemma}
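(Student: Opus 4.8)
Since $\bar\cR^{\bias}(K,N;\eta)$ is determined by the noiseless iterate $\vtheta^{\bias}_{KN} = \mB\,\vtheta_0$, where $\vtheta_0 = \vw_0-\vw^*$ and $\mB := \mA^{(K)}\cdots\mA^{(1)}$ with the per-epoch product matrices $\mA^{(k)}$ of \Cref{app:notation}, I would write $\mB = \fakeE\mB + \mE$ with $\fakeE\mB = (\mI-\eta\mH)^{KN}$ and $\mE := \mB - \fakeE\mB$, so that
\[
2\bigl(\bar\cR^{\bias}-\Tilde\cR^{\bias}\bigr) \;=\; \E\bigl\|\mE\,\vtheta_0\bigr\|_{\mH}^{2} \;+\; 2\,\E\bigl\langle (\mI-\eta\mH)^{KN}\vtheta_0,\ \mE\,\vtheta_0\bigr\rangle_{\mH}.
\]
The first term is a second moment of the deviation; the second is handled by Cauchy--Schwarz against $\|(\mI-\eta\mH)^{KN}\vtheta_0\|_\mH$, which — as in \Cref{lem:convex-L-approx-small-K-new} — is of order $(1-\eta\lambda_d)^{KN}$ and concentrated on the slowest-decaying bottom eigenspace. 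Thus everything reduces to bounding $\E\|\mE\,\vtheta_0\|_\mH$ and $\E\|\mE\,\vtheta_0\|_\mH^2$: the square root of the stated remainder is what the cross term needs once multiplied by the $(1-\eta\lambda_d)^{KN}$ prefactor, which is where the $(1-\eta\lambda_d)^{N(2K-1)}$ in Case~(1) comes from.

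\textbf{Telescoping across epochs and per-epoch concentration.} A one-shot application of \Cref{lem:un-shuffle-A-EA-expectation-main} with $n=KN$ is unavailable because across epochs the same $N$ points recur, so the $KN$ factors are dependent. Instead, with $\mB_j := \mA^{(j)}\cdots\mA^{(1)}$ (so $\mB_0=\mI$, $\mB_K=\mB$) I would use the telescoping identity
\[
\mE \;=\; \sum_{j=1}^{K} (\mI-\eta\mH)^{N(K-j)}\,\bigl(\mA^{(j)} - (\mI-\eta\mH)^{N}\bigr)\,\mB_{j-1}.
\]
Within epoch $j$ the $N$ points appear exactly once in a uniformly random order, so the marginal of $\mA^{(j)}$ is a product of $N$ i.i.d.\ factors; expanding and using independence of distinct samples shows $\E\mA^{(j)} = (\mI-\eta\mH)^N$ exactly, and \Cref{lem:un-shuffle-A-EA-expectation-main} with $n=N$ gives $\E\|\mA^{(j)}-(\mI-\eta\mH)^N\|^\ell \le (\deltaA\eta^2 N\ell)^{\ell/2}$ for $\ell\in\{1,2\}$. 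The crude bound $\|\mB_{j-1}\|\le 1$ (valid because $\eta\le D^{-2}$ makes each $\mI-\eta\vx\vx^\top$ a p.s.d.\ contraction) is too lossy; the sharp point is that $\mB_{j-1}\vtheta_0$ is itself the $(j{-}1)$-epoch bias iterate, so by the statement being proved with fewer epochs (an induction on the epoch count) $\E\|\mB_{j-1}\vtheta_0\|^2 \lesssim (1-\eta\lambda_d)^{2(j-1)N}\|\vtheta_0\|^2$. Multiplying the leading power $(1-\eta\lambda_d)^{N(K-j)}$ by this decay yields the combined, $j$-independent factor $(1-\eta\lambda_d)^{N(K-1)}$ in every term; summing the $K$ terms (triangle inequality for $\ell=1$, Cauchy--Schwarz for $\ell=2$), combining with the $(1-\eta\lambda_d)^{KN}$ from the cross term, and tracking the $\mH$-weighting gives the bound $O\bigl((1-\eta\lambda_d)^{N(2K-1)}K\sqrt{\eta^2 KN}\bigr)$ for $K\ge 2$, $\eta\le\frac{2\log T}{3\lambda_d T}$.

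\textbf{Case split.} For $K=1$ there is no reuse: $\mB=\mA^{(1)}$ is a single product over $T=N$ i.i.d.\ factors, $\E\mB=(\mI-\eta\mH)^T$, and feeding $\E\|\mB-(\mI-\eta\mH)^T\|^\ell\lesssim(\deltaA\eta^2 T\ell)^{\ell/2}$ into the identity of the first paragraph — again extracting that the error inherits the main term's $(1-\eta\lambda_d)^{2T}\asymp e^{-2\lambda_d\eta T}$ decay from the factor $\|(\mI-\eta\mH)^{T}\vtheta_0\|_\mH$ — gives the claimed $O(\eta^2 T e^{-2\lambda_d\eta T})$. For $K\ge 2$ and $\eta\ge\frac{2\log T}{3\lambda_d T}$ the main bias term has already fallen below $T^{-4/3}$, so a uniform $O(T^{-4/3})$ remainder suffices; it follows by bounding each telescope term through $\|(\mI-\eta\mH)^{N(K-j)}\|\le(1-\eta\lambda_d)^{N(K-j)}$, $\|\mB_{j-1}\|\le1$, and the per-epoch concentration. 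The threshold $\eta=\frac{2\log T}{3\lambda_d T}$ is chosen precisely so that $(1-\eta\lambda_d)^{2T}\approx e^{-\frac43\log T}=T^{-4/3}$, i.e.\ so the two regimes' error expressions meet. Together with the variance estimate \Cref{lem:variance-estimate}, this lemma yields \Cref{thm:convex-L-estimate}.

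\textbf{Main obstacle.} The crux is showing that the deviation $\mE\,\vtheta_0$ inherits the \emph{exponential} decay of the main bias term despite cross-epoch data reuse; this is what forces the epoch-by-epoch peeling, which only closes through the inductive geometric-decay bound on the partial products $\mB_{j-1}\vtheta_0$ — itself an instance of the statement being proved. The delicate book-keeping is then to combine the various $(\mI-\eta\mH)$-powers, the $\eta^2 N$ per-epoch fluctuation scale, the $K$-dependence, and the $\mH$-weighting so that the remainder is provably $o(1)$ relative to a main term that is itself exponentially small, uniformly over $\eta\in[\Omega(T^{-1}),o(T^{-3/4})]$ and $K=o(\eta^{-1}T^{-3/4})$; this is also what dictates the three separate cases and the particular threshold appearing in the statement.
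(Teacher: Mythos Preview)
Your reduction to the cross term plus the squared term $\E\|\mE\vtheta_0\|_{\mH}^2$ matches the paper's starting inequality, and your treatment of Case~1(b) --- bounding by the value at the threshold $\eta=\tfrac{2\log T}{3\lambda_d T}$ --- is exactly what the paper does. The other two cases, however, diverge from the paper in ways that leave genuine gaps.

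\textbf{Case $K\ge 2$, small $\eta$.} The paper does \emph{not} telescope and induct. It proves two standalone lemmas (\Cref{lem:shuffle-poly-A-EA-expectation} and \Cref{lem:shuffle-poly-A-EA-expectation-l2}) bounding $\E\bigl\|\prod_k\mA^{(k)}-(\E\mA)^K\bigr\|^\ell$ for $\ell\in\{1,2\}$ directly: write each $\mA^{(k)}=\mQ^{(k)}+\E\mA$, expand the product binomially, and control every term by H\"older across the $\mQ$-factors using \Cref{lem:un-shuffle-A-EA-expectation-main}. This yields the operator-norm bound $(\sqrt{\ell\deltaA\eta^2NK}+\|\E\mA\|)^K-\|\E\mA\|^K$, from which \emph{both} the cross and the squared contributions inherit the full $\|\E\mA\|^{K-1}=(1-\eta\lambda_d)^{N(K-1)}$ decay. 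Your telescope recovers the cross-term bound, but the squared term stalls: you need $\E\bigl[\|\mA^{(j)}-(\mI-\eta\mH)^N\|^2\,\|\mB_{j-1}\vtheta_0\|^2\bigr]$, and since $\mA^{(j)}$ and $\mB_{j-1}$ share the same data, the inductive second-moment control on $\|\mB_{j-1}\vtheta_0\|$ is not enough --- Cauchy--Schwarz would demand a fourth moment the induction as stated does not supply, and the crude a.s.\ bounds lose either the $\eta^2N$ factor or the exponential decay. The paper's binomial/H\"older route sidesteps this correlation issue entirely by working in operator norm.

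\textbf{Case $K=1$.} Here the concentration approach genuinely fails. Feeding $\E\|\mE\|^\ell\lesssim(\deltaA\eta^2T\ell)^{\ell/2}$ into the identity gives a squared-term contribution $O(\eta^2T)$ with \emph{no} exponential decay, and a cross-term contribution $O\bigl((1-\eta\lambda_d)^{T}\sqrt{\eta^2T}\bigr)$ carrying only a single $e^{-\lambda_d\eta T}$; both are polynomially (in $T$) larger than the claimed $O(\eta^2Te^{-2\lambda_d\eta T})$ when $\eta\asymp\log T/T$. The paper uses a completely different argument for $K=1$: it tracks the second-moment recursion
\[
\mB_t:=\E[\vtheta_t\vtheta_t^\top]=(\mI-\eta\mH)\mB_{t-1}(\mI-\eta\mH)+\eta^2\,\E[\vx\vx^\top\mB_{t-1}\vx\vx^\top]-\eta^2\mH\mB_{t-1}\mH,
\]
bounds the fourth-moment term via hyper-contractivity (\Cref{lem:4-th-moment}) as $\preceq\eta^2\alpha\,\tr(\mH\mB_{t-1})\,\mH$, shows $\langle\mH,\mB_t\rangle\le C e^{-2\lambda_d\eta t}\langle\mH,\mB_0\rangle$, and then unrolls to get the error as $\eta^2\sum_i e^{-2\lambda_d\eta i}e^{-2\lambda_d\eta(T-i)}=O(\eta^2Te^{-2\lambda_d\eta T})$. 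The exponential decay is produced by this recursion, not inherited from the single factor $\|(\mI-\eta\mH)^T\vtheta_0\|_{\mH}$.
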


\subsubsection{Variance Term Analysis: Proof of \Cref{lem:variance-estimate}}\label{app:var-term-proof}
We first recall some notations Appendix~\ref{app:notation} that $\mZ^{(k)}_{a \to b} = \prod_{i=a}^b(\mI - \eta \vx_{\pi_k(i)}\vx_{\pi_k(i)}^{\top})$, $\vb^{(k)} = \sum_{l=0}^{N-1}\mZ_{N-1 \to l+1}^{(k)}\xi_{\pi_k(l)}\vx_{\pi_k(l)}$, $\mA^{(k)} = \mZ^{(k)}_{N-1 \to 0}$, $ \mT^{(k)} = \prod_{i=K}^{k+1}\mA^{(i)}$, and $\mT^{(K)} = \mI$. For simplicity, and if it does not cause confusion, we omit the superscript ``var'' in all the training parameters $\vtheta^{\var}$ in the proof of \Cref{lem:variance-estimate}. 
Now we derive the recursion before and after the $k$-th epoch.
\begin{align*}
    \vtheta_{kN} &= (\mI-\eta \vx_{\pi_k(N-1)}\vx_{\pi_k(N-1)}^{\top})\vtheta_{kN-1}+\eta\xi_{\pi_k(N-1)}
    \vx_{\pi_k(N-1)} \\
    &= \eta \sum_{l=0}^{N-1}\mZ_{N-1 \to l+1}^{(k)}\xi_{\pi_k(l)}\vx_{\pi_k(l)}+ \mA^{(k)}\vtheta_{(k-1)N} \\
    &=\eta \vb^{(k)}+\mA^{(k)}\vtheta_{(k-1)N},
\end{align*}
where $\pi_k(i)$ is the $i$-th index after the permutation $\pi_k$ in the $K$-th epoch. Further writing out the above recursion gives the parameter after $K$ epochs
\begin{align*}     
    \vtheta_{KN} &= \eta\sum_{k=1}^{K}\mA^{(K)}\cdots\mA^{(k+1)}\vb^{(k)}.
\end{align*}
A natural move here is to replace $\vtheta_{KN}$ with the expression above in the variance term
\begin{align}
    &\bar\cR^{\var}(K,N;\eta) = \E\frac{1}{2}\vtheta_{KN}^{\top}\mH\vtheta_{KN} = \E\frac{1}{2}\inp{\mH}{\vtheta_{KN}\vtheta_{KN}^{\top}} \notag\\
    &=\frac{\eta^2}{2}\E\inp{\mH}{\frac{1}{(N!)^K}\sum_{\pi_1\cdots \pi_K}\sum_{i,j=1}^K\mT^{(i)}\vb^{(i)}\left(\vb^{(j)}\right)^{\top}\left(\mT^{(j)}\right)^{\top}} \notag\\
    &=\frac{\eta^2\sigma^2}{2}\E\inp{\mH}{\frac{1}{(N!)^K}\sum_{\pi_1\cdots \pi_K}\sum_{i,j=1}^K\mT^{(i)}\left(\sum_{l=0}^{N-1}\mS_l^{(ij)}\right)\left(\mT^{(j)}\right)^{\top}}\notag\\
    &=\frac{\eta^2\sigma^2}{2}\E\inp{\mH}{\frac{1}{(N!)^K}\sum_{\substack{i\neq j \\i,j=1}}^K\sum_{\substack{\pi_1\cdots \pi_K \\ \text{except} ~\pi_i, \pi_j}}\mT^{(i)}\sum_{\pi_i, \pi_j}\left(\sum_{l=0}^{N-1}\mS_l^{(ij)}\right)\left(\mT^{(j)}\right)^{\top}} \notag \\
    &+\frac{\eta^2\sigma^2}{2}\E\inp{\mH}{\frac{1}{(N!)^K}\sum_{i=1}^K\sum_{\substack{\pi_1\cdots \pi_K \\ \text{except} ~\pi_i}}\mT^{(i)}\sum_{\pi_i}\left(\sum_{l=0}^{N-1}\mS_l^{(ii)}\right)\left(\mT^{(i)}\right)^{\top}}. \label{equ:var-term-decompose}
\end{align}
where in the third equation, we take expectations with respect to the label noise$(\xi_l)_{l=0}^{N-1}$, and in the last equation, we decompose the variance term into two parts, according to whether the $\vb^{(i)}$ and $\vb^{(j)}$ are from the same epoch or not. 

After explicitly writing the variance term, and to get a close-form formula for it, we then take pseudo expectations of $\mT^{(i)}$, $\mT^{(j)}$, $\mS_{l}^{(ii)}$, and $\mS_{l}^{(ij)}$ separately to get the approximation of $\bar\cR^{\var}(K,N;\eta)$, given as follows:
\begin{align*} 
    \Tilde{\cR}^{\var}(K,N;\eta)&:=\frac{\eta^2\sigma^2}{2}\E\inp{\mH}{\frac{1}{(N!)^2}\sum_{\substack{i\neq j \\i,j=1}}^K\fakeE\mT^{(i)}\left(\sum_{l=0}^{N-1}\sum_{\pi_i, \pi_j}\fakeE \mS_l^{(ij)}\right)\fakeE\mT^{(i)}} \\
    &+\frac{\eta^2\sigma^2}{2}\E\inp{\mH}{\frac{1}{N!}\sum_{i=1}^K\fakeE\mT^{(i)}\left(\sum_{l=0}^{N-1}\sum_{\pi_i}\fakeE \mS_l^{(ii)}\right)\fakeE\mT^{(i)}}. \\
\end{align*} 
The intuition of the ``pseudo expectation'' and the related definitions are in Appendix~\ref{app:notation}. Fix $l$, notice that when $i\neq j$, by \Cref{equ:faek-S}, 
\begin{align*}
    \sum_{\pi_i, \pi_j}\fakeE\mS_l^{(ij)} &:=\sum_{\pi_i, \pi_j}\fakeE \left[\mZ_{N-1 \to \pi_i^{-1}(l)+1}^{(i)}\vx_l\vx_l^{\top} \mZ_{\pi_j^{-1}(l)+1 \to N-1}^{(j)}\right] \\
    &:=\sum_{\pi_i, \pi_j}\left(\mI-\eta\mH\right)^{N-1-\pi_i^{-1}(l)}\mH\left(\mI-\eta\mH\right)^{N-1-\pi_j^{-1}(l)}. \\
\end{align*}
For a fixed $i$, for all $m\in[0,N-1]$, there are $(N-1)!$ permutations $\pi_i$ that satisfies $\pi_i(m)=l$. So
\begin{align}
    \sum_{\pi_i, \pi_j}\fakeE\mS_l^{(ij)} 
    &=\left((N-1)!\right)^2\sum_{m,n=0}^{N-1}\left(\mI-\eta\mH\right)^{N-1-m}\mH\left(\mI-\eta\mH\right)^{N-1-n}.\label{equ:S_ij}
\end{align}
By applying a similar derivation to the $i = j$ case, we obtain that 
\begin{align}
    \sum_{\pi_i}\fakeE\mS_l^{(ii)} 
    &=(N-1)!\sum_{m=0}^{N-1}\left(\mI-\eta\mH\right)^{N-1-m}\mH\left(\mI-\eta\mH\right)^{N-1-m}.\label{equ:S_ii}
\end{align}
Plugging \Cref{equ:S_ij} and \Cref{equ:S_ii} into the expression of $\Tilde{\cR}^{\var}(K,N;\eta)$, and we have
\begin{align*}
    &\Tilde{\cR}^{\var}(K,N;\eta)\\ 
    &=\frac{\eta^2\sigma^2}{2}\E\inp{\mH}{\frac{1}{N^2}\sum_{\substack{i\neq j \\i,j=1}}^K\fakeE\mT^{(i)}\left(\sum_{l=0}^{N-1}\sum_{m,n=0}^{N-1}(\mI-\eta \mH)^{2N-2-m-n}\mH\right)\fakeE\mT^{(i)}} \\
    &+\frac{\eta^2\sigma^2}{2}\E\inp{\mH}{\frac{1}{N}\sum_{i=1}^K\fakeE\mT^{(i)}\left(\sum_{l=0}^{N-1}\sum_{m=0}^{N-1}(\mI-\eta \mH)^{2N-2-2m}\mH\right)\fakeE\mT^{(i)}} \\
    &=\underbrace{\frac{\sigma^2}{2}\E\inp{\mH}{\frac{1}{N}\sum_{\substack{i\neq j \\i,j=1}}^K(\mI-\eta\mH)^{N(K-i)}\left(\mI-\left(\mI-\eta \mH\right)^N\right)^2\mH^{-1}(\mI-\eta\mH)^{N(K-j)}}}_{:=\Psi_1} \\
    &+\underbrace{\frac{\eta\sigma^2}{2}\E\inp{\mH}{\sum_{i=1}^K(\mI-\eta\mH)^{N(K-i)}\left(\mI-\left(\mI-\eta\mH\right)^{2N}\right)(2\mI-\eta\mH)^{-1}(\mI-\eta\mH)^{N(K-i)}}}_{\Psi_2}.
\end{align*}
where the second equation uses \Cref{equ:faek-T}. The quantity $\Psi_1$ accounts for the variance term across different epochs and $\Psi$ Then we calculate $\Psi_1$ and $\Psi_2$ separately. For $\Psi_1$, we have
\begin{align*}
    \Psi_1 &=\frac{\sigma^2}{2}\inp{\mH}{\frac{1}{N}\sum_{\substack{i,j=1}}^K(\mI-\eta\mH)^{N(K-i)}\left(\mI-\left(\mI-\eta \mH\right)^N\right)^2\mH^{-1}(\mI-\eta\mH)^{N(K-j)}} \\
    &-\frac{\sigma^2}{2}\inp{\mH}{\frac{1}{N}\sum_{\substack{i=1}}^K(\mI-\eta\mH)^{N(K-i)}\left(\mI-\left(\mI-\eta \mH\right)^N\right)^2\mH^{-1}(\mI-\eta\mH)^{N(K-i)}} \\
    &=\frac{\sigma^2}{2}\inp{\mH}{\frac{1}{N}\left(\left(\mI-\left(\mI-\eta \mH\right)^{KN}\right)\left(\mI-\left(\mI-\eta \mH\right)^{N}\right)^{-1}\right)^2\mH^{-1}\left(\mI-\left(\mI-\eta \mH\right)^N\right)^2} \\
    &-\frac{\sigma^2}{2N}\inp{\mH}{\left(\mI-\left(\mI-\eta\mH\right)^{2KN}\right)\left(\mI-\left(\mI-\eta\mH\right)^{2N}\right)^{-1}\left(\left(\mI-\left(\mI-\eta\mH\right)^{N}\right)^2\right)\mH^{-1}} \\
    &=\frac{\sigma^2}{2N}\tr\left(\left(\mI-\left(\mI-\eta\mH\right)^{KN}\right)^2\right)\\
    &-\frac{\sigma^2}{2N}\tr\left(\left(\mI-\left(\mI-\eta\mH\right)^{N}\right)^2\left(\mI-\left(\mI-\eta\mH\right)^{2N}\right)^{-1}\left(\mI-\left(\mI-\eta\mH\right)^{2KN}\right)\right) \\
    &=\frac{\sigma^2}{N}\tr\left(\left(\mI-\left(\mI-\eta\mH\right)^{KN}\right)\left(\mI+\left(\mI-\eta\mH\right)^{N}\right)^{-1}\left(\left(\mI-\eta\mH\right)^{N}-\left(\mI-\eta\mH\right)^{KN}\right)\right).
\end{align*}
The last equation is obtained by direct algebraic calculation.
For $\Psi_2$, by direct matrix calculation, we get
\begin{align*}
    \Psi_2&=\frac{\eta\sigma^2}{2}\E\inp{\mH}{(2\mI-\eta\mH)^{-1}\left(\mI-\left(\mI-\eta\mH\right)^{2KN}\right)}.
\end{align*}
Next we obtain the error bound for $\left|\Tilde{\cR}^{\var}(K,N;\eta) - \bar{\cR}^{\var}(K,N;\eta)\right|$, which can be represented as
\begin{align*}
    &\left|\Tilde{\cR}^{\var}(K,N;\eta) - \bar{\cR}^{\var}(K,N;\eta)\right| \\
    &\leq\left|\frac{\eta^2\sigma^2}{2}\E\inp{\mH}{\frac{1}{(N!)^K}\sum_{\substack{i\neq j \\i,j=1}}^K\sum_{\substack{\pi_1\cdots \pi_K \\ \text{except} ~\pi_i, \pi_j}}\mT^{(i)}\sum_{\pi_i, \pi_j}\left(\sum_{l=0}^{N-1}\mS_l^{(ij)}\right)\left(\mT^{(j)}\right)^{\top}}\right. \\
    &-\left.\frac{\eta^2\sigma^2}{2}\E\inp{\mH}{\frac{1}{(N!)^2}\sum_{\substack{i\neq j \\i,j=1}}^K(\mI-\eta\mH)^{N(K-i)}\left(\sum_{l=0}^{N-1}\sum_{\pi_i, \pi_j}\fakeE \mS_l^{(ij)}\right)(\mI-\eta\mH)^{N(K-j)}}\right| =:I_1\\
    &+\left|\frac{\eta^2\sigma^2}{2}\E\inp{\mH}{\frac{1}{(N!)^K}\sum_{i=1}^K\sum_{\substack{\pi_1\cdots \pi_K \\ \text{except} ~\pi_i}}\mT^{(i)}\sum_{\pi_i}\left(\sum_{l=0}^{N-1}\mS_l^{(ii)}\right)\left(\mT^{(i)}\right)^{\top}}\right. \\
    &-\left.\frac{\eta^2\sigma^2}{2}\E\inp{\mH}{\frac{1}{N!}\sum_{i=1}^K(\mI-\eta\mH)^{N(K-i)}\left(\sum_{l=0}^{N-1}\sum_{\pi_i}\fakeE \mS_l^{(ii)}\right)(\mI-\eta\mH)^{N(K-i)}}\right| =:I_2,
\end{align*}
where the first inequality uses the triangle inequality. The term $I_1$ represents the error term between epochs, and $I_2$ represents the error term within one epoch. We will bound $I_1$ and $I_2$ separately in the proof.
\paragraph{Upper bound for $I_1$.}
To bound $I_1$, a natural move here is to plug in a term that takes pseudo expectation over $(\mT^{(i)})_{i=1}^K$ but does not take pseudo expectation over $(\mS_l^{(ij)})_{l,i,j}$, and divide $I_1$ into two terms.
\begin{align*}
    I_1&\leq \left|\frac{\eta^2\sigma^2}{2}\E\inp{\mH}{\frac{1}{(N!)^K}\sum_{\substack{i\neq j \\i,j=1}}^K\sum_{\substack{\pi_1\cdots \pi_K \\ \text{except} ~\pi_i, \pi_j}}\mT^{(i)}\sum_{\pi_i, \pi_j}\left(\sum_{l=0}^{N-1}\mS_l^{(ij)}\right)\left(\mT^{(j)}\right)^{\top}}\right. \\
    &-\left.\frac{\eta^2\sigma^2}{2}\E\inp{\mH}{\frac{1}{(N!)^2}\sum_{\substack{i\neq j \\i,j=1}}^K(\mI-\eta\mH)^{N(K-i)}\sum_{\pi_i, \pi_j}\left(\sum_{l=0}^{N-1}\mS_l^{(ij)}\right)(\mI-\eta\mH)^{N(K-j)}}\right| \\
    &+ \left|\frac{\eta^2\sigma^2}{2}\E\inp{\mH}{\frac{1}{(N!)^2}\sum_{\substack{i\neq j \\i,j=1}}^K(\mI-\eta\mH)^{N(K-i)}\sum_{\pi_i, \pi_j}\left(\sum_{l=0}^{N-1}\mS_l^{(ij)}\right)(\mI-\eta\mH)^{N(K-j)}}\right. \\
    &-\left.\frac{\eta^2\sigma^2}{2}\E\inp{\mH}{\frac{1}{(N!)^2}\sum_{\substack{i\neq j \\i,j=1}}^K(\mI-\eta\mH)^{N(K-i)}\left(\sum_{l=0}^{N-1}\sum_{\pi_i, \pi_j}\fakeE \mS_l^{(ij)}\right)(\mI-\eta\mH)^{N(K-j)}}\right| \\
    &=:I_{11}+I_{12}.
\end{align*}
Next we bound the terms $I_{11}$ and $I_{12}$ separately. Notice that 
\begin{align}
    &\sum_{\substack{i\neq j \\i,j=1}}^K\sum_{\substack{\pi_1\cdots \pi_K \\ \text{except} ~\pi_i, \pi_j}}(\mI-\eta\mH)^{N(K-i)}\sum_{\pi_i, \pi_j}\left(\sum_{l=0}^{N-1}\mS_l^{(ij)}\right)(\mI-\eta\mH)^{N(K-j)} \notag\\
    &=(N!)^{K-2}\sum_{\substack{i\neq j \\i,j=1}}^K(\mI-\eta\mH)^{N(K-i)}\sum_{\pi_i, \pi_j}\left(\sum_{l=0}^{N-1}\mS_l^{(ij)}\right)(\mI-\eta\mH)^{N(K-j)} \label{equ:open-permu}
\end{align}
because the summands do not depend on the permutations except $\pi_i, \pi_j$,  plugging \Cref{equ:open-permu} into the expression of $I_1$ we have
\begin{align*}
    I_{11}&\leq\left|\frac{\eta^2\sigma^2}{2}\E\inp{\mH}{\frac{1}{(N!)^K}\sum_{\substack{i\neq j \\i,j=1}}^K\sum_{\substack{\pi_1\cdots \pi_K \\ \text{except} ~\pi_i, \pi_j}}\mT^{(i)}\sum_{\pi_i, \pi_j}\left(\sum_{l=0}^{N-1}\mS_l^{(ij)}\right)\left(\mT^{(j)}\right)^{\top}}\right. \\
    &-\left.\frac{\eta^2\sigma^2}{2}\E\inp{\mH}{\frac{1}{(N!)^K}\sum_{\substack{i\neq j \\i,j=1}}^K\sum_{\substack{\pi_1\cdots \pi_K \\ \text{except} ~\pi_i, \pi_j}}(\mI-\eta\mH)^{N(K-i)}\sum_{\pi_i, \pi_j}\left(\sum_{l=0}^{N-1}\mS_l^{(ij)}\right)(\mI-\eta\mH)^{N(K-j)}}\right| .\\
\end{align*}
Then we use \Cref{equ:faek-T} to split $I_{11}$ into three terms and by triangle inequality:
\begin{align*}
    I_{11}&\leq\left|\frac{\eta^2\sigma^2}{2}\E\inp{\mH}{\frac{1}{(N!)^K}\sum_{\substack{i\neq j \\i,j=1}}^K\sum_{\substack{\pi_1\cdots \pi_K \\ \text{except} ~\pi_i, \pi_j}}\left(\mT^{(i)}-\fakeE\mT^{(i)}\right)\sum_{\pi_i, \pi_j}\left(\sum_{l=0}^{N-1}\mS_l^{(ij)}\right)\fakeE\mT^{(i)}}\right| \\
    &+\left|\frac{\eta^2\sigma^2}{2}\E\inp{\mH}{\frac{1}{(N!)^K}\sum_{\substack{i\neq j \\i,j=1}}^K\sum_{\substack{\pi_1\cdots \pi_K \\ \text{except} ~\pi_i, \pi_j}}\fakeE\mT^{(i)}\sum_{\pi_i, \pi_j}\left(\sum_{l=0}^{N-1}\mS_l^{(ij)}\right)\left(\mT^{(j)}-\fakeE\mT^{(j)}\right)}\right| \\
    &+\left|\frac{\eta^2\sigma^2}{2}\E\inp{\mH}{\frac{1}{(N!)^K}\sum_{\substack{i\neq j \\i,j=1}}^K\sum_{\substack{\pi_1\cdots \pi_K \\ \text{except} ~\pi_i, \pi_j}}\left(\mT^{(i)}-\fakeE\mT^{(i)}\right)\sum_{\pi_i, \pi_j}\left(\sum_{l=0}^{N-1}\mS_l^{(ij)}\right)\left(\mT^{(j)}-\fakeE\mT^{(j)}\right)}\right|. \\
\end{align*}
Next, we use \Cref{lem:dot-product} and the fact that $\mS_l^{(ij)}\lesssim\mI$ to bound the matrix inner products:
\begin{align*}       
    I_{11}&\leq \frac{\eta^2\sigma^2ND^2\text{tr}(\mH)}{2(N!)^{K-2}}\sum_{\substack{i\neq j \\i,j=1}}^K\sum_{\substack{\pi_1\cdots \pi_K \\ \text{except} ~\pi_i, \pi_j}}\left(\E\left\|\mT^{(i)}-\fakeE\mT^{(i)}\right\|+\E\left\|\mT^{(j)}-\fakeE\mT^{(j)}\right\|\right. \\
    &+\left.\E\left\|\mT^{(i)}-\fakeE\mT^{(i)}\right\|\left\|\mT^{(j)}-\fakeE\mT^{(j)}\right\|\right). \\
\end{align*}
Notice that \Cref{lem:numerical-power} and \Cref{lem:shuffle-poly-A-EA-expectation} implies that 
\begin{align*}
    \E\left\|\mT^{(i)}-\fakeE\mT^{(i)}\right\| &\leq (\sqrt{\deltaA \eta^2NK}+\|\E\mA\|)^K - \|\E\mA\|^K \\
    &\leq (\sqrt{\deltaA \eta^2NK}+1)^K - 1 \\
    &\leq 2K\sqrt{\deltaA \eta^2NK} ~~~~ \text{when} ~~~~K \leq \frac{\log 2}{\eta \sqrt{\deltaA T}},
\end{align*}
where $\deltaA = \Tilde{C}8eD^4 \log d$ is the constant appeared in \Cref{lem:un-shuffle-A-EA-expectation}, and $\Tilde{C}$ is some absolute constant. The second inequality uses the fact that $(\sqrt{\deltaA \eta^2NK}+\|\E\mA\|)^K - \|\E\mA\|^K$ motonously increases with $\|\E\mA\|$.
A similar approach combining \Cref{lem:numerical-power} and \Cref{lem:shuffle-poly-A-EA-expectation-l2} derives another concentration inequality for $\mT^{(i)}$:
\begin{align*}
    \E\left\|\mT^{(i)}-\fakeE\mT^{(i)}\right\|^2 &\leq \left(2K\sqrt{\deltaA \eta^2NK}\right)^2 ~~~~ \text{when} ~~~~K \leq \frac{\log 2}{\eta \sqrt{\deltaA T}}.
\end{align*}
Applying Cauchy-Schwarz's inequality and the concentration inequalities for $\left(\mT^{(i)}\right)_i$, we get that 
\begin{align*}       
    I_{11}
    &\leq \frac{\eta^2\sigma^2ND^2\text{tr}(\mH)}{2(N!)^{K-2}}\sum_{\substack{i\neq j \\i,j=1}}^K\sum_{\substack{\pi_1\cdots \pi_K \\ \text{except} ~\pi_i, \pi_j}}\left(\E\left\|\mT^{(i)}-\fakeE\mT^{(i)}\right\|+\E\left\|\mT^{(j)}-\fakeE\mT^{(j)}\right\|\right. \\
    &+\left.\left(\E\left\|\mT^{(i)}-\fakeE\mT^{(i)}\right\|^2\right)^{\frac{1}{2}}\left(\E\left\|\mT^{(j)}-\fakeE\mT^{(j)}\right\|^2\right)^{\frac{1}{2}}\right) \\
    &\leq\frac{\eta^2\sigma^2ND^2\text{tr}(\mH)}{2}\sum_{\substack{i\neq j \\i,j=1}}^K\left(4K\sqrt{\deltaA\eta^2NK}+\left(2K\sqrt{2\deltaA\eta^2NK}\right)^2\right).
\end{align*}
Our next step is to bound $I_{12}$. We first make use of the fact that $\mI-\eta\mH\lesssim\mI$, and get that 
\begin{align*}
    I_{12}&\leq \left|\frac{\eta^2\sigma^2}{2}\E\inp{\mH}{\frac{1}{(N!)^2}\sum_{\substack{i\neq j \\i,j=1}}^K\sum_{\pi_i, \pi_j}\left(\sum_{l=0}^{N-1}\mS_l^{(ij)}-\fakeE \mS_l^{(ij)}\right)}\right|. \\
\end{align*}
Recall that for a fixed $i$, for all $m\in[0,N-1]$, there are $(N-1)!$ permutations $\pi_i$ that satisfies $\pi_i(m)=l$. So
\begin{align*}
    I_{12}
    &\leq \left|\frac{\eta^2\sigma^2}{2}\E\left\langle \mH, \frac{1}{(N!)^2}\sum_{\substack{i\neq j \\i,j=1}}^K\sum_{l=0}^{N-1}\sum_{m=0}^{N-1}\sum_{n=0}^{N-1}((N-1)!)^2\left(\mZ_{N-1 \to m+1}^{(i)}\mH\mZ_{n+1 \to N-1}^{(j)}\right.\right.\right. \\
    &\left.\left.\left.-\E\mZ_{N-1 \to m+1}^{(i)}\mH\E\mZ_{n+1 \to N-1}^{(j)}\right)\right\rangle\right|. \\
\end{align*}
Notice that 
\begin{align*}
    &\mZ_{N-1 \to m+1}^{(i)}\mH\mZ_{n+1 \to N-1}^{(j)}-\E\mZ_{N-1 \to m+1}^{(i)}\mH\E\mZ_{n+1 \to N-1}^{(j)} \\
    &=\left(\mZ_{N-1 \to m+1}^{(i)}-\E\mZ_{N-1 \to m+1}^{(i)}\right)\mH\E\mZ_{n+1 \to N-1}^{(j)}+\E\mZ_{N-1 \to m+1}^{(i)}\mH\left(\mZ_{n+1 \to N-1}^{(j)}-\E\mZ_{n+1 \to N-1}^{(j)}\right) \\
    &+\left(\mZ_{N-1 \to m+1}^{(i)}-\E\mZ_{N-1 \to m+1}^{(i)}\right)\mH\left(\mZ_{n+1 \to N-1}^{(j)}-\E\mZ_{n+1 \to N-1}^{(j)}\right).
\end{align*}
Applying \Cref{lem:dot-product} and using the fact that $\E\mZ_{N-1 \to m+1}^{(i)}\lesssim\mI$, 
\begin{align*}
    I_{12}&\leq \frac{\eta^2\sigma^2\text{tr}(\mH)\|\mH\|N}{2N^2}\E\sum_{\substack{i\neq j \\i,j=1}}^K\left(\sum_{m=0}^{N-2}\left\|\mZ_{N-1 \to m+1}^{(i)}-\E\mZ_{N-1 \to m+1}^{(i)}\right\|\right.\\
    &\left.+\sum_{n=0}^{N-2}\left\|\mZ_{n+1 \to N-1}^{(j)}-\E\mZ_{n+1 \to N-1}^{(j)}\right\|\right. \\
    &\left.+\sum_{m=0}^{N-2}\sum_{n=0}^{N-2}\left\|\mZ_{N-1 \to m+1}^{(i)}-\E\mZ_{N-1 \to m+1}^{(i)}\right\|\left\|\mZ_{n+1 \to N-1}^{(j)}-\E\mZ_{n+1 \to N-1}^{(j)}\right\|\right). \\
\end{align*}
Applying Cauchy-Schwarz inequality and \Cref{lem:un-shuffle-A-EA-expectation} gives
\begin{align*}           
    I_{12}&\leq \frac{\eta^2\sigma^2\text{tr}(\mH)\|\mH\|N}{2N^2}\sum_{\substack{i\neq j \\i,j=1}}^K\left(\sum_{m=0}^{N-2}\E\left\|\mZ_{N-1 \to m+1}^{(i)}-\E\mZ_{N-1 \to m+1}^{(i)}\right\|\right.\\
    &\left.+\sum_{n=0}^{N-2}\E\left\|\mZ_{n+1 \to N-1}^{(j)}-\E\mZ_{n+1 \to N-1}^{(j)}\right\|\right. \\
    &\left.+\sum_{m=0}^{N-2}\sum_{n=0}^{N-2}\left(\E\left\|\mZ_{N-1 \to m+1}^{(i)}-\E\mZ_{N-1 \to m+1}^{(i)}\right\|^2\right)^{\frac{1}{2}}\left(\E\left\|\mZ_{n+1 \to N-1}^{(j)}-\E\mZ_{n+1 \to N-1}^{(j)}\right\|^2\right)^{\frac{1}{2}}\right) \\
    &\leq \frac{\eta^2\sigma^2\text{tr}(\mH)\|\mH\|N}{2N^2}\sum_{\substack{i\neq j \\i,j=1}}^K\left(\sum_{m=0}^{N-2}\left(\sqrt{\deltaA\eta^2(N-1-m)}\right)+
    \sum_{n=0}^{N-2}\left(\sqrt{\deltaA\eta^2(N-1-n)}\right)\right. \\
    &\left.+\sum_{m=0}^{N-2}\sum_{n=0}^{N-2}\left(\sqrt{2\deltaA\eta^2(N-1-m)}\right)\left(\sqrt{2\deltaA\eta^2(N-1-n)}\right)\right) \\ 
    &\lesssim \eta^3K^2\sqrt{N\log d}+\eta^4 K^2N^2\log d ~~~\text{when} ~~~\eta = o(\frac{1}{\sqrt{T}}).
\end{align*}
\paragraph{Upper bound for $I_2$.}We bound $I_2$ using a similar technique as what we did for $I_1$. We first plug in a term that takes pseudo expectation over $(\mT^{(i)})_{i=1}^K$ but does not take pseudo expectation over $\mS_l^{(ii)}$ for every $l$ and $i$, and decompose $I_2$ into two terms:
\begin{align*}
    I_2&\leq \left|\frac{\eta^2\sigma^2}{2}\E\inp{\mH}{\frac{1}{(N!)^K}\sum_{i=1}^K\sum_{\substack{\pi_1\cdots \pi_K \\ \text{except} ~\pi_i}}\mT^{(i)}\sum_{\pi_i}\left(\sum_{l=0}^{N-1}\mS_l^{(ii)}\right)\left(\mT^{(i)}\right)^{\top}}\right. \\
    &-\left.\frac{\eta^2\sigma^2}{2}\E\inp{\mH}{\frac{1}{N!}\sum_{i=1}^K(\mI-\eta\mH)^{N(K-i)}\sum_{\pi_i}\left(\sum_{l=0}^{N-1}\mS_l^{(ii)}\right)(\mI-\eta\mH)^{N(K-i)}}\right| \\
    &+ \left|\frac{\eta^2\sigma^2}{2}\E\inp{\mH}{\frac{1}{N!}\sum_{i=1}^K(\mI-\eta\mH)^{N(K-i)}\sum_{\pi_i}\left(\sum_{l=0}^{N-1}\mS_l^{(ii)}\right)(\mI-\eta\mH)^{N(K-i)}}\right. \\
    &-\left.\frac{\eta^2\sigma^2}{2}\E\inp{\mH}{\frac{1}{N!}\sum_{i=1}^K(\mI-\eta\mH)^{N(K-i)}\left(\sum_{l=0}^{N-1}\sum_{\pi_i}\fakeE \mS_l^{(ii)}\right)(\mI-\eta\mH)^{N(K-i)}}\right| \\
    &=:I_{21}+I_{22}.
\end{align*}
Next we bound the terms $I_{21}$ and $I_{22}$ separately. Notice that 
\begin{align*}
    &\sum_{i=1}^K\sum_{\substack{\pi_1\cdots \pi_K \\ \text{except} ~\pi_i}}(\mI-\eta\mH)^{N(K-i)}\sum_{\pi_i}\left(\sum_{l=0}^{N-1}\mS_l^{(ii)}\right)(\mI-\eta\mH)^{N(K-i)}\\
    &=(N!)^{K-1}\sum_{\substack{i=1}}^K(\mI-\eta\mH)^{N(K-i)}\sum_{\pi_i}\left(\sum_{l=0}^{N-1}\mS_l^{(ii)}\right)(\mI-\eta\mH)^{N(K-i)}
\end{align*}
because the summands do not depend on the permutations except $\pi_i$, we have
\begin{align*}
    I_{21}&=\left|\frac{\eta^2\sigma^2}{2}\E\inp{\mH}{\frac{1}{(N!)^K}\sum_{i=1}^K\sum_{\substack{\pi_1\cdots \pi_K \\ \text{except} ~\pi_i}}\mT^{(i)}\sum_{\pi_i}\left(\sum_{l=0}^{N-1}\mS_l^{(ii)}\right)\left(\mT^{(i)}\right)^{\top}}\right. \\
    &-\left.\frac{\eta^2\sigma^2}{2}\E\inp{\mH}{\frac{1}{(N!)^K}\sum_{i=1}^K\sum_{\substack{\pi_1\cdots \pi_K \\ \text{except} ~\pi_i}}(\mI-\eta\mH)^{N(K-i)}\sum_{\pi_i}\left(\sum_{l=0}^{N-1}\mS_l^{(ii)}\right)(\mI-\eta\mH)^{N(K-i)}}\right|. \\   
\end{align*}
Then we use the fact that $\fakeE\mT^{(i)}=(\mI-\eta\mH)^{N(K-i)}$ to split $I_{21}$ into three terms:
\begin{align*}
    I_{21}&\leq\left|\frac{\eta^2\sigma^2}{2}\E\inp{\mH}{\frac{1}{(N!)^K}\sum_{i=1}^K\sum_{\substack{\pi_1\cdots \pi_K \\ \text{except} ~\pi_i}}\left(\mT^{(i)}-\fakeE\mT^{(i)}\right)\sum_{\pi_i}\left(\sum_{l=0}^{N-1}\mS_l^{(ii)}\right)(\mI-\eta\mH)^{N(K-i)}}\right| \\
    &+\left|\frac{\eta^2\sigma^2}{2}\E\inp{\mH}{\frac{1}{(N!)^K}\sum_{i=1}^K\sum_{\substack{\pi_1\cdots \pi_K \\ \text{except} ~\pi_i}}(\mI-\eta\mH)^{N(K-i)}\sum_{\pi_i}\left(\sum_{l=0}^{N-1}\mS_l^{(ii)}\right)\left(\mT^{(i)}-\fakeE\mT^{(i)}\right)}\right| \\
    &+\left|\frac{\eta^2\sigma^2}{2}\E\inp{\mH}{\frac{1}{(N!)^K}\sum_{i=1}^K\sum_{\substack{\pi_1\cdots \pi_K \\ \text{except} ~\pi_i}}\left(\mT^{(i)}-\fakeE\mT^{(i)}\right)\sum_{\pi_i}\left(\sum_{l=0}^{N-1}\mS_l^{(ii)}\right)\left(\mT^{(i)}-\fakeE\mT^{(i)}\right)}\right|. \\
\end{align*}
Next, we use \Cref{lem:dot-product} and the fact that $\mS_l^{(ij)}\lesssim\mI$ to bound the matrix inner products, and apply the concentration inequalities we derived for $\left((\mT)^{(i)}\right)_i$:
\begin{align*}
    I_{21}&\leq \frac{\eta^2\sigma^2ND^2\text{tr}(\mH)}{2(N!)^{K-1}}\sum_{i=1}^K\sum_{\substack{\pi_1\cdots \pi_K \\ \text{except} ~\pi_i}}\left(\E\left\|\mT^{(i)}-\fakeE\mT^{(i)}\right\|+\E\left\|\mT^{(i)}-\fakeE\mT^{(i)}\right\|\right. \\
    &+\left.\E\left\|\mT^{(i)}-\fakeE\mT^{(i)}\right\|^2\right) \\
    &\leq\frac{\eta^2\sigma^2ND^2\text{tr}(\mH)}{2}\sum_{i=1}^K\left(4K\sqrt{\deltaA\eta^2KN}+\left(2K\sqrt{2\deltaA\eta^2KN}\right)^2\right).
\end{align*}
Then we bound $I_{22}$. Recall that $\mI-\eta\mH\lesssim\mI$, we get  
\begin{align*}
    I_{22}
    &\leq \left|\frac{\eta^2\sigma^2}{2}\E\inp{\mH}{\frac{1}{N!}\sum_{i=1}^K\sum_{\pi_i}\left(\sum_{l=0}^{N-1}\mS_l^{(ii)}-\fakeE \mS_l^{(ii)}\right)}\right|. \\
\end{align*}
Recall that for a fixed $i$, for all $m\in[0,N-1]$, there are $(N-1)!$ permutations $\pi_i$ that satisfies $\pi_i(m)=l$. So
\begin{align*}
    I_{22}
    &\leq \left|\frac{\eta^2\sigma^2}{2}\E\left\langle \mH, \frac{1}{N!}\sum_{i=1}^K\sum_{l=0}^{N-1}\sum_{m=0}^{N-1}(N-1)!\left(\mZ_{N-1 \to m+1}^{(i)}\mH\mZ_{m+1 \to N-1}^{(i)}\right.\right.\right. \\
    &\left.\left.\left.-\E\mZ_{N-1 \to m+1}^{(i)}\mH\E\mZ_{m+1 \to N-1}^{(i)}\right)\right\rangle\right| \\
    &= \left|\frac{\eta^2\sigma^2}{2}\E\left\langle \mH, \frac{1}{N!}\sum_{i=1}^K\sum_{l=0}^{N-1}(N-1)!\sum_{m=0}^{N-2}\left(\left(\mZ_{N-1 \to m+1}^{(i)}-\E\mZ_{N-1 \to m+1}^{(i)}\right)\mH\right.\right.\right. \\
    &\quad \left.\left. \left.\left(\mZ_{N-1 \to m+1}^{(i)}-\E\mZ_{N-1 \to m+1}^{(i)}\right)\right)\right\rangle\right|. \\
\end{align*}

Using \Cref{lem:un-shuffle-A-EA-expectation}, we have
\begin{align*}    
    I_{22}&\leq \frac{\eta^2\sigma^2\text{tr}(\mH)\|\mH\|N}{2N}\E\sum_{i=1}^K\left(\sum_{m=0}^{N-2}\left\|\mZ_{N-1 \to m+1}^{(i)}-\E\mZ_{N-1 \to m+1}^{(i)}\right\|^2\right) \\
    &\leq \frac{\eta^2\sigma^2\text{tr}(\mH)\|\mH\|N}{2N}\sum_{i=1}^K\sum_{m=0}^{N-2}\left(\sqrt{2\deltaA\eta^2(N-1-m)}\right)^2 \\
    &\lesssim \eta^4N^2K\log d ~~~\text{when} ~~~\eta = o(\frac{1}{\sqrt{T}}).
\end{align*}
Combining all the arguments above, we derive that
\begin{align*}
    &\left|\Tilde{\cR}^{\var}(K,N;\eta) - \bar{\cR}^{\var}(K,N;\eta)\right| \\
    &\leq I_{11}+I_{12}+I_{21}+I_{22} \\
    &\leq C\frac{\eta^2\sigma^2ND^2\text{tr}(\mH)}{2}\sum_{\substack{i,j=1}}^K\left(4K\sqrt{\deltaA\eta^2NK}+\left(2K\sqrt{\deltaA\eta^2NK}\right)^2\right) \\
    &+O(\eta^3K^2\sqrt{N\log d}+\eta^4K^2N^2\log d)+O(\eta^4N^2K\log d) \\
    &= O(\eta^3N^\frac{3}{2}K^{\frac{7}{2}}\sqrt{\log d}) ~~~\text{when} ~~~\eta = o(\frac{1}{\sqrt{T}}).
\end{align*}
The above equation completes the proof.
\subsubsection{Bias Term Analysis: Proof of \Cref{lem:bias-estimate}}\label{app:bias-proof}
For simplicity, and as we did in the proof of \Cref{lem:variance-estimate}, in this section we omit the superscript "bias" for all the training paramters $\vtheta^{\bias}$. Analogous to the proof of \Cref{lem:variance-estimate}, we can derive the parameter recursion as 
\begin{align*}
    \vtheta_{kN} &= (\mI-\eta \vx_{\pi_k(N-1)}\vx_{\pi_k(N-1)}^{\top})\vtheta_{kN-1} \\
    &= \cdots \\
    &= (\mI-\eta \vx_{\pi_k(N-1)}\vx_{\pi_k(N-1)}^{\top})\cdots(\mI-\eta \vx_{\pi_k(0)}\vx_{\pi_k(0)}^{\top})\vtheta_{(k-1)N} \\
    &=\mA^{(k)}\vtheta_{(k-1)N}.
\end{align*}
For the parameter after $K$-epochs updates, we have
\[\vtheta_{KN}=\mA^{(K)}\cdots\mA^{(1)}\vtheta_0=\prod_{l=K}^1 \mA^{(l)}\vtheta_0.\]
We also have the approximation for the bias term   
\begin{align*}
    \bar{\cR}^{\bias}(K,N;\eta) &= \frac{1}{2}\inp{\mH}{\E\vtheta_{KN}^2} \\
    &= \E\frac{1}{2}\vtheta_{KN}^{\top}\mH\vtheta_{KN} \\
    &= \E\frac{1}{2}\vtheta_0^{\top}\left(\prod_{l=K}^1 \mA^{(l)}\right)^{\top}\mH\left(\prod_{l=K}^1 \mA^{(l)}\right)\vtheta_0 \\
    &\approx \frac{1}{2}\vtheta_0^{\top}\left(\prod_{l=K}^1 \E\mA^{(l)}\right)^{\top}\mH\left(\prod_{l=K}^1 \E\mA^{(l)}\right)\vtheta_0 \\
    &=\underbrace{\frac{1}{2}\vtheta_0^{\top}\left((\mI-\eta\mH)^{KN}\right)\mH\left((\mI-\eta\mH)^{KN}\right)\vtheta_0}_{=:\Tilde{\cR}^{\var}(K,N;\eta)}.
\end{align*}
The estimate error can be given as
\begin{align}
    &\left|\Tilde{\cR}^{\bias}(K, N;\eta) -\bar{\cR}^{\bias}(K,N;\eta)\right| \notag\\
    &= \left|\E\frac{1}{2}\vtheta_0^{\top}\left(\prod_{l=K}^1 \mA^{(l)}\right)^{\top}\mH\left(\prod_{l=K}^1 \mA^{(l)}\right)\vtheta_0-\frac{1}{2}\vtheta_0^{\top}\left(\prod_{l=K}^1 \E\mA^{(l)}\right)^{\top}\mH\left(\prod_{l=K}^1 \E\mA^{(l)}\right)\vtheta_0\right| \notag\\
    &= \left|\E\frac{1}{2}\vtheta_0^{\top}\left(\prod_{l=K}^1 \mA^{(l)}-\left\|\E\mA\right\|^K\right)^{\top}\mH\left(\prod_{l=K}^1 \mA^{(l)}-\left\|\E\mA\right\|^K\right)\vtheta_0\right| \notag\\
    &+2\left|\E\frac{1}{2}\vtheta_0^{\top}\|\E\mA\|^{K}\mH\left(\prod_{l=K}^1 \mA^{(l)}-\left\|\E\mA\right\|^K\right)\vtheta_0\right|\notag \\
    &\leq\E\frac{1}{2}\|\mH\|\|\vtheta_0\|^2\left(\lnormnone{\mA^K-\left(\E\mA\right)^K}^2+2\|\E\mA\|^K\lnormnone{\mA^K-\left(\E\mA\right)^K}\right). \label{equ:convex-bias-A-EA-bound} 
\end{align}
where the last equation uses the fact that $\normnone{\E\mA} \le 1$. Next, we discuss the approximation error bound for the bias term in \Cref{equ:convex-bias-A-EA-bound}, with different categorizations based on the range of $K$.
\begin{enumerate}[leftmargin=*]

    \item Under \Cref{asp:strongly-convex} and $K = o\left(\frac{N^{\frac{1}{5}}}{\left(\log N\right)^{\frac{6}{5}}}\right)$:
    \begin{enumerate}[leftmargin=*]
        \item $\eta \leq \frac{2\log T}{3\lambda_dT}$. We now verify that $K=o\left(\frac{\|\E\mA\|}{\eta\sqrt{T}}\right)$ under given conditions.
        We have
        \begin{align*}
            \|\E\mA\|&=\left(1-\eta\lambda_d\right)^{N}=\left(1-\eta\lambda_d\right)^{\frac{T}{K}}\geq \left(1-\eta\lambda_d\right)^{\frac{T}{2}} \\
            &\geq \left(1-\frac{2\log T}{3T}\right)^{\frac{T}{2}}=e^{\frac{T}{2}\log\left(1-\frac{2\log T}{3T}\right)} \\
            &=e^{-\frac{\log T}{3}+O(\frac{2\log^2 T}{9T})}=\Theta(\frac{1}{T^{\frac{1}{3}}}).
        \end{align*}
        thus
        \begin{align*}
            \frac{\|\E\mA\|}{\eta\sqrt{T}}=\Omega(\frac{T^{\frac{1}{6}}}{\log T}).
        \end{align*}
        Also, given $K = o\left(\frac{N^{\frac{1}{5}}}{\left(\log N\right)^{\frac{6}{5}}}\right)$, we obtain that
        \begin{align*}
            K =o\left(\frac{T^{\frac{1}{6}}}{\log N}\right)=o\left(\frac{T^{\frac{1}{6}}}{\log T}\right).
        \end{align*}
        The second equality uses $\log T=\log N+\log K=\Theta(\log N)$. Now we use the results in \Cref{lem:shuffle-poly-A-EA-expectation} and \Cref{lem:shuffle-poly-A-EA-expectation-l2}, and then the estimated distance can be given as 
        \begin{align*}
            &\left|\Tilde{\cR}^{\bias}(K, N;\eta) -\bar{\cR}^{\bias}(K,N;\eta)\right| \\
            &\leq \frac{1}{2}\|\mH\|\|\vtheta_0\|^2\|\E\mA\|^{2K}\left(\left((\frac{\sqrt{2\deltaA\eta^2NK}}{\|\E\mA\|}+1)^K-1\right)^2+2\left((\frac{\sqrt{2\deltaA\eta^2NK}}{\|\E\mA\|}+1)^K-1\right)\right) \\
            &\leq \frac{1}{2}\|\mH\|\|\vtheta_0\|^2\|\E\mA\|^{2K}\left(\frac{8K^2\deltaA\eta^2NK}{\|\E\mA\|^2}+4K\frac{\sqrt{2\deltaA\eta^2NK}}{\|\E\mA\|}\right) \\
            &=O\left(\|\E\mA\|^{2K-1}K\sqrt{\eta^2NK}\right),
        \end{align*}
        where the second inequality is by \Cref{lem:numerical-power}.
        \item $\eta \geq \frac{2\log T}{3\lambda_dT}$. 
        We have
        \begin{align*}
            &\left|\Tilde{\cR}^{\bias}(k, N;\eta) -\bar{\cR}^{\bias}(k,N;\eta)\right|\leq\Tilde{\cR}^{\bias}(k, N;\eta) +\bar{\cR}^{\bias}(k, N;\eta) \\
            &\leq\left.\left[\Tilde{\cR}^{\bias}(k, N;\eta) +\bar{\cR}^{\bias}(k, N;\eta)\right] \right|_{\eta=\frac{2\log T}{3\lambda_dT}} \\
            &\leq\left.\left[\left|\bar{\cR}^{\bias}(k, N;\eta) - \Tilde{\cR}^{\bias}(k, N;\eta) \right|+2\Tilde{\cR}^{\bias}(k, N;\eta)  \right]\right|_{\eta=\frac{2\log T}{3\lambda_dT}} \\
            &\leq \left.\left[O\left(\|\E\mA\|^{2K-1}K\sqrt{\eta^2KN}\right)+2\times \frac{1}{2}\|\mH\|\|\vtheta_0\|^2\|\E\mA\|^{2K}\right]\right|_{\eta=\frac{2\log T}{3\lambda_dT}} \\
            &=\left.O\left(\|\E\mA\|^{2K}\right)\right|_{\eta=\frac{2\log T}{3\lambda_dT}} = O\left(\left(1-\frac{2\log T}{3T}\right)^{2KN}\right) \\
            &=O(\frac{1}{T^{\frac{4}{3}}}) ~~\text{when} ~~K = o\left(\frac{N^{\frac{1}{5}}}{\left(\log N\right)^{\frac{6}{5}}}\right), \\    
        \end{align*}
        where the first equality uses the fact that $K=o\left(\frac{\|\E\mA\|}{\eta\sqrt{T}}\right)$ when $\eta = \frac{2\log T}{3\lambda_d T}$.
    \end{enumerate}
    
    \item For the $K = 1$ case, which is equivalent to one-pass (OP) SGD, we derive a different upper bound for bias term error. In this scenario, we have the update rule as
        \begin{align*}
            \vtheta_{t} = (\mI-\eta\vx_{t}\vx_{t}^{\top})\vtheta_{t-1}.
        \end{align*}
        We can denote the covariance as $\mB_t$, which is
        \begin{align}
            \mB_t&:=\E\vtheta_{t}\vtheta_{t}^{\top} \notag\\
            &=\E(\mI-\eta\vx_{t}\vx_{t}^{\top})\vtheta_{t-1}\vtheta_{t-1}^{\top}(\mI-\eta\vx_{t}\vx_{t}^{\top}) \notag \\
            &=\mB_{t-1}-\eta\mH\mB_{t-1}-\eta\mB_{t-1}\mH+\eta^2\E\vx_{t}\vx_{t}^{\top}\vtheta_{t-1}\vtheta_{t-1}^{\top}\vx_{t}\vx_{t}^{\top} \notag \\
            &=(\mI-\eta\mH)\mB_{t-1}(\mI-\eta\mH)+\eta^2\E(\vx_{t}\vx_{t}^{\top}-\mH)\vtheta_{t-1}\vtheta_{t-1}^{\top}(\vx_{t}\vx_{t}^{\top}-\mH). \label{equ:recursion-B-t}
        \end{align}
        Since the bias term in the excess risk can be represented as
        \begin{align*}
            \bar{\cR}^{\bias}(1,T;\eta)=\frac{1}{2}\inp{\mH}{\mB_T}.
        \end{align*}
        We then get the lower and upper bounds for $\mB_t$, and derive the corresponding lower and upper bounds for the bias term in the excess risk.
        \paragraph{Lower bound.}
        
        By \Cref{equ:recursion-B-t}, we get a lower bound of $\mB_t$
        \begin{align*}
            \mB_T&\succeq(\mI-\eta\mH)\mB_{T-1}(\mI-\eta\mH) \\
            &\succeq \cdots \succeq(\mI-\eta\mH)^{T}\mB_0(\mI-\eta\mH)^{T}
        \end{align*}
        and 
        \begin{align*}
            \bar{\cR}^{\bias}(1,T;\eta)&=\frac{1}{2}\inp{\mH}{\mB_T} \\
            &\geq\frac{1}{2}\inp{\mH}{(\mI-\eta\mH)^{T}\mB_0(\mI-\eta\mH)^{T}} \\
            &=\frac{1}{2}\vtheta_0^{\top}\left((\mI-\eta\mH)^{T}\right)\mH\left((\mI-\eta\mH)^{T}\right)\vtheta_0.
        \end{align*}
        \paragraph{Upper bound.} 
        
        By the recursion of $\mB_t$, we have
        \begin{align*}
            \mB_t &\preceq (\mI-\eta\mH)\mB_{t-1}(\mI-\eta\mH)+\eta^2\E_{\vx_{T-1},\cdots\vx_0}\E_{\vx_{T}}(\vx_{t}\vx_{t}^{\top}-\mH)\vtheta_{t-1}\vtheta_{t-1}^{\top}(\vx_{t}\vx_{t}^{\top}-\mH)\\
            &= (\mI-\eta\mH)\mB_{t-1}(\mI-\eta\mH)+\eta^2\E_{\vx_{T-1},\cdots\vx_0}\left[\E_{\vx_T}\left[\vx_T\vx_T^\top\vtheta_{T-1}\vtheta_{T-1}^\top \vx_T\vx_T^\top\right]- \mH \vtheta_{T-1}\vtheta_{T-1}^\top \mH\right]\\
            &\preceq (\mI-\eta\mH)\mB_{t-1}(\mI-\eta\mH)+\eta^2\E_{\vx_{T-1},\cdots\vx_0}\E_{\vx_T}\left[\vx_T\vx_T^\top\vtheta_{T-1}\vtheta_{T-1}^\top \vx_T\vx_T^\top\right].
        \end{align*}
        Then, combining \Cref{asp:strongly-convex} and \Cref{lem:4-th-moment} gives
        \begin{align*}
            \mB_T&\preceq(\mI-\eta\mH)\mB_{T-1}(\mI-\eta\mH)+\eta^2\alpha\E_{\vx_{T-1},\cdots\vx_0}\text{tr} (\mH\vtheta_{T-1}\vtheta_{T-1}^{\top})\mH \\
            &=(\mI-\eta\mH)\mB_{T-1}(\mI-\eta\mH)+\eta^2\alpha\inp{\mH}{\mB_{T-1}}\mH \\
            &\preceq \cdots \\
            &\preceq (\mI-\eta\mH)^{T}\mB_0(\mI-\eta\mH)^{T}+\eta^2\alpha\sum_{i=0}^{T-1}\inp{\mB_i}{\mH}(\mI-\eta\mH)^{2(T-i-1)}\mH,
        \end{align*}
        and 
        \[\inp{\mH}{\mB_T}\leq\inp{\mH}{(\mI-\eta\mH)^{T}\mB_0(\mI-\eta\mH)^{T}}+\eta^2\alpha\sum_{i=0}^{T-1}\inp{\mH}{\mB_i}\inp{(\mI-\eta\mH)^{2(T-i-1)}\mH}{\mH}.\]
        
        We also have
        \begin{align*}
            \inp{\mH}{\mB_i}&\leq\inp{\mH}{(\mI-\eta\mH)\mB_{i-1}(\mI-\eta\mH)}+\eta^2\alpha\text{tr}(\mH^2)\inp{\mH}{\mB_{i-1}} \\
            &\leq(1-\eta\lambda_d)^2\inp{\mH}{\mB_{i-1}}+\eta^2\alpha\text{tr}(\mH^2)\inp{\mH}{\mB_{i-1}} \\
            &\leq \cdots \\
            &\leq [(\lambda_d^2+\alpha\text{tr}(\mH^2))\eta^2-2\lambda_d\eta+1]^{i}\inp{\mH}{\mB_0} \\
            &\leq e^{T\log[(\lambda_d^2+\alpha\text{tr}(\mH^2))\eta^2-2\lambda_d\eta+1]}\inp{\mH}{\mB_0} \\
            &=e^{-2\lambda_d\eta i+O(\eta^2i)}\inp{\mH}{\mB_0}\\
            &\leq C_1e^{-2\lambda_d\eta i}\inp{\mH}{\mB_0}
        \end{align*}
        and
        \begin{align*}
            \inp{(\mI-\eta\mH)^{2(T-i-1)}\mH}{\mH}&=\inp{(\mI-\eta\mH)^{2(T-i-1)}}{\mH^2} \\
            &\leq \tr\left(\mH^2\right)\left(1-\eta\lambda_d\right)^{2(T-1-i)} \\
            &\leq \tr\left(\mH^2\right)e^{2(T-1-i)\log \left(1-\eta\lambda_d\right)} \\
            &=\tr\left(\mH^2\right)e^{-2(T-1-i)\eta\lambda_d+O\left(\eta^2(T-1-i)\right)} \\
            &\leq C_2e^{-2(T-1-i)\eta\lambda_d}
        \end{align*}
        So
        \begin{align*}
            \inp{\mH}{\mB_i}&\leq \inp{\mH}{(\mI-\eta\mH)^{T}\mB_0(\mI-\eta\mH)^{T}}+\eta^2\alpha\sum_{i=0}^{T-1}C_1e^{-2\lambda_d\eta i}\inp{\mH}{\mB_0}C_2e^{-2\lambda_d\eta(T-1-i)}\tr\left(\mH^2\right) \\
            &=\inp{\mH}{(\mI-\eta\mH)^{T}\mB_0(\mI-\eta\mH)^{T}}+C_3\eta^2Te^{-2\lambda_d\eta T}
        \end{align*}
        And finally we get
        \begin{align*}
            &\left|\bar{\cR}^{\bias}(1,T;\eta)-\frac{1}{2}\inp{\mH}{(\mI-\eta\mH)^{\top}\mB_0(\mI-\eta\mH)}\right|= O(\eta^2 Te^{-2\lambda_d\eta T}).
        \end{align*}
\end{enumerate}
\subsection{Step III: Narrowing the Range for Optimal Learning Rate}
We recap that our goal is to derive the scaling law formula for strongly convex linear regression with multi-epoch SGD, and the formula of the effective reuse rate.
Before we start our proof, we first give a technical lemma below.
\begin{lemma}\label{lem:power-exponential-approx}
    Given $\eta \in \left[\omega\left(\frac{1}{T}\right), o\left(\frac{1}{\sqrt{T}}\right)\right]$, and define $n_d$ to be the number of the minimal eigenvalue $\lambda_d$ in $\mH$, then it holds that
    \begin{align*}
        \sum_{i=1}^d (\mP\vtheta_{0})_i^2 \lambda_i (1-\eta\lambda_i)^{2T} &= \Tilde{\theta}_d^2 \lambda_d \exp(-2\lambda_d\eta T)(1+o(1)),\\
        \sum_{i=1}^d \lambda_i (1-\eta\lambda_i)^{2T} & = n_d \lambda_d \exp(-2\lambda_d \eta T)(1 + o(1)).
    \end{align*}
\end{lemma}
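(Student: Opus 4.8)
Both identities have the same shape, so the plan is to prove the first one in detail and deduce the second by running the identical argument with every $(\mP\vtheta_0)_i^2$ replaced by $1$ and $\Tilde{\theta}_d^2$ replaced by $n_d$. The approach is a dominant‑term estimate: since $(1-\eta\lambda_i)^{2T}$ is decreasing in $\lambda_i$, the summands attached to the smallest eigenvalue $\lambda_d$ dominate and all other modes are exponentially smaller. Concretely, I would split $\{1,\dots,d\}$ into the block $I_{\min} := \{d-n_d+1,\dots,d\}$ of indices with $\lambda_i = \lambda_d$ (using $\lambda_1 \ge \cdots \ge \lambda_d$) and its complement $I_{>} := \{1,\dots,d-n_d\}$, and write
\begin{align*}
\sum_{i=1}^d (\mP\vtheta_0)_i^2 \lambda_i (1-\eta\lambda_i)^{2T} = \Tilde{\theta}_d^2\,\lambda_d\,(1-\eta\lambda_d)^{2T} \;+\; \sum_{i \in I_{>}} (\mP\vtheta_0)_i^2 \lambda_i (1-\eta\lambda_i)^{2T}.
\end{align*}

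First I would pin down the leading factor. Because $\lambda_d \le \lambda_1 \le D^2$ is a fixed constant and $\eta = o(1/\sqrt T)\to 0$, we have $0 < \eta\lambda_i < 1$ for all $i$ once $T$ is large, so $2T\log(1-\eta\lambda_d) = -2\lambda_d\eta T - \lambda_d^2 T\eta^2 + O(T\eta^3)$; since $\eta = o(1/\sqrt T)$ forces $T\eta^2 = o(1)$ (hence $T\eta^3 = o(1)$ as well), this equals $-2\lambda_d\eta T + o(1)$, giving $(1-\eta\lambda_d)^{2T} = e^{-2\lambda_d\eta T}(1+o(1))$. Thus the first term of the display equals $\Tilde{\theta}_d^2 \lambda_d e^{-2\lambda_d\eta T}(1+o(1))$, provided $\Tilde{\theta}_d^2$ is a strictly positive constant. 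This is where \Cref{asp:strongly-convex-parameter-prior} is used: since $\vw_0 = \vzero$ we have $\vtheta_0 = -\vw^*$, and because $\mH$ is already diagonal with $\lambda_1 \ge \cdots \ge \lambda_d$ we may take $\mP = \mI$, so $(\mP\vtheta_0)_l = -w^*_l \ne 0$ for every $l$, whence $\Tilde{\theta}_d^2 = \sum_{l \in I_{\min}} (w^*_l)^2 > 0$.

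For the tail over $I_{>}$ I would exploit that only finitely many distinct eigenvalue values occur, so $\delta := \min\{\lambda_i - \lambda_d : \lambda_i > \lambda_d\}$ is a positive constant. Then for $i \in I_{>}$, using $1-x \le e^{-x}$ and $0 < 1-\eta\lambda_d \le 1$,
\begin{align*}
\frac{(1-\eta\lambda_i)^{2T}}{(1-\eta\lambda_d)^{2T}} = \Bigl(1 - \tfrac{\eta(\lambda_i-\lambda_d)}{1-\eta\lambda_d}\Bigr)^{2T} \le e^{-2T\eta(\lambda_i-\lambda_d)} \le e^{-2\delta T\eta},
\end{align*}
and $\eta = \omega(1/T)$ gives $T\eta \to \infty$, so this ratio is $o(1)$. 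Bounding the remaining factors crudely by constants, $(\mP\vtheta_0)_i^2 \le \norm{\vtheta_0}^2$ and $\lambda_i \le D^2$, and summing the at most $d$ terms of $I_{>}$, the tail is at most $dD^2\norm{\vtheta_0}^2 e^{-2\delta T\eta}(1-\eta\lambda_d)^{2T}$, an $o(1)$ fraction of the leading term $\Tilde{\theta}_d^2\lambda_d (1-\eta\lambda_d)^{2T}$. Folding it back in yields $\sum_{i=1}^d (\mP\vtheta_0)_i^2 \lambda_i (1-\eta\lambda_i)^{2T} = \Tilde{\theta}_d^2 \lambda_d e^{-2\lambda_d\eta T}(1+o(1))$; running the same computation with each $(\mP\vtheta_0)_i^2$ set to $1$ (so the $I_{\min}$ block contributes $n_d$) gives the second identity.

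I do not anticipate a serious obstacle: the whole content lies in correctly matching the two competing scales, and the hypothesis $\eta \in [\omega(1/T),\, o(1/\sqrt T)]$ is tailored precisely for that. The one point I would state carefully is why both ends of the window are needed — $\eta = o(1/\sqrt T)$ is exactly what makes the quadratic Taylor remainder $T\eta^2$ vanish so that $(1-\eta\lambda_d)^{2T}$ collapses to $e^{-2\lambda_d\eta T}$, while $\eta = \omega(1/T)$ is exactly what drives $T\eta \to \infty$ so that the off‑minimal modes $e^{-2\delta T\eta}$ die — and the mild reliance on $d$ being fixed, which guarantees the spectral gap $\delta > 0$ and keeps the number of tail terms $O(1)$.
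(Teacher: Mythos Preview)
Your proposal is correct and follows essentially the same approach as the paper: split off the indices with $\lambda_i=\lambda_d$, use the Taylor expansion of $\log(1-\eta\lambda_d)$ together with $T\eta^2=o(1)$ to rewrite $(1-\eta\lambda_d)^{2T}$ as $e^{-2\lambda_d\eta T}(1+o(1))$, and then show the remaining modes are $o(e^{-2\lambda_d\eta T})$ because $T\eta\to\infty$. The only cosmetic difference is that the paper phrases the tail estimate via the ratio $\rho_i=\lambda_i/\lambda_d>1$ (writing $(e^{-2\lambda_d\eta T})^{\rho_i}=o(e^{-2\lambda_d\eta T})$), whereas you work with the spectral gap $\delta$ and the ratio $(1-\eta\lambda_i)^{2T}/(1-\eta\lambda_d)^{2T}$; your version is arguably a bit more explicit about where each endpoint of the $\eta$ window and the positivity of $\Tilde\theta_d^2$ are used.
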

\begin{proof}[Proof of \Cref{lem:power-exponential-approx}]
    For the first equation, for any $\lambda_i > \lambda_d$, we define $\rho_i = \frac{\lambda_i}{\lambda_d}>1$, then we have
    \begin{align}
        (1-\eta\lambda_i)^{2T} &= \exp\left(2T \log (1-\eta\lambda_i)\right)=\exp\left(2T(-\eta\lambda_i + O(\eta^2 \lambda_i^2))\right)\notag\\
        &=\exp(-2\lambda_i\eta T)\exp(O(\eta^2T))=\exp(-2\lambda_d \rho_i\eta T)(1 + o(1))\notag\\
        &=\left(\exp(-2\lambda_d\eta T)\right)^{\rho_i}(1 + o(1)) = o(\exp(-2\lambda_d\eta T)). \label{equ:power-exponential-approx}
    \end{align}
    
    Since $\lambda_i \le D^2$, we have 
    \begin{align*}
        \sum_{i=1}^{d-n_d} (\mP\vtheta_{0})_i^2 \lambda_i (1-\eta\lambda_i)^{2T} = o(\exp(-2\lambda_d\eta T)),
    \end{align*}
    From \Cref{equ:power-exponential-approx}, we can also directly get the second equation, which completes the proof of \Cref{lem:power-exponential-approx}.
\end{proof}
\subsubsection{A description of the Range of Optimal Learning Rate, Small-$K$ Case}
\begin{lemma} \label{lem:convex-opt-lr-range-small-K}
    Under the conditions in \Cref{thm:convex-opt-lr}, and when $K=o\left(\log N\right)$, we have $\eta^* \in [\frac{\log T}{3\lambda_d T}, \frac{\alpha\log T}{T}]$, where the constant $\alpha : = \frac{D^2 \tr(\mH)}{\lambda_d \tr(\mH^2)}$.
\end{lemma}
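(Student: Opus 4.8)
The plan is to use the reference learning rate $\tilde\eta:=\tfrac{\log(KN)}{2\lambda_d KN}$, which lies strictly inside the target interval $[\tfrac{\log T}{3\lambda_d T},\tfrac{\alpha\log T}{T}]$ (with $T:=KN$, using $\tr(\mH^2)=\sum_i\lambda_i^2\le D^2\sum_i\lambda_i=D^2\tr(\mH)$ for the upper containment), as a yardstick, and to show that every $\eta$ \emph{outside} that interval yields a strictly larger expected excess risk than $\tilde\eta$, so no such $\eta$ can minimize $\bar\cR(K,N;\cdot)$. First I would evaluate $\bar\cR(K,N;\tilde\eta)$: since $\tilde\eta\in[\Omega(T^{-1}),o(T^{-3/4})]$ and $K=O(N^{0.1})=o(\tilde\eta^{-1}T^{-3/4})$, Lemma~\ref{thm:convex-L-estimate} applies, and combined with Lemma~\ref{lem:power-exponential-approx} it gives $\widehat\cR_1(K,N;\tilde\eta)=\tfrac12\tilde\theta_d^2\lambda_d e^{-\log T}(1+o(1))=\Theta(T^{-1})$, $\widehat\cR_3(K,N;\tilde\eta)=\tfrac{\tilde\eta\sigma^2\tr(\mH)}{4}(1+o(1))=\tfrac{\sigma^2\tr(\mH)}{8\lambda_d}\tfrac{\log T}{T}(1+o(1))$ (using $\tilde\eta\lambda_i\to 0$ and $\tilde\eta\lambda_d T\to\infty$ so $(\mI-\tilde\eta\mH)^{2T}\to\vzero$ coordinatewise), and $\widehat\cR_2(K,N;\tilde\eta)=O(\sigma^2 d/N)=o(\tfrac{\log T}{T})$ because $K=o(\log N)$. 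Hence $\bar\cR(K,N;\tilde\eta)=\tfrac{\sigma^2\tr(\mH)}{8\lambda_d}\cdot\tfrac{\log T}{T}(1+o(1))$, dominated by the within-epoch variance term; in particular this bounds $\bar\cR^*(K,N)$ from above.

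For the low end, fix any $\eta<\tfrac{\log T}{3\lambda_d T}$ and bound $\bar\cR(K,N;\eta)\ge\bar\cR^{\bias}(K,N;\eta)$. Keeping only the $\lambda_d$-eigenspace gives $\tilde\cR^{\bias}(K,N;\eta)\ge\tfrac12\tilde\theta_d^2\lambda_d(1-\eta\lambda_d)^{2T}$, where $\tilde\theta_d^2>0$ is a positive constant: by Assumption~\ref{asp:strongly-convex-parameter-prior} and $\vw_0=\vzero$, $\vtheta_0=-\vw^*$ has all coordinates nonzero, and $\mH$ is already diagonal. By the relevant branch of Lemma~\ref{lem:bias-estimate} the approximation error is $O((1-\eta\lambda_d)^{N(2K-1)}K\eta\sqrt T)$, which, divided by $(1-\eta\lambda_d)^{2T}$, equals $(1-\eta\lambda_d)^{-N}K\eta\sqrt T\le C\,T^{1/(3K)}\cdot\tfrac{K\log T}{\sqrt T}=o(1)$ uniformly on this $\eta$-range once $K=O(N^{0.1})$ (the $K=1$ case uses the $O(\eta^2 Te^{-2\lambda_d\eta T})$ bound and $\eta^2T\to0$). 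Hence $\bar\cR^{\bias}(K,N;\eta)\gtrsim\tilde\theta_d^2\lambda_d(1-\eta\lambda_d)^{2T}$. Since $(1-\eta\lambda_d)^{2T}$ is decreasing in $\eta$, on $(0,\tfrac{\log T}{3\lambda_d T})$ it is at least $(1-\tfrac{\log T}{3T})^{2T}=T^{-2/3+o(1)}\gg\tfrac{\log T}{T}$, so $\bar\cR(K,N;\eta)>\bar\cR(K,N;\tilde\eta)$ for large $N$, ruling such $\eta$ out.

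For the high end, suppose first $\tfrac{\alpha\log T}{T}<\eta=o(T^{-3/4})$, so Lemma~\ref{thm:convex-L-estimate} still applies and (all $\widehat\cR_i\ge 0$) $\bar\cR(K,N;\eta)\ge\widehat\cR_3(K,N;\eta)(1-o(1))$. For such $\eta$, $\eta\lambda_d T\to\infty$, so $\widehat\cR_3(K,N;\eta)=\tfrac{\eta\sigma^2\tr(\mH)}{4}(1+o(1))$ is essentially linear in $\eta$ with the same slope as at $\tilde\eta$. Because $\alpha=\tfrac{D^2\tr(\mH)}{\lambda_d\tr(\mH^2)}\ge\tfrac1{\lambda_d}$, we have $\tfrac{\alpha\log T}{T}\ge\tfrac{\log T}{\lambda_d T}=2\tilde\eta$, whence $\widehat\cR_3(K,N;\eta)\ge 2\widehat\cR_3(K,N;\tilde\eta)(1-o(1))=2\bar\cR(K,N;\tilde\eta)(1-o(1))>\bar\cR(K,N;\tilde\eta)$ for large $N$. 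The remaining tail $\eta\in[T^{-3/4},D^{-2}]$, where the matrix-concentration approximation no longer holds, is dispatched by a cruder lower bound on the variance: writing $\bar\cR^{\var}(K,N;\eta)=\tfrac{\sigma^2}{2}\E\sum_i\|\va_i\|_{\mH}^2$ as the sum of independent per-point noise contributions and retaining only the final epoch's injections yields $\bar\cR(K,N;\eta)=\omega(\tfrac{\log T}{T})$ there too. Combining the three cases, every minimizer $\eta^*$ lies in $[\tfrac{\log T}{3\lambda_d T},\tfrac{\alpha\log T}{T}]$.

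I expect the main obstacle to be exactly this last regime—where $\eta$ is too large for the approximations in Lemmas~\ref{thm:convex-L-estimate} and~\ref{lem:variance-estimate} to be sharp—together with checking that all the $o(1)$ error factors in Lemmas~\ref{thm:convex-L-estimate} and~\ref{lem:bias-estimate} are uniform over the $\eta$-ranges in play; by contrast, the bias-versus-variance trade-off that actually pins down the location of $\eta^*$ is conceptually routine once the reference learning rate $\tilde\eta$ has been analyzed.
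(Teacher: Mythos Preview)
Your handling of the reference learning rate $\tilde\eta$ and of the low end $\eta<\tfrac{\log T}{3\lambda_d T}$ is the same idea as the paper's (in fact more carefully argued: the paper simply writes $\exp(-2\lambda_d\eta T)\ge T^{-2/3}=\omega(\bar\cR^*(K,N))$ without spelling out why the bias approximation from Lemma~\ref{lem:bias-estimate} is controlled uniformly on $(0,\tfrac{\log T}{3\lambda_d T}]$, which you do check).

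Where you diverge is at the high end. You invoke Lemma~\ref{thm:convex-L-estimate} to get $\bar\cR\ge\widehat\cR_3(1-o(1))\approx\tfrac{\eta\sigma^2\tr(\mH)}{4}$, and this only covers $\eta=o(T^{-3/4})$ (indeed $\eta=o(K^{-1}T^{-3/4})$), forcing a separate argument for $\eta\in[T^{-3/4},D^{-2}]$. Your sketched ``cruder bound'' there is not actually a lower bound: after grouping by noise index $l$, each $\va_l=\bigl(\sum_{k=1}^K\mT^{(k)}\mZ^{(k)}_{N-1\to\pi_k^{-1}(l)+1}\bigr)\vx_l$ carries contributions from \emph{all} epochs (every datum is revisited every epoch), and the cross-epoch terms in $\|\va_l\|_\mH^2$ need not be nonnegative, so ``retaining only the final epoch's injections'' does not dominate from below. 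This is exactly the obstacle you anticipate, and you have not resolved it.

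The paper sidesteps the whole issue by never using the approximation lemma for this direction. It returns to the \emph{exact} variance decomposition $\bar\cR^{\var}=\psi_1+\psi_2$ (the display you reach before any concentration is applied) and, in the within-epoch piece $\psi_2$, replaces every factor $(\mI-\eta\vx\vx^\top)$ by the scalar $(1-\eta D^2)$ via $(\mI-\eta\vx\vx^\top)\succeq(1-\eta D^2)\mI$. Summing the resulting geometric series gives
\[
\bar\cR(K,N;\eta)\ \ge\ \psi_2\ \ge\ \frac{\eta\,\sigma^2\tr(\mH^2)}{4D^2}\bigl(1+o(1)\bigr),
\]
valid for \emph{all} $\eta\in(0,D^{-2}]$, with no case split and no matrix concentration. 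Comparing this to $\bar\cR(K,N;\tilde\eta)=\tfrac{\sigma^2\tr(\mH)}{8\lambda_d}\tfrac{\log T}{T}(1+o(1))$ rules out every $\eta>\tfrac{\alpha\log T}{T}$ in one stroke and is precisely where the constant $\alpha=\tfrac{D^2\tr(\mH)}{\lambda_d\tr(\mH^2)}$ comes from---whereas your $\widehat\cR_3$-route naturally produces the tighter threshold $\tfrac{\log T}{\lambda_d T}$ but only on the restricted $\eta$-range. In short, the clean fix for your acknowledged obstacle is to lower-bound the variance from its exact expression rather than from its approximation.
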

\begin{proof}
    We first prove the upper bound. Given a learning rate $\eta $, \Cref{equ:var-term-decompose} gives
    \begin{align*}
        \bar\cR(K,N;\eta) \ge \bar\cR^{\var}&(K,N;\eta)=\\
        &\underbrace{\frac{\eta^2\sigma^2}{2}\E\inp{\mH}{\frac{1}{(N!)^K}\sum_{\substack{i\neq j \\i,j=1}}^K\sum_{\substack{\pi_1\cdots \pi_K \\ \text{except} ~\pi_i, \pi_j}}\mT^{(i)}\sum_{\pi_i, \pi_j}\left(\sum_{l=0}^{N-1}\mS_l^{(ij)}\right)\left(\mT^{(j)}\right)^{\top}}}_{=:\psi_1} \\
    &+\underbrace{\frac{\eta^2\sigma^2}{2}\E\inp{\mH}{\frac{1}{(N!)^K}\sum_{i=1}^K\sum_{\substack{\pi_1\cdots \pi_K \\ \text{except} ~\pi_i}}\mT^{(i)}\sum_{\pi_i}\left(\sum_{l=0}^{N-1}\mS_l^{(ii)}\right)\left(\mT^{(i)}\right)^{\top}}}_{=:\psi_2}.
    \end{align*}
For $\psi_1$, using the fact that $(\mI - \eta \vx\vx^\top) \succeq (\mI - \eta D^2 \mI)$, we replace all the terms $(\mI - \eta \vx\vx^\top)$ with $(\mI - \eta D^2 \mI)$ thus we have a lower bound for $\psi_1$
\begin{align*}
    \psi_1 &\ge \frac{\eta^2\sigma^2}{2}\inp{\mH}{\frac{N((N-1)!)^2}{(N!)^K}\sum_{\substack{i\neq j \\i,j=1}}^K\sum_{\substack{\{\pi_1\cdots \pi_K\} \\ \backslash \{\pi_i, \pi_j\}}}(1-
    \eta D^2)^{(2K-i-j)N}\left(\sum_{m,n=0}^{N-1}(1-\eta D^2)^{2N-2-m-n}\E[\vx\vx^\top]\right)}\\
    &= \frac{\eta^2\sigma^2}{2ND^4}\inp{\mH}{\sum_{\substack{i\neq j \\i,j=1}} (1-\eta D^2)^{(K-i)N} (1-\eta D^2)^{(K-j)N} \left(1 - (1 -\eta D^2)^N\right)^2 \mH}\\
    &= \frac{\sigma^2}{2ND^4}\tr\left(\mH^2 \left(1 - (1 -\eta D^2)^{KN}\right)^2 \right) - \frac{\sigma^2}{2ND^4}\tr\left(\mH^2 \frac{1- \left(1 - (1 -\eta D^2)^N\right)^{2KN} }{1 - (1-\eta D^2)^{2N}}\right)\\
    &=\frac{\sigma^2}{ND^4}\tr\left(\mH^2 \frac{1- (1-\eta D^2)^{KN}}{1+ (1-\eta D^2)^{N}} \left( (1-\eta D^2)^{N} -  (1-\eta D^2)^{KN}\right)\right).
\end{align*}
For $\psi$, we use a similar argument to get its lower bound
\begin{align*}
    \psi_2 &\ge \frac{\eta^2\sigma^2}{2}\inp{\mH}{\sum_{i=1}^K (1-\eta D^2)^{2N(K-i)} \frac{1- (1-\eta D^2)^{2N}}{1-(1-\eta D^2)^{2}}\mH}\\
    &=\frac{\eta\sigma^2}{2D^2}\inp{\mH}{ \frac{1-  (1-\eta D^2)^{2KN}}{1- (1-\eta D^2)^{2N}}\frac{1- (1-\eta D^2)^{2N}}{1-(1-\eta D^2)^{2}}\mH}\\
    &=\frac{\eta\sigma^2\tr(\mH^2)}{4D^2}\left(1 +o(1)\right).
\end{align*}
Notice that from the above lower bound, when $K = o(\log N)$, we have
\begin{align}
    \bar\cR(K,N;\eta) &\ge \psi_1 + \psi_2\notag\\
    &\ge O(\frac{1}{N}) + \frac{\eta\sigma^2\tr(\mH^2)}{4D^2}\left(1 +o(1)\right)\notag\\
    &= \frac{\eta\sigma^2\tr(\mH^2)}{4D^2}\left(1 +o(1)\right). \label{ineq:var-lower-bound}
\end{align}
Taking $\eta > \frac{\alpha \log T}{T}$, and $\alpha = \frac{D^2 \tr(\mH)}{\lambda_d \tr(\mH^2)}$ 
gives 
\begin{align*}
    \bar\cR(K,N;\eta) &\ge \frac{\sigma^2 \tr(\mH)\log T}{4\lambda_d T}\left(1 + o(1)\right).
\end{align*}
Now we recall that 
\begin{align*}
    \bar\cR^*(K,N) \le \bar\cR(K,N;\eta') &= M(K,N;\eta')\left(1 + o(1)\right)\\
     &=\frac{\sigma^2 \tr(\mH)\log T}{8\lambda_d T}\left(1 + o(1)\right) < \frac{\sigma^2 \tr(\mH)\log T}{4\lambda_d T}\left(1 + o(1)\right)
\end{align*}
Thus we have that $\eta^* \le \frac{\alpha \log T}{T}$. Next, we give the lower bound of $\eta^*$.

When $\eta < \frac{\log T}{3 \lambda_d T}$, we have that
\begin{align*}
    \exp(-2\lambda_d\eta T) = \frac{1}{T^{2/3}} = \omega(\frac{\log T}{T}) = \omega( \bar\cR(K,N;\eta')) = \omega(\bar\cR^*(K,N)).
\end{align*}
The above equation shows $\eta^* > \frac{\log T}{3 \lambda_d T}$, which completes the proof.
\end{proof}

\subsubsection{A description of the Range of Optimal Learning Rate, Large-$K$ Case}
\begin{lemma} \label{lem:convex-opt-lr-range-large-K}
    Under the conditions in \Cref{thm:convex-opt-lr}, and when $K=\omega\left(\log N\right)$, we have $\eta^* \in [\frac{\log T}{3\lambda_d T}, o\left(\frac{1}{N}\right)]$.
\end{lemma}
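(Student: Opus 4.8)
Write $T=KN$ and let $\eta^*=\eta^*(K,N)$ denote the minimizer defining $\bar\cR^*(K,N)$. The plan is to mirror the proof of Lemma~\ref{lem:convex-opt-lr-range-small-K}: fix a benchmark value for $\bar\cR^*$, then show every $\eta$ outside $\bigl[\tfrac{\log T}{3\lambda_d T},\,o(1/N)\bigr]$ is strictly worse than it. For the benchmark, take $\eta_0=\tfrac{\log T}{2\lambda_d T}$, which lies in the admissible range of Lemma~\ref{thm:convex-L-estimate} (using $K=O(N^{0.1})$), so $\bar\cR(K,N;\eta_0)=\widehat\cR(K,N;\eta_0)(1+o(1))$. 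Evaluating $\widehat\cR_1,\widehat\cR_2,\widehat\cR_3$ at $\eta_0$ via Lemma~\ref{lem:power-exponential-approx}: the bias term is $\Theta(1/T)$, the within-epoch variance term is $\Theta(\log T/T)$, and in the across-epoch variance term each eigenvalue of its matrix tends to $\tfrac12$ (since $\eta_0 N\to0$ while $\eta_0 KN\to\infty$, using $K=\omega(\log N)$), so that term is $\tfrac{\sigma^2 d}{2N}(1+o(1))$. Because $K=\omega(\log N)$ makes $\log T/T=o(1/N)$, we get $\bar\cR^*(K,N)\le\bar\cR(K,N;\eta_0)=\tfrac{\sigma^2 d}{2N}(1+o(1))=o(1)$.

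\textbf{Lower cutoff.} Suppose $\eta<\tfrac{\log T}{3\lambda_d T}$. If $\eta=\omega(1/T)$, Lemma~\ref{lem:bias-estimate} (case $K\ge2$, $\eta\le\tfrac{2\log T}{3\lambda_d T}$) together with Lemma~\ref{lem:power-exponential-approx} give $\bar\cR(K,N;\eta)\ge\bar\cR^{\bias}(K,N;\eta)=\tfrac12\Tilde{\theta}_d^2\lambda_d\exp(-2\lambda_d\eta T)(1+o(1))\ge\tfrac12\Tilde{\theta}_d^2\lambda_d\,T^{-2/3}(1+o(1))$; since $K=O(N^{0.1})$ forces $T^{-2/3}=(KN)^{-2/3}=\omega(1/N)=\omega(\bar\cR^*)$, such $\eta$ is suboptimal. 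If instead $\eta=O(1/T)$, use only that each SGD factor $\mI-\eta\vx\vx^\top$ has smallest singular value $1-\eta D^2\ge0$, so $\|\vtheta_{KN}^{\bias}\|\ge(1-\eta D^2)^{KN}\|\vw^*\|$ and hence (by Assumptions~\ref{asp:strongly-convex},~\ref{asp:strongly-convex-parameter-prior}) $\bar\cR^{\bias}(K,N;\eta)\ge\tfrac{\mu}{2}(1-\eta D^2)^{2KN}\|\vw^*\|^2=\Omega(1)\gg\bar\cR^*$. Either way $\eta^*\ge\tfrac{\log T}{3\lambda_d T}$.

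\textbf{Upper cutoff (the hard part).} Fix a constant $c>0$; I claim every $\eta\ge c/N$ is suboptimal for large $N$, forcing $\eta^*=o(1/N)$. Write $\eta=c'/N$ with $c'=\eta N\ge c$. For $c'=o(\sqrt N)$, Lemma~\ref{thm:convex-L-estimate} applies ($c'/N=o(T^{-3/4})$ since $K=O(N^{0.1})$) and $\bar\cR=\widehat\cR(1+o(1))$; the bias term is $o(1/N)$ because $\eta KN=c'K=\omega(\log N)$ kills $\exp(-2\lambda_d\eta KN)$, while, using $(\mI-\tfrac{c'}{N}\mH)^N\to e^{-c'\mH}$ and $(\mI-\tfrac{c'}{N}\mH)^{KN}\to0$, the two variance terms sum to $\tfrac{\sigma^2}{N}\sum_{i=1}^d f(c'\lambda_i)(1+o(1))$ with $f(x):=\tfrac1{1+e^x}+\tfrac x4$. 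Now $f(0)=\tfrac12$ and $f'(x)=\tfrac14-\tfrac{e^x}{(1+e^x)^2}>0$ for $x>0$ (as $\tfrac{e^x}{(1+e^x)^2}$ decreases from $\tfrac14$), so $f$ is strictly increasing on $(0,\infty)$; hence $\sum_i f(c'\lambda_i)\ge\sum_i f(c\lambda_i)>\tfrac d2$, and since $c,d,(\lambda_i)$ are fixed, $\tfrac2d\sum_i f(c\lambda_i)=1+\delta_c$ for a fixed $\delta_c>0$, giving $\bar\cR(K,N;\eta)\ge\tfrac{\sigma^2 d}{2N}(1+\delta_c)(1+o(1))>\bar\cR^*$. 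The leftover window $\eta\in[T^{-3/4},D^{-2}]$ is dispatched by the within-epoch variance lower bound reused from the proof of Lemma~\ref{lem:convex-opt-lr-range-small-K}: $\bar\cR(K,N;\eta)\ge\psi_2\ge c_1\eta\bigl(1-(1-\eta D^2)^{2KN}\bigr)\ge c_1 T^{-3/4}(1-o(1))=\omega(1/N)=\omega(\bar\cR^*)$. So $\eta^*<c/N$ eventually for every $c$, i.e.\ $\eta^*=o(1/N)$.

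\textbf{Where the work is.} The benchmark and the lower cutoff are routine given Lemmas~\ref{thm:convex-L-estimate},~\ref{lem:bias-estimate},~\ref{lem:power-exponential-approx}; the real obstacle is the upper cutoff. A crude within-epoch-variance argument only yields $\eta^*=O(1/N)$: it misses the near-cancellation by which $\bar\cR(K,N;c/N)=\tfrac{\sigma^2}{N}\sum_i f(c\lambda_i)$ exceeds $\tfrac{\sigma^2 d}{2N}$ only by a $c$-dependent amount that $\to0$ as $c\to0$. Upgrading $O(1/N)$ to $o(1/N)$ forces one to pin down this leading-order form of the risk at $\eta=\Theta(1/N)$ and to exploit the elementary fact $f>\tfrac12$ on $(0,\infty)$; a secondary chore is the simultaneous use of $K=\omega(\log N)$ (to push $\log(KN)/(KN)$- and bias-scale terms below $1/N$) and $K=O(N^{0.1})$ (to keep $(KN)^{-2/3},(KN)^{-3/4}$ above $1/N$ and the remainders of Lemmas~\ref{thm:convex-L-estimate} and~\ref{lem:bias-estimate} lower order).
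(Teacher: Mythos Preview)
Your proposal follows the same three-stage template as the paper: benchmark at $\eta_0=\tfrac{\log T}{2\lambda_d T}$ to get $\bar\cR^*\le\tfrac{\sigma^2d}{2N}(1+o(1))$; rule out small $\eta$ via the bias term; rule out $\eta\ge c/N$ via the crude within-epoch variance bound plus the strict monotonicity of $f(x)=\tfrac{1}{1+e^x}+\tfrac{x}{4}$. The decisive observation---that $f'(x)=\tfrac14-\tfrac{e^x}{(1+e^x)^2}>0$ on $(0,\infty)$, so $\tfrac{\sigma^2}{N}\sum_i f(c'\lambda_i)>\tfrac{\sigma^2d}{2N}$ strictly for every $c'\ge c>0$---is exactly the paper's Step~III (written there as $f'(\delta)=\tfrac{\sigma^2}{4}\sum_i\lambda_i\tfrac{(1-e^{-\delta\lambda_i})^2}{(1+e^{-\delta\lambda_i})^2}>0$).

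There is, however, a genuine gap in your range-splitting for the upper cutoff. You claim Lemma~\ref{thm:convex-L-estimate} applies whenever $c'=\eta N=o(\sqrt N)$, justifying this by ``$c'/N=o(T^{-3/4})$ since $K=O(N^{0.1})$.'' But at $K\sim N^{0.1}$ one has $T^{-3/4}=(KN)^{-3/4}\sim N^{-0.825}$, so even the first hypothesis $\eta=o(T^{-3/4})$ already forces $c'=o(N^{0.175})$; the lemma's \emph{second} hypothesis $K=o(\eta^{-1}T^{-3/4})$ is stricter still, requiring $c'=o(N^{1/4}K^{-7/4})$, i.e.\ $c'=o(N^{0.075})$ in that regime. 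Since your ``leftover window'' only begins at $\eta=T^{-3/4}$, a band of $\eta$-values is covered by neither case. The paper avoids this by a cleaner two-stage organization: first the crude $\psi_2$ bound (valid for all $\eta$) gives $\eta^*\le\tfrac{4D^2d}{\sigma^2\tr(\mH^2)N}$ (their Step~II), so $c'^*=\eta^*N=O(1)$ and Lemma~\ref{thm:convex-L-estimate} trivially applies \emph{at} $\eta^*$; then the $f$-monotonicity comparison with the benchmark finishes (their Step~III). You can repair your argument the same way, or equivalently push your case boundary down from $T^{-3/4}$ to, say, $T^{-3/4}/K$, where the crude bound still yields $\bar\cR\gtrsim\eta\ge T^{-3/4}/K=\omega(1/N)$.
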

\begin{proof}
    The proof comprises of three parts. First, we prove that $\eta^* \geq \frac{\log T}{3\lambda_dT}$ when $T$ is large. Second, we verify that $\eta^* \leq \frac{c}{N}$ for sufficiently large $N$. Finally, we refine the proof in the second step and justify that $\eta^* = o\left(\frac{1}{N}\right)$. All proofs are carried out by contradiction. The method proceeds as follows: we take a specific $\eta=\eta'$ and compute its loss, then prove that $\Bar{\cR}^*(K, N)>\Bar{\cR}(K, N;\eta')$ when $N$ is sufficiently large if $\eta^*$ does not fall into some interval.

    First, by \Cref{equ:strong-convex-loss-approx}, we have 
    \begin{align*}
        \Bar{\cR}(K, N;\eta') = \frac{\sigma^2d}{2N}(1+o(1)).
    \end{align*}
    Then we begin our main part of the proof.
    
    \textit{Proof Step I: $\eta^* \geq \frac{\log T}
    {3\lambda_dT}$.}
    
    We assume that $\eta^* < \frac{\log T}{3\lambda_dT}$. Observe that $\Bar{\cR}^{\bias}(K, N;\eta)$ decreases with $\eta$. So 
    \begin{align*}
        \Bar{\cR}^*(K, N)&\geq \Bar{\cR}^{\bias}(K, N;\eta^*)\geq \Bar{\cR}^{\bias}(K, N;\eta=\frac{\log T}{3\lambda_dT}) \\
        &=\left.\frac{1}{2}(\vw_0 - \vw^*)^{\top} (\mI-\eta\mH)^{2T}\mH (\vw_0 - \vw^*)(1+o(1))\right|_{\eta=\frac{\log T}{3\lambda_dT}} \\
        &= \left.\left(\frac{1}{2}\Tilde{\theta}_d^2 \lambda_d \exp(-2\lambda_d \eta T)\right)(1+o(1))\right|_{\eta=\frac{\log T}{3\lambda_dT}} \\
        &=\Theta\left(\frac{1}{T^{\frac{2}{3}}}\right)=\omega\left(\frac{1}{N}\right),
    \end{align*}
    where the first equality is due to \Cref{lem:bias-estimate}, the second equality is due to \Cref{lem:power-exponential-approx}, and the last equality is due to \Cref{asp:strongly-convex}.

    \textit{Proof Step II: $\eta^* \leq \frac{4D^2d}{\sigma^2\tr(\mH^2)N}$.} We assume that $\eta^*>\frac{4D^2d}{\sigma^2\tr(\mH^2)N}$ By \Cref{ineq:var-lower-bound}, we have
    \begin{align*}
        \widehat{\cR}(K, N;\eta)\geq\frac{\eta\sigma^2\tr(\mH^2)}{4D^2}\left(1 +o(1)\right)>\frac{\sigma^2d}{N}(1+o(1))>\frac{\sigma^2d}{2N}(1+o(1)),
    \end{align*}
    which is a contradiction.
    
    A direct corollary is that 
    \begin{align*}
        \Bar{\cR}^*(K, N) &= \widehat{\cR}(K, N;\eta^*)(1+o(1)) \\
        \widehat{\cR}(K, N;\eta^*)&=\frac{1}{2}(\vw_0 - \vw^*)^{\top} (\mI-\eta^*\mH)^{2T}\mH (\vw_0 - \vw^*)\\&~~~~+\frac{\sigma^2}{N} tr\left(\frac{\left(\mI -(\mI -\eta^*\mH)^{KN}\right)\left((\mI -\eta^*\mH)^{N} -(\mI -\eta^*\mH)^{KN}\right)}{\mI + (\mI -\eta^*\mH)^N}\right) \\
        &~~~~+\frac{\eta^*\sigma^2}{2}\inp{\mH}{\left(\mI - (\mI - \eta^* \mH)^{2T}\right)(2\mI-\eta^*\mH)^{-1}} \\
        &= \frac{1}{2} \sum_{i=1}^d (\mP\vtheta_{0})_l^2 \lambda_i (1-\eta^*\lambda_i)^{2T} +\sum_{i=1}^d\frac{\sigma^2}{N}\frac{(1-\eta^*\lambda_i)^N}{1+ (1-\eta^*\lambda_i)^N} \\
        &+ \frac{\eta^*\sigma^2}{4}\tr(\mH)- \frac{\eta^*\sigma^2}{4}\sum_{i=1}^d\lambda_i(1-\eta^*\lambda_i)^{2T}+ O\left((\eta^*)^2\right)\\
        &=\left(\frac{1}{2}\Tilde{\theta}_d^2 \lambda_d \exp(-2\lambda_d \eta^* T)+ \sum_{i=1}^d\frac{\sigma^2}{N}\frac{e^{-N\eta^*\lambda_i}}{1+ e^{-N\eta^*\lambda_i}} + \frac{\eta^*\sigma^2}{4}\tr(\mH)\right)\left(1 + o(1)\right).
    \end{align*}
    
    \textit{Proof Step III: $\eta^*=o\left(\frac{1}{N}\right)$.}
    
    We assume that there exists a constant $\epsilon>0$ and a sequence $(N_i)_{i=1}^{\infty}$ that satisfies $N_{i}\to\infty$ when $i\to\infty$ and $\eta^*(N_i)\geq\frac{\epsilon}{N_i}$ for all $i$. As we only conduct our analysis on the sequence $(N_i)_{i=1}^{\infty}$, without loss of generality, we take $(N_i)_{i=1}^{\infty} = \mathds{N}$.

    We define $f(\delta)=\sum_{i=1}^d\sigma^2\frac{e^{-\delta\lambda_i}}{1+ e^{-\delta\lambda_i}} + \frac{\delta\sigma^2}{4}\tr(\mH)$. Then we have
    \begin{align*}
        f'(\delta)&=\frac{\sigma^2}{4}\sum_{i=1}^d \lambda_i-\sum_{i=1}^d\sigma^2\frac{\lambda_ie^{-\delta\lambda_i}}{(1+ e^{-\delta\lambda_i})^2}=\frac{\sigma^2}{4}\sum_{i=1}^d\lambda_i\frac{(1- e^{-\delta\lambda_i})^2}{(1+ e^{-\delta\lambda_i})^2} >0 ~\text{when} ~\delta>0.
    \end{align*}
    So 
    \begin{align*}
        f(\epsilon)>f(0)=\frac{\sigma^2d}{2},
    \end{align*}
    and
    \begin{align*}
        \Bar{\cR}^*(K, N)&\geq \frac{1}{N}f(\eta^*N)(1+o(1))\geq\frac{1}{N}f(\epsilon)(1+o(1))>\frac{\sigma^2d}{2N}(1+o(1))=\Bar{\cR}(K, N; \eta'),
    \end{align*}
    which is a contradiction.
\end{proof}

\subsubsection{An Approximation of the Excess Risk, Small-$K$ Case} \label{sec:convex-loss-estimate-small-K}
\begin{lemma} \label{lem:convex-L-approx-small-K}
     Let $\Tilde{\theta}_{d}^2 = \sum_{l=d-n_d+1}^d(\mP\vtheta_0)_{l}^2$, $\mH = \mP\mD\mP^{\top}$ to be the canonical form under similarity of $\mH$. Under the conditions in \Cref{thm:convex-opt-lr}, for learning rate $\eta \in \left[\frac{\log KN}{3\lambda_d KN}, \frac{\alpha\log KN}{KN}\right]$ for constant $\alpha = \frac{D^2 \tr(\mH)}{\lambda_d \tr(\mH^2)}$ and $K = o(\log N)$, then we have the approximation of $\Bar{\cR}(K, N;\eta)$ as
    \begin{align*}
        \Bar{\cR}(K, N;\eta) &= M(K, N;\eta)(1+ o(1)),\\
        M(K, N;\eta) &:= \frac{1}{2}\Tilde{\theta}_d^2 \lambda_d \exp(-2\lambda_d \eta T)+\frac{\eta\tr(\mH)\sigma^2}{4},
    \end{align*}
    where steps $T=KN$.
\end{lemma}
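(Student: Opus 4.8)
The plan is to reduce the claim to the analytic approximation already in hand: by \Cref{thm:convex-L-estimate}, for $\eta$ in the allowed range we have $\Bar{\cR}(K,N;\eta) = \widehat{\cR}(K,N;\eta)(1+o(1))$, so it suffices to show that the three pieces $\widehat{\cR}_1$ (bias), $\widehat{\cR}_2$ (variance across epochs) and $\widehat{\cR}_3$ (variance within an epoch) of $\widehat{\cR}$ add up to $M(K,N;\eta)$ up to a $1+o(1)$ factor. First I would verify that the stated interval $\eta\in[\tfrac{\log T}{3\lambda_d T},\tfrac{\alpha\log T}{T}]$ (with $T=KN$) lies inside the hypotheses of \Cref{thm:convex-L-estimate} and of \Cref{lem:power-exponential-approx}: here $\eta=\omega(1/T)$ and $\eta=o(T^{-3/4})=o(1/\sqrt T)$ since $\log T=o(T^{1/4})$, while $\eta^{-1}T^{-3/4}\ge \tfrac{T^{1/4}}{\alpha\log T}$, which dominates every $K=o(\log N)=o(\log T)$, so $K=o(\eta^{-1}T^{-3/4})$.

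Next I would evaluate the bias and within-epoch terms. Diagonalizing $\mH$, $\widehat{\cR}_1(K,N;\eta)=\tfrac12\sum_{i}(\mP\vtheta_0)_i^2\lambda_i(1-\eta\lambda_i)^{2T}$, and the first identity of \Cref{lem:power-exponential-approx} turns this into $\tfrac12\Tilde{\theta}_d^2\lambda_d\exp(-2\lambda_d\eta T)(1+o(1))$. For $\widehat{\cR}_3(K,N;\eta)=\tfrac{\eta\sigma^2}{2}\sum_i\lambda_i\tfrac{1-(1-\eta\lambda_i)^{2T}}{2-\eta\lambda_i}$, I would use that $\eta\lambda_i\le\eta D^2=o(1)$ uniformly (so $(2-\eta\lambda_i)^{-1}=\tfrac12(1+o(1))$) and that strong convexity together with $\eta\lambda_d T\ge\tfrac13\log T$ gives $(1-\eta\lambda_i)^{2T}\le(1-\eta\lambda_d)^{2T}\le T^{-2/3}=o(1)$ for every $i$, hence $\widehat{\cR}_3=\tfrac{\eta\sigma^2}{4}\tr(\mH)(1+o(1))$. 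Thus $\widehat{\cR}_1+\widehat{\cR}_3 = M(K,N;\eta)(1+o(1))$, since each of the two matching summands is $\le M$.

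The hard part will be showing that $\widehat{\cR}_2$ is negligible relative to $M$. Writing $r_i:=(1-\eta\lambda_i)^N\in(0,1)$, the $i$-th summand of $\widehat{\cR}_2$ is $\tfrac{\sigma^2}{N}\cdot\tfrac{(1-r_i^K)(r_i-r_i^K)}{1+r_i}\le\tfrac{\sigma^2}{N}r_i\le\tfrac{\sigma^2}{N}e^{-\eta\lambda_d N}$, and $\eta\lambda_d N=\eta\lambda_d T/K\ge\tfrac{\log T}{3K}$, so $\widehat{\cR}_2\le\tfrac{\sigma^2 d}{N}T^{-1/(3K)}$. Since $M\ge\tfrac{\eta\tr(\mH)\sigma^2}{4}\gtrsim\tfrac{\log T}{T}$ uniformly on the range — this lower bound, attained at the small-$\eta$ end, is the quantity that really has to be beaten, not the bias term alone — and $T/N=K$, the ratio $\widehat{\cR}_2/M$ is $O\!\big(\tfrac{dK}{\log T}\big)T^{-1/(3K)}=o(1)$, because $K=o(\log T)$ and $d$ is a fixed constant. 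The subtlety I expect to be most error-prone is using both hypotheses simultaneously: $\eta\ge\tfrac{\log T}{3\lambda_d T}$ makes each per-epoch factor $r_i$ exponentially small, while $K=o(\log N)$ keeps the dimension-times-epoch prefactor sub-logarithmic, and dropping either is fatal.

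Finally I would combine: $\widehat{\cR}=\widehat{\cR}_1+\widehat{\cR}_2+\widehat{\cR}_3=M(K,N;\eta)(1+o(1))+o(M)=M(K,N;\eta)(1+o(1))$, and multiplying by the $(1+o(1))$ factor from \Cref{thm:convex-L-estimate} yields $\Bar{\cR}(K,N;\eta)=M(K,N;\eta)(1+o(1))$, as claimed.
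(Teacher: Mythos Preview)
Your proposal is correct and follows essentially the same route as the paper: invoke \Cref{thm:convex-L-estimate}, diagonalize the three pieces $\widehat{\cR}_1,\widehat{\cR}_2,\widehat{\cR}_3$, use \Cref{lem:power-exponential-approx} on the bias, simplify $\widehat{\cR}_3$ to $\tfrac{\eta\sigma^2}{4}\tr(\mH)(1+o(1))$, and show $\widehat{\cR}_2=o(M)$. The only difference is that your bound on $\widehat{\cR}_2$ is sharper than necessary: the paper simply uses $\tfrac{(1-r_i^K)(r_i-r_i^K)}{1+r_i}\le 1$ to get $\widehat{\cR}_2\le \tfrac{\sigma^2 d}{N}=O(1/N)$, and then $O(1/N)\big/M \lesssim \tfrac{K}{\log N}=o(1)$ directly from $K=o(\log N)$, without the extra $T^{-1/(3K)}$ decay you extract.
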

\begin{proof}

From \Cref{thm:convex-L-estimate}, we have that $\Bar{\cR}(K, N;\eta) = \widehat{\cR}(K, N;\eta)(1 + o(1))$, where $\widehat{\cR}(K, N;\eta)$ can be written as
    \begin{align}
    \widehat{\cR}(K, N;\eta) &= \frac{1}{2}(\vw_0 - \vw^*)^{\top} (\mI-\eta\mH)^{2T}\mH (\vw_0 - \vw^*)\notag\\
    &~~~~+\frac{\sigma^2}{N} tr\left(\frac{\left(\mI -(\mI -\eta\mH)^{KN}\right)\left((\mI -\eta\mH)^{N} -(\mI -\eta\mH)^{KN}\right)}{\mI + (\mI -\eta\mH)^N}\right)\notag \\
    &~~~~+\frac{\eta\sigma^2}{2}\inp{\mH}{\left(\mI - (\mI - \eta \mH)^{2T}\right)(2\mI-\eta\mH)^{-1}} \notag\\
    &= \frac{1}{2} \sum_{i=1}^d (\mP\vtheta_{0})_l^2 \lambda_i (1-\eta\lambda_i)^{2T} +\sum_{i=1}^d\frac{\sigma^2}{N}\frac{(1-\eta\lambda_i)^N}{1+ (1-\eta\lambda_i)^N}\notag \\
    &+ \frac{\eta\sigma^2}{4}\tr(\mH)- \frac{\eta\sigma^2}{4}\sum_{i=1}^d\lambda_i(1-\eta\lambda_i)^{2T}+ O\left(\eta^2\right)\notag\\
    &=\underbrace{\left(\frac{1}{2}\Tilde{\theta}_d^2 \lambda_d \exp(-2\lambda_d \eta T) + \frac{\eta\sigma^2}{4}\tr(\mH)\right)}_{M(K,N;\eta)}\left(1 + o(1)\right)  + O(\frac{1}{N}) \notag\\
    &=\underbrace{\left(\frac{1}{2}\Tilde{\theta}_d^2 \lambda_d \exp(-2\lambda_d \eta T) + \frac{\eta\sigma^2}{4}\tr(\mH)\right)}_{M(K,N;\eta)}\left(1 + o(1)\right)\label{equ:convex-l-estmate-small-k},
\end{align}
where the second to last equation uses \Cref{lem:power-exponential-approx} and the fact that $\eta (1-\eta \lambda_d)^{2T} = o\left(M(K,N,;\eta)\right)$ for $\eta  \in [\frac{\log T}{3\lambda_d T}, \frac{\alpha\log T}{T}] $, and the last equation uses the fact that when $K = o(\log N)$, $O\left(\frac{1}{N}\right) = o\left(\frac{\log (N)}{K,N}\right) = o\left(M(T;\eta)\right)$.
\end{proof}

\subsubsection{An Approximation of the Excess Risk, Large-$K$ Case} \label{sec:convex-loss-estimate-large-K}
\begin{lemma} \label{lem:convex-L-approx-large-K}
    Under the conditions in \Cref{thm:convex-opt-lr}, for $\eta \in [\frac{\log T}{3\lambda_d T}, o\left(\frac{1}{N}\right)]$, and $K=\omega\left(\log N\right)$, we have 
    \begin{align*}
        \E[\Bar{\cR}(K, N;\eta)] &= M(K, N;\eta)(1+ o(1)),\\
        M(K, N;\eta) &= \frac{1}{2}\Tilde{\theta}_d^2 \lambda_d \exp(-2\lambda_d \eta T) + \frac{\eta\tr(\mH)\sigma^2}{4}+\frac{\sigma^2d}{2N},
    \end{align*}
    where $\Tilde{\theta}_{d}^2 := \sum_{l=d-n_d+1}^d(\mP\vtheta_0)_{l}^2$, and $\mP\mD\mP^{\top}$ is the canonical form under similarity of $\mH$.
\end{lemma}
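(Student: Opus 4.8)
The argument runs parallel to the proof of \Cref{lem:convex-L-approx-small-K}; the only structural change is that in the regime $K=\omega(\log N)$ the ``variance across epochs'' piece $\widehat{\cR}_2$ is no longer of lower order and must be retained as the third summand of $M(K,N;\eta)$. First I would check admissibility of the stated learning-rate window: the interval $[\tfrac{\log T}{3\lambda_d T},\, o(1/N)]$ is nonempty precisely because $K=\omega(\log N)$ (equivalently $\tfrac{\log T}{3\lambda_d T}=o(1/N)$), and under \Cref{asp:large-dataset} one verifies that any $\eta$ in this window lies in $[\Omega(1/T),\, o(T^{-3/4})]$ and satisfies $K=o(\eta^{-1}T^{-3/4})$, so \Cref{thm:convex-L-estimate} applies and yields $\bar{\cR}(K,N;\eta)=\widehat{\cR}(K,N;\eta)(1+o(1))$ with $\widehat{\cR}=\widehat{\cR}_1+\widehat{\cR}_2+\widehat{\cR}_3$. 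It then remains to show $\widehat{\cR}(K,N;\eta)=M(K,N;\eta)(1+o(1))$.

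Next I would diagonalize all three pieces in the eigenbasis of $\mH$ (already diagonal by the standing convention, so $\mP=\mI$ and $(\mP\vtheta_0)_i=-w_i^*\neq 0$), writing each as a sum over $i\in[d]$ of a scalar function of $\lambda_i$. For $\widehat{\cR}_1=\tfrac12\sum_i(\mP\vtheta_0)_i^2\lambda_i(1-\eta\lambda_i)^{2T}$, \Cref{lem:power-exponential-approx} gives $\widehat{\cR}_1=\tfrac12\Tilde{\theta}_d^2\lambda_d\exp(-2\lambda_d\eta T)(1+o(1))$. For $\widehat{\cR}_3=\tfrac{\eta\sigma^2}{2}\sum_i\tfrac{\lambda_i(1-(1-\eta\lambda_i)^{2T})}{2-\eta\lambda_i}$, I would use $\tfrac{1}{2-\eta\lambda_i}=\tfrac12(1+O(\eta\lambda_i))$ together with $\sum_i\lambda_i(1-\eta\lambda_i)^{2T}=\Theta(\exp(-2\lambda_d\eta T))=o(1)$ (again \Cref{lem:power-exponential-approx}), giving $\widehat{\cR}_3=\tfrac{\eta\sigma^2\tr(\mH)}{4}(1+o(1))$ modulo additive residuals that are $o(\tfrac12\Tilde{\theta}_d^2\lambda_d\exp(-2\lambda_d\eta T))$ and $O(\eta^2\tr(\mH^2))=o(1/N)$ — exactly the bookkeeping already carried out in the small-$K$ case. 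The genuinely new computation is $\widehat{\cR}_2=\tfrac{\sigma^2}{N}\sum_i\tfrac{(1-(1-\eta\lambda_i)^{KN})((1-\eta\lambda_i)^N-(1-\eta\lambda_i)^{KN})}{1+(1-\eta\lambda_i)^N}$: since $\eta=o(1/N)$ and $\lambda_i\le D^2$ we get $(1-\eta\lambda_i)^N=1-o(1)$ uniformly in $i$, while $\eta\lambda_d T\ge\tfrac{\log T}{3}\to\infty$ forces $(1-\eta\lambda_i)^{KN}\le(1-\eta\lambda_d)^T=o(1)$ uniformly; hence each summand equals $\tfrac12(1+o(1))$, and summing the $d$ terms gives $\widehat{\cR}_2=\tfrac{\sigma^2 d}{2N}(1+o(1))$.

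Finally I would add the three estimates: since $\widehat{\cR}_1,\widehat{\cR}_2,\widehat{\cR}_3$ are each nonnegative and every collected residual is $o(\cdot)$ of one of the three main summands, each residual is $o(M(K,N;\eta))$; therefore $\widehat{\cR}(K,N;\eta)=M(K,N;\eta)(1+o(1))$, and combining with $\bar{\cR}=\widehat{\cR}(1+o(1))$ closes the proof. I expect the only point requiring care to be the uniform-in-$i$ estimates in the second paragraph and the accompanying bookkeeping that certifies every residual (in particular the per-coordinate $o(1)$'s inside $\widehat{\cR}_2$ and the $O(\eta^2\tr(\mH^2))$ term from $\widehat{\cR}_3$) is dominated by $M$; all the heavier machinery — the matrix-concentration reduction behind \Cref{thm:convex-L-estimate} and the localization of the optimal rate in \Cref{lem:convex-opt-lr-range-large-K} — is available to invoke directly.
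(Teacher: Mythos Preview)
Your proposal is correct and follows essentially the same route as the paper: invoke \Cref{thm:convex-L-estimate} after checking that $\eta\in[\tfrac{\log T}{3\lambda_d T},\,o(1/N)]$ together with \Cref{asp:large-dataset} places you inside its hypotheses, diagonalize $\widehat{\cR}_1,\widehat{\cR}_2,\widehat{\cR}_3$, use \Cref{lem:power-exponential-approx} for the bias piece, and in the large-$K$ regime observe that $(1-\eta\lambda_i)^N=1+o(1)$ while $(1-\eta\lambda_i)^{KN}=o(1)$ so that $\widehat{\cR}_2=\tfrac{\sigma^2 d}{2N}(1+o(1))$. The paper's write-up differs only cosmetically (it drops the $(1-\eta\lambda_i)^{KN}$ terms from $\widehat{\cR}_2$ a step earlier and records the $\widehat{\cR}_3$ residuals as a single $O(\eta^2)$), and your closing reference to \Cref{lem:convex-opt-lr-range-large-K} is unnecessary here since the lemma is stated for all $\eta$ in the window, not just the optimum.
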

\begin{proof}
    Given $K = O(N^{0.1})$, one can verify that 
    $$\lim_{N\to\infty}K\eta T^{\frac{3}{4}} = \lim_{N\to\infty}\frac{K^{\frac{7}{4}}N^{\frac{3}{4}}}{N}\eta N = 0.$$ 
    So condition $K = o\left(\eta^{-1} T^{-\frac{3}{4}}\right)$ is satisfied, thus by
    invoking \Cref{thm:convex-L-estimate}, we have $\Bar{\cR}(K, N; \eta) = \widehat{\cR}(K, N; \eta)(1+o(1))$. 

    Note that when $\eta = o\left(\frac{1}{N}\right)$, for any $i \in [1, d]$, we have
    \begin{align*}
        (1-\lambda_i\eta)^N=e^{-\lambda_i\eta N+O(\eta^2N)}=1+o(1).
    \end{align*}
    Combining this with \Cref{lem:power-exponential-approx}, we have
    \begin{align}
    \widehat{\cR}(K, N;\eta) &= \frac{1}{2}(\vw_0 - \vw^*)^{\top} (\mI-\eta\mH)^{2T}\mH (\vw_0 - \vw^*)\notag\\
    &~~~~+\frac{\sigma^2}{N} tr\left(\frac{\left(\mI -(\mI -\eta\mH)^{KN}\right)\left((\mI -\eta\mH)^{N} -(\mI -\eta\mH)^{KN}\right)}{\mI + (\mI -\eta\mH)^N}\right)\notag \\
    &~~~~+\frac{\eta\sigma^2}{2}\inp{\mH}{\left(\mI - (\mI - \eta \mH)^{2T}\right)(2\mI-\eta\mH)^{-1}} \notag\\
    &= \frac{1}{2} \sum_{i=1}^d (\mP\vtheta_{0})_l^2 \lambda_i (1-\eta\lambda_i)^{2T} +\sum_{i=1}^d\frac{\sigma^2}{N}\frac{(1-\eta\lambda_i)^N}{1+ (1-\eta\lambda_i)^N}\notag \\
    &+ \frac{\eta\sigma^2}{4}\tr(\mH)- \frac{\eta\sigma^2}{4}\sum_{i=1}^d\lambda_i(1-\eta\lambda_i)^{2T}+ O\left(\eta^2\right)\notag\\
    &=\underbrace{\left(\frac{1}{2}\Tilde{\theta}_d^2 \lambda_d \exp(-2\lambda_d \eta T) + \frac{\eta\sigma^2}{4}\tr(\mH)\right)+\frac{\sigma^2d}{2N}}_{M(K,N;\eta)}\left(1 + o(1)\right) \label{equ:strong-convex-loss-approx},
\end{align}
which concludes the proof.
\end{proof}

\subsection{Step IV: Deriving the Approximately Optimal Learning Rate, Proof of \Cref{thm:convex-opt-lr}}\label{proof:convex-opt-lr}

The proof of \Cref{thm:convex-opt-lr} for the small-$K$ case and large-$K$ case follows a similar pattern. First, we minimize the aproximate excess risk obtained in \Cref{sec:convex-loss-estimate-small-K} and \Cref{sec:convex-loss-estimate-large-K}. Then we conduct an error bound analysis and complete the proof.
\subsubsection{Proof of \Cref{thm:convex-opt-lr}, small $K$}

\paragraph{Part I: Minimizing the Approximation of the Excess Risk}
\begin{lemma}\label{thm:approx-convex-opt-lr}
    Under \Cref{asp:strongly-convex} and \ref{asp:large-dataset}, we consider $K$-epoch SGD with $N$ fresh data and learning rate $\eta$ satisfying $\eta \in [\frac{\log T}{3\lambda_d T}, \frac{\alpha\log T}{T}]$, where steps $T:= KN$ and $\alpha$ is some constant independent of $T$, but can depend on $D$ and $\lambda_1,\lambda_2,\dots,\lambda_d$. Then when $K=o\left(\log N\right)$, the chosen learning rate  $\eta' = \frac{\log  \rho T}{2\lambda_d T} = \arg\min_{\eta \in [\frac{\log T}{3\lambda_d T}, \frac{\alpha\log T}{T}]}M(K, N;\eta)$.
\end{lemma}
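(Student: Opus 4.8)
The plan is to treat this as a routine one-dimensional minimization of the explicit surrogate
$M(K,N;\eta) = \tfrac12\Tilde{\theta}_d^2\lambda_d e^{-2\lambda_d\eta T} + \tfrac{\eta\tr(\mH)\sigma^2}{4}$, with $T:=KN$, over the interval $I:=\bigl[\tfrac{\log T}{3\lambda_d T},\,\tfrac{\alpha\log T}{T}\bigr]$. First I would record that $M$, as a function of $\eta$ on $(0,\infty)$, is strictly convex: it is the sum of the strictly convex exponential $\tfrac12\Tilde{\theta}_d^2\lambda_d e^{-2\lambda_d\eta T}$ and a linear term, so its second $\eta$-derivative equals $2\Tilde{\theta}_d^2\lambda_d^3 T^2 e^{-2\lambda_d\eta T}>0$. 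Hence $M$ has at most one stationary point, and whenever that point lies in $I$ it is the unique minimizer of $M$ over $I$; otherwise the minimizer is a boundary endpoint.

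Next I would solve the first-order condition. Differentiating, $\partial_\eta M = -\Tilde{\theta}_d^2\lambda_d^2 T\,e^{-2\lambda_d\eta T} + \tfrac{\tr(\mH)\sigma^2}{4}$, and $\partial_\eta M = 0$ gives $e^{-2\lambda_d\eta T} = \tfrac{\tr(\mH)\sigma^2}{4\Tilde{\theta}_d^2\lambda_d^2 T}$, hence $\eta = \tfrac{1}{2\lambda_d T}\log\!\bigl(\tfrac{4\Tilde{\theta}_d^2\lambda_d^2}{\tr(\mH)\sigma^2}\,T\bigr)$. With $\rho = \tfrac{4\Tilde{\theta}_d^2\lambda_d}{\tr(\mH)\sigma^2}$ as in \Cref{thm:convex-opt-lr}, this is $\eta' = \tfrac{\log(\rho T)}{2\lambda_d T}$ up to an additive $\tfrac{\log\lambda_d}{2\lambda_d T}=O(1/T)$ correction inside the logarithm, a term of strictly lower order than $\eta'$ and hence harmless in the downstream $(1+o(1))$-accurate analysis.

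The only step needing genuine care is to check that this stationary point actually lies in $I$ for all sufficiently large $T$, which is what rules out a boundary minimizer and justifies identifying $\eta'$ with $\argmin_{\eta\in I}M(K,N;\eta)$. Since $\rho$ is a fixed problem constant and $T\to\infty$ with $N$, we have $\log(\rho T)=(1+o(1))\log T$, so $\eta'\ge \tfrac{\log T}{3\lambda_d T}$ eventually. For the upper bound I would use the elementary spectral inequality $\tr(\mH^2)=\sum_i\lambda_i^2\le\lambda_1\tr(\mH)\le D^2\tr(\mH)$ (recall $\lambda_1\le D^2$), which gives $\alpha=\tfrac{D^2\tr(\mH)}{\lambda_d\tr(\mH^2)}\ge\tfrac1{\lambda_d}$, hence $2\lambda_d\alpha\ge 2$ and therefore $\log(\rho T)\le 2\log T\le 2\lambda_d\alpha\log T$ for large $T$, i.e.\ $\eta'\le\tfrac{\alpha\log T}{T}$. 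Combining strict convexity with this interiority check completes the argument. The standing hypotheses $K=o(\log N)$ and $K=O(N^{0.1})$ do not enter the optimization step itself; they are needed only so that \Cref{lem:convex-L-approx-small-K} applies and $M$ is a valid $(1+o(1))$-surrogate for $\Bar{\cR}(K,N;\eta)$ on $I$ in the first place. The ``main obstacle,'' such as it is, is purely bookkeeping: verifying interiority and confirming that the stated $\eta'$ agrees with the exact optimizer up to lower-order terms — there is no real analytic difficulty here.
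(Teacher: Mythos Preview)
Your proposal is correct and follows essentially the same route as the paper: differentiate $M$, set the derivative to zero, and solve for $\eta'=\frac{\log(\rho T)}{2\lambda_d T}$. In fact your write-up is more careful than the paper's own proof, which simply records the first-order condition and stops; you additionally verify strict convexity, check interiority of $\eta'$ in $I$ via $\alpha\ge 1/\lambda_d$, and flag the $\log\lambda_d$ discrepancy (the exact stationary point is $\frac{\log(\rho\lambda_d T)}{2\lambda_d T}$, an $O(1/T)$ perturbation of the stated $\eta'$) that the paper silently absorbs.
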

\begin{proof}
    Given \Cref{lem:convex-L-approx-small-K}, we take the derivative of $M(K,N;\eta)$ with respect to $\eta$
\begin{align*}
    \frac{\partial M}{\partial \eta} = -\Tilde{\theta}_d^2 \lambda_d^2 T \exp(-2\lambda_d\eta T)  + \frac{\tr(\mH)\sigma^2}{4}.
\end{align*}
Define $\rho := \frac{4\Tilde{\theta}_d^2\lambda_d}{\tr(\mH)\sigma^2}$, and we let $\frac{\partial M}{\partial \eta} = 0$, then we get
\begin{align*}
    0&= -\rho T \exp(-2\lambda_d\eta T)  + 1\\    
    \rho T &= \exp(2\lambda_d\eta T)\\
    \eta &= \frac{\log  \rho T}{2\lambda_d T}.
\end{align*} 
The above equation completes the proof.
\end{proof}

\paragraph{Part II: Error Bound Analysis}
\begin{lemma} \label{lem:error-bound-analysis}
    Consider $K$-epoch SGD with $N$ fresh data and learning rate $\eta$. Given a set of learning rate values $\Gamma$, and an excess risk estimate that satisfies $\Bar{\cR}(K, N;\eta) = M(K, N;\eta)(1+ o(1))$ when $\eta \in \Gamma$. Assume that $\eta' = \arg\min_{\Gamma}M(K, N;\eta)$ and $\eta^* \in \Gamma$. Then we have $\Bar{\cR}(K,N;\eta'(K,N)) = \Bar{\cR}^*(K,N)\left(1 + o(1)\right)$.
\end{lemma}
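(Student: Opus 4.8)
The plan is to obtain the claim by a two-sided squeeze of $\Bar{\cR}(K,N;\eta')$ between $\Bar{\cR}^*(K,N)$ and $\Bar{\cR}^*(K,N)(1+o(1))$, using nothing beyond the two structural facts built into the hypotheses: $\eta^*$ lies in the set $\Gamma$ on which $\Bar{\cR}(K,N;\eta)=M(K,N;\eta)(1+o(1))$ holds, and $\eta'=\arg\min_\Gamma M(K,N;\cdot)$. In the actual application $\Gamma$ will be the learning-rate interval isolated by \Cref{lem:convex-opt-lr-range-small-K} (small $K$) or \Cref{lem:convex-opt-lr-range-large-K} (large $K$), which is precisely where those lemmas certify $\eta^*\in\Gamma$; the identification $\eta'=\frac{\log\rho T}{2\lambda_d T}=\arg\min_\Gamma M$ together with $\eta'\in\Gamma$ are supplied by \Cref{thm:approx-convex-opt-lr}; and the uniform estimate $\Bar{\cR}=M(1+o(1))$ on $\Gamma$ is \Cref{lem:convex-L-approx-small-K} (and its large-$K$ counterpart \Cref{lem:convex-L-approx-large-K}). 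Since $\Gamma\subseteq(0,D^{-2}]$, both $\eta^*$ and $\eta'$ are admissible learning rates in the sense of the definition of $\Bar{\cR}^*$.

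First I would record the lower bound, which is immediate: $\eta'$ is admissible, so $\Bar{\cR}^*(K,N)=\min_{\eta\in(0,D^{-2}]}\Bar{\cR}(K,N;\eta)\le\Bar{\cR}(K,N;\eta')$. For the upper bound I would chain the approximation at $\eta'$, then the minimality of $\eta'$ for $M(K,N;\cdot)$ over $\Gamma$ (valid because $\eta^*\in\Gamma$ is a competitor), and then the approximation at $\eta^*$:
\begin{align*}
\Bar{\cR}(K,N;\eta') &= M(K,N;\eta')(1+o(1)) \;\le\; M(K,N;\eta^*)(1+o(1))\\
&= \Bar{\cR}(K,N;\eta^*)(1+o(1)) \;=\; \Bar{\cR}^*(K,N)(1+o(1)),
\end{align*}
where passing from $\Bar{\cR}=M(1+o(1))$ to $M=\Bar{\cR}(1+o(1))$ is legitimate because every quantity involved is positive. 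Combining the two bounds gives $\Bar{\cR}^*(K,N)\le\Bar{\cR}(K,N;\eta')\le\Bar{\cR}^*(K,N)(1+o(1))$, i.e.\ $\Bar{\cR}(K,N;\eta'(K,N))=\Bar{\cR}^*(K,N)(1+o(1))$.

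The computation here is a single squeeze line, so the real work sits in the hypotheses rather than in this lemma. The hard part — already discharged upstream — is twofold: (i) the $o(1)$ error in $\Bar{\cR}=M(1+o(1))$ must hold \emph{uniformly} over all of $\Gamma$, since it is evaluated along the $N$-dependent sequences $\eta^*(N)$ and $\eta'(N)$ rather than at a single fixed point; and (ii) one must genuinely know $\eta^*\in\Gamma$, which is exactly the content of the range-narrowing lemmas, obtained by comparing the loss at a reference rate against candidate rates outside $\Gamma$. Once those are in hand the present lemma needs no further estimation, and the identical argument applies verbatim in the large-$K$ regime with \Cref{lem:convex-opt-lr-range-large-K} and \Cref{lem:convex-L-approx-large-K} in place of their small-$K$ analogues.
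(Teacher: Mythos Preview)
Your proof is correct and follows essentially the same two-sided squeeze as the paper: lower bound from the optimality of $\eta^*$, upper bound by chaining $\Bar{\cR}(K,N;\eta')=M(K,N;\eta')(1+o(1))\le M(K,N;\eta^*)(1+o(1))=\Bar{\cR}^*(K,N)(1+o(1))$. Your write-up is in fact more careful than the paper's (you explicitly justify inverting the $1+o(1)$ factor via positivity and flag the need for uniformity of the $o(1)$ over $\Gamma$), but the underlying argument is identical.
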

\begin{proof}
    According to the optimality of $\eta^*$, it holds that
    \begin{align*}
        \bar\cR^*(K,N) \le \bar\cR(K,N;\eta') = M(K,N;\eta)(1 +o(1)).
    \end{align*}
    Also, according to the optimality of $\eta'$, it holds that
    \begin{align*}
        M(K,N;\eta')(1 + o(1)) \le M(K,N;\eta^*)(1 +o(1)) = \bar\cR^*(K,N)
    \end{align*}
    Combining the above two equations gives
    \begin{align*}
        \bar\cR(K,N;\eta') = \bar\cR^*(K,N)(1 +o(1)).
    \end{align*}
\end{proof}
Combine the above two lemmas and we finish the whole proof.
\subsubsection{Proof of \Cref{thm:convex-opt-lr}, large $K$}

\paragraph{Part I: Minimizing the Approximation of the Excess Risk}
\begin{lemma}\label{lem:approx-convex-opt-lr}
    Under \Cref{asp:strongly-convex} and \ref{asp:large-dataset}, we consider $K$-epoch SGD with $N$ fresh data and learning rate $\eta$ satisfying $\eta \in [\frac{\log T}{3\lambda_d T}, o\left(\frac{1}{N}\right)]$. Then when $K=\omega\left(\log N\right)$, the chosen learning rate  $\eta' = \frac{\log  \rho T}{2\lambda_d T} = \arg\min_{[\frac{\log T}{3\lambda_d T}, o\left(\frac{1}{N}\right)]}M(K, N;\eta)$.
\end{lemma}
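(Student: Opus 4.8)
The plan is to run the same optimization as in the small-$K$ case (Lemma~\ref{thm:approx-convex-opt-lr}), exploiting the observation that the only new term in the large-$K$ surrogate $M(K,N;\eta)=\frac12\Tilde{\theta}_{d}^2\lambda_d\exp(-2\lambda_d\eta T)+\frac{\eta\tr(\mH)\sigma^2}{4}+\frac{\sigma^2 d}{2N}$ from Lemma~\ref{lem:convex-L-approx-large-K} is the $\eta$-independent constant $\frac{\sigma^2 d}{2N}$, which plays no role in the minimization. First I would compute
\[
\frac{\partial M}{\partial\eta}=-\Tilde{\theta}_{d}^2\lambda_d^2 T\exp(-2\lambda_d\eta T)+\frac{\tr(\mH)\sigma^2}{4},\qquad \frac{\partial^2 M}{\partial\eta^2}=2\Tilde{\theta}_{d}^2\lambda_d^3 T^2\exp(-2\lambda_d\eta T)>0,
\]
so $M$ is strictly convex in $\eta$ with a unique global minimizer. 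Since the first-order condition is verbatim the one in the proof of Lemma~\ref{thm:approx-convex-opt-lr}, its solution is again $\eta'=\frac{\log\rho T}{2\lambda_d T}$ with $\rho:=\frac{4\Tilde{\theta}_{d}^2\lambda_d}{\tr(\mH)\sigma^2}$ (any $\log\lambda_d$ discrepancy being an $O(1/T)$ shift absorbed into the $o(1)$ errors, exactly as in Lemma~\ref{thm:convex-opt-lr}).

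The second step is to verify that this critical point lies inside the admissible window $\bigl[\frac{\log T}{3\lambda_d T},\,o(1/N)\bigr]$, because only then does the unconstrained minimizer of $M$ coincide with its minimizer over that window. For the lower end, $\eta'\ge\frac{\log T}{3\lambda_d T}$ reduces to $3\log\rho T\ge 2\log T$, i.e. $\rho^3 T\ge 1$, which holds for all sufficiently large $T=KN$ since $\rho$ is a fixed positive constant. For the upper end I would invoke the large-$K$ regime $K=\omega(\log N)$: writing
\[
\eta'=\frac{\log(\rho KN)}{2\lambda_d KN}=\frac{\log\rho+\log K+\log N}{2\lambda_d K}\cdot\frac1N,
\]
we have $\log\rho+\log K+\log N=o(K)$ because $\log N=o(K)$ by hypothesis and $\log K=o(K)$ trivially, hence $\eta'=o(1/N)$. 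Combining strict convexity with this containment gives $\eta'=\argmin_{\eta\in[\frac{\log T}{3\lambda_d T},\,o(1/N)]}M(K,N;\eta)$, which is the claim.

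I do not anticipate a real obstacle: the calculation is identical to the small-$K$ case, and the only point requiring thought is checking that the critical point stays in the admissible interval --- the lower bound using $T\to\infty$ with $\rho$ constant, the upper bound using precisely the hypothesis $K=\omega(\log N)$. With Lemma~\ref{lem:approx-convex-opt-lr} in hand, the large-$K$ half of Lemma~\ref{thm:convex-opt-lr} then follows by feeding it, together with the range restriction of Lemma~\ref{lem:convex-opt-lr-range-large-K} and the surrogate approximation of Lemma~\ref{lem:convex-L-approx-large-K}, into the generic error-bound argument of Lemma~\ref{lem:error-bound-analysis}, exactly as in the small-$K$ case.
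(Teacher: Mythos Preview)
Your proposal is correct and follows essentially the same approach as the paper: compute the unconstrained minimizer of $M(K,N;\eta)$ (noting the additive constant $\frac{\sigma^2 d}{2N}$ is irrelevant) and then verify it lies in the admissible window $[\frac{\log T}{3\lambda_d T},\,o(1/N)]$ for large $N$. Your version is in fact more detailed than the paper's terse proof, which simply asserts these two steps without writing out the derivative computation or the interval-containment checks you supply.
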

\begin{proof}
    Given \Cref{lem:convex-L-approx-large-K}, we compute the global minima of $M(K, N;\eta)$, we have $\eta' = \frac{\log T}{2\lambda_d T}+O\left(\frac{1}{T}\right)=\arg\min_{\eta \in \R}M(K, N;\eta)$, which lies in the regime $[\frac{\log T}{3\lambda_d T}, o\left(\frac{1}{N}\right)]$ when $N$ is sufficiently large.
\end{proof}

\paragraph{Part II: Error Bound Analysis}

The proof of \Cref{thm:convex-opt-lr} concludes directly by applying \Cref{lem:convex-L-approx-large-K,lem:approx-convex-opt-lr,lem:convex-opt-lr-range-large-K,lem:error-bound-analysis}.

Combine the above two parts and we finish the whole proof.
\subsection{Proof of \Cref{thm:strongly-convex-scaling-law}}\label{sec:proof-convex-Ek}

\begin{proof}
    Notice from \Cref{thm:convex-L-estimate} and \Cref{lem:power-exponential-approx}, we have that
\begin{align*}
    \Bar{\cR}(K, N;\eta)&= \underbrace{\frac{1}{2}\Tilde{\theta}_d^2 \lambda_d (1-\eta\lambda_d)^{2KN}\left(1+o(1)\right)}_{\widehat{\cR}_1(K,N, \eta)} \\    &+\underbrace{\sum_{i=1}^d\frac{\sigma^2}{N}\frac{(1-\eta\lambda_i)^N}{1+ (1-\eta\lambda_i)^N}}_{\widehat{\cR}_2(K,N, \eta)} \\
    &+ \underbrace{\frac{\eta \sigma^2}{4} \tr(\mH)- \frac{n_d\eta\sigma^2}{4}\lambda_d(1-\eta\lambda_d)^{2KN}\left(1+o(1)\right)}_{\widehat{\cR}_3(K,N, \eta)} ~\text{when} ~\eta \in \left[\omega\left(\frac{1}{T}\right), o\left(\frac{1}{T^{\frac{3}{4}}}\right)\right].
\end{align*}
Next, we carefully analyze the magnitude of $\widehat{\cR}_1(K,N, \eta)$, $\widehat{\cR}_2(K,N, \eta)$, and $\widehat{\cR}_3(K,N, \eta)$, and using these results, we can simplify the excess risk expression.  

Now, We take $\eta = \frac{\log  \rho T}{2\lambda_d T}=\frac{\log KN}{2\lambda_d KN}+O\left(\frac{1}{T}\right)$ in \Cref{thm:convex-opt-lr}, then
\begin{align*}
    \left(1-\lambda_d\eta\right)^{2KN}&=\exp\left({2KN\log\left(1-\frac{\log KN}{2KN}-O\left(\frac{1}{T}\right)\right)}\right) \\
    &=\exp\left(-\log KN+O(1)\right) \\
    &=O\left(\frac{1}{T}\right).
\end{align*}
Thus
\begin{align*}
    \widehat{\cR}_1(K,N, \eta)&=\frac{1}{2}\Tilde{\theta}_d^2 \lambda_d \left(1-\lambda_d\eta\right)^{2KN} \\
    &=O\left(\frac{1}{T}\right), \\ 
\end{align*}
and
\begin{align*}
    \widehat{\cR}_3(K,N, \eta) &= \frac{\sigma^2\tr(\mH)\log T}{8\lambda_d T}-\frac{n_d\sigma^2\log T}{8\lambda_d T}\lambda_d\left(1-\lambda_d\eta\right)^{2KN}(1+o(1)) \\   &=\frac{\sigma^2\tr(\mH)\log T}{8\lambda_d T}\left(1+O\left(\frac{1}{T}\right)\right) \\    &=\frac{\sigma^2\tr(\mH)\log T}{8\lambda_d T}(1+o(1)) \\
    &=\omega(\widehat{\cR}_1(K,N, \eta)).
\end{align*}
Next, we discuss two scenarios where $K$ is relatively small and $K$ is relatively large, to be specific, $K = o(\log N)$ and $K = \omega(\log N)$.

\paragraph{When $K = o(\log N)$,} We have  
\begin{align*}
    (1-\lambda_i\eta)^N &= \left(1-\frac{\log KN}{2KN}\rho_i+O\left(\frac{1}{KN}\right)\right)^N \\
    &=\exp\left(N\log\left(1-\frac{\log KN}{2KN}\rho_i+O\left(\frac{1}{KN}\right)\right)\right) \\
    &=\exp\left(-\frac{\log KN}{2K}\rho_i(1+o(1))\right) \\
    &=o(1).
\end{align*}
As a consequence, 
\begin{align*}
    \widehat{\cR}_2(K,N, \eta)&=\sum_{i=1}^d\frac{\sigma^2}{N}\frac{o(1)}{1+o(1)} \\
    &=o\left(\frac{1}{N}\right).
\end{align*}
Meanwhile,
\begin{align*}
    \widehat{\cR}_3(K,N, \eta)&=O\left(\frac{\log KN}{KN}\right) = O\left(\frac{1}{N}\right) = \omega\left(\widehat{\cR}_2(K,N, \eta)\right).
\end{align*}
So 
\begin{align*}
    \bar\cR^*(K, N)=\widehat{\cR}(K,N; \eta)(1+o(1))&=\frac{\sigma^2\tr(\mH)\log T}{8\lambda_d T}(1+o(1)).
\end{align*}
\paragraph{When $K = \omega\left(\log N\right)$,} we have
\begin{align*}
    (1-\lambda_i\eta)^N &= \left(1-\frac{\log KN}{2KN}\rho_i+O\left(\frac{1}{KN}\right)\right)^N \\
    &=\exp\left(N\log\left(1-\frac{\log KN}{2KN}\rho_i+O\left(\frac{1}{KN}\right)\right)\right) \\
    &=\exp\left(-\frac{\log KN}{2K}\rho_i+O\left(\frac{1}{K}\right)\right)=\exp\left(o(1)\right) \\
    &=1+o(1).
\end{align*}
So
\begin{align*}
    \widehat{\cR}_2(K,N, \eta)&=\sum_{i=1}^d\frac{\sigma^2}{N}\frac{1+o(1)}{2+o(1)} = \frac{\sigma^2d}{2N}\left(1+o(1)\right) \\
    &=O\left(\frac{1}{N}\right). \\
    \widehat{\cR}_3(K,N, \eta)&=O\left(\frac{\log KN}{KN}\right) = o\left(\frac{1}{N}\right)=o\left(\widehat{\cR}_2(K,N, \eta)\right).
\end{align*}
As a result, we have
\begin{align*}
    \Bar{\cR}^*(K, N)=\widehat{\cR}(K,N; \eta)(1+o(1))&=\frac{\sigma^2d}{2N}(1+o_N(1)).
\end{align*}

\subsection{Proof of \Cref{thm:convex-E_K}}
Now we establish the formulation of $E(K,N)$ by solving the equation $\Bar{\cR}^*(1,T')=\Bar{\cR}^*(K, N)$.

\paragraph{When $K = o(\log N)$,}
solving $\Bar{\cR}^*(1,T')=\Bar{\cR}^*(K, N)$, we get 
\begin{align}
    \frac{\sigma^2\tr(\mH)\log T'}{8\lambda_d T'}(1+o_{T'}(1))&=\frac{\sigma^2\tr(\mH)\log T}{8\lambda_d T}(1+o_T(1)) \notag\\
    \frac{\log T'}{T'}(1+o_{T'}(1))&=\frac{\log T}{T}(1+o_T(1)). \label{equ:K-small-solve}
\end{align}
According to the definition of the small $o$ notation, there exists a constant $\Tilde{T_0}$ such that when $T>\Tilde{T_0}$, the right hand side is smaller than $\max_{T'\in{1, 2, 3}}\frac{\log T'}{T'}(1+o_{T'}(1))$. So W.L.O.G, we could assume that $T'\geq 3$ in the following and use the fact that the function $\frac{\log x}{x}$ is monotonously decreasing when $x > 3$.

\begin{lemma}\label{lem:k-small-loss-bound-T'}
    Given $T'$ and $N$ satisfying \Cref{equ:K-small-solve}, it holds that $T'\eqsim T$ when $T>T_0$ for some constant $T_0$.
\end{lemma}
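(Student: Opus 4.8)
The plan is to read \Cref{equ:K-small-solve} as a rigidity statement about the function $g(x):=\frac{\log x}{x}$, which is strictly decreasing on $[3,\infty)$ and tends to $0$; this rigidity is exactly what forces $T'$ to lie in the same order of magnitude as $T$.

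\textbf{Step 1: both $T$ and $T'$ diverge.} Since $K=o(\log N)$, \Cref{thm:strongly-convex-scaling-law} gives $\Bar{\cR}^*(K,N)=\Theta\!\bigl(\tfrac{\log T}{T}\bigr)\to 0$ as $N\to\infty$, where $T=KN$. As \Cref{equ:K-small-solve} is obtained by substituting \Cref{thm:strongly-convex-scaling-law} into $\Bar{\cR}^*(1,T')=\Bar{\cR}^*(K,N)$, we also have $\Bar{\cR}^*(1,T')\to 0$. If $T'$ did not diverge, it would stay in a bounded range $[3,C]$ along some subsequence (recall we have already reduced to $T'\ge 3$), on which $\Bar{\cR}^*(1,\cdot)$ attains a strictly positive minimum --- a contradiction. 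Hence $T'\to\infty$, so the factor $1+o_{T'}(1)$ in \Cref{equ:K-small-solve} is a genuine $1+o(1)$, and dividing the two sides of \Cref{equ:K-small-solve} gives $\dfrac{g(T')}{g(T)}\to 1$ (and likewise $\dfrac{g(T)}{g(T')}\to 1$).

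\textbf{Step 2: the ratio forces $T'\asymp T$.} Suppose for contradiction that $T'\asymp T$ fails; then along some subsequence either $T'/T$ is unbounded or $T'/T\to 0$. In the first case, on that subsequence write $T'=wT$ with $w\to\infty$; then
\[
\frac{g(T')}{g(T)}=\frac{T\log T'}{T'\log T}=\frac{\log w+\log T}{w\log T}=\frac{\log w}{w\log T}+\frac1w\le \frac{\log w}{w}+\frac1w\longrightarrow 0,
\]
using $\log T\ge\log 3\ge 1$, which contradicts Step 1. In the second case, set $v=T/T'\to\infty$; the same estimate with the roles of $T$ and $T'$ exchanged gives $\tfrac{g(T)}{g(T')}\le\tfrac{\log v}{v}+\tfrac1v\to 0$, again contradicting Step 1. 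Therefore there exist constants $c_1,c_2>0$ and $T_0$ with $c_1T\le T'\le c_2T$ for all $T>T_0$, i.e. $T'\asymp T$.

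\textbf{Where the work is.} There is no deep obstacle here; the one point requiring care is the sequencing --- one must first establish $T'\to\infty$ before manipulating the $o(1)$ factors, since $1+o_{T'}(1)$ is not a usable bound until $T'$ is known to grow. The auxiliary fact that $\Bar{\cR}^*(1,\cdot)$ is bounded away from $0$ on bounded subsets of $[3,\infty)$ follows from the $\Theta(\tfrac{\log M}{M})$ asymptotics of \Cref{thm:strongly-convex-scaling-law} for large $M$ together with positivity of the excess risk for the remaining finitely many values; everything else is elementary algebra with $\tfrac{\log x}{x}$.
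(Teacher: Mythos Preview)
Your proof is correct and takes essentially the same approach as the paper: both exploit the monotone decay of $g(x)=\tfrac{\log x}{x}$ on $[3,\infty)$ and argue by contradiction that $T'$ cannot be an order of magnitude larger or smaller than $T$. Your Step~1 (first establishing $T'\to\infty$ so that $o_{T'}(1)$ becomes a genuine $o(1)$) is in fact cleaner than the paper's treatment, which asserts that $1+o_{T'}(1)$ is bounded and positive without spelling out why the lower bound is uniform.
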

\begin{proof}
Notice that there exists $T_1$ such that $|o_T(1)|<\frac{1}{2}$ when $T>T_1$, and $o_{T'}(1)$ is bounded. Furthermore, $o_{T'}(1)>-1$, because the left hand side is strictly greater than zero due to the fact that $\eta < \frac{1}{D^2}$. So when $T>T_1$, we have
\begin{align}
    c_4\frac{\log T'}{T'} \leq \frac{3}{2}\frac{\log T}{T} \label{equ:T'-less-than}\\
    c_5\frac{\log T'}{T'} \geq \frac{1}{2}\frac{\log T}{T} \label{equ:T'-bigger-than}
\end{align}
for two constants $c_4\leq 1\leq c_5$.
We claim that $T'\geq \frac{c_4}{3}T =:\alpha T$ when $T\geq \frac{1}{\alpha^2}$; otherwise,
\begin{align*}
    c_4\frac{\log T'}{T'}&\geq c_4\frac{\log \alpha T}{\alpha T} \\
    &=\frac{3\log \alpha T}{T} \\
    &\geq\frac{3\log T}{2T} ~~~\text{when} ~~~T\geq \frac{1}{\alpha^2},
\end{align*}
which contradicts \Cref{equ:T'-less-than}. We also have $T'\leq 3c_5T =:\beta T$ when $T\geq \beta^2$ by a similar argument; otherwise, 
\begin{align*}
    c_5\frac{\log T'}{T'}&\leq c_5\frac{\log \beta T}{\beta T} \\
    &=\frac{\log \beta T}{3T} \\
    &\leq\frac{\log T}{2T} ~~~\text{when} ~~~T\geq \beta^2,
\end{align*}
which contradicts \Cref{equ:T'-bigger-than}. So $T' \eqsim T$ when $T\geq\min(T_1, \frac{1}{\alpha^2}, \beta^2, \Tilde{T_0})=T_0$.

Next, we prove the first part in \Cref{thm:convex-E_K}, which is $\E(K,N) = K(1 + o(1))$ when $K = o(\log N)$. We define $F(T) = \frac{\log T}{T}$, $\delta(T)=|o_T(1)|$, and $\epsilon(T')=|o_{T'}(1)|$, so
\begin{align*}
    F(T')(1-\epsilon(T'))&\leq F(T)(1+\delta(T)) \\
    F(T')(1+\epsilon(T'))&\geq F(T)(1-\delta(T)) \\
\end{align*}
Consequently, we have 
\begin{align}
    -F(T)\delta(T)-F(T')\epsilon(T')\leq F(T')-F(T)&\leq F(T)\delta(T)+F(T')\epsilon(T'). \label{equ:T-T'-inequ}
\end{align}
So due to the convexity of $F(T)$,
\begin{align*}
    -\frac{\log T-1}{T^2}(T'-T)\leq F'(T)(T'-T)\leq F( T')-F(T)\leq F(T)\delta(T)+F(T')\epsilon(T')=\frac{\log T}{T}|o(1)|.
\end{align*}
Thus we have
\begin{align*}
    T' \geq T(1-o(1)).
\end{align*}
The above equation completes the proof.
\end{proof}

Combining \Cref{equ:K-small-solve} and \Cref{lem:k-small-loss-bound-T'} gets
\begin{align}
    -\frac{\log T-1}{T^2}(T-T')\eqsim-\frac{\log T'-1}{T'^2}(T-T'). \label{equ:T-approx-T'-coro}
\end{align}
Further using \Cref{equ:T-T'-inequ}, 
\begin{align}   F'(T')(T-T')\leq F(T)-F(T')\leq F(T)\delta(T)+F(T')\epsilon(T') \label{equ:T-bigger-T'}
\end{align}
Combining \Cref{equ:T-approx-T'-coro} and \Cref{equ:T-bigger-T'} gives 
\begin{align*}
    T'&\leq T(1+o(1)).
\end{align*}
Substituding the definition of $E(K, N)$ and we get the first part in \Cref{thm:convex-E_K}.

\paragraph{When $K = \omega(\log N)$,}solving $\Bar{\cR}^*(1,T')=\Bar{\cR}^*(K, N)$, we get 
\begin{align}
    &\frac{\sigma^2\tr(\mH)\log T'}{8\lambda_dT'}(1+o_{T'}(1))=\frac{\sigma^2d}{2N}(1+o_N(1)). \label{equ:K-large-solve}
\end{align}
There exists a constant $\Tilde{N_0}$ such that when $N>\Tilde{N_0}$, the right hand side is smaller than the minimal value of $\Bar{\cR}^*(1,T')$ when $T'$ is finite, that is, $\min_{T'\in{1, 2, 3}}\frac{\sigma^2\tr(\mH)\log T'}{8\lambda_dT'}(1+o_{T'}(1))$. So W.L.O.G, we could assume that $T'\geq 3$ in the following and use the fact that the function $\frac{\log x}{x}$ is monotonously decreasing when $x > 3$.

Now we provide a lemma to give a loose bound of $T'$ fisrt, and then we apply the lemma to get the formula of $E(K,N)$.
\begin{lemma}\label{lem:k-large-loss-bound-T'}
    Given $T'$ and $N$ satisfying \Cref{equ:K-large-solve}. It holds that $N\leq T'\leq N^{\frac{3}{2}}$ when $N\geq N_0$ for some constant $N_0$.
\end{lemma}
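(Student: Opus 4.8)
The plan is to reduce the implicit equation \Cref{equ:K-large-solve} to a one-variable comparison involving $g(x) := \tfrac{\log x}{x}$, which (by the WLOG reduction to $T'\ge 3 > e$ made just above the lemma) is strictly decreasing on the relevant range. Rearranging \Cref{equ:K-large-solve} gives
\[
g(T')\,(1+o_{T'}(1)) \;=\; c_0\cdot\frac1N\,(1+o_N(1)), \qquad c_0 := \frac{4\lambda_d d}{\tr(\mH)} > 0,
\]
a fixed positive constant since $d$, $\tr(\mH)$, and $\lambda_d$ are all fixed. So it suffices to compare $g(T')$ with $g(N)$ and $g(N^{3/2})$.

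First I would argue that $T'\to\infty$ as $N\to\infty$, so that the self-referential error $o_{T'}(1)$ is genuinely negligible: if along some sequence $N\to\infty$ the value $T'$ stayed bounded by a constant $C$, the left-hand side of \Cref{equ:K-large-solve} would be at least $\min_{1\le T''\le C}\bar\cR^*(1,T'')>0$, while the right-hand side $\tfrac{\sigma^2 d}{2N}(1+o_N(1))\to 0$ — a contradiction, exactly as in the observation preceding the lemma. Hence for all large $N$ one has, say, $\tfrac{c_0}{2N}\le g(T')\le \tfrac{2c_0}{N}$.

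For the lower bound, evaluate $g$ at $x=N$: $g(N)=\tfrac{\log N}{N}\ge \tfrac{2c_0}{N}\ge g(T')$ once $N\ge e^{2c_0}$; since $N$ and $T'$ both lie in the strictly decreasing region of $g$, this forces $T'\ge N$. For the upper bound, evaluate $g$ at $x=N^{3/2}$: $g(N^{3/2})=\tfrac{3\log N}{2N^{3/2}}$, and $g(N^{3/2})<g(T')$ is equivalent to $\tfrac{3\log N}{2\sqrt N}<c_0(1+o(1))$, which holds for all large $N$ because $\log N/\sqrt N\to 0$; strict monotonicity of $g$ then gives $T'\le N^{3/2}$. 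Taking $N_0$ to be the largest of the finitely many thresholds generated along the way finishes the proof.

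The argument is a routine scalar calculus comparison, so the only genuine subtlety — and the step I would take most care with — is the self-referential $o_{T'}(1)$ term: one must first establish that $T'$ grows with $N$ before the monotonicity comparison is legitimate, and then choose the constants in the two comparisons generously enough that neither the $(1+o_N(1))$ nor the $(1+o_{T'}(1))$ slack can flip the strict inequalities $g(N)>g(T')$ and $g(N^{3/2})<g(T')$.
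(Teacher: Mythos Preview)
Your proposal is correct and follows essentially the same approach as the paper: both reduce \Cref{equ:K-large-solve} to a comparison of $g(T')=\tfrac{\log T'}{T'}$ against $\Theta(1/N)$ and then exploit the strict monotonicity of $g$ on $[3,\infty)$ via contradiction. The only cosmetic difference is how the $(1+o_{T'}(1))$ factor is controlled: you first establish $T'\to\infty$ to make it genuinely small, whereas the paper instead uses that this factor is bounded and strictly positive for all $T'\ge 3$ (since $\bar\cR^*(1,T')>0$) to obtain constants $c_6,c_7$ with $c_6\,g(T')\le \text{LHS}\le c_7\,g(T')$; both routes yield the same sandwich $\Theta(1/N)$ for $g(T')$ and the remainder of the argument is identical.
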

\begin{proof}
Notice that there exists $N_1$ such that $|o_N(1)|<\frac{1}{2}$ when $N>N_1$, and $o_{T'}(1)$ is bounded. Furthermore, $o_{T'}(1)>-1$, because the left hand side is strictly greater than zero due to the fact that $\eta < \frac{1}{D^2}$. So when $N>N_1$, for the left side in \Cref{equ:K-large-solve}, we have
\begin{align*}
     c_{6}\frac{\log T'}{T'}&\leq \frac{\sigma^2\tr(\mH)\log T'}{8\lambda_dT'}(1+o_{T'}(1))\leq c_{7}\frac{\log T'}{T'}, \\
\end{align*}
where $c_{6} < c_{7}$ are two positive constants. And for the right side,  
\begin{align*}
     \frac{c_{8}}{N}\leq\frac{\sigma^2d}{2N}(1+o_N(1))&\leq \frac{c_{9}}{N},
\end{align*}
where $c_{8} < c_{9}$ are two positive constants. Then we prove that $T'\geq N$ when $N \geq \max\left(e^{\frac{c_{9}}{c_{6}}}, 3\right)$. Otherwise, we have 
\begin{align*}
    \frac{\sigma^2\tr(\mH)\log T'}{8\lambda_dT'}(1+o_{T'}(1))&\geq c_{6}\frac{\log T'}{T'} \geq c_{6}\frac{\log N}{N}\geq\frac{c_{9}}{N}\geq \frac{\sigma^2d}{2N}(1+o_N(1)),
\end{align*}
which is a contradiction.
Then we prove that $T'\leq N^{\frac{3}{2}}$ when $N\geq\left(\frac{c_{10}}{c_{8}}\right)^4$ for some constant $c_{10}$. Otherwise, we have 
\begin{align*}
    \frac{\sigma^2\tr(\mH)\log T'}{8\lambda_dT'}(1+o_{T'}(1))&\leq c_{7}\frac{\log T'}{T'} \leq c_{7}\frac{\log N^{\frac{3}{2}}}{N^{\frac{3}{2}}}=\frac{3c_{7}}{2}\frac{\log N}{N^{\frac{3}{2}}} \leq \frac{c_{10}}{N^{\frac{5}{4}}}\leq \frac{c_{8}}{N}\leq
    \frac{\sigma^2d}{2N}(1+o_N(1)),
\end{align*}
which is another contradiction. The third inequality uses the fact that $\frac{\log N}{N^{\frac{1}{4}}}$ is bounded. We take $N_0 = \max\left(N_1,e^{\frac{c_{9}}{c_{6}}},\left(\frac{c_{10}}{c_{8}}\right)^4, \Tilde{N_0}\right)$ and we prove the claim.
\end{proof}
Combining \Cref{equ:K-large-solve} and \Cref{lem:k-large-loss-bound-T'} gives $T'=\Theta\left(N\log T'\right)=\Theta(N\log N)$, and 
\begin{align}
    \log T' = \log N + \log\log N +\Theta(1). \label{equ:strongly-convex-T'-N}
\end{align}
Again, combining \Cref{equ:strongly-convex-T'-N} and \Cref{equ:K-large-solve}, and we get
\begin{align*}
    &T' = \frac{\tr(\mH)N\log T'}{4\lambda_d d}(1+o_N(1)) =\frac{\tr(\mH)N\log N}{4\lambda_d d}(1+o_N(1)),
\end{align*}
and 
\begin{align*}
    E(K, N)&=\frac{T'}{N} = \frac{\tr(\mH)\log N}{4\lambda_d d}(1+o_N(1))
\end{align*}
as a direct corollary.

The above equation immediately finish the proof.
\end{proof}

\section{Proof Outline for the
Solvable Case with Zipf-distributed Data}
In this section, we give the proof sketch of \Cref{thm:one-hot-loss-estimate,thm:one-hot-E_K-v2,thm:one-hot-E_K-log-spectrum}.
\Cref{thm:one-hot-loss-estimate} gives a general expression of the excess risk, \Cref{thm:one-hot-E_K-v2} and \Cref{thm:one-hot-E_K-log-spectrum} characterise the behavior of $E(K, N)$ respectively under power spectrum and logarithm power spectrum assumption. 
Their proof outlines are given separately as follows.
\paragraph{1. Proof sketch of \Cref{thm:one-hot-loss-estimate}.} We exploit the properties that the sequential updates are commutative and all finite-order moments of data are computable, and we obtain the result through a direct calculation.

\paragraph{2. Proof sketch of \Cref{thm:one-hot-E_K-v2} and \Cref{thm:one-hot-E_K-log-spectrum}.} For \Cref{thm:one-hot-E_K-v2}, we consider two cases when $K$ is relatively small and $K$ is relatively large. As a special case, one-pass scenario belongs to the small-$K$ case.
We first derive matching upper bounds and lower bounds for high-dimensional cases for both the two regimes. The core of the proof lies in determining $E(K, N)$ by solving $\E\cR(\vw_{T'}) = \E\cR(\vw_{K, N})$, which requires an asymptotic analysis. We tackle this issue with two steps. First we prove a loose bound for $T'$ for $N$ beyond a threshold, then we refine the obtained results and utilize the convexity of loss approximation to derive more precise estimates. The proof of \Cref{thm:one-hot-E_K-log-spectrum} is similar to that of \Cref{thm:one-hot-E_K-v2}.

\section{Proof of Main Results for the Solvable Case with Zipf-distributed Data}
Similar to the proof insights in \Cref{sec:strongly_convex}, the first move to get the formula of the effective reuse rate is to get an accurate proxy of the excess risk. Here, leveraging the simplicity of the setting, we can derive a general closed formula for the excess risk.

\begin{lemma} \label{thm:one-hot-loss-estimate}
    Under \Cref{asp:parameter-prior}, the excesss risk for $K$-epoch training over $N$ fresh data , with learning rate $\eta$ can be given by
    \begin{align*}
        \Bar{\cR}(K,N;\eta) = \frac{1}{2}\inp{\mP\Lambda}{\left(\mI-\mP+\mP\left(\mI-\eta\Lambda\right)^{2K}\right)^N},
    \end{align*}
    where the expectation is over the randomness of $\vw^*$ and training datasets $\{\vx_i,y_i\}_{i=0}^{N-1}$.
\end{lemma}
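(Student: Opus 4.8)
The plan is to exploit the fact that in this model every update touches only a single coordinate, so the SGD dynamics decouple across the $d$ coordinates, and for each coordinate the multiplicative factor applied at each visit is deterministic (namely $1-\eta\Lambda_i$). First I would fix a coordinate $i$ and a realization of the dataset $\{(\vx_j,y_j)\}_{j=0}^{N-1}$. Let $m_i$ denote the (random) number of training points among the $N$ samples that equal $\vx^{(i)}=\mu_i\ve_i$. Because the label has no noise and $y=\langle\vw^*,\vx\rangle$, the update rule $\vw_{t+1}=\vw_t-\eta\vx_{j_t}\vx_{j_t}^\top(\vw_t-\vw^*)$ acts on the $i$-th coordinate only when $j_t$ indexes a copy of $\vx^{(i)}$, and then multiplies $(\vw_t-\vw^*)_i$ by $(1-\eta\mu_i^2)=(1-\eta\Lambda_i)$. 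Since $\vw_0=\vzero$ and each of the $m_i$ copies is seen once per epoch over $K$ epochs, after $KN$ steps we get $(\vw_{KN}-\vw^*)_i = -w^*_i\,(1-\eta\Lambda_i)^{Km_i}$, independently of the shuffling order (commutativity of scalar multiplication is what makes the random permutations irrelevant here).

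Next I would assemble the excess risk. We have $\cR(\vw_{KN}) = \frac12\|\vw_{KN}-\vw^*\|_\mH^2 = \frac12\sum_{i=1}^d \Lambda_i p_i\, (w^*_i)^2 (1-\eta\Lambda_i)^{2Km_i}$, using $\mH=\mP\Lambda$ (diagonal with entries $p_i\Lambda_i$). Now take expectations. Over the prior, $\E[(w^*_i)^2]=1$ by \Cref{asp:parameter-prior}. Over the dataset, $m_i\sim\mathrm{Binomial}(N,p_i)$, and the $m_i$'s for different $i$ are only relevant through their marginals since each term involves a single $m_i$. Using the binomial moment generating identity $\E[r^{m_i}] = (1-p_i + p_i r)^N$ with $r=(1-\eta\Lambda_i)^{2K}$, we obtain $\E[(1-\eta\Lambda_i)^{2Km_i}] = (1-p_i + p_i(1-\eta\Lambda_i)^{2K})^N$. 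Hence
\begin{align*}
\Bar{\cR}(K,N;\eta) = \frac12\sum_{i=1}^d p_i\Lambda_i\bigl(1-p_i + p_i(1-\eta\Lambda_i)^{2K}\bigr)^N,
\end{align*}
which is exactly $\frac12\langle \mP\Lambda,\ (\mI-\mP+\mP(\mI-\eta\Lambda)^{2K})^N\rangle$ since all matrices involved are simultaneously diagonal. Writing it in the stated matrix-product form just amounts to noting that the $N$-th power of a diagonal matrix is taken entrywise and that $\langle\mP\Lambda,\cdot\rangle$ picks out the coordinatewise sum weighted by $p_i\Lambda_i$.

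The main obstacle, such as it is, is purely bookkeeping rather than conceptual: one must be careful that the number of times coordinate $i$ is updated is \emph{exactly} $Km_i$ (each of the $m_i$ copies appears once in each of the $K$ epochs, so the shuffling within an epoch reorders which copies come first but does not change the total count or the product), and that the cross-coordinate independence of the binomial counts is not actually needed — only the marginal $\E[r^{m_i}]$ enters because the risk is a sum of single-coordinate terms with no products of distinct $m_i$'s. I would also double-check the noiseless assumption is genuinely used: with $\sigma=0$ there is no variance term, so the bias-type recursion above is the whole story and the excess risk is deterministic given the dataset. Once these points are verified, the identity follows by the elementary computation sketched above; no concentration machinery (unlike the strongly convex case) is required here.
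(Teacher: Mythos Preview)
Your proof is correct and follows essentially the same route as the paper's: both exploit that every update matrix $\mI-\eta\vx_j\vx_j^\top$ is diagonal and hence commutes with the others, collapse the $K$-epoch product to $\prod_{j=0}^{N-1}(\mI-\eta\vx_j\vx_j^\top)^K$, apply the isotropic prior $\E[\vw^*\vw^{*\top}]=\mI$, and then take the expectation over the i.i.d.\ data. The only cosmetic difference is that the paper stays in matrix notation throughout---computing $\E[(\mI-\eta\vx\vx^\top)^{2K}]=\mI-\mP+\mP(\mI-\eta\Lambda)^{2K}$ via the moment identity $\E[(\vx\vx^\top)^j]=\mP\Lambda^j$ and then raising to the $N$-th power by independence---whereas you work coordinatewise, count visits $m_i\sim\mathrm{Bin}(N,p_i)$, and invoke the binomial generating function $\E[r^{m_i}]=(1-p_i+p_ir)^N$; these are the same computation in scalar versus matrix dress.
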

The above lemma states that we can explicitly write out the exact expression for the excess risk. From the above expression for the excess risk, we can observe that, in the absence of label noise interference, and under the condition that the absolute values of all elements of the diagonal matrix $\mI-\eta\Lambda$ are less than $1$, the optimal learning rate can be of the constant order. Therefore, in the subsequent study of the effective reuse rate, we consider using the same learning rate $\eta = \Theta(1)$ for both multi-epoch and one-pass SGD.

It is worth noting that here we are actually describing a more general problem setting than the Zipf law, as we only impose constraints on the power spectrum of the Hessian matrix $\mH$. In contrast, the probability matrix $\mP$ can follow Zipf's law or any other law. In the remainder of this section, we first consider the classic Zipf's law setting, where $\mP$ follows a power law, and the data matrix $\Lambda$ also follows a power law, which is consistent with the previous power law analysis. In \Cref{sec:log-power-spectrum}, we explore the case where $\mP$ follows a log-power spectrum~\citep{lin2024scaling}, and investigate the impact of changing the spectrum's properties on the resulting effective reuse rate formula.
\subsection{A Closed Formula for the Excess Risk: Proof of \Cref{thm:one-hot-loss-estimate}}
We first write out the update of parameter after $K$ epochs  
\begin{align*}    \vtheta_{KN}&=\mA^{(K)}\cdots\mA^{(1)}\vtheta_0=\prod_{l=K}^1 \mA^{(l)}\vtheta_0 \\
&=\left(\mI-\eta \vx_{N-1}\vx_{N-1}^{\top}\right)^K\cdots\left(\mI-\eta \vx_{0}\vx_{0}^{\top}\right)^K\vtheta_0.
\end{align*}
Then we get the excess risk expression as
\begin{align*}
    \Bar{\cR}(K, N;\eta) &= \E\frac{1}{2}\vtheta_{K,N}^{\top}\mH\vtheta_{K,N} \\
    &=\E\frac{1}{2}\vtheta_0^T\mP\Lambda\left(\mI-\eta \vx_{N-1}\vx_{N-1}^{\top}\right)^{2K}\cdots\left(\mI-\eta \vx_{0}\vx_{0}^{\top}\right)^{2K}\vtheta_0.
\end{align*}
\Cref{asp:parameter-prior} gives
\begin{align*}
    \Bar{\cR}(K, N;\eta) &=\E\frac{1}{2}\inp{\vtheta_0\vtheta_0^T}{\mP\Lambda\left(\mI-\eta \vx_{N-1}\vx_{N-1}^{\top}\right)^{2K}\cdots\left(\mI-\eta \vx_{0}\vx_{0}^{\top}\right)^{2K}} \\
    &=\frac{1}{2}\inp{\mI}{\mP\Lambda\left(\E\left(\mI-\eta \vx\vx^{\top}\right)^{2K}\right)^N}.
\end{align*}
Direct calculation gives 
\begin{align*}
    \E \left(\vx\vx^{\top}\right)^{j}=\sum_{i=1}^d\mu_i^{2j-2}p_i\mu_i^2e_ie_i^{\top}=\mP\Lambda^j,
\end{align*}
and 
\begin{align*}
    \E\left[\left(\mI-\eta \vx\vx^{\top}\right)^{2K}\right]=\mI+\sum_{j=1}^{2K}\binom{2K}{j}(-1)^j\eta^j\mP\Lambda^j =\mI-\mP+\mP(\mI-\eta\Lambda)^{2K}.
\end{align*}
Then we can write out the excess risk as
\begin{align*}
    \Bar{\cR}(K, N;\eta)&=\frac{1}{2}\inp{\mP\Lambda}{\left(\mI-\mP+\mP(\mI-\eta\Lambda)^{2K}\right)^N}.
\end{align*}
The above equation completes the proof.

\subsection{Scaling Laws for Power-Law Spectrum: Proof of \Cref{thm:one-hot-scaling-law}}\label{sec:proof-one-hot-scaling-law-power-spectrum}

Before we begin our main part of the proof, note that for all $\eta=\Theta(1)$ and $\eta\leq 2$, there exists $d_1=\Theta(1)>0$ such that $1-\frac{\eta}{i^b}>0$ when $i>d_1$. Then we divide the expected excess risk into two parts:
\begin{align*}
    \Bar{\cR}(K, N;\eta)&=\frac{1}{2}\sum_{i=1}^{d}\frac{c}{i^{a}}\left(1-\frac{c}{i^{a-b}}\left(1-\left(1-\frac{\eta}{i^b}\right)^{2K}\right)\right)^N \\
    &=\underbrace{\frac{1}{2}\sum_{i=1}^{d_1}\frac{c}{i^{a}}\left(1-\frac{c}{i^{a-b}}\left(1-\left(1-\frac{\eta}{i^b}\right)^{2K}\right)\right)^N}_{S_1(K, N; \eta)} \\
    &+\underbrace{\frac{1}{2}\sum_{i=d_1+1}^{d}\frac{c}{i^{a}}\left(1-\frac{c}{i^{a-b}}\left(1-\left(1-\frac{\eta}{i^b}\right)^{2K}\right)\right)^N}_{S_2(K, N; \eta)}.
\end{align*}


The intuition behind our proof here is quite similar to what we do in Appendix~\ref{sec:proof-convex-Ek}. We first separately simplify the expression of the excess risk when $K=o(N^{\frac{b}{a-b}})$ and $K=\omega(N^{\frac{b}{a-b}})$. The proofs for both the small-$K$ and large-$K$ regimes proceed in parallel. We first control $S_2(K,N;\eta)$ over a broad range of learning rates and identify a near-optimal $\eta'$ for which $S_1$ is negligible compared to $S_2$. This allows us to approximate $\Bar{\cR}^*(K,N)$ via $\Bar{\cR}(K,N;\eta')$ and $S_2(K,N;\eta^*)$. 



\subsubsection{Proof of \Cref{thm:one-hot-scaling-law}: Small-$K$ Case} \label{sec:proof-power-spectrum-Ek-small}

\paragraph{The Expected Excess Risk Approximation.}
\begin{lemma} \label{lem:one-hot-loss-estimate-small-K}
    Suppose the assumptions in \Cref{thm:one-hot-E_K-v2} hold. When $K = o(N^{\frac{b}{a-b}})$ and $\eta = \Theta(1)$, we define the estimator of $S_2(K, N;\eta)$ as
    \begin{align*}
        \Tilde{S_2}(K, N;\eta) &:= \frac{1}{2}\sum_{i=d_1+1}^{d}\frac{c}{i^{a}}e^{\frac{-2KNc\eta}{i^{a}}}.
    \end{align*}
    Then we have $S_2(K, N;\eta)=\Tilde{S_2}(K, N;\eta)(1+o(1))$, and 
    $\Tilde{S_2}(K, N;\eta) \eqsim \frac{1}{(KN)^{\frac{a-1}{a}}}$.
\end{lemma}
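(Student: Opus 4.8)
The statement has two halves: a relative approximation $S_2(K,N;\eta)=\Tilde{S_2}(K,N;\eta)(1+o(1))$, and an order estimate $\Tilde{S_2}(K,N;\eta)\asymp(KN)^{-(a-1)/a}$. Write $\lambda:=2KNc\eta$; since $a-b>1$ forces $c=\Theta(1)$ and $\eta=\Theta(1)$ by hypothesis, $\lambda\asymp KN$. For the order of $\Tilde{S_2}=\tfrac c2\sum_{i=d_1+1}^d i^{-a}e^{-\lambda/i^a}$, I would compare the sum to $\int i^{-a}e^{-\lambda/i^a}\,\dd i$ and substitute $j=i\lambda^{-1/a}$, which rewrites the integral as $\lambda^{-(a-1)/a}\int_{(d_1+1)\lambda^{-1/a}}^{d\lambda^{-1/a}}j^{-a}e^{-1/j^a}\,\dd j$; the integrand is integrable on $(0,\infty)$ with positive total mass, the lower limit tends to $0$ (as $d_1=\Theta(1)$), and $d=\Omega((KN)^{1/a})$ keeps the upper limit bounded below by a positive constant, so the truncated integral is $\asymp$ the full one. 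The sum-versus-integral discrepancy is controlled by the single peak value $\asymp\lambda^{-1}=o(\lambda^{-(a-1)/a})$ of the unimodal integrand, and the trivial bound $\Tilde{S_2}\le\tfrac c2\sum_{i\ge1}i^{-a}e^{-\lambda/i^a}$ supplies the matching upper order. Hence $\Tilde{S_2}\asymp\lambda^{-(a-1)/a}\asymp(KN)^{-(a-1)/a}$.

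For $S_2=\Tilde{S_2}(1+o(1))$, I would split the range $\{d_1+1,\dots,d\}$ at a threshold $m$, show the terms with $i>m$ match those of $\Tilde{S_2}$ up to a \emph{uniform} factor $1+o(1)$, and show the terms with $d_1<i\le m$ are negligible on both sides. Take $m$ to be the integer part of the geometric mean of $(K^2N)^{1/(a+b)}$ and $(KN)^{1/a}$. This is the one place the hypothesis $K=o(N^{b/(a-b)})$ is used: it is equivalent to $(K^2N)^{1/(a+b)}=o\big((KN)^{1/a}\big)$, since $\big((KN)^{1/a}/(K^2N)^{1/(a+b)}\big)^{a(a+b)}=N^b/K^{a-b}\to\infty$ precisely in this regime. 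Consequently $m\to\infty$, $m\gg(K^2N)^{1/(a+b)}$ and $m=o\big((KN)^{1/a}\big)$, hence $K/m^b\to0$, $K^2N/m^{a+b}\to0$, and $KN/m^a\to\infty$.

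On the high part $m<i\le d$: since $i^b>m^b\gg K$, expanding $\log(1-\eta/i^b)$ gives $1-(1-\eta/i^b)^{2K}=\tfrac{2K\eta}{i^b}(1+\gamma_i)$ with $|\gamma_i|=O(K/m^b)=o(1)$ uniformly, so $z_i:=\tfrac{c}{i^{a-b}}\big(1-(1-\eta/i^b)^{2K}\big)=\tfrac{2Kc\eta}{i^a}(1+\gamma_i)$ is $o(1)$ uniformly; then $(1-z_i)^N=\exp(N\log(1-z_i))=e^{-2KNc\eta/i^a}\exp\!\big(-\tfrac{2KNc\eta}{i^a}\gamma_i'\big)$ with $|\gamma_i'|=O(K/i^b)$, and the last factor is $1+o(1)$ uniformly because $\tfrac{KN}{i^a}|\gamma_i'|=O(K^2N/i^{a+b})=O(K^2N/m^{a+b})=o(1)$. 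Summing over $i>m$ matches the corresponding part of $\Tilde{S_2}$. On the low part $d_1<i\le m$: for $\Tilde{S_2}$, the integrand $i^{-a}e^{-\lambda/i^a}$ is increasing on $i\le m$ (its maximum is at $i\asymp(KN)^{1/a}\gg m$), so this part is $\le\tfrac c2 m^{1-a}e^{-\lambda/m^a}=(KN)^{-(a-1)/a}\cdot O\big(r^{a-1}e^{-2c\eta r^a}\big)$ with $r:=(KN)^{1/a}/m\to\infty$, hence $o((KN)^{-(a-1)/a})$; for $S_2$, using $1-(1-\eta/i^b)^{2K}\ge(1-e^{-1})\min\{1,2K\eta/i^b\}$ each term is $\le\tfrac{c}{i^a}\exp\!\big(-c'N\min\{i^{-(a-b)},2K\eta\,i^{-a}\}\big)$, and splitting at $i=(2K\eta)^{1/b}$, the contribution from $i\le(2K\eta)^{1/b}$ is at most $\sum_{i\ge1}\tfrac{c}{i^a}e^{-c'N/i^{a-b}}=O(N^{-(a-1)/(a-b)})=o((KN)^{-(a-1)/a})$ — again using $K=o(N^{b/(a-b)})$, now in the form $(KN)^{1/a}=o(N^{1/(a-b)})$ — while the contribution from $i>(2K\eta)^{1/b}$ has the same form as the $\Tilde{S_2}$ low part and is likewise negligible. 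Combining with $\Tilde{S_2}\asymp(KN)^{-(a-1)/a}$ yields $S_2=\Tilde{S_2}(1+o(1))$.

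The main obstacle is the uniform error control on the high part. A bound $\gamma_i'=o(1)$ is not enough, because the exponent $2KNc\eta/i^a$ can be as large as $\Theta(KN/m^a)\to\infty$ near $i=m$; one must show the \emph{product} $\tfrac{KN}{i^a}\gamma_i'$ — essentially the second-order cross term $K^2N/i^{a+b}$ produced by the two nested expansions (of $(1-\eta/i^b)^{2K}$ and of the $N$-th power) — vanishes uniformly. That is exactly what forces $m\gg(K^2N)^{1/(a+b)}$, and the compatibility of this lower constraint with the upper constraint $m=o((KN)^{1/a})$ needed to discard the low part is precisely the content of the regime $K=o(N^{b/(a-b)})$.
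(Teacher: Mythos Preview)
Your proof is correct and follows the same architecture as the paper's: split the index range at a threshold $m$ with $(K^2N)^{1/(a+b)}\ll m\ll (KN)^{1/a}$, Taylor-expand on the high part to match $\Tilde{S_2}$ uniformly (the key error being the cross term $K^2N/i^{a+b}$), and show both low parts are $o\big((KN)^{-(a-1)/a}\big)$. The paper takes $m=K^{0.5/(a+0.5b)}(KN)^{1/(a+0.5b)}$ rather than your geometric mean, but both choices satisfy the same two constraints, and your remark that their compatibility is \emph{equivalent} to $K=o(N^{b/(a-b)})$ is a cleaner way to expose the role of the hypothesis than the paper makes explicit.

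The one substantive technical difference is how the low part of $S_2$ is discarded. The paper first locates the maximizer $x_0$ of the full summand $F(x)=\tfrac{c}{x^a}\big(1-\tfrac{c}{x^{a-b}}(1-(1-\eta/x^b)^{2K})\big)^N$ by a derivative calculation, proves $x_0=\Theta((KN)^{1/a})$ and $F(x_0)=\Theta(1/(KN))$, and then bounds the low sum by $m\cdot F(x_0)=o\big((KN)^{-(a-1)/a}\big)$. You bypass the calculus with the elementary inequality $1-(1-\eta/i^b)^{2K}\ge(1-e^{-1})\min\{1,2K\eta/i^b\}$ and a case split at $i=(2K\eta)^{1/b}$, at the cost of a second order estimate $\sum_i i^{-a}e^{-c'N/i^{a-b}}=O(N^{-(a-1)/(a-b)})$ together with the observation $N^{-(a-1)/(a-b)}=o\big((KN)^{-(a-1)/a}\big)$ in this regime. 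Both routes are sound; the paper's is shorter once the maximizer is in hand, while yours is more elementary and avoids differentiating the rather unwieldy $F$.
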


\begin{proof}
By the fact that $K = o(N^{\frac{b}{a-b}})$, there exists a constant $N_2$ such that when $N \geq N_2$, $K\leq N^{\frac{b}{a-b}}$.
And we define $F(x) := \frac{c}{x^{a}}\left(1-\frac{c}{x^{a-b}}\left(1-\left(1-\frac{\eta}{x^b}\right)^{2K}\right)\right)^N$. Direct observation gives us that under \Cref{asp:prior-probability-and-norm}, $\Bar{\cR}(K, N;\eta) \propto \sum_{i=1}^d F(i)$. Next we take the derivative of $F$ and analyze its maximizer.
\begin{align*}
    F'(x) &= -\frac{ac}{x^{a+1}}\left(1-\frac{c}{x^{a-b}}+\frac{c}{x^{a-b}}\left(1-\frac{\eta}{x^b}\right)^{2K}\right)^N \\
    &+\frac{cN}{x^{a}}\left(1-\frac{c}{x^{a-b}}+\frac{c}{x^{a-b}}\left(1-\frac{\eta}{x^b}\right)^{2K}\right)^{N-1} \cdot \Phi(x) \\
    &=\frac{c}{x^{a}}\left(1-\frac{c}{x^{a-b}}+\frac{c}{x^{a-b}}\left(1-\frac{\eta}{x^b}\right)^{2K}\right)^{N-1} \\
    &\left(-\frac{a}{x}\left(1-\frac{c}{x^{a-b}}+\frac{c}{x^{a-b}}\left(1-\frac{\eta}{x^b}\right)^{2K}\right)+N\Phi(x)\right) \\
    &=\frac{c}{x^{2a-b+1}}\left(1-\frac{c}{x^{a-b}}+\frac{c}{x^{a-b}}\left(1-\frac{\eta}{x^b}\right)^{2K}\right)^{N-1}\cdot G(x).
\end{align*}
where we define 
\begin{align*}
    G(x) &:= -a\left(x^{a-b}-c+c\left(1-\frac{\eta}{x^b}\right)^{2K}\right) \\
    &+N\left((a-b)c-(a-b)c\left(1-\frac{\eta}{x^b}\right)^{2K}+\frac{2cKb\eta}{x^b}\left(1-\frac{\eta}{x^b}\right)^{2K-1}\right),
\end{align*}
and
\begin{align*}
    \Phi(x):=\left(\frac{(a-b)c}{x^{a-b+1}}-\frac{(a-b)c}{x^{a-b+1}}\left(1-\frac{\eta}{x^b}\right)^{2K}+\frac{2cKb\eta}{x^{a+1}}\left(1-\frac{\eta}{x^b}\right)^{2K-1}\right).
\end{align*}
We denote the maximizer of $F(x)$ by $x_0$, so $G(x_0) = 0$. We claim that:
\begin{quote}
    when $N\geq N_2$, $x_0\geq \min\left(\left(\frac{KN(a-b)c\eta}{2a}\right)^{\frac{1}{a}}, 6^{\frac{1}{b}}(KN)^{\frac{1}{a}}\right)=:x_1$.
\end{quote}
\begin{proof}[Proof of the claim.]
    Notice that when $N\geq N_2$, 
    \begin{align*}
        \frac{\eta}{x^b}\leq \frac{1}{6(KN)^{\frac{b}{a}}}\leq \frac{1}{6K}.
    \end{align*}
    We assume that the claim is wrong, then
    \begin{align*}
        G(x_0)&\geq N\left((a-b)c-(a-b)c\left(1-\frac{\eta}{x^b}\right)^{2K}\right)-ax^{a-b}  \\
        &\geq \frac{KN(a-b)c\eta-ax^{a}}{x^b}  \\
        &\geq \frac{KN(a-b)c\eta}{2x_1^b}  \\
        &> 0, 
    \end{align*}
    which is a contradiction. The third inequality comes from \Cref{lem:numerical-power-minus}.
\end{proof}
So $x_0 = \Omega\left((KN)^{\frac{1}{a}}\right)$.
Further pluging this into $G(x_0)=0$ that
\begin{align*}
    G(x_0) = -ax_0^{a-b}(1+o(1))+N\left(\frac{2K(a-b)c\eta}{x_0^b}(1+o(1))+\frac{2K(a-b)c\eta}{x_0^b}(1+o(1))\right) = 0.
\end{align*}
gives
\begin{align*}
    x_0 = \Theta\left((KN)^{\frac{1}{a}}\right), \quad F(x_0)=\Theta\left(\frac{1}{KN}\right).
\end{align*}
Then we have
\begin{align*}
    S_2(K, N;\eta)
    &=\frac{1}{2}\sum_{i=d_1+1}^{K^{\frac{0.5}{a+0.5b}}(KN)^{\frac{1}{a+0.5b}}}\frac{c}{i^{a}}\left(1-\frac{c}{i^{a-b}}+\frac{c}{i^{a-b}}\left(1-\frac{\eta}{i^b}\right)^{2K}\right)^N \\
    &+\frac{1}{2}\sum_{K^{\frac{0.5}{a+0.5b}}(KN)^{\frac{1}{a+0.5b}}+1}^{d}\frac{c}{i^{a}}\left(1-\frac{c}{i^{a-b}}+\frac{c}{i^{a-b}}\left(1-\frac{\eta}{i^b}\right)^{2K}\right)^N \\
    &:=J_1+J_2.
\end{align*}
Furthermore, we have
\begin{align*}
    J_1&\lesssim K^{\frac{0.5}{a+0.5b}}(KN)^{\frac{1}{a+0.5b}}F(x_0)\lesssim \frac{K^{\frac{0.5}{a+0.5b}}(KN)^{\frac{1}{a+0.5b}}}{KN} ,
\end{align*}
and
\begin{align*}
    J_2&=\frac{1}{2}\sum_{i=K^{\frac{0.5}{a+0.5b}}(KN)^{\frac{1}{a+0.5b}}+1}^{d}\frac{c}{i^{a}}\left(1-\frac{c}{i^{a-b}}+\frac{c}{i^{a-b}}\left(1-\frac{\eta}{i^b}\right)^{2K}\right)^N \\
    &=\frac{1}{2}\sum_{i=K^{\frac{0.5}{a+0.5b}}(KN)^{\frac{1}{a+0.5b}}+1}^{d}\frac{c}{i^{a}}\left(1-\frac{2Kc\eta}{i^{a}}+O\left(\frac{K^2}{i^{a+b}}\right)\right)^N \\
    &=\frac{1}{2}\sum_{K^{\frac{0.5}{a+0.5b}}(KN)^{\frac{1}{a+0.5b}}+1}^{d}\frac{c}{i^{a}}e^{N\log \left(1-\frac{2Kc\eta}{i^{a}}+O\left(\frac{K^2}{i^{a+b}}\right)\right)} \\
    &=\frac{1}{2}\sum_{i=K^{\frac{0.5}{a+0.5b}}(KN)^{\frac{1}{a+0.5b}}+1}^{d}\frac{c}{i^{a}}e^{\frac{-2KNc\eta}{i^{a}}+O\left(\frac{K^2N}{i^{a+b}}\right)} \\
    &=\frac{1}{2}\sum_{i=K^{\frac{0.5}{a+0.5b}}(KN)^{\frac{1}{a+0.5b}}+1}^{d}\frac{c}{i^{a}}e^{\frac{-2KNc\eta}{i^{a}}}(1+o(1)). \\
\end{align*}
We define $K_1(x) = \frac{c}{x^{a}}e^{\frac{-2KNc\eta}{x^{a}}}$.
We can derive that $\arg\max K_1(x) = \Theta\left((KN)^{\frac{1}{a}}\right)$, and $\max K_1(x) = \Theta\left(\frac{1}{KN}\right)$.
So when $d\geq 3(KN)^{\frac{1}{a}}$, we have
\begin{align*}
    J_2&\geq \frac{1}{2}\sum_{i=(KN)^{\frac{1}{a}}}^{3(KN)^{\frac{1}{a}}}\frac{c}{i^{a}}e^{\frac{-2KNc\eta}{i^{a}}}(1+o(1)) \\
    &\gtrsim (KN)^{\frac{1}{a}}\times\frac{ce^{-2c\eta}}{KN}\gtrsim\frac{(KN)^{\frac{1}{a}}}{KN}.
\end{align*}
We can verify that $J_1 = o(J_2)$ as a direct consequence.
We define 
\begin{align*}
    \Tilde{S_2}(K, N;\eta) &= \frac{1}{2}\sum_{i=d_1+1}^{d}\frac{c}{i^{a}}e^{\frac{-2KNc\eta}{i^{a}}} \\
    &=\frac{1}{2}\sum_{i=d_1+1}^{K^{\frac{0.5}{a+0.5b}}(KN)^{\frac{1}{a+0.5b}}}\frac{c}{i^{a}}e^{\frac{-2KNc\eta}{i^{a}}}+\frac{1}{2}\sum_{i=K^{\frac{0.5}{a+0.5b}}(KN)^{\frac{1}{a+0.5b}}+1}^{d}\frac{c}{i^{a}}e^{\frac{-2KNc\eta}{i^{a}}} \\
    &:=\Tilde{J_1}+\Tilde{J_2}.
\end{align*}
We have $J_2 = \Tilde{J_2}(1+o(1))$, and
\begin{align*}
    \Tilde{J_1}&\leq K^{\frac{0.5}{a+0.5b}}(KN)^{\frac{1}{a+0.5b}}\times \max K_1(x)\lesssim \frac{K^{\frac{0.5}{a+0.5b}}(KN)^{\frac{1}{a+0.5b}}}{KN} = o(\Tilde{J_2}). \\
\end{align*}
So $S_2(K, N;\eta)=\Tilde{S_2}(K, N;\eta)(1+o(1))$.

The matching upper and lower bounds for $\Tilde{S_2}(K, N;\eta)$ comes directly from \Cref{lem:loss-matching-upper-lower-bound}.
\end{proof}

By combining the expression of $\Tilde{S_2}(K, N; \eta)$ with \Cref{lem:loss-matching-upper-lower-bound}, we get another lemma:
\begin{lemma} \label{lem:power-S_2-derivative}
    Suppose the assumptions in \Cref{thm:one-hot-E_K-v2} hold, and the expression of $\Tilde{S_2}(K,N;\eta)$ is given in \Cref{lem:one-hot-loss-estimate-small-K}. Then we have $\frac{\partial}{\partial\eta}\Tilde{S_2}(K,N;\eta)\eqsim-\frac{1}{(KN)^{\frac{a-1}{a}}}$.
\end{lemma}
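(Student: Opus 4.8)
The plan is to differentiate $\tilde{S}_2$ term by term and then recognize the resulting sum as another instance of the same sum--integral comparison already established in \Cref{lem:loss-matching-upper-lower-bound}. Concretely, from the definition $\tilde{S}_2(K,N;\eta) = \frac{1}{2}\sum_{i=d_1+1}^{d}\frac{c}{i^{a}}e^{-2KNc\eta/i^{a}}$, differentiating under the (finite) sum gives
\begin{equation*}
\frac{\partial}{\partial\eta}\tilde{S}_2(K,N;\eta) = -KNc^{2}\sum_{i=d_1+1}^{d}\frac{1}{i^{2a}}\,e^{-2KNc\eta/i^{a}}.
\end{equation*}
Thus it suffices to show $\sum_{i=d_1+1}^{d} i^{-2a} e^{-2KNc\eta/i^{a}} \asymp (KN)^{-(2a-1)/a}$, since multiplying by $KNc^{2}$ then yields $-(KN)\cdot(KN)^{-(2a-1)/a} = -(KN)^{-(a-1)/a}$ up to absolute constants.

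First I would note that the sum above has exactly the same shape as the one analyzed for $\tilde{S}_2$ itself: it is of the form $\sum_i i^{-\beta} e^{-c'(KN)/i^{a}}$ with $\beta = 2a$ and $c' = 2c\eta = \Theta(1)$ (using $\eta = \Theta(1)$, inherited from the hypotheses of \Cref{lem:one-hot-loss-estimate-small-K} in the small-$K$ regime). The summand is maximized near $i \asymp (KN)^{1/a}$, where its value is $\Theta((KN)^{-2})$; for $i \ll (KN)^{1/a}$ the exponential factor is super-polynomially small, while for $i \gg (KN)^{1/a}$ the factor $i^{-2a}$ decays fast enough to be summable because $2a > 1$. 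Approximating the sum by $\int i^{-2a} e^{-2c\eta(KN)/i^{a}}\,\mathrm{d}i$ and substituting $i = (KN)^{1/a} u$ turns it into $(KN)^{-2}(KN)^{1/a}\int_{0}^{\infty} u^{-2a} e^{-2c\eta/u^{a}}\,\mathrm{d}u$, and the last integral converges (no issue at $u\to 0$ thanks to the exponential, none at $u\to\infty$ thanks to $2a>1$), giving the claimed order $(KN)^{-(2a-1)/a}$.

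The technical heart is making the sum--integral comparison rigorous with matching upper and lower constants, and this is precisely the content of \Cref{lem:loss-matching-upper-lower-bound}, which I would invoke with exponents $(\beta,\alpha) = (2a,a)$. For the upper bound one bounds the sum by the integral plus the single peak term, both of order $(KN)^{-(2a-1)/a}$. For the lower bound one restricts the sum to the window $i \in [(KN)^{1/a}, 3(KN)^{1/a}]$, which is nonempty and contained in $[d_1+1,d]$ thanks to the standing hypothesis $d = \Omega((KN)^{1/a})$ together with $d_1 = \Theta(1)$, and on which every term is $\Theta((KN)^{-2})$, so that the partial sum is $\Omega((KN)^{1/a - 2}) = \Omega((KN)^{-(2a-1)/a})$. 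The main obstacle is simply verifying that the hypotheses of \Cref{lem:loss-matching-upper-lower-bound} still hold with the larger polynomial exponent $2a$ in place of $a$ (in particular that the tail integral at infinity still converges and the lower-bound window remains admissible under $d = \Omega((KN)^{1/a})$); once that is confirmed, the conclusion follows by multiplying through by $-KNc^{2}$.
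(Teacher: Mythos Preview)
Your proposal is correct and follows essentially the same approach as the paper: differentiate $\tilde S_2$ termwise to obtain $-KNc^2\sum_{i=d_1+1}^d i^{-2a}e^{-2KNc\eta/i^a}$, then invoke \Cref{lem:loss-matching-upper-lower-bound} with exponent $l=2a$ to get the $\asymp (KN)^{-(2a-1)/a}$ estimate for the sum, and multiply by $-KN$. The paper's proof is simply the two-line version of this; your additional heuristic about the peak location and the integral substitution is correct but not needed once you cite the lemma.
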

\begin{proof}
\begin{align*}
    \frac{\partial}{\partial\eta}\Tilde{S_2}(K,N;\eta)&=-KN\sum_{i=d_1+1}^{d}\frac{c}{i^{2a}}e^{\frac{-2KNc\eta}{i^{a}}} \\
    &\eqsim-\frac{1}{(KN)^{\frac{a-1}{a}}},
\end{align*}
where the second line comes from \Cref{lem:loss-matching-upper-lower-bound}.
\end{proof}
\begin{lemma} \label{cor:one-hot-power-eta-approx}
 Suppose the assumptions in \Cref{thm:one-hot-E_K-v2} hold, and the expression of $\Tilde{S_2}(K,N;\eta)$ is given in \Cref{lem:one-hot-loss-estimate-small-K}. Consider two learning rate options $\eta, \eta' = \Theta(1)$that satisfy $\eta-\eta'=o(1)$. Then we have $\Tilde{S_2}(K,N;\eta)=\Tilde{S_2}(K,N;\eta')(1+o(1))$.
\end{lemma}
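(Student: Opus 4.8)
The plan is to combine the mean value theorem with the two sharp estimates already in hand: \Cref{lem:one-hot-loss-estimate-small-K} gives $\Tilde{S_2}(K,N;\eta)\asymp (KN)^{-\frac{a-1}{a}}$ and \Cref{lem:power-S_2-derivative} gives $\frac{\partial}{\partial\eta}\Tilde{S_2}(K,N;\eta)\asymp -(KN)^{-\frac{a-1}{a}}$, both holding uniformly for $\eta$ confined to any fixed compact subinterval of $(0,\infty)$. Since $\eta\mapsto\Tilde{S_2}(K,N;\eta)=\frac12\sum_{i=d_1+1}^d\frac{c}{i^a}e^{-2KNc\eta/i^a}$ is a finite sum of smooth functions, it is $C^\infty$ on $(0,\infty)$, so the mean value theorem yields a point $\xi$ lying between $\eta$ and $\eta'$ with
\[
\Tilde{S_2}(K,N;\eta)-\Tilde{S_2}(K,N;\eta')=(\eta-\eta')\,\frac{\partial}{\partial\eta}\Tilde{S_2}(K,N;\xi).
\]

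Next I would check that the intermediate point $\xi$ itself satisfies $\xi=\Theta(1)$. Because $\eta=\Theta(1)$, $\eta'=\Theta(1)$ and $\eta-\eta'=o(1)$, the interval $[\min(\eta,\eta'),\max(\eta,\eta')]$ is, for all large $N$, contained in a fixed interval $[c_1,c_2]$ with $0<c_1<c_2<\infty$; hence $\xi\in[c_1,c_2]$, so \Cref{lem:power-S_2-derivative} applies to $\xi$ and gives $\bigl|\tfrac{\partial}{\partial\eta}\Tilde{S_2}(K,N;\xi)\bigr|\asymp (KN)^{-\frac{a-1}{a}}$. Combining this with $\eta-\eta'=o(1)$,
\[
\bigl|\Tilde{S_2}(K,N;\eta)-\Tilde{S_2}(K,N;\eta')\bigr|=|\eta-\eta'|\cdot\Bigl|\tfrac{\partial}{\partial\eta}\Tilde{S_2}(K,N;\xi)\Bigr|=o\!\left((KN)^{-\frac{a-1}{a}}\right).
\]
By \Cref{lem:one-hot-loss-estimate-small-K}, $\Tilde{S_2}(K,N;\eta')\asymp (KN)^{-\frac{a-1}{a}}$, so the displayed bound is $o\bigl(\Tilde{S_2}(K,N;\eta')\bigr)$, which is exactly the claim $\Tilde{S_2}(K,N;\eta)=\Tilde{S_2}(K,N;\eta')(1+o(1))$.

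The main obstacle I anticipate is not the algebra but justifying the \emph{uniformity} of the $\asymp$ in \Cref{lem:one-hot-loss-estimate-small-K} and \Cref{lem:power-S_2-derivative} over the compact window swept out by $\xi$: those lemmas are phrased for a single ``$\eta=\Theta(1)$'', whereas here the implied constants must be independent of $\eta$ on $[c_1,c_2]$. I would handle this by re-inspecting the proof of \Cref{lem:loss-matching-upper-lower-bound}, the common engine behind both estimates: the sum $\sum_i i^{-2a}e^{-2KNc\eta/i^a}$ is squeezed between the integrals $\int x^{-2a}e^{-2KNc\eta/x^a}\,dx$, and the substitution $u=2KNc\eta\,x^{-a}$ turns these into $(KN\eta)^{-\frac{a-1}{a}}$ times incomplete Gamma-type integrals whose ranges depend on $\eta$ only through $\eta\asymp 1$, which makes the constants uniform on $[c_1,c_2]$. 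A fully self-contained alternative that bypasses \Cref{lem:power-S_2-derivative} is to split the sum at $i_0\asymp(KN)^{1/a}$ as in \Cref{lem:one-hot-loss-estimate-small-K}: for $i\ge i_0$ the ratio of the $\eta$- and $\eta'$-summands is $e^{-2KNc(\eta-\eta')/i^a}\in[e^{-2c|\eta-\eta'|},e^{2c|\eta-\eta'|}]=1+o(1)$, while the block $i<i_0$ is $o\bigl((KN)^{-\frac{a-1}{a}}\bigr)$ for either learning rate; but the mean value argument above is the shorter route.
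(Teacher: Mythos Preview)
Your proposal is correct and follows essentially the same route as the paper: apply Lagrange's mean value theorem to get an intermediate point $\xi\in[\min(\eta,\eta'),\max(\eta,\eta')]=\Theta(1)$, invoke \Cref{lem:power-S_2-derivative} at $\xi$ to bound the derivative by $\Theta\bigl((KN)^{-(a-1)/a}\bigr)$, multiply by $|\eta-\eta'|=o(1)$, and compare to $\Tilde{S_2}(K,N;\eta')\asymp(KN)^{-(a-1)/a}$ from \Cref{lem:one-hot-loss-estimate-small-K}. Your additional care about the uniformity of the implied constants over $[c_1,c_2]$ is a point the paper simply takes for granted, so your write-up is if anything more complete.
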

\begin{proof}
    \begin{align*}
        \left|\Tilde{S_2}(K,N;\eta)-\Tilde{S_2}(K,N;\eta')\right|&=\left|\frac{\partial}{\partial\eta}\Tilde{S_2}(K,N;\Tilde{\eta})\right|\left|(\eta-\eta')\right| \\
        &\eqsim \frac{1}{(KN)^{\frac{a-1}{a}}}\left|(\eta-\eta')\right| \\
        &=\Tilde{S_2}(K,N;\eta')o(1),
    \end{align*}
    where $\Tilde{\eta}\in\left[\min(\eta, \eta'), \max(\eta,\eta')\right]=\Theta(1)$, and the first line comes from Lagrange's Mean Value Theorem. The secome line comes from \Cref{lem:power-S_2-derivative}, and the last line comes from \Cref{lem:one-hot-loss-estimate-small-K}.
\end{proof}

\paragraph{The Range of Optimal Learning Rate.}
First, take $\eta' = 2-\frac{(a-1)d_1^{a-b}}{ac}\frac{\log KN}{KN}$, and we have
    \begin{align*}
        S_1(K, N; \eta') &\leq \frac{d_1c}{2}\left(1-\frac{c}{d_1^{a-b}}+\frac{c}{d_1^{a-b}}\left(1-\frac{(a-1)d_1^{a-b}}{ac}\frac{\log KN}{KN}\right)^{2K}\right)^N. \\
    \end{align*}
By a Taylor expansion argument, we have
    \begin{align*}
        S_1(K, N; \eta')&=\frac{d_1c}{2}\left(1-\frac{2Kc}{d_1^{a-b}}\times\frac{(a-1)d_1^{a-b}}{ac}\frac{\log KN}{KN}(1+o(1))\right)^N \\
        &=\frac{d_1c}{2}\left(1-\frac{2(a-1)}{a}\frac{\log KN}{N}(1+o(1))\right)^N \\
        &=\frac{d_1c}{2}e^{N\log\left(1-\frac{2(a-1)}{a}\frac{\log KN}{N}(1+o(1))\right)} \\
        &\eqsim \frac{1}{(KN)^{\frac{2(a-1)}{a}}} =o(S_2(K, N;\eta')),
    \end{align*}
    where the last inequality comes from \Cref{lem:one-hot-loss-estimate-small-K}. Then we have
    \begin{align*}
        \Bar{\cR}(K, N;\eta')&=S_1(K, N;\eta')+S_2(K, N;\eta') \\
        &=\Tilde{S_2}(K, N;\eta')(1+o(1)) \\
        &=\Tilde{S_2}(K, N;2)(1+o(1)) \\
        &=\left(\frac{1}{2}\sum_{i=d_1+1}^{d}\frac{c}{i^{a}}e^{\frac{-4KNc}{i^{a}}}\right)(1+o(1)).
    \end{align*}
    Then we prove that $\eta^* \in [2-o(1), 2]$.
    We prove by contradiction, and assume that there exist a constant $\epsilon>0$ and a sequence $(N_i)_{i=1}^{\infty}\to \infty$ such that $\eta^*(N_i)\leq 2-\epsilon$ for all $i\geq 1$. As we only analyze with respect to the sequence $(N_i)_{i=1}^{\infty}$, without loss of generality, we take $(N_i)_{i=1}^{\infty}=\mathds{N}$. By \Cref{lem:one-hot-loss-estimate-small-K} and the convexity of $\Tilde{S_2}(K, N;\eta)$ with respect to $\eta$, we have 
    \begin{align*}
        \Bar{\cR}^*(K, N)\geq S_2(K, N;\eta^*)&=\Tilde{S_2}(K, N; \eta^*)(1+o(1)) \\
        &\geq\left[\Tilde{S_2}(K, N;2)+\epsilon\frac{\partial}{\partial \eta}\Tilde{S_2}(K, N; 2)\right](1+o(1))>\Bar{\cR}(K, N;\eta')
    \end{align*}
    when $N$ is sufficiently large, which is a contradiction.
    So
    \begin{align*}
        \Bar{\cR}^*(K, N)&=S_1(K, N;\eta^*)+S_2(K, N;\eta^*) \\
        &=S_1(K, N;\eta^*)+\Tilde{S_2}(K, N;\eta^*)(1+o(1)) \\
        &=S_1(K, N;\eta^*)+\Tilde{S_2}(K, N;2)(1+o(1))\leq\Bar{\cR}(K, N;\eta').
    \end{align*}
    Thus, $S_1(K, N;\eta^*)=o\left(\Tilde{S_2}(K, N;2)\right)$, and $\Bar{\cR}^*(K, N)=\Tilde{S_2}(K, N;2)(1+o(1))$.
    
By \Cref{lem:one-hot-loss-estimate-small-K} and \Cref{lem:loss-matching-upper-lower-bound}, there exist two constants $C_1$ and $C_2$ such that $\Bar{\cR}^*(K, N)\leq \frac{C_1}{(KN)^{\frac{a-1}{a}}}$ and $\Bar{\cR}^*(K, N)\geq \frac{C_2}{(KN)^{\frac{a-1}{a}}}$ when the condition $d = \Omega(T^{\frac{1}{a}})$ holds. For one-pass case, by \Cref{lem:one-hot-loss-estimate-small-K} and \Cref{lem:loss-matching-upper-lower-bound}, we have
\begin{align}
    \Bar{\cR}^*(1, T')&=\Bar{\cR}(1, T';\left.\eta^*(1,T')\right|_{d=d}) \notag\\&\leq\Bar{\cR}(1, T';\left.\eta^*(1,T')\right|_{d=\infty}) \notag\\
    &= \left.\Bar{\cR}^*(1, T')\right|_{d = \infty} =\frac{1}{2}\sum_{i=d_1+1}^{\infty}\frac{c}{i^a}e^{-\frac{4KNc}{i^a}}(1+o(1)) \leq \frac{C_3}{T'^{\frac{a-1}{a}}} \label{ineq:one-hot-small-K-upper-bound}
\end{align}
and
\begin{align}
    \Bar{\cR}^*(1, T')&= \frac{1}{2}\sum_{i=d_1+1}^{d}\frac{c}{i^a}e^{-\frac{4KNc}{i^a}}(1+o(1))\geq \frac{C_4}{T'^{\frac{a-1}{a}}} ~~\text{when} ~~d = \Omega\left(T'^{\frac{1}{a}}\right). \label{ineq:one-hot-small-K-lower-bound}
\end{align}

\subsubsection{Proof of \Cref{thm:one-hot-scaling-law}: Large-$K$ Case}
\paragraph{The Expected Excess Risk Approximation.}
\begin{lemma} \label{lem:one-hot-loss-estimate-large-K}
    Suppose the assumptions in \Cref{thm:one-hot-E_K-v2} hold. When $K = \omega(N^{\frac{b}{a-b}})$ and $\eta = \Theta(1)$, we have $S_2(K, N;\eta) \eqsim \frac{1}{N^{\frac{a-1}{a-b}}}$.
\end{lemma}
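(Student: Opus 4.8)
\textbf{Proof proposal for \Cref{lem:one-hot-loss-estimate-large-K}.}
The plan is to reuse the template of the small-$K$ case (\Cref{lem:one-hot-loss-estimate-small-K}), but now exploit that $K$ is so large that the correction factor $(1-\eta/i^b)^{2K}$ is negligible precisely on the index range that carries the mass. Write, as in the proof outline, $S_2(K,N;\eta)=\tfrac12\sum_{i=d_1+1}^{d}\tfrac{c}{i^{a}}g_i^{N}$ with $g_i:=1-\tfrac{c}{i^{a-b}}\bigl(1-\delta_i\bigr)$ and $\delta_i:=(1-\eta/i^b)^{2K}\in[0,1]$. The key preliminary observation is that for $i>d_1$ one has $\delta_i\le e^{-2K\eta/i^b}$ (from $\log(1-x)\le-x$, valid since $1-\eta/i^b>0$), so for every fixed constant $C>0$ and all $d_1<i\le CN^{1/(a-b)}$ we get $2K\eta/i^b\ge \tfrac{2\eta}{C^{b}}\,KN^{-b/(a-b)}\to\infty$ under $K=\omega(N^{b/(a-b)})$; hence $\delta_i=o(1)$ uniformly on that range, and in particular $g_i\le 1-\tfrac{c}{2i^{a-b}}$ there once $N$ is large. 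Here $\eta=\Theta(1)$ and $d_1=\Theta(1)$ keep all these constants under control. I will also use $N^{1/(a-b)}=o\bigl((KN)^{1/a}\bigr)$ (which is equivalent to $N^{b/(a-b)}=o(K)$), so that the relevant window of indices lies inside $[d_1+1,d]$ once $d=\Omega((KN)^{1/a})$.

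For the \emph{upper bound} I would split the sum at $L:=CN^{1/(a-b)}$ for a large constant $C$. On $\{d_1<i\le L\}$, use $g_i^N\le e^{-Nc/(2i^{a-b})}$ and bound $\sum_{i\ge1}\tfrac{c}{i^{a}}e^{-Nc/(2i^{a-b})}\lesssim N^{-(a-1)/(a-b)}$ by the same integral-comparison argument used for $\widetilde S_2$ in \Cref{lem:one-hot-loss-estimate-small-K} (the $M^{-(a-1)/s}$ estimate behind \Cref{lem:loss-matching-upper-lower-bound}, now with exponent $s=a-b$ and $M\asymp N$). On $\{i>L\}$, use the trivial $g_i^N\le1$ together with $\sum_{i>L}^{d}\tfrac{c}{i^{a}}\le\sum_{i>L}^{\infty}\tfrac{c}{i^{a}}\asymp L^{1-a}\asymp N^{-(a-1)/(a-b)}$ (here $a>1$ since $a-b>1$). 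Adding the two pieces gives $S_2(K,N;\eta)\lesssim N^{-(a-1)/(a-b)}$.

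For the \emph{lower bound} I would retain only the indices in a window $[C_1N^{1/(a-b)},C_2N^{1/(a-b)}]$ for suitable constants $C_1<C_2$. On this window $c/i^{a-b}=\Theta(1/N)$ and $\delta_i=o(1)$, so $g_i\ge 1-C_3/N$ and hence $g_i^N\ge e^{-2C_3}=\Theta(1)$ for $N$ large; the window lies in $[d_1+1,d]$ by $d=\Omega((KN)^{1/a})$, it contains $\Theta(N^{1/(a-b)})$ indices, and each term is of size $\gtrsim N^{-a/(a-b)}$, so $S_2(K,N;\eta)\gtrsim N^{1/(a-b)}\cdot N^{-a/(a-b)}=N^{-(a-1)/(a-b)}$. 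Combining with the upper bound yields $S_2(K,N;\eta)\eqsim N^{-(a-1)/(a-b)}$.

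The main obstacle is essentially the first paragraph: one must make precise that the super-linear growth $K=\omega(N^{b/(a-b)})$ is exactly what forces $\delta_i\to0$ on the entire ``peak'' range $i\lesssim N^{1/(a-b)}$, while at the same time the complementary tail $i\gtrsim N^{1/(a-b)}$—where $g_i^N$ can only be controlled by the trivial bound $g_i^N\le1$—is still of the correct order $N^{-(a-1)/(a-b)}$ rather than larger. Both of these hinge on the comparison $N^{1/(a-b)}\ll K^{1/b}$ and on the convergence of the $p$-series $\sum i^{-a}$; everything else is the routine integral-comparison bookkeeping already isolated in \Cref{lem:loss-matching-upper-lower-bound}.
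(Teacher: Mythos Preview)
Your proposal is correct and follows essentially the same approach as the paper: split the sum at the scale $N^{1/(a-b)}$, use $K=\omega(N^{b/(a-b)})$ to force $\delta_i=(1-\eta/i^b)^{2K}\to 0$ uniformly on the peak range so that $g_i\le 1-\tfrac{c}{2i^{a-b}}$ there, and handle the tail by the trivial bound $g_i^N\le 1$ together with the $p$-series estimate. For the lower bound the paper also restricts to a window of width $\Theta(N^{1/(a-b)})$ around $N^{1/(a-b)}$; note that there you do not actually need $\delta_i=o(1)$, since $\delta_i\ge 0$ already gives $g_i\ge 1-c/i^{a-b}$, which is how the paper argues. The only cosmetic difference is that on the small-$i$ part the paper bounds the sum by (number of terms)$\times\max_i\tfrac{c}{i^a}(1-\tfrac{c}{2i^{a-b}})^N=\Theta(N^{-a/(a-b)})$, whereas you pass through $g_i^N\le e^{-Nc/(2i^{a-b})}$ and an integral comparison; both yield the same $N^{-(a-1)/(a-b)}$.
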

\begin{proof}

There exists $N_3$ such that when $N\geq N_3$, we have $K\geq N^{\frac{b}{a-b}}$. Then when $d\geq 3(KN)^{\frac{1}{a}}\geq 3N^{\frac{1}{a-b}}$, we give the lower bound of the loss:
\begin{align*}
    S_2(K,N;\eta)&\geq \frac{1}{2}\sum_{i=N^{\frac{1}{a-b}}}^{3N^{\frac{1}{a-b}}}\frac{c}{i^{a}}\left(1-\frac{c}{i^{a-b}}\right)^N \\
    &\geq \frac{1}{2}\frac{2N^{\frac{1}{a-b}}}{(3N^{\frac{1}{a-b}})^{a}}(1-\frac{c}{N})^N \\
    &\gtrsim \frac{1}{N^{\frac{a-1}{a-b}}}.
\end{align*}
Then we derive the upper bound of the loss:
\begin{align*}
    S_2(K,N;\eta)&\leq \frac{1}{2}\sum_{i=1}^{\infty}\frac{c}{i^{a}}\left(1-\frac{c}{i^{a-b}}+\frac{c}{i^{a-b}}\left(1-\frac{\eta}{i^b}\right)^{2K}\right)^N \\
    &\leq \frac{1}{2}\sum_{i=1}^{N^{\frac{1}{a-b}}}\frac{c}{i^{a}}\left(1-\frac{c}{i^{a-b}}+\frac{c}{i^{a-b}}\left(1-\frac{\eta}{i^b}\right)^{2K}\right)^N+\frac{1}{2}\sum_{i=N^{\frac{1}{a-b}}+1}^{\infty}\frac{c}{i^{a}}. \\
\end{align*}
When $K = \omega(N^{\frac{b}{a-b}})$ and $i\leq N^{\frac{1}{a-b}}$, 
\begin{align*}
    \left(1-\frac{\eta}{i^b}\right)^{2K} &\leq \left(1-\frac{\eta}{N^{\frac{b}{a-b}}}\right)^{2K} = e^{2K\log \left(1-\frac{\eta}{N^{\frac{b}{a-b}}}\right)} \\
    &\leq e^{-2K\frac{\eta}{N^{\frac{b}{a-b}}}} = o(1).
\end{align*}    
Then there exists $N_4$ such that $\left(1-\frac{\eta}{i^b}\right)^{2K}\leq\frac{1}{2}$ when $N\geq N_4$. So when $N\geq \max(N_3, N_4)$, we have
\begin{align*}
    S_2(K,N;\eta)&\leq \frac{1}{2}\sum_{i=1}^{N^{\frac{1}{a-b}}}\frac{c}{i^{a}}\left(1-\frac{c}{2i^{a-b}}\right)^N+\frac{1}{2}\sum_{i=N^{\frac{1}{a-b}}+1}^{\infty}\frac{c}{i^{a}}. \\
\end{align*} 
One can derive that $\max \frac{c}{i^{a}}\left(1-\frac{c}{2i^{a-b}}\right)^N = \Theta\left(\frac{1}{N^{\frac{a}{a-b}}}\right)$. So
\begin{align*}
    \Bar{\cR}^*(K, N)
    &\lesssim \frac{1}{N^{\frac{a-1}{a-b}}}+\frac{1}{N^{\frac{a-1}{a-b}}}\\
    &\lesssim \frac{1}{N^{\frac{a-1}{a-b}}}.
\end{align*}
And we complete the proof.
\end{proof}
\paragraph{The Range of Optimal Learning Rate.}
First, take $\eta' = 1.5$, and we have
    \begin{align*}
        S_1(K, N; \eta') &\leq \frac{d_1c}{2}\left(1-\frac{c}{d_1^{a-b}}+\frac{c}{d_1^{a-b}}\left(\max\left(0.5, 1-\frac{1.5}{d_1^{b}}\right)\right)^{2K}\right)^N \\
        &=\frac{d_1c}{2}\left(1-\Theta(1)\right)^N \\
        &=o(S_2(K, N;\eta')),
    \end{align*}
    where the last inequality comes from \Cref{lem:one-hot-loss-estimate-large-K}. Then we have
    \begin{align*}
        \Bar{\cR}(K, N;\eta')&=S_1(K, N;\eta')+S_2(K, N;\eta') \\
        &=S_2(K, N;\eta')(1+o(1)) 
    \end{align*}
    It is obvious that $\eta^* \in [1, 2]$. We know that
    \begin{align*}
        \Bar{\cR}^*(K, N)&=S_1(K, N;\eta^*)+S_2(K, N;\eta^*) \leq\Bar{\cR}(K, N;\eta')=S_2(K, N, \eta')(1+o(1)).
    \end{align*}

By \Cref{lem:one-hot-loss-estimate-large-K}, we have
\[
S_2(K,N;\eta^*)=\Theta\!\left(N^{-\frac{a-1}{a-b}}\right)
\quad\text{and}\quad
S_2(K,N;\eta')=\Theta\!\left(N^{-\frac{a-1}{a-b}}\right),
\]
which directly implies that
\[
S_1(K,N;\eta^*)=O\!\left(N^{-\frac{a-1}{a-b}}\right),
\qquad
\Bar{\cR}^*(K,N)=\Theta\!\left(N^{-\frac{a-1}{a-b}}\right).
\]
\subsection{$E(K,N)$ for Power-Law Spectrum: Proof of \Cref{thm:one-hot-E_K-v2}} \label{sec:proof-one-hot-Ek-power-spectrum}
\subsubsection{Proof of \Cref{thm:one-hot-E_K-v2}, small-$K$ case}
 Let $T'$ be defined implicitly by equating the averaged risks at their optimal step sizes:
\begin{equation}\label{eq:equal-risk}
\Bar{\cR}^*(1, T')=\Bar{\cR}^*(K, N).
\end{equation}

We claim that 
\begin{equation} \label{eq:Tprime-band}
\left(\frac{C_4}{C_1}\right)^{\!\frac{a}{a-1}} \, T
\;\;\le\;\;
T'
\;\;\le\;\;
\left(\frac{C_3}{C_2}\right)^{\!\frac{a}{a-1}} \, T.
\end{equation}

\begin{proof}
We argue by contradiction, considering two exclusive violations of \Cref{eq:Tprime-band}.

\begin{enumerate}
\item \textbf{Case 1: $T' > \bigl(\tfrac{C_3}{C_2}\bigr)^{\frac{a}{a-1}} T$.}
By the risk bounds encoded by $(C_2,C_3)$ for one-pass training with $T'$ fresh data and by $(C_1,C_4)$ for K-epoch training with $N$ fresh data, this inequality forces
\[
\Bar{\cR}^*(1, T')<\Bar{\cR}^*(K, N),
\]
which contradicts the defining equality \Cref{eq:equal-risk}.

\item \textbf{Case 2: $T' < \bigl(\tfrac{C_4}{C_1}\bigr)^{\frac{a}{a-1}} T$.}
given $d =\Omega(T^{\frac{1}{a}})$ we still have $d=\Omega\!\bigl((T')^{1/a}\bigr)$.
The same risk comparisons then yield
\[
\Bar{\cR}^*(1, T')>\Bar{\cR}^*(K, N),
\]
again contradicting \Cref{eq:equal-risk}.
\end{enumerate}
Both contradictions rule out violations; hence \Cref{eq:Tprime-band} holds. 
\end{proof}

Therefore, the desired characterization of $E(K,N)$ follows directly from \Cref{lem:E_K}.

\subsubsection{Proof of \Cref{thm:one-hot-E_K-v2}, Large-$K$ Case}
By \Cref{thm:one-hot-scaling-law}, there exist constants $C_5,C_6>0$ such that, given $d = \Omega(T^{\frac{1}{a}})$,
\begin{equation}\label{eq:largeK-two-sided}
\frac{C_6}{N^{\frac{a-1}{a-b}}}
\;\le\;
\Bar{\cR}^*(K,N)
\;\le\;
\frac{C_5}{N^{\frac{a-1}{a-b}}}.
\end{equation}

Let $T'$ be defined by equating the averaged risks at their optimal step sizes:
\begin{equation}\label{eq:equal-risk-largeK}
\Bar{\cR}^*(K,N)
\;=\;
\Bar{\cR}^*(1,T').
\end{equation}

Combining \Cref{eq:largeK-two-sided}, \Cref{eq:equal-risk-largeK} with \Cref{ineq:one-hot-small-K-upper-bound}, \Cref{ineq:one-hot-small-K-lower-bound}, we claim that
\begin{equation}\label{eq:Tprime-band-largeK}
\left(\frac{C_4}{C_5}\right)^{\!\frac{a}{a-1}}\, N^{\frac{a}{a-b}}
\;\;\le\;\;
T'
\;\;\le\;\;
\left(\frac{C_3}{C_6}\right)^{\!\frac{a}{a-1}}\, N^{\frac{a}{a-b}}.
\end{equation}

\begin{proof}[Proof of the claim]
We argue by contradiction.

\begin{enumerate}
\item \textbf{Upper violation.} If 
$T' > \bigl(\tfrac{C_3}{C_6}\bigr)^{\frac{a}{a-1}} N^{\frac{a}{a-b}}$,
then by \Cref{ineq:one-hot-small-K-upper-bound} and \Cref{eq:largeK-two-sided} (lower bound),
\[
\Bar{\cR}^*(1,T')
\;\le\;
\frac{C_3}{(T')^{\frac{a-1}{a}}}
\;<\;
\frac{C_6}{N^{\frac{a-1}{a-b}}}
\;\le\;
\Bar{\cR}^*(K,N),
\]
which contradicts \Cref{eq:equal-risk-largeK}.

\item \textbf{Lower violation.} If 
$T' < \bigl(\tfrac{C_4}{C_5}\bigr)^{\frac{a}{a-1}} N^{\frac{a}{a-b}}$,
then the condition $d = \Omega(T^{\frac{1}{a}})$ gives
\[
d=\Omega\!\left(N^{\frac{1}{a-b}}\right)=\Omega\!\left((T')^{\frac{1}{a}}\right).
\]
Using \Cref{ineq:one-hot-small-K-lower-bound} and \Cref{eq:largeK-two-sided} (upper bound),
\[
\Bar{\cR}^*(1,T')
\;\ge\;
\frac{C_4}{(T')^{\frac{a-1}{a}}}
\;>\;
\frac{C_5}{N^{\frac{a-1}{a-b}}}
\;\ge\;
\Bar{\cR}^*(K,N),
\]
again contradicting \Cref{eq:equal-risk-largeK}.
\end{enumerate}
Both contradictions are impossible; hence \Cref{eq:Tprime-band-largeK} holds. 
\end{proof}
The characterization of $E(K,N)$ follows directly by the claim.

\subsection{Scaling Laws for Logarithmic Power-Law Spectrum: Proof of \Cref{thm:one-hot-log-scaling-law}}\label{sec:proof-one-hot-scaling-law-log-spectrum}
Similar to the proof of \Cref{thm:one-hot-scaling-law}, the proof of \Cref{thm:one-hot-E_K-log-spectrum} consists of two parts:  First part is the case when $K = o(\log^b N)$, and the second part is the case when $K = \omega(\log^b N)$.

Before we begin our main part of the proof, note that for all $\eta=\Theta(1)$ and $\eta\leq 2$, there exists $d_2=\Theta(1)>0$ such that $1-\frac{\eta}{\log^b (i+1)}>0$ when $i>d_2$. Then we divide the loss into two parts:
\begin{align*}
    \Bar{\cR}(K, N;\eta)&=\frac{1}{2}\sum_{i=1}^{d}\frac{c}{i^{a}}\left(1-\frac{c\log^b (i+1)}{i^{a}} + \frac{c\log^b (i+1)}{i^{a}}\left(1-\left(1-\frac{\eta}{\log^b (i+1)}\right)^{2K}\right)\right)^N  \\
    &=\underbrace{\frac{1}{2}\sum_{i=1}^{d_2}\frac{c}{i^{a}}\left(1-\frac{c\log^b (i+1)}{i^{a}} + \frac{c\log^b (i+1)}{i^{a}}\left(1-\left(1-\frac{\eta}{\log^b (i+1)}\right)^{2K}\right)\right)^N }_{V_1(K, N; \eta)} \\
    &+\frac{1}{2}\underbrace{\sum_{i=d_2 +1}^d\frac{c}{i^{a}}\left(1-\frac{c\log^b (i+1)}{i^{a}} + \frac{c\log^b (i+1)}{i^{a}}\left(1-\left(1-\frac{\eta}{\log^b (i+1)}\right)^{2K}\right)\right)^N }_{V_2(K, N; \eta)}.
\end{align*}


\subsubsection{Proof of \Cref{thm:one-hot-log-scaling-law}: Small-$K$ Case}
\paragraph{The Expected Excess Risk Approximization.}
\begin{lemma} \label{lem:one-hot-log-loss-estimate-small-K}
    Suppose the assumptions in \Cref{thm:one-hot-E_K-log-spectrum} hold. When $K = o(\log^b N)$, we define the estimate of $V(K,N;\eta)$ as
    \begin{align*}
        \Tilde{V}_2(K,N;\eta) &:= \frac{1}{2}\sum_{i=1}^{d}\frac{c}{i^{a}}e^{\frac{-2KNc\eta}{i^{a}}}.
    \end{align*}
    Then we have $V_2(K,N;\eta) = \Tilde{V}(K,N;\eta)(1 + o(1))$, and 
    $\Tilde{V}_2(K,N;\eta) \eqsim \frac{1}{(KN)^{\frac{a-1}{a}}}$.
\end{lemma}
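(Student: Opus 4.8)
The plan is to reuse the architecture of the proof of \Cref{lem:one-hot-loss-estimate-small-K}, exploiting the key fact that under \Cref{asp:log-power-decay-prior} the Hessian entries are still $p_i\Lambda_i = c\,i^{-a}$ — the same plain power law as in the power-spectrum model — so that the only structural change is that the ``memorization threshold'' factor $i^b$ is replaced by $\log^b(i+1)$. Starting from the closed form of \Cref{thm:one-hot-loss-estimate}, the $i$-th summand of $\bar\cR(K,N;\eta)$ is $F(i)$ with $F(x) := \frac{c}{x^a}\bigl(1 - \frac{c\log^b(x+1)}{x^a}\bigl(1 - (1 - \frac{\eta}{\log^b(x+1)})^{2K}\bigr)\bigr)^N$; for $x = O(1)$ the base is bounded away from $1$ by a constant, so all terms with $i \le d_2$ (hence $V_1$) are exponentially small in $N$, which also makes it harmless that $\widetilde V_2$ is written as a sum from $i = 1$.

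The crux is a Taylor expansion in the range that dominates the sum. First I would carry out the derivative analysis of $F$ — just as in \Cref{lem:one-hot-loss-estimate-small-K}, but with $\log^b(x+1)$ wherever $x^b$ appeared in the auxiliary functions $G(x),\Phi(x)$ there — to show that $F$ attains its global maximum at $x_0 = \Theta((KN)^{1/a})$ with $F(x_0) = \Theta(1/(KN))$. Next, using $K = o(\log^b N)$ (which makes $\log(KN) = \log N\,(1+o(1))$, hence $\log^b(i+1) \asymp \log^b N = \omega(K)$ on the window $i \asymp (KN)^{1/a}$), one picks a threshold $i_\star = o((KN)^{1/a})$ that is nevertheless large enough that on $\{i \ge i_\star\}$ the expansion $(1 - \frac{\eta}{\log^b(i+1)})^{2K} = 1 - \frac{2K\eta}{\log^b(i+1)} + O(\frac{K^2}{\log^{2b}(i+1)})$ is valid with $N$-th power error $o(1)$; such an $i_\star$ exists precisely because the condition $K = o(\log^b N)$ leaves a gap between ``small enough for the pre-threshold sum to be subdominant'' and ``large enough for the Taylor remainder to vanish'' (this is the analogue of the $J_1/J_2$ split). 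On this range the summand becomes $\frac{c}{i^a}\bigl(1 - \frac{2Kc\eta}{i^a} + O(\frac{K^2}{i^a\log^b(i+1)})\bigr)^N = \frac{c}{i^a}e^{-2KNc\eta/i^a}(1+o(1))$, uniformly, since for $i \ge i_\star$ both $NK^2/(i^a\log^b(i+1))$ and $N(2Kc\eta/i^a)^2$ are $o(1)$.

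Finally I would bound the pre-threshold contributions on both sides: $\sum_{d_2 < i \le i_\star} F(i) \le (i_\star - d_2) F(x_0) = o((KN)^{-(a-1)/a})$, and likewise $\sum_{d_2 < i \le i_\star} \frac{c}{i^a}e^{-2KNc\eta/i^a} = o((KN)^{-(a-1)/a})$ by the same counting against the maximum $\Theta(1/(KN))$ of $x \mapsto \frac{c}{x^a}e^{-2KNc\eta/x^a}$ (attained at $\Theta((KN)^{1/a})$). Since the post-threshold parts agree up to a $(1+o(1))$ factor, this gives $V_2(K,N;\eta) = \widetilde V_2(K,N;\eta)(1+o(1))$; the two-sided estimate $\widetilde V_2(K,N;\eta) \asymp (KN)^{-(a-1)/a}$ then follows directly from the matching upper/lower bound lemma \Cref{lem:loss-matching-upper-lower-bound}, whose hypotheses ($c,\eta = \Theta(1)$ and $d = \Omega((KN)^{1/a})$, the latter from \Cref{thm:one-hot-E_K-log-spectrum}) hold and whose target sum is identical in form. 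I expect the main obstacle to be the $F'$ analysis together with the choice of $i_\star$: because $\log^b(x+1)$ is neither constant nor a clean power, locating $x_0$ via $G(x_0) = 0$ and verifying the two competing size constraints on $i_\star$ require some care, even though the final order does not change from the pure power-law case.
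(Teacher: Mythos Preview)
Your proposal is correct and follows essentially the same route as the paper's proof: compute $W'(x)$ to locate the maximizer at $x_0=\Theta((KN)^{1/a})$ with $W(x_0)=\Theta(1/(KN))$, split the sum at a threshold $i_\star=(KN)^{1/a}(K/\log^b N)^{1/(2a)}$ (which satisfies exactly your two competing constraints), Taylor-expand on the post-threshold part to reach $\frac{c}{i^a}e^{-2KNc\eta/i^a}(1+o(1))$, and bound both pre-threshold pieces by $i_\star\cdot\Theta(1/(KN))=o((KN)^{-(a-1)/a})$. The paper then derives the matching upper and lower bounds for $\widetilde V_2$ by a direct split-and-count argument that is equivalent to invoking \Cref{lem:loss-matching-upper-lower-bound}, so your use of that lemma is fine.
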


\begin{proof}[Proof of \Cref{lem:one-hot-log-loss-estimate-small-K}]
We first define a function
\begin{align*}
    W(x) := \frac{c}{x^{a}}\left(1-\frac{c\log^b (x+1)}{x^{a}}\left(1-\left(1-\frac{\eta}{\log^b (x+1)}\right)^{2K}\right)\right)^N.
\end{align*}
Direct observation gives us that under \Cref{asp:log-power-decay-prior}, $\bar\cR(K,N;\eta) \propto \sum_{i=1}^d W(i)$. Simliarly we take the derivative of $W$.
\begin{align*}
    W'(x) &= -\frac{ac}{x^{a+1}}\left(1-\frac{c\log^b (x+1)}{x^{a}}+\frac{c\log^b (x+1)}{x^{a}}\left(1-\frac{\eta}{\log^b (x+1)}\right)^{2K}\right)^N \\
    &+\frac{cN}{x^{a}}\left(1-\frac{c\log^b (x+1)}{x^{a}}+\frac{c\log^b (x+1)}{x^{a}}\left(1-\frac{\eta}{\log^b (x+1)}\right)^{2K}\right)^{N-1}\\
    &\left(\left(\frac{ac\log^b (x+1)}{x^{a+1}}-\frac{bc\log^{b-1}(x+1)}{x^{a}(x+1)}\right)\left(1-\left(1-\frac{\eta}{\log^b (x+1)}\right)^{2K}\right) \right. \\
    &\left.+\frac{2cK\log^b (x+1)}{x^{a}}\left(1-\frac{\eta}{\log^b (x+1)}\right)^{2K-1}\frac{b\eta}{(x+1)\log^{b+1}(x+1)}\right) \\
    &=\frac{c}{x^{2a+1}}\left(1-\frac{c\log^b (x+1)}{x^{a}}+\frac{c\log^b (x+1)}{x^{a}}\left(1-\frac{\eta}{\log^b (x+1)}\right)^{2K}\right)^{N-1} \\
    &\left(-a\left(x^a-c\log^b (x+1)+c\log^b (x+1)\left(1-\frac{\eta}{\log^b (x+1)}\right)^{2K}\right)\right. \\
    &\left.+N\left(\left(ac\log^b (x+1)-bc\log^{b-1}(x+1)\frac{x}{x+1}\right)\left(1-\left(1-\frac{\eta}{\log^b (x+1)}\right)^{2K}\right)\right.\right. \\
    &\left.\left.+\frac{2cKb\eta}{\log (x+1)}\left(1-\frac{\eta}{\log^b (x+1)}\right)^{2K-1}\frac{x}{x+1}\right)\right).
\end{align*}
We define 
\begin{align*}
    G(x)&=-a\left(x^a-c\log^b (x+1)+c\log^b (x+1)\left(1-\frac{\eta}{\log^b (x+1)}\right)^{2K}\right) \\
    &+N\left(\left(ac\log^b (x+1)-bc\log^{b-1}(x+1)\frac{x}{x+1}\right)\left(1-\left(1-\frac{\eta}{\log^b (x+1)}\right)^{2K}\right)\right. \\
    &\left.+\frac{2cKb\eta}{\log (x+1)}\left(1-\frac{\eta}{\log^b (x+1)}\right)^{2K-1}\frac{x}{x+1}\right),
\end{align*}
and $x_0$ is defined to be the maximum of $W(x)$, so $G(x_0) = 0$.
\begin{align*}
    G(x)&\geq N\log^b (x+1)\left(ac-\frac{bc}{\log (x+1)}\frac{x}{x+1}\right)\left(1-\left(1-\frac{\eta}{\log^b x}\right)^{2K}\right)-ax^a\\
    &\geq N(a-b)c\log^b (x+1)\left(1-\left(1-\frac{\eta}{\log^b (x+1)}\right)^{2K}\right)-ax^a \\
    &= N(a-b)c\log^b (x+1)\times\frac{\eta}{\log^b (x+1)}\left(\sum_{i=0}^{2K-1}\left(1-\frac{\eta}{\log^b (x+1)}\right)^i\right)-ax^a\\
    &\geq N(a-b)c\eta-ax^a. 
\end{align*}
So $x_0 = \Omega\left(N^{\frac{1}{a}}\right)$ is an direct conclusion by $G(x_0) = 0$. Also , by solving $G(x_0)=0$, we can get the approximation of $x_0$ as 
\begin{align*}
    G(x_0) &= -ax_0^a(1+o(1)) \\
    &+N\left(ac\log^b (x_0+1)(1+o(1))\times\frac{2K\eta}{\log^b (x_0+1)}(1+o(1))
    +O\left(\frac{K}{\log N}\right)\right) \\
    &=-ax_0^a(1+o(1))+2KNac\eta(1+o(1))=0,
\end{align*}
thus we have
\begin{align*}
    x_0 = \Theta\left((KN)^{\frac{1}{a}}\right), \quad W(x_0)=\Theta\left(\frac{1}{KN}\right).
\end{align*}
There exists a constant $N_5$ such that $K\leq\log^b N$ when $N\geq N_5$. So when $N\geq N_5$ and $d\geq 3(KN)^{\frac{1}{a}}\geq 3(KN)^{\frac{1}{a}}\left(\frac{K}{\log^bN}\right)^{\frac{1}{2a}}$, we have
\begin{align*}
    V_2(K,N;\eta)
    &=\frac{1}{2}\sum_{i=d_2 + 1}^{(KN)^{\frac{1}{a}}\left(\frac{K}{\log^bN}\right)^{\frac{1}{2a}}}\frac{c}{i^{a}}\left(1-\frac{c\log^b (i+1)}{i^{a}}\left(1-\left(1-\frac{\eta}{\log^b (i+1)}\right)^{2K}\right)\right)^N \\
    &+\frac{1}{2}\sum_{(KN)^{\frac{1}{a}}\left(\frac{K}{\log^bN}\right)^{\frac{1}{2a}}}^{d}\frac{c}{i^{a}}\left(1-\frac{c\log^b (i+1)}{i^{a}}\left(1-\left(1-\frac{\eta}{\log^b (i+1)}\right)^{2K}\right)\right)^N \\
    &:=\psi_1+\psi_2.
\end{align*}
Furthermore, 
\begin{align*}
    \psi_1&\lesssim (KN)^{\frac{1}{a}}\left(\frac{K}{\log^bN}\right)^{\frac{1}{2a}}\times W(x_0)\lesssim\frac{(KN)^{\frac{1}{a}}\left(\frac{K}{\log^bN}\right)^{\frac{1}{2a}}}{KN},
\end{align*}
and 
\begin{align*}
    \psi_2&=\frac{1}{2}\sum_{i=(KN)^{\frac{1}{a}}\left(\frac{K}{\log^bN}\right)^{\frac{1}{2a}}}^{d}\frac{c}{i^{a}}\left(1-\frac{c\log^b(i+1)}{i^{a}}+\frac{c\log^b (i+1)}{i^{a}}\left(1-\frac{\eta}{\log^b (i+1) }\right)^{2K}\right)^N \\
    &=\frac{1}{2}\sum_{i=(KN)^{\frac{1}{a}}\left(\frac{K}{\log^bN}\right)^{\frac{1}{2a}}}^{d}\frac{c}{i^{a}}\left(1-\frac{2Kc\eta}{i^{a}}+O\left(\frac{K^2}{i^{a}\log^b (i+1)}\right)\right)^N \\
    &=\frac{1}{2}\sum_{i=(KN)^{\frac{1}{a}}\left(\frac{K}{\log^bN}\right)^{\frac{1}{2a}}}^{d}\frac{c}{i^{a}}e^{N\log \left(1-\frac{2Kc\eta}{i^{a}}+O\left(\frac{K^2}{i^{a}\log^b (i+1)}\right)\right)} \\
    &=\frac{1}{2}\sum_{i=(KN)^{\frac{1}{a}}\left(\frac{K}{\log^bN}\right)^{\frac{1}{2a}}}^{d}\frac{c}{i^{a}}e^{\frac{-2KNc\eta}{i^{a}}+O\left(\frac{K^2N}{i^{2a}}\right)+O\left(\frac{K^2N}{i^{a}\log^b (i+1)}\right)} \\
    &=\frac{1}{2}\sum_{i=(KN)^{\frac{1}{a}}\left(\frac{K}{\log^bN}\right)^{\frac{1}{2a}}}^{d}\frac{c}{i^{a}}e^{\frac{-2KNc\eta}{i^{a}}}(1+o(1)). 
\end{align*}
We recall $K_1(x) = \frac{c}{x^{a}}e^{\frac{-2KNc\eta}{x^{a}}}$.
We can verify that $\arg\max K_1(x) = \Theta\left((KN)^{\frac{1}{a}}\right)$ and $\max K_1(x) = \Theta\left(\frac{1}{KN}\right)$ through a direct calculation.
So for $\psi_2$ we have
\begin{align*}
    \psi_2&\geq \frac{1}{2}\sum_{i=(KN)^{\frac{1}{a}}}^{3(KN)^{\frac{1}{a}}}\frac{c}{i^{a}}e^{\frac{-2KNc\eta}{i^{a}}}(1+o(1)) \\
    &\gtrsim \frac{(KN)^{\frac{1}{a}}}{KN}.
\end{align*}
We can verify that $\psi_1 = o(\psi_2)$ as a direct consequence.
We define 
\begin{align*}
    \Tilde{V}_2(K,N;\eta) &= \frac{1}{2}\sum_{i=d_2 + 1}^{d}\frac{c}{i^{a}}e^{\frac{-2KNc\eta}{i^{a}}} \\
    &=\frac{1}{2}\sum_{i=d_2 + 1}^{(KN)^{\frac{1}{a}}\left(\frac{K}{\log^bN}\right)^{\frac{1}{2a}}}\frac{c}{i^{a}}e^{\frac{-2KNc\eta}{i^{a}}}+\frac{1}{2}\sum_{i=(KN)^{\frac{1}{a}}\left(\frac{K}{\log^bN}\right)^{\frac{1}{2a}}}^{d}\frac{c}{i^{a}}e^{\frac{-2KNc\eta}{i^{a}}} \\
    &:=\Tilde{\psi_1}+\Tilde{\psi_2}.
\end{align*}
We have $\psi_2 = \Tilde{\psi_2}(1+o(1))$, and
\begin{align*}
    \Tilde{\psi_1}&\lesssim \frac{(KN)^{\frac{1}{a}}\left(\frac{K}{\log^bN}\right)^{\frac{1}{2a}}}{KN} = o(\Tilde{\psi_2}). \\
\end{align*}
So $V_2(K,N;\eta)=\Tilde{V}_2(K,N;\eta)(1+o(1))$.

Finally, we derive a matching upper and lower bound for $\Tilde{V}_2(K,N;\eta)$ and conclude the proof:
\begin{align*}
    \Tilde{V}_2(K,N;\eta) &\geq \Tilde{J_2} \gtrsim J_2 \gtrsim \frac{1}{(KN)^{\frac{a-1}{a}}}. \\
    \Tilde{V}_2(K,N;\eta) &=\frac{1}{2}\sum_{i=d_2 + 1}^{(KN)^{\frac{1}{a}}}\frac{c}{i^{a}}e^{\frac{-2KNc\eta}{i^{a}}}+\frac{1}{2}\sum_{i=(KN)^{\frac{1}{a}}+1}^{d}\frac{c}{i^{a}}e^{\frac{-2KNc\eta}{i^{a}}} \\
    &\leq \frac{1}{2}\sum_{i=1}^{(KN)^{\frac{1}{a}}}\frac{c}{i^{a}}e^{\frac{-2KNc\eta}{i^{a}}}+\frac{1}{2}\sum_{i=(KN)^{\frac{1}{a}}+1}^{d}\frac{c}{i^{a}} \\
    &\lesssim \frac{(KN)^{\frac{1}{a}}}{KN}+\frac{1}{\left(KN\right)^{\frac{a-1}{a}}} \lesssim \frac{1}{\left(KN\right)^{\frac{a-1}{a}}}.
\end{align*}
Then we complete the proof.
\end{proof}
Notice that $\Tilde{V}_2(K,N;\eta)$ and $V_2(K,N;\eta)$ are identical to each other, so we can directly apply \Cref{lem:power-S_2-derivative} and \Cref{cor:one-hot-power-eta-approx} in the remainding proof of \Cref{thm:one-hot-E_K-log-spectrum}.

\paragraph{The Range of Optimal Learning Rate.}

First, take $\eta' = 2 \log^b(2)-\epsilon$, where $\epsilon := \frac{(a-1)d_2^{a}}{ac}\frac{\log KN}{KN}$, and we have
    \begin{align*}
        V_1(K, N; \eta') &\leq \frac{d_2c}{2}\left(1-\frac{c\log^b(2)}{d_2^{a}}+\frac{c\log^b(2)}{d_2^{a}}\left(1-\frac{\epsilon}{\log^b(2)}\right)^{2K}\right)^N \\
        &=\frac{d_2c}{2}\left(1-\frac{2Kc\log^b(2)}{d_2^{a}}\times\frac{\epsilon}{\log^b(2)}(1+o(1))\right)^N \\
        &=\frac{d_2c}{2}\left(1-\frac{2(a-1)}{a}\frac{\log KN}{N}(1+o(1))\right)^N \\
        &=\frac{d_2c}{2}e^{N\log\left(1-\frac{2(a-1)}{a}\frac{\log KN}{N}(1+o(1))\right)} \\
        &\eqsim \frac{1}{(KN)^{\frac{2(a-1)}{a}}} =o(V_2(K, N;\eta')),
    \end{align*}
    where the last inequality comes from \Cref{lem:one-hot-log-loss-estimate-small-K}. Then we have
    \begin{align*}
        \Bar{\cR}(K, N;\eta')&=V_1(K, N;\eta')+V_2(K, N;\eta') \\
        &=\Tilde{V_2}(K, N;\eta')(1+o(1)) \\
        &=\Tilde{V_2}(K, N;2)(1+o(1)) \\
        &=\left(\frac{1}{2}\sum_{i=d_1+1}^{d}\frac{c}{i^{a}}e^{\frac{-4\log^b(2)KNc}{i^{a}}}\right)(1+o(1)).
    \end{align*}
    Then we prove that $\eta^* \in [2 \log^b(2)-o(1), 2 \log^b(2)]$.
    We prove by contradiction, and assume that there exist a constant $\epsilon>0$ and a sequence $(N_i)_{i=1}^{\infty}\to \infty$ such that $\eta^*(N_i)\leq 2 \log^b(2)-\epsilon$ for all $i\geq 1$. As we only analyze with respect to the sequence $(N_i)_{i=1}^{\infty}$, without loss of generality, we take $(N_i)_{i=1}^{\infty}=\mathds{N}$. By \Cref{lem:one-hot-loss-estimate-small-K}, we have 
    \begin{align*}
        \Bar{\cR}^*(K, N)\geq V_2(K, N;\eta^*)&=\Tilde{V_2}(K, N; \eta^*)(1+o(1)) \\
        &\geq\left[\Tilde{V_2}(K, N;2)+\epsilon\frac{\partial}{\partial \eta}\Tilde{V_2}\left(K, N; 2 \log^b(2)\right)\right](1+o(1))>\Bar{\cR}(K, N;\eta')
    \end{align*}
    when $N$ is sufficiently large, which is a contradiction.
    So
    \begin{align*}
        \Bar{\cR}^*(K, N)&=V_1(K, N;\eta^*)+V_2(K, N;\eta^*) \\
        &=V_1(K, N;\eta^*)+\Tilde{V_2}(K, N;\eta^*)(1+o(1)) \\
        &=V_1(K, N;\eta^*)+\Tilde{V_2}\left(K, N;2 \log^b(2)\right)(1+o(1))\leq\Bar{\cR}(K, N;\eta').
    \end{align*}
    So $V_1(K, N;\eta^*)=o\left(\Tilde{V_2}\left(K, N;2 \log^b(2)\right)\right)$, and $\Bar{\cR}^*(K, N)=\Tilde{V_2}(K, N;2\log^b(2))(1+o(1)) \eqsim \frac{1}{(KN)^{\frac{a-1}{a}}}$.

\subsubsection{Proof of \Cref{thm:one-hot-log-scaling-law}, Large-$K$ case}
\paragraph{The Expected Excess Risk Approximation.}
\begin{lemma} \label{lem:one-hot-log-loss-estimate-large-K}
    Suppose the assumptions \Cref{thm:one-hot-E_K-log-spectrum} hold. When $K = \omega(\log^b N)$, we have $V_2(K,N;\eta) \eqsim \frac{1}{\left(N\log^b N\right)^{\frac{a-1}{a}}}$.
\end{lemma}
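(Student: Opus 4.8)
The plan is to follow the template of the power‑law case, \Cref{lem:one-hot-loss-estimate-large-K}. Starting from the closed form of \Cref{thm:one-hot-loss-estimate}, which under \Cref{asp:log-power-decay-prior} (where $c=\Theta(1)$, $(\mP\Lambda)_i=c\,i^{-a}$, $p_i=c\log^b(i+1)/i^{a}$, $\eta\Lambda_i=\eta/\log^b(i+1)$) reads
\[
V_2(K,N;\eta)=\sum_{i=d_2+1}^{d}\frac{c}{i^{a}}\Bigl(1-\frac{c\log^b(i+1)}{i^{a}}\bigl(1-(1-\tfrac{\eta}{\log^b(i+1)})^{2K}\bigr)\Bigr)^{N},
\]
I would show that the mass of this sum concentrates around the \emph{memorization scale} $i_\star:=(N\log^b N)^{1/a}$, where each summand is of order $1/i_\star^{a}$ and the effective window has width $\Theta(i_\star)$, giving $i_\star^{-(a-1)}=(N\log^b N)^{-(a-1)/a}$.

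First I would establish that the contraction factor is uniformly negligible up to any polynomial depth: fixing a constant $C\ge 1$, for $d_2<i\le N^{C}$ one has $\log^b(i+1)\lesssim\log^b N$, hence $(1-\tfrac{\eta}{\log^b(i+1)})^{2K}\le\exp(-2K\eta/\log^b(i+1))\le\tfrac12$ for large $N$ because $K=\omega(\log^b N)$, while the tail obeys $\sum_{i>N^{C}}c\,i^{-a}\lesssim N^{-C(a-1)}=o\bigl((N\log^b N)^{-(a-1)/a}\bigr)$ once $C>1/a$ (in particular for $C=1$, since $a>1$). For the upper bound I would then use $1-x\le e^{-x}$ to get $V_2(K,N;\eta)\lesssim\sum_{d_2<i\le N^{C}}\tfrac{c}{i^{a}}\exp\bigl(-\tfrac{cN\log^b(i+1)}{2i^{a}}\bigr)+o\bigl((N\log^b N)^{-(a-1)/a}\bigr)$, split the sum at $i_\star$ (the part $i>i_\star$ is $\lesssim\sum_{i>i_\star}c\,i^{-a}\asymp i_\star^{-(a-1)}$; the part $i\le i_\star$ is handled by a dyadic decomposition $i\in[2^{-(k+1)}i_\star,2^{-k}i_\star]$, on which $\log(i+1)\asymp\tfrac1a\log N$, so the $k$‑th block is $\lesssim(N\log^b N)^{-(a-1)/a}\,2^{k(a-1)}e^{-c2^{ka}/2}$ and $\sum_k 2^{k(a-1)}e^{-c2^{ka}/2}<\infty$, the remaining tiny‑$i$ part being super‑polynomially small since there $N/i^{a}\to\infty$). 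This is exactly the computation packaged by \Cref{lem:loss-matching-upper-lower-bound} and \Cref{lem:numerical-power}, applied with the extra logarithmic correction.

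For the lower bound I would restrict the sum to the window $i\in[i_\star,2i_\star]$, admissible because $d=\Omega((KN)^{1/a})$ together with $K=\omega(\log^b N)$ forces $d\ge 2i_\star$ for large $N$; on this window $\log^b(i+1)\asymp\log^b N$ and $i^{a}\asymp N\log^b N$, so $p_i\le C'/N$, whence $\bigl(1-p_i(1-(1-\eta\Lambda_i)^{2K})\bigr)^{N}\ge(1-C'/N)^{N}\gtrsim 1$ and $c\,i^{-a}\gtrsim 1/(N\log^b N)$; summing over the $\Theta(i_\star)$ indices yields $V_2(K,N;\eta)\gtrsim i_\star/(N\log^b N)=(N\log^b N)^{-(a-1)/a}$. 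Combining the two bounds gives $V_2(K,N;\eta)\asymp(N\log^b N)^{-(a-1)/a}$, as claimed.

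The main obstacle is the upper‑bound estimate: one must show that the exponential weight $\exp(-cN\log^b(i+1)/(2i^{a}))$ genuinely concentrates the sum at $i_\star=(N\log^b N)^{1/a}$ even though the logarithmic factor is slowly varying and appears both inside the exponent and inside $p_i$, and at the same time that the truly unlearned tail $i\gg\mathrm{poly}(N)$ — where the bound $(1-\eta\Lambda_i)^{2K}\le\tfrac12$ no longer holds — contributes only a lower‑order term. The uniform smallness of the contraction factor up to polynomial depth, guaranteed by $K=\omega(\log^b N)$, is precisely what makes this split work, and it is the feature that distinguishes this spectrum from the power‑law one.
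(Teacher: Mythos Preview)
Your proposal is correct and follows essentially the same strategy as the paper: both arguments identify the concentration scale $i_\star=(N\log^b N)^{1/a}$, use $K=\omega(\log^b N)$ to force $(1-\eta/\log^b(i+1))^{2K}\le \tfrac12$ on the relevant range, and handle the lower bound by restricting to a window around $i_\star$. The only cosmetic difference is in the upper bound bookkeeping: the paper splits once at $i_\star$ and bounds the small-$i$ part by $(\text{count})\times(\text{max of summand})=\Theta(i_\star)\cdot\Theta(1/(N\log^b N))$, whereas you introduce an extra cut at $N^{C}$ and run a dyadic sum---both are standard and yield the same order.
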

\begin{proof}[Proof of \Cref{lem:one-hot-log-loss-estimate-large-K}]
By $K =\omega(\log^b N)$, there exists a constant $N_6 > 0$ such that $K > \log^b N$ when $N\geq N_6$.
We notice that when $i = \Theta\left(\left(N\log^b N\right)^{\frac{1}{a}}\right)$, $\log (i+1) = \Theta(\log N)$. Then, when $N\geq N_6$ and $d\geq 3(KN)^{\frac{1}{a}}\geq 3\left(N\log^b N\right)^{\frac{1}{a}}$, we have
\begin{align*}
    V_2(K,N;\eta)&\geq \frac{1}{2}\sum_{i=\left(N\log^b N\right)^{\frac{1}{a}}}^{3\left(N\log^b N\right)^{\frac{1}{a}}}\frac{c}{i^{a}}\left(1-\frac{c\log^b (i+1)}{i^{a}}\right)^N \\
    &\geq \frac{1}{2}\frac{2\left(N\log^b N\right)^{\frac{1}{a}}}{3^aN\log^b N}(1-\frac{c_{11}}{N})^N \\
    &\gtrsim \frac{1}{\left(N\log^b N\right)^{\frac{a-1}{a}}}.
\end{align*}
For the upper bound, we have
\begin{align*}
    \bar\cR(K,N;\eta)&\leq \frac{1}{2}\sum_{i=1}^{\infty}\frac{c}{i^{a}}\left(1-\frac{c\log^b (i+1)}{i^{a}}+\frac{c\log^b (i+1)}{i^{a}}\left(1-\frac{\eta}{\log^b (i+1)}\right)^{2K}\right)^N \\
    &\leq \frac{1}{2}\sum_{i=1}^{\left(N\log^b N\right)^{\frac{1}{a}}}\frac{c}{i^{a}}\left(1-\frac{c\log^b (i+1)}{i^{a}}+\frac{c\log^b (i+1)}{i^{a}}\left(1-\frac{\eta}{\log^b (i+1)}\right)^{2K}\right)^N\\
    &~~~~+\frac{1}{2}\sum_{i=\left(N\log^b N\right)^{\frac{1}{a}}+1}^{\infty}\frac{c}{i^{a}}. 
\end{align*}
When $K = \omega(\log^b N)$ and $i\leq \left(N\log^b N\right)^{\frac{1}{a}}$, 
\begin{align*}
    \left(1-\frac{\eta}{\log^b (i+1)}\right)^K &\leq \left(1-\frac{c_{12}}{\log^{b} N}\right)^K = e^{K\log \left(1-\frac{c_{12}}{\log^{b} N}\right)} \\
    &\leq e^{-K\frac{c_{12}}{\log^{b} N}} = o(1).
\end{align*} 
So there exists $N_7$ such that when $N\geq N_7$, $\left(1-\frac{\eta}{\log^b (i+1)}\right)^K \leq \frac{1}{2}$, and when $N\geq\max(N_6, N_7)$,
\begin{align*}
    \bar\cR(K,N;\eta)
    &\leq \frac{1}{2}\sum_{i=1}^{\left(N\log^b N\right)^{\frac{1}{a}}}\frac{c}{i^{a}}\left(1-\frac{c\log^b (i+1)}{2i^{a}}\right)^N+\frac{1}{2}\sum_{i=\left(N\log^b N\right)^{\frac{1}{a}}+1}^{\infty}\frac{c}{i^{a}}.
\end{align*}
One can derive that $\max_x\frac{c}{x^{a}}\left(1-\frac{c\log^b (x+1)}{2x^{a}}\right)^N = \Theta\left(\frac{1}{N\log^bN}\right)$.
So finally, we have
\begin{align*}
    V_2(K,N;\eta) \le \bar\cR(K,N;\eta)
    &\lesssim \frac{1}{\left(N\log^b N\right)^{\frac{a-1}{a}}}+\frac{1}{\left(N\log^b N\right)^{\frac{a-1}{a}}}\\
    &\lesssim \frac{1}{\left(N\log^b N\right)^{\frac{a-1}{a}}},
\end{align*}
and we get the result.
\end{proof}
\paragraph{The Range of Optimal Learning Rate.}

First, take $\eta' = 1.5 \log^b(2)$, and we have
    \begin{align*}
        V_1(K, N; \eta') &\leq \frac{d_2c}{2}\left(1-\frac{c\log^b(2)}{d_2^{a}}+\frac{c\log^b(2)}{d_2^{a}}\max\left(0.5, 1-\frac{1.5 \log^b(2)}{\log^b(d_2 +1)}\right)^{2K}\right)^N \\
        &=\frac{d_1c}{2}\left(1-\Theta(1)\right)^N \\
        &=o(V_2(K, N;\eta')),
    \end{align*}
    where the last inequality comes from \Cref{lem:one-hot-loss-estimate-large-K}. Then we have
    \begin{align*}
        \Bar{\cR}(K, N;\eta')&=V_1(K, N;\eta')+V_2(K, N;\eta') \\
        &=\Tilde{V_2}(K, N;\eta')(1+o(1)) 
    \end{align*}
    It is obvious that $\eta^* \in \left[\log^b(2), 2\log^b(2)\right]$. We know that
    \begin{align*}
        \Bar{\cR}^*(K, N)&=V_1(K, N;\eta^*)+V_2(K, N;\eta^*) \leq\Bar{\cR}(K, N;\eta')=V_2(K, N, \eta')(1+o(1)) \\
        &\eqsim \frac{1}{\left(N\log^b N\right)^{\frac{a-1}{a}}}.
    \end{align*}
\subsection{$E(K,N)$ for Logarithmic Power-Law Spectrum: Proof of \Cref{thm:one-hot-E_K-log-spectrum}} \label{sec:proof-one-hot-Ek-log-power-spectrum}
\subsubsection{Proof of \Cref{thm:one-hot-E_K-log-spectrum}, Small-$K$ Case}The proof here is almost a reproduction of the proof in Appendix \ref{sec:proof-power-spectrum-Ek-small}.
\subsubsection{Proof of \Cref{thm:one-hot-E_K-log-spectrum}, Large-$K$ Case}

Consider the multi-epoch training setting with $d=\Omega\left(\left(KN\right)^{\frac{1}{a+b}}\right)$.
By \Cref{lem:one-hot-log-loss-estimate-large-K,lem:loss-matching-upper-lower-bound}, there exist constants $C_7,C_8>0$ such that
\begin{equation}\label{eq:ME-two-sided}
\frac{C_8}{\,N(\log N)^b\,}
\;\le\;
\Bar{\cR}^*(K,N)
\;\le\;
\frac{C_7}{\,N(\log N)^b\,}.
\end{equation}

Let $T'$ be defined by matching the expected risks:
\begin{equation}\label{eq:ME-equal-risk}
\Bar{\cR}^*(K,N)=\Bar{\cR}^*(1,T').
\end{equation}
In the one-pass case, we use the constants $C_3,C_4>0$ (as defined in the proof of \Cref{thm:one-hot-E_K-v2}) to control $\Bar{\cR}^*(1,T')$.

We claim that
\begin{equation}\label{eq:ME-Tprime-band}
\left(\frac{C_4}{C_7}\right)^{\!\frac{a}{a-1}}\,
N(\log N)^b
\;\;\le\;\;
T'
\;\;\le\;\;
\left(\frac{C_3}{C_8}\right)^{\!\frac{a}{a-1}}\,
N(\log N)^b .
\end{equation}

\begin{proof}[Proof of the claim]
We argue by contradiction.

\begin{enumerate}
\item \textbf{Upper bound violation.}
If 
$T' > \bigl(\tfrac{C_3}{C_8}\bigr)^{\frac{a}{a-1}}\,N(\log N)^b$,
then the one-pass upper bound together with \Cref{eq:ME-two-sided} (multi-epoch lower bound) imply
\[
\Bar{\cR}^*(K,N)<\Bar{\cR}^*(1,T'),
\]
which contradicts the defining equality \Cref{eq:ME-equal-risk}.

\item \textbf{Lower bound violation.}
If 
$T' < \bigl(\tfrac{C_4}{C_7}\bigr)^{\frac{a}{a-1}}\,N(\log N)^b$,
then $d=\Omega\left(\left(KN\right)^{\frac{1}{a+b}}\right)$ yields
\[
d=\Omega\bigl((N(\log N)^b)^{1/a}\bigr)
=\Omega\bigl((T')^{1/a}\bigr),
\]
so the one-pass lower bound together with \Cref{eq:ME-two-sided} (multi-epoch upper bound) give
\[
\Bar{\cR}^*(K,N)>\Bar{\cR}^*(1,T'),
\]
again contradicting \Cref{eq:ME-equal-risk}.
\end{enumerate}
Both violations are impossible; hence \Cref{eq:ME-Tprime-band} holds. 
\end{proof}

Thus, in the large-$K$ multi-epoch regime, the matched one-epoch training time satisfies $T'=\Theta\bigl(N(\log N)^b\bigr)$ up to fixed constants. Therefore, the desired characterization of $E(K,N)$ follows directly.

\section{Additional Technical lemmas}
\begin{lemma} \label{lem:dot-product}For any PSD matrix $\mA$, it holds that
\begin{align*}
    \inp{\mH}{\mA} \leq \text{tr}(\mH)\|\mA\|.
\end{align*}
\end{lemma}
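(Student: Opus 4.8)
The statement to prove is $\inp{\mH}{\mA} \le \tr(\mH)\norm{\mA}$ for any positive semidefinite matrix $\mA$, where $\mH$ is the (diagonal, PSD) Hessian from the setup. Here $\inp{\cdot}{\cdot}$ is the trace inner product, so $\inp{\mH}{\mA} = \tr(\mH\mA)$.

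The plan is to diagonalize and use the spectral bound $\mA \preceq \norm{\mA}\mI$. First I would write $\inp{\mH}{\mA} = \tr(\mH\mA)$. Since $\mA$ is PSD, $\norm{\mA}\mI - \mA \succeq 0$; since $\mH \succeq 0$ as well, the product trace $\tr\bigl(\mH(\norm{\mA}\mI - \mA)\bigr) \ge 0$ because the trace of a product of two PSD matrices is nonnegative (write $\mH = \mH^{1/2}\mH^{1/2}$ and note $\tr(\mH^{1/2}(\norm{\mA}\mI-\mA)\mH^{1/2}) = \tr\bigl((\norm{\mA}\mI-\mA)^{1/2}\mH(\norm{\mA}\mI-\mA)^{1/2}\bigr)\ge 0$). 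Expanding gives $\norm{\mA}\tr(\mH) - \tr(\mH\mA) \ge 0$, i.e. $\tr(\mH\mA) \le \norm{\mA}\tr(\mH)$, which is the claim. Alternatively, since $\mH = \diag(\lambda_1,\dots,\lambda_d)$ in the paper's setup, $\tr(\mH\mA) = \sum_i \lambda_i \mA_{ii} \le \sum_i \lambda_i \norm{\mA} = \norm{\mA}\tr(\mH)$, using that each diagonal entry of a PSD matrix satisfies $\mA_{ii} = \ve_i^\top\mA\ve_i \le \norm{\mA}$ and that $\lambda_i \ge 0$.

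There is essentially no obstacle here; it is a one-line spectral estimate. The only thing to be careful about is that the lemma is stated for a general PSD matrix $\mA$ (not necessarily diagonal), so I would phrase it with the operator-norm bound $\mA \preceq \norm{\mA}\mI$ and the positivity of $\tr(\mH\mA)$ for PSD $\mH,\mA$, rather than relying on $\mH$ being diagonal — though in the paper's setting $\mH$ is diagonal and the coordinatewise argument is the cleanest.

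\begin{proof}[Proof of \Cref{lem:dot-product}]
Recall $\inp{\mH}{\mA} = \tr(\mH\mA)$. Since $\mA$ is positive semi-definite, $\norm{\mA}\mI - \mA \succeq 0$. Writing $\mH = \mH^{1/2}\mH^{1/2}$ with $\mH^{1/2} \succeq 0$,
\[
\norm{\mA}\tr(\mH) - \tr(\mH\mA) = \tr\bigl(\mH(\norm{\mA}\mI - \mA)\bigr) = \tr\bigl(\mH^{1/2}(\norm{\mA}\mI - \mA)\mH^{1/2}\bigr) \ge 0,
\]
where the last inequality holds because $\mH^{1/2}(\norm{\mA}\mI - \mA)\mH^{1/2}$ is positive semi-definite and hence has nonnegative trace. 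Rearranging yields $\inp{\mH}{\mA} = \tr(\mH\mA) \le \tr(\mH)\norm{\mA}$.
\end{proof}
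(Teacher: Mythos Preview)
Your proof is correct and essentially the same as the paper's: the paper expands $\mH=\sum_i \lambda_i q_i q_i^\top$ and bounds each quadratic form $q_i^\top \mA q_i \le \|\mA\|$, while you phrase the same spectral bound as the operator inequality $\mA \preceq \|\mA\|\mI$ and use nonnegativity of $\tr(\mH^{1/2}(\|\mA\|\mI-\mA)\mH^{1/2})$. The two arguments are equivalent one-line estimates.
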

\begin{proof}
    We denote the PSD decomposition of $\mH$ by 
    \[\mH = \sum_{i=1}^d\lambda_iq_iq_i^{\top}\]
    where $\lambda_i ~\text{and} ~q_i$ are the eigenvalues and corresponding eigenvectors of $\mH$.
    So we get
    \begin{align*}
        \inp{\mH}{\mA}&=\inp{\sum_{i=1}^d\lambda_iq_iq_i^{\top}}{\mA} \\        &=\sum_{i=1}^d\lambda_iq_i^{\top}\mA q_i \\
        &\leq \sum_{i=1}^d\lambda_i\|\mA\| \\
        &=\text{tr}(\mH)\|\mA\|,
    \end{align*}
    which completes the proof.
\end{proof}
\begin{lemma} \label{lem:numerical-power}
    When $l\geq 1$, we have \[(1+x)^l \leq 1+2lx, \ x\in [0, \frac{\log 2}{l}]\]
\end{lemma}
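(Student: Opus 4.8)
\textbf{Proof proposal for Lemma~\ref{lem:numerical-power}.}

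The plan is to reduce the inequality $(1+x)^l \le 1+2lx$ on $x\in[0,\tfrac{\log 2}{l}]$ to a one-variable calculus or convexity argument. First I would observe that the endpoint constraint $x \le \tfrac{\log 2}{l}$ is exactly what makes $(1+x)^l \le e^{lx} \le 2$; this boundedness of $(1+x)^l$ by $2$ is the structural fact that should be exploited. The cleanest route is to write $(1+x)^l = e^{l\log(1+x)}$ and use $\log(1+x)\le x$, giving $(1+x)^l \le e^{lx}$, and then to prove the auxiliary scalar inequality $e^u \le 1+2u$ for $u\in[0,\log 2]$ with $u := lx$. That auxiliary inequality follows by setting $g(u) = 1+2u-e^u$, noting $g(0)=0$, $g(\log 2) = 1+2\log 2 - 2 = 2\log 2 - 1 > 0$, and $g''(u) = -e^u < 0$, so $g$ is concave on $[0,\log 2]$ and hence stays above the chord joining its (nonnegative) endpoint values; therefore $g(u)\ge 0$ throughout the interval. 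Composing the two steps yields $(1+x)^l \le e^{lx} \le 1+2lx$ for $x\in[0,\tfrac{\log 2}{l}]$.

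An alternative I would keep in reserve, in case a self-contained elementary argument is preferred, is a direct convexity argument on $h(x) = (1+x)^l$: since $h$ is convex, on the interval $[0,\tfrac{\log 2}{l}]$ it lies below the secant line through $(0,h(0))=(0,1)$ and $(\tfrac{\log 2}{l}, h(\tfrac{\log 2}{l}))$. It then suffices to check that this secant line lies below $1+2lx$, which reduces to verifying the single inequality $(1+\tfrac{\log 2}{l})^l \le 2$ at the right endpoint (the left endpoints agree, both equal to $1$). That endpoint bound is again immediate from $(1+\tfrac{\log 2}{l})^l \le e^{\log 2} = 2$. So either way the crux is the elementary estimate $1+t \le e^t$ specialized appropriately.

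I do not anticipate a genuine obstacle here; the only thing requiring a little care is getting the interval endpoints to line up so that the secant/chord comparison is valid — specifically ensuring that the quantity being bounded ($(1+x)^l$ or $e^{lx}$) does not exceed $2$, which is precisely guaranteed by the hypothesis $x \le \tfrac{\log 2}{l}$, and that $l \ge 1$ is used (if at all) only to keep $x\mapsto(1+x)^l$ convex. I would write up the $e^u \le 1+2u$ version since it is shortest and makes the role of each hypothesis transparent.
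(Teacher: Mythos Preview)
Your proposal is correct; both routes you sketch go through cleanly. The paper, however, argues differently: it sets $f(x)=(1+x)^l-(1+2lx)$ and shows $f'(x)=l(1+x)^{l-1}-2l\le 0$ on the interval by first using the convexity inequality $2^t-1\ge(\log 2)t$ at $t=1/l$ to get $x\le\tfrac{\log 2}{l}\le 2^{1/l}-1$, hence $(1+x)^{l-1}\le 2^{(l-1)/l}\le 2$, and then concludes from $f(0)=0$ and $f'\le 0$.

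The main difference is where the ``key convexity fact'' is spent. The paper applies it to bound $1+x\le 2^{1/l}$ and then controls the derivative directly; you instead pass through the exponential via $(1+x)^l\le e^{lx}$ and reduce to the one-variable inequality $e^u\le 1+2u$ on $[0,\log 2]$. Your route has the advantage that it makes the role of the endpoint $\tfrac{\log 2}{l}$ completely transparent (it is exactly $u\le\log 2$) and, as you note, does not actually use $l\ge 1$ at all in the first approach. The paper's route is slightly more hands-on but equally short. Your alternative secant argument is also fine; just note that strictly what is needed at the right endpoint is $(1+\tfrac{\log 2}{l})^l\le 1+2\log 2$, and your stronger bound $\le 2$ implies this.
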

\begin{proof}
    We define $f(x):=(1+x)^l-(1+2lx)$.
    Calculating the derivative and notice the fact that $2^x - 1 \geq (\log2)x$, we obtain
    \begin{align*}
        f^{'}(x)&=l(1+x)^{l-1}-2l \\
        &\leq l(1+2^{\frac{1}{l}}-1)^{l-1}-2l \\
        &\leq l\times2^{\frac{l-1}{l}} -2l \leq 0.
    \end{align*}
    The above equation completes the proof.
\end{proof}
\begin{lemma} \label{lem:numerical-power-minus}
    When $l\geq 1$, we have \[(1-x)^{2l} \leq 1-lx, \ x\in [0, \frac{1}{6l}]\].
\end{lemma}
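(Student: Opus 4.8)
\textbf{Proof plan for Lemma~\ref{lem:numerical-power-minus}.}
The statement to prove is: for $l \ge 1$ and $x \in [0, \tfrac{1}{6l}]$, we have $(1-x)^{2l} \le 1 - lx$. The plan is to reduce the inequality to a convenient scalar form and then bound it using elementary estimates. First I would handle the trivial endpoint $x = 0$, where both sides equal $1$, and then assume $x \in (0, \tfrac{1}{6l}]$. The natural route is to take logarithms: since $1 - lx \ge 1 - \tfrac{1}{6} > 0$ on the allowed range, the claim is equivalent to $2l \log(1-x) \le \log(1 - lx)$. Using the standard bounds $\log(1-x) \le -x$ (valid for $x < 1$) on the left, it suffices to show $-2lx \le \log(1-lx)$, i.e. $\log(1-lx) \ge -2lx$. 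Substituting $u := lx \in (0, \tfrac{1}{6}]$, this becomes $\log(1-u) \ge -2u$ for $u \in (0, \tfrac{1}{6}]$.

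Next I would verify the scalar inequality $\log(1-u) \ge -2u$ on $[0, \tfrac16]$. Define $g(u) := \log(1-u) + 2u$; then $g(0) = 0$ and $g'(u) = -\tfrac{1}{1-u} + 2 = \tfrac{1 - 2u}{1-u}$, which is nonnegative precisely when $u \le \tfrac12$. Since $\tfrac16 \le \tfrac12$, $g$ is nondecreasing on $[0,\tfrac16]$, hence $g(u) \ge g(0) = 0$ there, which gives exactly $\log(1-u) \ge -2u$. Tracing back through the substitution $u = lx$ and the bound $\log(1-x) \le -x$ yields $2l\log(1-x) \le -2lx \le \log(1-lx)$, and exponentiating recovers $(1-x)^{2l} \le 1-lx$, as desired.

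One mild technical point worth checking is that the chain of inequalities only needs $x < 1$ for $\log(1-x) \le -x$ to hold, which is automatic since $x \le \tfrac{1}{6l} \le \tfrac16 < 1$; similarly $lx \le \tfrac16 < 1$ ensures $\log(1-lx)$ is well-defined and the target $1 - lx$ is positive. I do not anticipate a genuine obstacle here — the lemma is an elementary calculus estimate and the only thing to be careful about is choosing the right intermediate bound so that the constant $\tfrac{1}{6l}$ (rather than something larger) is what makes the monotonicity argument go through; the factor $2$ in the exponent combined with the crude bound $\log(1-x)\le -x$ is exactly what forces the restriction $u \le \tfrac12$, and $\tfrac16$ sits comfortably inside that. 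If one preferred to avoid logarithms entirely, an alternative is to expand $(1-x)^{2l}$ and compare term by term, but the logarithmic route is cleaner and I would present that one.
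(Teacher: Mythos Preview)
Your proof is correct and takes a genuinely different route from the paper's. The paper works directly with the difference $g(x) := (1-x)^{2l} - (1-lx)$, computes $g'(x) = -2l(1-x)^{2l-1} + l$, and shows $g' \le 0$ on $[0,\tfrac{1}{6l}]$ by proving $\tfrac{1}{6l} \le 1 - 2^{-1/(2l-1)}$ via the convexity estimate $2^{-x} \le 1 - \tfrac{x}{2}$ on $[0,1]$; since $g(0)=0$ this yields $g \le 0$ on the interval. Your logarithmic approach eliminates the $l$-dependence earlier: after the crude bound $\log(1-x) \le -x$, the substitution $u = lx$ collapses the problem to the single-variable inequality $\log(1-u) \ge -2u$ on $[0,\tfrac{1}{6}]$, handled by one derivative computation. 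Your route is arguably cleaner and makes it transparent that the constraint $x \le \tfrac{1}{6l}$ is not sharp (your chain works for any $x \le \tfrac{1}{2l}$), while the paper's argument stays in the original variable but needs the auxiliary convexity bound on $2^{-x}$.
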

\begin{proof}
    We define $g(x):=(1-x)^{2l}-(1-lx)$.
    Calculating the derivative, we obtain
    \begin{align*}
        g'(x)&=-2l(1-x)^{2l-1}+l \leq 0 ~~~\text{when} ~~~x\in[0, 1-2^{-\frac{1}{2l-1}}].
    \end{align*}
    Notice that $h(x) = 2^x$ is convex, so for $x\in[0, 1]$, we have
    \begin{align*}
        h(-x+0\times(1-x))\leq xh(-1)+(1-x)h(0),
    \end{align*}
    that is 
    \begin{align*}
        2^{-x}\leq 1-\frac{x}{2}~~~\text{when} ~~~x\in[0, 1]. 
    \end{align*}
    So 
    \begin{align*}
        1-2^{-\frac{1}{2l-1}}&\geq1-\left(1-\frac{1}{2(2l-1)}\right) \\
        &=\frac{1}{2(2l-1)}\geq\frac{1}{6l}~~~\text{when} ~~~l\geq 1,
    \end{align*}
    which concludes the proof.
\end{proof}
\begin{lemma} \label{lem:un-shuffle-A-EA-expectation}
    Given N data points such that $\vx_0, \cdots \vx_{n-1} \overset{i.i.d}{\sim} \cN(0, \mH)$, and define $\mA = (\mI-\eta \vx_{N-1}\vx_{N-1}^{\top})\cdots(\mI-\eta \vx_0\vx_0^{\top})$. Then we have
    \[\E\|\mA-\E\mA\|^l \leq \left(\sqrt{\deltaA\eta^2Nl}\right)^l,\]
    where $\deltaA:= \Tilde{C}8eD^4\log d$ for some absolute constant $\Tilde{C} > 0$.
\end{lemma}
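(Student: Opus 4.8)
\textbf{Proof proposal for \Cref{lem:un-shuffle-A-EA-expectation}.}

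The plan is to derive this lemma as essentially a restatement (up to bounded coordinate moments) of \Cref{lem:un-shuffle-A-EA-expectation-main}, which is itself the cited corollary of Theorem~7.1 in \citet{huang2022matrix}. The only gap to close is that \Cref{lem:un-shuffle-A-EA-expectation-main} is stated for data $\vz_j \iidsim \cN(\vzero,\mH)$, whereas here I want to invoke it with the $\vx_j$ appearing in the SGD recursion, which satisfy $\|\vx_j\| \le D$ and $\E[\vx_j\vx_j^\top] = \mH$ but are not necessarily Gaussian. Since we are operating under the bounded-data assumption $\|\vx\|\le D$ of the linear regression setup, the relevant quantity controlling the matrix-product concentration is the operator norm of the one-step multiplicative factors $\mI - \eta\vx_j\vx_j^\top$ together with the second-moment structure $\E[\vx_j\vx_j^\top]=\mH$; the constant $\deltaA = \tilde C \, 8eD^4\log d$ already carries the $D^4$ dependence, so the bound is dimensionally consistent with a bounded-data hypothesis rather than a Gaussian one.

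Concretely, first I would set up notation: write $\mA = \prod_{j=N-1}^{0}(\mI - \eta\vx_j\vx_j^\top)$ and note each factor is of the form $\mI - \mM_j$ with $\mM_j = \eta\vx_j\vx_j^\top \succeq \vzero$ and $\|\mM_j\| \le \eta D^2 \le 1$ by the stability assumption $\eta \le D^{-2}$, so each factor has operator norm at most $1$ and $\E[\mM_j] = \eta\mH$. Second, I would verify that the hypotheses of Theorem~7.1 of \citet{huang2022matrix} hold for this bounded i.i.d.\ sequence: the per-step variance proxy is $\|\E[\mM_j^2]\| \le \eta^2 \E[\|\vx_j\|^2 \vx_j\vx_j^\top] \preceq \eta^2 D^2 \mH$, hence of order $\eta^2 D^4$ in operator norm, which is exactly what feeds into $\deltaA$. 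Third, I would apply the theorem (or \Cref{lem:un-shuffle-A-EA-expectation-main} with the bounded-data version of the constant) to conclude $\E\|\mA - \E\mA\|^l \le (\sqrt{\deltaA \eta^2 N l})^l$ with $\deltaA = \tilde C\, 8eD^4\log d$. Since all the $\mZ^{(k)}_{a\to b}$ objects used later are sub-products of exactly this form over fewer than $N$ factors, the same bound applies with $N$ replaced by the relevant sub-length, which is how this lemma is invoked in \Cref{app:var-term-proof,app:bias-proof}.

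The main obstacle is bookkeeping rather than conceptual: ensuring that the constant $\deltaA$ one gets from the bounded-data instantiation of Theorem~7.1 genuinely matches the Gaussian-case constant in \Cref{lem:un-shuffle-A-EA-expectation-main} up to the absolute factor $\tilde C$, and that the moment bound is stated for the $l$-th moment of the operator norm (not, say, a high-probability tail) so that it plugs directly into the Cauchy--Schwarz and triangle-inequality arguments used downstream. If the cleanest route is instead to reduce to \Cref{lem:un-shuffle-A-EA-expectation-main} verbatim, one would observe that replacing $\vx_j$ by a Gaussian surrogate with the same covariance only changes the per-step second moment by a bounded multiplicative factor (since $\|\vx_j\|\le D$ forces $\E[\|\vx_j\|^2\vx_j\vx_j^\top]\preceq D^2\mH \preceq D^4 \mI$, comparable to the Gaussian fourth-moment bound $\E[\|\vg\|^2\vg\vg^\top]$ for $\vg\sim\cN(\vzero,\mH)$ up to $\|\mH\|\tr(\mH)$ factors absorbed into $\tilde C$), so the same concentration inequality transfers; this comparison is the one place a little care is needed to keep the constant uniform in $d$.
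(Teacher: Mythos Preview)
Your high-level plan---invoke Theorem~7.1 of \citet{huang2022matrix} with the bounded factors $\mY_j = \mI - \eta\vx_j\vx_j^\top$---is exactly what the paper does. However, you have misallocated the effort in two ways.

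First, the Gaussian-versus-bounded detour is unnecessary. The paper's own proof never uses Gaussianity; it uses only the almost-sure bound $\|\mY_j - \E\mY_j\| = \eta\|\mH - \vx_j\vx_j^\top\| \le 2D^2\eta =: \sigma_j$, which feeds directly into the variance proxy $v_{\mA} = \sum_j \sigma_j^2 = 4D^4\eta^2 N$ in Theorem~7.1. The $\cN(\vzero,\mH)$ in the statements of \Cref{lem:un-shuffle-A-EA-expectation-main,lem:un-shuffle-A-EA-expectation} is a red herring (indeed inconsistent with $\|\vx\|\le D$); there is no need to compare fourth moments with a Gaussian surrogate, and your last paragraph can be dropped entirely.

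Second, and more importantly, Theorem~7.1 of \citet{huang2022matrix} yields a \emph{tail bound} of the form $\Pr\{\|\mA - \E\mA\| \ge t\} \le d\exp(-t^2/(2ev_{\mA}))$, not an $l$-th moment bound. Converting this tail bound into the claimed $\E\|\mA - \E\mA\|^l \le (\sqrt{\deltaA\eta^2 N l})^l$ is the actual substance of the proof, not bookkeeping. The paper does this in three steps: (i) absorb the dimensional factor $d$ by restricting to $t^2 \ge 16eD^4\eta^2 N \log d$, yielding a clean sub-Gaussian tail $\Pr\{\|\mQ\|\ge t\} \le \exp(-t^2/(16eD^4\eta^2 N))$ in that range; (ii) dominate $\|\mQ\|$ by a nonnegative sub-Gaussian random variable $Z$ with this tail; (iii) split $\E\|\mQ\|^l$ at the threshold $\sqrt{16eD^4\eta^2 N\log d}$, bound the small-$t$ piece trivially, and bound the large-$t$ piece by $\E Z^l$ via the layer-cake formula, finally invoking the standard sub-Gaussian moment bound (Proposition~2.5.2 in \citet{vershynin2018high}) to get $\E Z^l \le (C\cdot 16eD^4\eta^2 N l)^{l/2}$. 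Combining the two pieces and absorbing constants into $\tilde C$ gives the stated bound. Your proposal should describe this integration step explicitly rather than hoping it is already packaged in the cited corollary.
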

\begin{proof}
    We define $\mQ:= \mA-\E\mA$ for convenience. 
    We can obtain a concentration inequality for $\|\mQ\|$ due to the boundedness of $\vx$ according to Theorem 7.1 in~\citet{huang2022matrix}.
    
    We define \[\mY_i := \mI-\eta \vx_i\vx_i^{\top}\]
    For any $1 \leq i \leq N$, we can choose $m_i = 1$, and we have \[\|\mY_i - \E\mY_i\| = \|\eta(\mH-\vx_i\vx_i^{\top})\|\leq 2D^2\eta:=\sigma_i\]
    So we know that $M_{\mA} = 1, v_{\mA} = 4D^4\eta^2N$, and
    \[\mathds{P}\{\|\mQ\|\geq t\}\leq de^{-\frac{t^2}{2ev_{\mA}}}=de^{-\frac{t^2}{8eD^4\eta^2N}} \ \text{when} \ t^2 \geq 8eD^4\eta^2N.\]
    Furthermore, we have \[\mathds{P}\{\|\mQ\|\geq t\}\leq e^{-\frac{t^2}{16eD^4\eta^2N}} \ \text{when} \ t^2 \geq 16eD^4\eta^2N\log d.\]
    So there exists a non-negative sub-Gaussian random variable $Z, \ \text{s.t}$
    \[\mathds{P}\{\|\mQ\|\geq t\}\leq \mathds{P}\{Z\geq t\}\leq e^{-\frac{t^2}{16eD^4\eta^2N}} \ \text{when} \ t^2 \geq 16eD^4\eta^2N\log d.\]
    Then for all $l\geq1$, we can get 
    \begin{align*}
        \E\|\mQ\|^l &= \E\|\mQ\|^l(\mathds{1}_{\{\|\mQ\|\leq\sqrt{16eD^4\eta^2N\log d}\}}+\mathds{1}_{\{\|\mQ\|>\sqrt{16eD^4\eta^2N\log d}\}}) \\
        &\leq\left(\sqrt{16eD^4\eta^2N\log d}\right)^l+ \E\|\mQ\|^l\mathds{1}_{\{\|\mQ\|>\sqrt{16eD^4\eta^2N\log d}\}} \\
        &\leq \left(\sqrt{16eD^4\eta^2N\log d}\right)^l + \int_{\sqrt{16eD^4\eta^2N\log d}}^{+\infty} \mathds{P}\{\|\mQ\|\geq t\} lt^{l-1} \,dt \\
        &\leq \left(\sqrt{16eD^4\eta^2N\log d}\right)^l+ \int_{0}^{+\infty} \mathds{P}\{Z\geq t\} lt^{l-1} \,dt \\
        &\leq \left(\sqrt{16eD^4\eta^2N\log d}\right)^l+ \E Z^{l} \\
        &\leq \left(\sqrt{16eD^4\eta^2N\log d}\right)^l+(\sqrt{C16eD^4\eta^2Nl\log d})^l \\
        &\leq \left(\sqrt{\Tilde{C}8eD^4\eta^2Nl\log d }\right)^l.
    \end{align*}
    where $C$ and $\Tilde{C}$ are absolute constants, the fifth inequality is due to Proposition 2.5.2 in \citep{vershynin2018high}.
\end{proof}

\begin{lemma} \label{lem:shuffle-poly-A-EA-expectation}
    For any $l \leq K$, we have
    \[
    \mathbb{E}\left\|\prod_{k=1}^l\mA^{(k)} - (\mathbb{E}\mA)^l \right\| 
    \leq \left(\sqrt{\deltaA\eta^2Nl} + \|\mathbb{E} \mA\|\right)^l - \|\mathbb{E}\mA\|^l,
    \]
    where $\deltaA$ is the same positive constant appearing in \Cref{lem:un-shuffle-A-EA-expectation}.
\end{lemma}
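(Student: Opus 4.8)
The plan is to prove \Cref{lem:shuffle-poly-A-EA-expectation} by induction on $l$, peeling off one epoch's matrix factor at a time and controlling the resulting error with \Cref{lem:un-shuffle-A-EA-expectation} together with the fact that $\|\mathbb{E}\mA\|=\|(\mI-\eta\mH)^N\|=(1-\eta\lambda_d)^N\le 1$. The base case $l=1$ is exactly \Cref{lem:un-shuffle-A-EA-expectation} with its $l=1$ moment bound (giving $\mathbb{E}\|\mA-\mathbb{E}\mA\|\le\sqrt{\deltaA\eta^2N}$), which matches the claimed right-hand side since $(\sqrt{\deltaA\eta^2N}+\|\mathbb{E}\mA\|)^1-\|\mathbb{E}\mA\|^1=\sqrt{\deltaA\eta^2N}$.

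For the inductive step, assume the bound holds for $l-1$. Write $\prod_{k=1}^l \mA^{(k)}=\mA^{(l)}\prod_{k=1}^{l-1}\mA^{(k)}$ and insert the telescoping decomposition
\begin{align*}
\prod_{k=1}^l \mA^{(k)} - (\mathbb{E}\mA)^l
&= \left(\mA^{(l)}-\mathbb{E}\mA\right)\prod_{k=1}^{l-1}\mA^{(k)}
 + \mathbb{E}\mA\left(\prod_{k=1}^{l-1}\mA^{(k)} - (\mathbb{E}\mA)^{l-1}\right).
\end{align*}
Taking norms, applying the triangle inequality and submultiplicativity, and then using that $\mA^{(l)}$ is independent of $\mA^{(1)},\dots,\mA^{(l-1)}$ (the permutations $\pi_1,\dots,\pi_K$ are independent and the data are i.i.d.), I would factor the expectation of the first term as $\mathbb{E}\|\mA^{(l)}-\mathbb{E}\mA\|\cdot\mathbb{E}\|\prod_{k=1}^{l-1}\mA^{(k)}\|$. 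The first factor is $\le\sqrt{\deltaA\eta^2N}$ by \Cref{lem:un-shuffle-A-EA-expectation}; for the second factor I would use the crude bound $\|\mA^{(k)}\|=\|\prod(\mI-\eta\vx\vx^\top)\|\le 1$ (valid since $\eta\le D^{-2}$ and $\|\vx\|\le D$, so each factor $\mI-\eta\vx\vx^\top$ is PSD with operator norm $\le 1$), giving $\mathbb{E}\|\prod_{k=1}^{l-1}\mA^{(k)}\|\le 1$. Actually, to get the stated bound I should instead bound $\mathbb{E}\|\prod_{k=1}^{l-1}\mA^{(k)}\|\le\mathbb{E}\|\prod_{k=1}^{l-1}\mA^{(k)}-(\mathbb{E}\mA)^{l-1}\|+\|\mathbb{E}\mA\|^{l-1}\le(\sqrt{\deltaA\eta^2N}+\|\mathbb{E}\mA\|)^{l-1}$ by the inductive hypothesis; combining this with the second term, bounded by $\|\mathbb{E}\mA\|\big((\sqrt{\deltaA\eta^2N}+\|\mathbb{E}\mA\|)^{l-1}-\|\mathbb{E}\mA\|^{l-1}\big)$, yields
\begin{align*}
\mathbb{E}\left\|\prod_{k=1}^l\mA^{(k)}-(\mathbb{E}\mA)^l\right\|
&\le \sqrt{\deltaA\eta^2N}\,(\sqrt{\deltaA\eta^2N}+\|\mathbb{E}\mA\|)^{l-1}
 + \|\mathbb{E}\mA\|\big((\sqrt{\deltaA\eta^2N}+\|\mathbb{E}\mA\|)^{l-1}-\|\mathbb{E}\mA\|^{l-1}\big)\\
&= (\sqrt{\deltaA\eta^2N}+\|\mathbb{E}\mA\|)^l - \|\mathbb{E}\mA\|^{l}.
\end{align*}

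There is one subtlety I should double-check: the statement claims the bound with $\sqrt{\deltaA\eta^2Nl}$ inside, but the natural induction above produces $\sqrt{\deltaA\eta^2N}$ (no $l$) — since $l\le K$ and $l\ge 1$, $\sqrt{\deltaA\eta^2N}\le\sqrt{\deltaA\eta^2Nl}$ and $(\,\cdot\,)^l-\|\mathbb{E}\mA\|^l$ is monotone increasing in the first slot, so the stronger $l$-free bound immediately implies the stated one; alternatively one can carry the weaker $\sqrt{\deltaA\eta^2Nl}$ through the induction verbatim using the $l$-th moment bound of \Cref{lem:un-shuffle-A-EA-expectation} at each stage (via Jensen, $\mathbb{E}\|\mathbf{Q}\|\le(\mathbb{E}\|\mathbf{Q}\|^l)^{1/l}\le\sqrt{\deltaA\eta^2Nl}$). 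The main obstacle — really the only nontrivial point — is justifying the independence factorization of $\mathbb{E}\big[\|\mA^{(l)}-\mathbb{E}\mA\|\cdot\|\prod_{k=1}^{l-1}\mA^{(k)}\|\big]$: here I must be careful that in the multi-epoch SGD setup the $\mA^{(k)}$ are built from the \emph{same} data points reshuffled by independent permutations $\pi_k$, so $\mA^{(l)}$ and $\{\mA^{(k)}\}_{k<l}$ share the underlying dataset and are \emph{not} independent; the resolution is that the lemma, as used in the proof of \Cref{lem:variance-estimate}, treats this within the pseudo-expectation framework where the dataset randomness has already been conditioned/handled, or one first conditions on the dataset $\{\vx_i\}$ and applies \Cref{lem:un-shuffle-A-EA-expectation} conditionally (its hypotheses only require the $\vx_i$ i.i.d.\ Gaussian, or more generally bounded), so that conditionally on the data the permutation-induced randomness does factor — I would make this conditioning explicit at the start of the proof.
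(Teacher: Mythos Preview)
Your inductive telescoping is clean, and you correctly flag the crux: the factorization
\[
\mathbb{E}\Big[\big\|\mA^{(l)}-\mathbb{E}\mA\big\|\cdot\Big\|\prod_{k=1}^{l-1}\mA^{(k)}\Big\|\Big]
\;\stackrel{?}{=}\;
\mathbb{E}\big\|\mA^{(l)}-\mathbb{E}\mA\big\|\cdot\mathbb{E}\Big\|\prod_{k=1}^{l-1}\mA^{(k)}\Big\|
\]
requires independence of $\mA^{(l)}$ from $\{\mA^{(k)}\}_{k<l}$, and this is \emph{false}: all epochs share the same dataset $\{\vx_i\}_{i=0}^{N-1}$, and only the permutations $\pi_k$ are independent (for $N=1$ all $\mA^{(k)}$ are literally equal). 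Your two proposed repairs do not close the gap. First, conditioning on the data does give conditional independence across epochs, but then you obtain
$\mathbb{E}_{\text{data}}\big[\mathbb{E}_\pi[\,\cdot\mid\text{data}]\cdot\mathbb{E}_\pi[\,\cdot\mid\text{data}]\big]$,
and this outer expectation of a product of data-measurable random variables does \emph{not} factor; moreover \Cref{lem:un-shuffle-A-EA-expectation} bounds the unconditional moments, not the conditional ones, so you have no pointwise-in-data control to plug in. Second, appealing to the pseudo-expectation $\tilde{\mathbb{E}}$ framework is circular: this lemma is precisely one of the tools used to justify replacing $\mathbb{E}$ by $\tilde{\mathbb{E}}$ downstream. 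A further tell that something is off is your observation that the induction yields the bound with $\sqrt{\deltaA\eta^2N}$ (no $l$) --- that ``improvement'' comes exactly from the illegitimate independence step.

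The paper avoids the independence issue altogether by expanding $\prod_k(\mQ^{(k)}+\mathbb{E}\mA)$ into $2^l$ ordered products, taking norms termwise, and then applying the generalized H\"older inequality $\mathbb{E}\prod_{j\in\mathcal{S}}\|\mQ^{(j)}\|\le\prod_{j\in\mathcal{S}}(\mathbb{E}\|\mQ^{(j)}\|^{m})^{1/m}$ with $m=|\mathcal{S}|$, which needs no independence. The $m$-th moment bound from \Cref{lem:un-shuffle-A-EA-expectation} then gives the factor $\sqrt{\deltaA\eta^2Nm}\le\sqrt{\deltaA\eta^2Nl}$, and summing the binomial terms yields $(\sqrt{\deltaA\eta^2Nl}+a)^l-a^l$. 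The extra $\sqrt{l}$ inside is exactly the price H\"older charges for handling dependence. If you want to salvage an inductive proof, you would need to carry an $L^p$-norm (not $L^1$) through the induction and apply H\"older at each step with a carefully chosen conjugate exponent --- but unrolling that essentially reproduces the paper's expansion. Note also that the crude alternative $\|\prod_{k<l}\mA^{(k)}\|\le 1$ a.s.\ does give a \emph{valid} recursion $R_l\le\sqrt{\deltaA\eta^2N}+aR_{l-1}$, hence $R_l\le\sqrt{\deltaA\eta^2N}\,\frac{1-a^l}{1-a}$, but this bound is in general incomparable to (and can exceed) the stated one, so it does not prove the lemma as written.
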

\begin{proof}
Let $a = \|\mathbb{E}\mA\|$ and $c_s = \sqrt{\widetilde{C}8eD^4\eta^2Ns\log d}$ for some $s$. Define the perturbation $\mQ^{(k)} = \mA^{(k)} - \mathbb{E}\mA$. Expanding the product as
\[
\prod_{k=1}^l\mA^{(k)} = \prod_{k=1}^l \left(\mQ^{(k)} + \mathbb{E}\mA\right) = \sum_{m=0}^l \sum_{\mathcal{S} \in \binom{[l]}{m}} \mP_{\mathcal{S}},
\]
where $\mP_{\mathcal{S}}$ is the matrix product with $\mQ^{(k)}$ at positions $k \in \mathcal{S}$ and $\mathbb{E}\mA$ elsewhere, preserving order. The difference is
\[
\prod_{k=1}^l\mA^{(k)} - (\mathbb{E}\mA)^l = \sum_{m=1}^l \sum_{\mathcal{S} \in \binom{[l]}{m}} \mP_{\mathcal{S}}.
\]
By the triangle inequality and linearity of expectation:
\[
\mathbb{E}\left\|\prod_{k=1}^l\mA^{(k)} - (\mathbb{E}\mA)^l\right\| \leq \sum_{m=1}^l \sum_{\mathcal{S} \in \binom{[l]}{m}} \mathbb{E}\|\mP_{\mathcal{S}}\|.
\]
For each $\mathcal{S}$, decompose into $t$ maximal consecutive blocks $\mathcal{B}_1, \dots, \mathcal{B}_t$ with sizes $s_1, \dots, s_t$ ($\sum s_i = m$). By Hölder inequality and Lemma \ref{lem:un-shuffle-A-EA-expectation}:
\[
\mathbb{E}\|\mP_{\mathcal{S}}\| \leq a^{l-m} \mathbb{E}\prod_{i=1}^t \prod_{j \in \mathcal{B}_i}\left\|\mQ^{(j)}\right\| \leq a^{l-m} \prod_{i=1}^t \prod_{j \in \mathcal{B}_i}\left(\mathbb{E}\left\|\mQ^{(j)}\right\|^{s_i}\right)^{\frac{1}{s_i}}\leq a^{l-m} \prod_{i=1}^t c_{s_i}.
\]
Since $c_s = \sqrt{\widetilde{C}8eD^4\eta^2Ns\log d}$ is increasing in $s$ and $s_i \leq l$:
\[
c_{s_i} \leq c_l \quad \Rightarrow \quad \mathbb{E}\|\mP_{\mathcal{S}}\| \leq a^{l-m} c_l^{m}.
\]
Summing over all $\mathcal{S}$ with $|\mathcal{S}| = m$:
\[
\sum_{\mathcal{S} \in \binom{[l]}{m}} \mathbb{E}\|\mP_{\mathcal{S}}\| \leq \binom{l}{m} a^{l-m} c_l^{m}.
\]
Thus the total bound is:
\[
\sum_{m=1}^l \binom{l}{m} a^{l-m} c_l^{m} = (a + c_l)^l - a^l,
\]
completing the proof.
\end{proof}
\begin{lemma} \label{lem:shuffle-poly-A-EA-expectation-l2}
    For any $l \leq K$, it holds that
    \[
    \mathbb{E}\left\|\prod_{k=1}^l\mA^{(k)} - (\mathbb{E}\mA)^l \right\|^2 
    \leq \left[\left(\sqrt{2\deltaA\eta^2N l} + \|\mathbb{E}\mA\|\right)^l - \|\mathbb{E}\mA\|^l\right]^2,
    \]
    where $\deltaA$ is the same positive constant appearing in \Cref{lem:un-shuffle-A-EA-expectation}.
\end{lemma}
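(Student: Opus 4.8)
The plan is to replay, at the level of the $L^2$ norm, the multilinear-expansion argument used to prove \Cref{lem:shuffle-poly-A-EA-expectation}; the extra factor $2$ inside the square root will come from invoking \Cref{lem:un-shuffle-A-EA-expectation} at order $2s$ rather than order $s$. Concretely, first I would set $\mQ^{(k)} := \mA^{(k)} - \mathbb{E}\mA$ and expand
\[
\prod_{k=1}^l \mA^{(k)} = \prod_{k=1}^l\bigl(\mQ^{(k)} + \mathbb{E}\mA\bigr) = \sum_{\mathcal{S}\subseteq[l]} \mP_{\mathcal{S}},
\]
where $\mP_{\mathcal{S}}$ is the ordered product carrying $\mQ^{(k)}$ at positions $k\in\mathcal{S}$ and $\mathbb{E}\mA$ elsewhere. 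Since the $\mathcal{S}=\emptyset$ term equals $(\mathbb{E}\mA)^l$, we get $\prod_{k=1}^l\mA^{(k)} - (\mathbb{E}\mA)^l = \sum_{\mathcal{S}\neq\emptyset}\mP_{\mathcal{S}}$, and by Minkowski's inequality applied to the operator-norm-valued random variables,
\[
\Bigl(\mathbb{E}\bigl\|{\textstyle\prod_{k}}\mA^{(k)} - (\mathbb{E}\mA)^l\bigr\|^2\Bigr)^{1/2} \le \sum_{m=1}^l \sum_{\mathcal{S}\in\binom{[l]}{m}} \bigl(\mathbb{E}\|\mP_{\mathcal{S}}\|^2\bigr)^{1/2}.
\]

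Next I would bound each term. Fixing $\mathcal{S}$ with $|\mathcal{S}|=m$, submultiplicativity of the operator norm gives $\|\mP_{\mathcal{S}}\| \le \|\mathbb{E}\mA\|^{\,l-m}\prod_{j\in\mathcal{S}}\|\mQ^{(j)}\|$, hence $\mathbb{E}\|\mP_{\mathcal{S}}\|^2 \le \|\mathbb{E}\mA\|^{\,2(l-m)}\,\mathbb{E}\prod_{j\in\mathcal{S}}\|\mQ^{(j)}\|^2$. Decomposing $\mathcal{S}$ into its maximal runs of consecutive indices $\mathcal{B}_1,\dots,\mathcal{B}_t$ of sizes $s_1,\dots,s_t$ with $\sum_i s_i = m$, and applying the generalized Hölder inequality exactly as in the proof of \Cref{lem:shuffle-poly-A-EA-expectation}, the problem reduces to controlling $\mathbb{E}\|\mQ^{(j)}\|^{2s_i}$. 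Since a random permutation of i.i.d.\ points is again i.i.d., each $\mA^{(j)}$ has the marginal law to which \Cref{lem:un-shuffle-A-EA-expectation} applies, so $\mathbb{E}\|\mQ^{(j)}\|^{2s_i} \le \bigl(\sqrt{2\deltaA\eta^2 N s_i}\bigr)^{2s_i}$. This yields $\mathbb{E}\prod_{j\in\mathcal{S}}\|\mQ^{(j)}\|^2 \le \prod_{i=1}^t (2\deltaA\eta^2 N s_i)^{s_i} \le (2\deltaA\eta^2 N l)^m$ (using $s_i\le l$), and therefore $\bigl(\mathbb{E}\|\mP_{\mathcal{S}}\|^2\bigr)^{1/2} \le \|\mathbb{E}\mA\|^{\,l-m}\bigl(\sqrt{2\deltaA\eta^2 N l}\bigr)^{m}$.

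Finally, summing over all $\mathcal{S}$ and invoking the binomial theorem,
\[
\Bigl(\mathbb{E}\bigl\|{\textstyle\prod_{k}}\mA^{(k)} - (\mathbb{E}\mA)^l\bigr\|^2\Bigr)^{1/2} \le \sum_{m=1}^l \binom{l}{m}\|\mathbb{E}\mA\|^{\,l-m}\bigl(\sqrt{2\deltaA\eta^2 N l}\bigr)^{m} = \bigl(\sqrt{2\deltaA\eta^2 N l} + \|\mathbb{E}\mA\|\bigr)^{l} - \|\mathbb{E}\mA\|^{l},
\]
and squaring both sides gives the stated inequality.

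The main obstacle is that $\mA^{(1)},\dots,\mA^{(l)}$ are \emph{not} independent across epochs — they are formed from the same dataset under different permutations — so $\mathbb{E}\prod_{j\in\mathcal{S}}\|\mQ^{(j)}\|^2$ does not factor; it is the block-decomposition-plus-Hölder step that dissolves this dependence, at the cost of replacing products of norms by marginal $2s_i$-th moments. The one point requiring care is checking that these moment constants compose to exactly $\sqrt{2\deltaA\eta^2 N l}$ (in particular that bounding $s_i\le l$ inside the square root is what converts the per-block constants into the uniform one), after which everything else is the same bookkeeping as in the $L^1$ proof.
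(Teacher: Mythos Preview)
Your proof is correct and, if anything, slightly cleaner than the paper's. The paper does \emph{not} use Minkowski: instead it bounds $\mathbb{E}\bigl(\sum_{\mathcal{S}}\|\mP_{\mathcal{S}}\|\bigr)^2$ by expanding into a double sum $\sum_{\mathcal{S}_m}\sum_{\mathcal{S}_n}\mathbb{E}[\|\mP_{\mathcal{S}_m}\|\,\|\mP_{\mathcal{S}_n}\|]$, applies H\"older to each cross term with uniform exponent $m+n$ (so that the moment order is at most $2l$, whence the factor $2$ under the square root), and then recognizes $\sum_{m,n}\binom{l}{m}\binom{l}{n}a^{2l-m-n}c_l^{m+n}=[(a+c_l)^l-a^l]^2$. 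Your Minkowski step collapses this double sum back to a single sum and pushes the ``square'' into the moment order ($2m$ instead of $m$), which is conceptually tidier and avoids bookkeeping over pairs of subsets; the paper's route gives exactly the same bound because the cross terms are controlled by the same marginal moments.

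One small point to clean up: when you invoke ``H\"older exactly as in the proof of \Cref{lem:shuffle-poly-A-EA-expectation}'' and land on $\mathbb{E}\|\mQ^{(j)}\|^{2s_i}$, the block-dependent exponents $s_i$ do not actually satisfy the conjugacy condition $\sum_j 1/p_j=1$ (they sum to $t$, the number of blocks). The correct application is uniform H\"older with exponent $m$ across all $m$ factors of $\|\mQ^{(j)}\|^2$, giving $\mathbb{E}\prod_{j\in\mathcal{S}}\|\mQ^{(j)}\|^2\le\prod_j(\mathbb{E}\|\mQ^{(j)}\|^{2m})^{1/m}\le(2\deltaA\eta^2Nm)^m\le(2\deltaA\eta^2Nl)^m$. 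This is what the paper's $L^2$ proof actually does (with exponent $m+n$); the block decomposition is not needed here and should be dropped from your write-up. After this fix everything goes through exactly as you outline.
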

\begin{proof}
Set \( a = \|\mathbb{E}\mA\|_2 \) and \( c_l = \sqrt{\widetilde{C}16eD^4\eta^2N l\log d} \). Define the perturbation \(\mQ^{(k)} = \mA^{(k)} - \mathbb{E}\mA\). Expand the matrix product as:
\[
\prod_{k=1}^l \mA^{(k)} = \prod_{k=1}^l \left( \mQ^{(k)} + \mathbb{E}\mA \right) = \sum_{m=0}^l \sum_{\mathcal{S} \in \binom{[l]}{m}} \mP_{\mathcal{S}},
\]
where \(\mP_{\mathcal{S}}\) denotes the ordered matrix product with \(\mQ^{(k)}\) at positions \(k \in \mathcal{S}\) and \(\mathbb{E}\mA\) elsewhere. The target difference is:
\[
\prod_{k=1}^l \mA^{(k)} - (\mathbb{E}\mA)^l = \sum_{m=1}^l \sum_{\mathcal{S} \in \binom{[l]}{m}} \mP_{\mathcal{S}}.
\]
For the squared spectral norm, we have:
\begin{align*}
\mathbb{E} \left\| \sum_{m=1}^l \sum_{\mathcal{S}} \mP_{\mathcal{S}} \right\|^2 
&\leq \mathbb{E} \left( \sum_{m=1}^l \sum_{\mathcal{S}} \|\mP_{\mathcal{S}}\| \right)^2 \\
&= \sum_{m=1}^l \sum_{n=1}^l \sum_{\mathcal{S}_m} \sum_{\mathcal{S}_n} \mathbb{E} \left[ \|\mP_{\mathcal{S}_m}\| \|\mP_{\mathcal{S}_n}\| \right],
\end{align*}
where \(\mathcal{S}_m\) and \(\mathcal{S}_n\) range over all subsets of \([l]\) with sizes \(m\) and \(n\), respectively. For each pair \((\mathcal{S}_m, \mathcal{S}_n)\), decompose the union \(\mathcal{U} = \mathcal{S}_m \cup \mathcal{S}_n\) into \(t\) maximal consecutive blocks \(\mathcal{B}_1, \dots, \mathcal{B}_t\) with sizes \(s_i = |\mathcal{B}_i|\) (\(\sum_{i=1}^t s_i = |\mathcal{U}| = m + n\)). By Hölder inequality and Lemma \ref{lem:un-shuffle-A-EA-expectation}:

\begin{align*}
    \mathbb{E} \left[ \|\mP_{\mathcal{S}_m}\| \|\mP_{\mathcal{S}_n}\| \right] &\leq a^{2l - m - n} \E\prod_{i=1}^t\prod_{j\in\cB_i}\|\mQ_j\|\\
    &\leq a^{2l - m - n} \prod_{i=1}^t\prod_{j\in\cB_i}\E\left(\|\mQ_j\|^{m+n}\right)^{\frac{1}{m+n}}\\
    &\leq a^{2l - m - n} \left( \sqrt{\widetilde{C}8eD^4\eta^2N (m+n) \log d} \right)^{m+n} \\
    &\leq a^{2l - m - n} c_l^{m + n}.
\end{align*}
The combinatorial count satisfies:
\[
\sum_{\mathcal{S}_m} \sum_{\mathcal{S}_n} 1 = \binom{l}{m} \binom{l}{n}.
\]
Combining all terms:
\[
\mathbb{E}\left\|\prod_{k=1}^l\mA^{(k)} - (\mathbb{E}\mA)^l \right\|^2 \leq \sum_{m=1}^l \sum_{n=1}^l \binom{l}{m} \binom{l}{n} a^{2l - m - n} c_l^{m + n} = \left[ (a + c_l)^l - a^l \right]^2,
\]
where the last equality follows from the binomial theorem applied to \((a + c_l)^{2l}\).
\end{proof}

\begin{lemma} \label{lem:loss-matching-upper-lower-bound}
    Consider a function of training time $T$ given by 
    \begin{align*}
        \cL(T) = \frac{1}{2}\sum_{i=d_1+1}^d \frac{c}{i^l} e^{-\frac{2Tc\eta}{i^a}},
    \end{align*}
    where $c, l$ are some absolute constants, $d_1 = \Theta(1)$, and $l>1$. Then we have:
    \begin{enumerate}[leftmargin=*]  
        \item $\cL(T)\lesssim\frac{1}{T^{\frac{l-1}{a}}}$;
        \item Given $d = \Theta
        \left((KN)^{\frac{1}{a}}\right)$, $\cL(T)\gtrsim\frac{1}{T^{\frac{l-1}{a}}}$.
    \end{enumerate} 
\end{lemma}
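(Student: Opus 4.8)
\textbf{Proof proposal for Lemma~\ref{lem:loss-matching-upper-lower-bound}.}

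The plan is to estimate the sum $\cL(T) = \frac{1}{2}\sum_{i=d_1+1}^{d} \frac{c}{i^l} e^{-2Tc\eta/i^a}$ by comparing it with the integral $\int \frac{c}{x^l} e^{-2Tc\eta/x^a}\,\dd x$ and then performing the substitution $u = 2Tc\eta/x^a$, which turns the integral into (a constant multiple of) an incomplete Gamma integral $T^{-(l-1)/a}\int u^{(l-1)/a - 1} e^{-u}\,\dd u$. First I would establish the upper bound (item 1). Since the summand $g(x) := \frac{c}{x^l} e^{-2Tc\eta/x^a}$ is not monotone — it increases then decreases with a single interior maximum at $x_* = \Theta(T^{1/a})$ — I would split the sum at $x_*$, bound each side by the corresponding monotone integral comparison ($\sum_{i \le x_*} g(i) \le g(x_*) + \int_{d_1}^{x_*} g$ and $\sum_{i > x_*} g(i) \le g(x_*) + \int_{x_*}^{\infty} g$), and note $g(x_*) = \Theta(T^{-l/a}) = o(T^{-(l-1)/a})$. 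The integral $\int_{d_1}^{\infty} g(x)\,\dd x$, after the substitution $u = 2Tc\eta/x^a$ (so $x = (2Tc\eta/u)^{1/a}$, $\dd x = -\frac{1}{a}(2Tc\eta)^{1/a} u^{-1/a - 1}\,\dd u$), becomes
\begin{align*}
\int_{d_1}^{\infty} \frac{c}{x^l} e^{-2Tc\eta/x^a}\,\dd x
= \frac{c}{a}(2Tc\eta)^{-(l-1)/a} \int_{0}^{2Tc\eta/d_1^a} u^{(l-1)/a - 1} e^{-u}\,\dd u
\le \frac{c}{a}(2Tc\eta)^{-(l-1)/a}\,\Gamma\!\left(\tfrac{l-1}{a}\right),
\end{align*}
using $l > 1$ so that the Gamma integral converges at $u = 0$. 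This gives $\cL(T) \lesssim T^{-(l-1)/a}$.

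For the lower bound (item 2), the key observation is that the dominant contribution comes from the range $i = \Theta(T^{1/a})$, where $e^{-2Tc\eta/i^a} = \Theta(1)$, and that the hypothesis $d = \Theta((KN)^{1/a})$ — here with $T = KN$ or $T$ a quantity comparable to $KN$ — guarantees that $d$ is large enough to actually include a window of such indices. Concretely, I would pick constants $0 < \gamma_1 < \gamma_2$ with $\gamma_2 T^{1/a} \le d$ (possible for large $T$ since $d \ge c' T^{1/a}$) and restrict the sum to $i \in [\gamma_1 T^{1/a}, \gamma_2 T^{1/a}]$; on this window $g(i) \ge \frac{c}{(\gamma_2 T^{1/a})^l} e^{-2c\eta/\gamma_1^a} = \Theta(T^{-l/a})$, and there are $\Theta(T^{1/a})$ such terms, yielding $\cL(T) \gtrsim T^{1/a} \cdot T^{-l/a} = T^{-(l-1)/a}$. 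A small care point: one should check that $\gamma_1 T^{1/a} > d_1$ for $T$ large, which holds since $d_1 = \Theta(1)$ is a fixed constant.

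The main obstacle I anticipate is purely bookkeeping rather than conceptual: matching up the notation $T$, $KN$, and $d$ cleanly across the two items (item~1 is stated with $T$ while the hypothesis in item~2 is phrased with $KN$, and the lemma is invoked in the main proof both for one-pass training with $T' = N'$ and for $K$-epoch training with $T = KN$), and making sure the constants hidden in $\Theta$, $\lesssim$, $\gtrsim$ genuinely depend only on $a, l, c, \eta, d_1$ and not on $T$. Also, the non-monotonicity of the summand means the integral comparison must be applied piecewise around $x_*$, and one must verify $x_* \in (d_1, d)$ for large $T$ — but again, this follows from $x_* = \Theta(T^{1/a})$, $d_1 = \Theta(1)$, and $d = \Theta(T^{1/a})$ with an appropriate hidden constant. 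Everything else is a routine $\Gamma$-function / Riemann-sum estimate.
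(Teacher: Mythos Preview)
Your proposal is correct, and the lower bound argument is essentially identical to the paper's: restrict to a window $i \in [\gamma_1 T^{1/a}, \gamma_2 T^{1/a}]$ (the paper uses $\gamma_1=1$, $\gamma_2=3$), bound each term below by a constant times $T^{-l/a}$, and multiply by the $\Theta(T^{1/a})$ terms in the window.

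For the upper bound, however, the paper takes a much more elementary route than your Gamma-integral approach. Instead of integral comparison and the substitution $u = 2Tc\eta/x^a$, the paper simply splits the sum at $i = T^{1/a}$ and uses two crude bounds: for $i \le T^{1/a}$ it replaces every term by the maximum value $\max_x g(x) = \Theta(T^{-l/a})$ and multiplies by the number of terms $T^{1/a}$; for $i > T^{1/a}$ it drops the exponential entirely (since $e^{-2Tc\eta/i^a} \le 1$) and bounds by the tail $\sum_{i > T^{1/a}} c/i^l \lesssim T^{-(l-1)/a}$. Both halves give $T^{-(l-1)/a}$. Your approach gives the same bound with a sharper leading constant (an explicit $\Gamma((l-1)/a)$ factor), which is nice but unnecessary here since only the order is claimed. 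The paper's argument avoids the piecewise monotone-integral comparison you set up and requires no special functions, at the cost of being less informative about constants.
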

\begin{proof}
    Computing the derivative of $f(x) = \frac{c}{x^l}e^{-\frac{2Tc\eta}{x^a}}$, we have
    \begin{align*}
        \arg\max_x f(x)& = \Theta\left((KN)^{\frac{1}{a}}\right), \\
        \max_x f(x)&= \Theta\left(\frac{1}{(KN)^{\frac{l}{a}}}\right).
    \end{align*}
    Then
    \begin{enumerate}[leftmargin=*]  
        \item For the upper bound, we have
        \begin{align*}
            \cL(T)&\leq\frac{1}{2}\sum_{i=d_1+1}^{\infty} \frac{c}{i^l} e^{-\frac{2Tc\eta}{i^a}} 
            \leq\frac{1}{2}\sum_{i=d_1+1}^{(KN)^{\frac{1}{a}}} \frac{c}{i^l} e^{-\frac{2Tc\eta}{i^a}}+\frac{1}{2}\sum_{i=(KN)^{\frac{1}{a}}+1}^{\infty} \frac{c}{i^l} \\
            &\lesssim(KN)^{\frac{1}{a}}\times\frac{1}{(KN)^{\frac{l}{a}}}+\frac{1} {(KN)^{\frac{l-1}{a}}} 
            \lesssim\frac{1} {(KN)^{\frac{l-1}{a}}}.          
        \end{align*}
        \item For the lower bound, when $d\geq3T^{\frac{1}{a}}$, we have
        \begin{align*}
            \cL(T)&\geq\frac{1}{2}\sum_{i=(KN)^{\frac{1}{a}}}^{3(KN)^{\frac{1}{a}}} \frac{c}{i^l} e^{-\frac{2Tc\eta}{i^a}} \geq \frac{1}{2}\frac{c}{3^l(KN)^{\frac{l}{a}}}e^{-2c\eta} \times 2(KN)^{\frac{1}{a}} \gtrsim \frac{1}{(KN)^{\frac{l-1}{a}}}.
        \end{align*}
    \end{enumerate} 
    The above equation comletes the proof.
\end{proof}
\begin{lemma} \label{lem:E_K}
    Given an estimator of the excess risk for ME and OP cases
    \begin{align*}
        \Tilde{S_2}(K,N;\eta) = \frac{1}{2}\sum_{i=d_1+1}^{d}\frac{c}{i^{a}}e^{\frac{-2KNc\eta}{i^{a}}},
    \end{align*}
    and
    \begin{align*}
        \Tilde{S_2}(1,T';\eta)=\frac{1}{2}\sum_{i=d_1+1}^{d}\frac{c}{i^{a}}e^{\frac{-2T'c\eta}{i^{a}}}
    \end{align*}
    for some $d_1 = \Theta(1)$. If the ME excess risk and OP excess risk satisfy that
    \begin{align*}
        \Bar{\cR}(K,N;\eta) &= \Tilde{S_2}(K,N;\eta)(1+o(1)) \\
        \Bar{\cR}(1,T';\eta) &= \Tilde{S_2}(1,T';\eta)(1+o(1)),
    \end{align*}
    then give $d = \Omega(T^{\frac{1}{a}})$ and when $T' \eqsim T$, it holds that 
    \[E(K, N) \in \left[K(1-o(1)),K(1+o(1))\right].\]
\end{lemma}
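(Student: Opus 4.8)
The plan is to solve the matching equation $\Bar{\cR}^*(1,T') = \Bar{\cR}^*(K,N)$ for $T'$ by exploiting the fact that, under the hypotheses, both sides are governed (up to $1+o(1)$ factors) by the \emph{same} closed-form profile $\Tilde{S}_2(\cdot\,;\eta)$ evaluated at the relevant number of steps, and then to show $T' = T(1+o(1))$, where $T := KN$. Once $T' = T(1+o(1))$ is established, dividing by $N$ immediately gives $E(K,N) = T'/N \in [K(1-o(1)), K(1+o(1))]$ by \Cref{def:E_K}. So the entire content of the lemma reduces to a one-variable comparison argument on the function $g(S) := \Tilde{S}_2(1,S;\eta^*)$ where $\eta^*$ denotes the (approximately) optimal step size in each regime.

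First I would record that, by \Cref{lem:loss-matching-upper-lower-bound} applied with $l = a$, the profile satisfies matching bounds $\Tilde{S}_2(1,S;\eta) \asymp S^{-(a-1)/a}$ whenever $d = \Omega(S^{1/a})$; the hypothesis $d = \Omega(T^{1/a})$ together with the assumption $T' \asymp T$ guarantees this holds simultaneously at $S = T$ and $S = T'$. Combining this with the two displayed hypotheses $\Bar{\cR}(K,N;\eta) = \Tilde{S}_2(K,N;\eta)(1+o(1))$ and $\Bar{\cR}(1,T';\eta) = \Tilde{S}_2(1,T';\eta)(1+o(1))$ — and using that $\eta^*$ achieves the optimum up to $1+o(1)$ in both cases (this is the content of the $S_1 = o(\Tilde{S}_2)$ estimates established earlier in the power-spectrum and log-power-spectrum proofs, which reduce $\Bar{\cR}^*$ to $\Tilde{S}_2$ at the reference rate) — I get
\[
\Tilde{S}_2(1,T';\eta^*)(1+o(1)) = \Tilde{S}_2(K,N;\eta^*)(1+o(1)).
\]
Since $\Tilde{S}_2(K,N;\eta^*) = \Tilde{S}_2(1,T;\eta^*)$ by definition (the profile depends on $K$ and $N$ only through their product), this reads $g(T') = g(T)(1+o(1))$.

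The core step is then to invert $g$. I would first use the two-sided bound $g(S) \asymp S^{-(a-1)/a}$ to pin down $T' = \Theta(T)$ — this is the "loose band" argument: if $T'/T \to \infty$ then $g(T')/g(T) \to 0$, contradicting $g(T') \asymp g(T)$, and symmetrically for $T'/T \to 0$; crucially the lower-bound half of \Cref{lem:loss-matching-upper-lower-bound} needs $d = \Omega((T')^{1/a})$, which is legitimate precisely because we have already constrained $T' = \Theta(T)$ (the logic here mirrors \Cref{eq:Tprime-band}). Having $T' \asymp T$, I sharpen to $T' = T(1+o(1))$ via a mean-value / monotonicity argument on $g$: writing $g(S) = \tfrac12\sum_{i>d_1} c i^{-a} e^{-2Sc\eta^*/i^a}$, one has $g'(S) = -c\eta^* \sum_{i>d_1} c i^{-2a} e^{-2Sc\eta^*/i^a} \asymp -S^{-(2a-1)/a}$ by a second application of \Cref{lem:loss-matching-upper-lower-bound} (now with $l = 2a$), so on the interval $[\min(T,T'),\max(T,T')] = \Theta(T)$ the derivative has magnitude $\Theta(T^{-(2a-1)/a})$; Lagrange's theorem then gives $|g(T)-g(T')| \asymp T^{-(2a-1)/a}|T-T'|$, while the left side equals $g(T)\cdot o(1) = o(T^{-(a-1)/a})$, yielding $|T - T'| = o(T)$, i.e. $T' = T(1+o(1))$. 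Finally $E(K,N) = T'/N = (KN)(1+o(1))/N = K(1+o(1))$, which lies in $[K(1-o(1)), K(1+o(1))]$ as claimed.

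The main obstacle I anticipate is \emph{bookkeeping the two layers of $o(1)$ cleanly}: the $1+o(1)$ coming from the risk-to-profile reductions (which is in terms of $N$, hence in terms of $T$ since $K = o(N^{\cdot})$) must be shown to dominate the residual after the mean-value step, and one must be careful that the constant $\eta^*$ used on the one-pass side and on the multi-epoch side can be taken to be the \emph{same} (both $\to 2$, resp. $\to 2\log^b 2$, in the two spectrum cases), so that the profile $g$ is genuinely a single fixed function; \Cref{cor:one-hot-power-eta-approx} is the tool that absorbs the $o(1)$ discrepancy in $\eta^*$ between the two sides. A secondary subtlety is that the lemma's hypothesis already \emph{assumes} $T' \asymp T$, so strictly the "loose band" step is not re-derived here but invoked from the surrounding proof; the lemma itself only needs the sharpening step, which makes the argument shorter than it first appears.
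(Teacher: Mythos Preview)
Your proposal is correct and follows essentially the same approach as the paper's proof: reduce to showing $T'=T(1+o(1))$ by viewing $\Tilde{S}_2$ as a function $H(T)$ of the total step count, invoke \Cref{lem:loss-matching-upper-lower-bound} with $l=a$ for $H(T)\asymp T^{-(a-1)/a}$ and with $l=2a$ for $H'(T)\asymp T^{-(2a-1)/a}$, and then compare $|H(T')-H(T)|=o(H(T))$ against the derivative scale to extract $|T'-T|=o(T)$. The only cosmetic difference is that the paper phrases the sharpening step via the convexity inequalities $H'(T)(T'-T)\le H(T')-H(T)\le H'(T')(T'-T)$ rather than the mean-value theorem, but these are equivalent here; your discussion of the $\eta^*$ bookkeeping and the role of the hypothesis $T'\asymp T$ is also accurate (and arguably more careful than the paper's own write-up).
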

\begin{proof}
We define $H(T) = \Tilde{S_2}(K,N;\eta)$ and $\alpha = \frac{T'}{T}$. By definition of $E(K, N)$, we have $T' = E(K, N)N$. Our goal is to prove that $\alpha \in [1-o(1), 1+o(1)]$.

Solving $\Bar{\cR}(K,N;\eta)
=\Bar{\cR}(1,T';\eta)$, we can get $H(T)(1+o_N(1)) = H(T')(1+o_{T'}(1))$. 
We define $\delta(K,N) = \frac{\Bar{\cR}(K,N;\eta)-\Tilde{S_2}(K,N;\eta)}{\Tilde{S_2}(K,N;\eta)} = o(1)$, and $\delta(1,T')) = \frac{\Bar{\cR}(1,T';\eta)-\Tilde{S_2}(1,T';\eta)}{\Tilde{S_2}(1,T';\eta)} = o(1)$. Then we can derive that 
\begin{align*}
    H(T')(1-\delta(1,T'))&\leq H(T)(1+\delta(K,N)) \\
    H(T')(1+\delta(1,T'))&\geq H(T)(1-\delta(K,N)) \\
\end{align*}
which indicates that
\begin{align*}
    -\delta(1,T')H(T')-\delta(K,N)H(T) \leq H(T')-H(T)\leq \delta(1,T')H(T')+\delta(K,N)H(T).
\end{align*}






Notice that $H(T)$ is strongly convex, and we have $H(T)\eqsim\frac{1}{(KN)^{\frac{a-1}{a}}}$ and $H'(T) = \frac{1}{2}\sum_{i=1}^{d}\frac{c}{i^{2a}}e^{\frac{-2KNc\eta}{i^{a}}}\eqsim\frac{1}{(KN)^{\frac{2a-1}{a}}}$ by \Cref{lem:loss-matching-upper-lower-bound}. We are now ready to prove that $\alpha \in [1-o(1), 1+o(1)]$.
\begin{align*}
    -\frac{1}{T^{(2-\frac{1}{a})}}(T'-T)\lesssim H'(T)(T'-T)&\leq H(T')-H(T)\leq H'(T')(T'-T) \lesssim-\frac{1}{T'^{(2-\frac{1}{a})}}(T'-T)\\
    \delta(1,T')H(T')+\delta(K,N)H(T)&\lesssim \frac{\delta(1,T')}{T'^{(1-\frac{1}{a})}}+\frac{\delta(K,N)}{T^{(1-\frac{1}{a})}} \lesssim\frac{o(1)}{T^{(1-\frac{1}{a})}}.
\end{align*}
So 
\begin{align*}
    \frac{T-T'}{T^{(1-\frac{1}{a})}} &\lesssim \frac{o(1)}{T^{(1-\frac{1}{a})}} \\
    -\frac{o(1)}{T^{(1-\frac{1}{a})}}&\lesssim-\frac{1}{T'^{(1-\frac{1}{a})}}(T'-T).
\end{align*}
Direct calculation yields the result.
\end{proof}

\begin{lemma}[Hyper-Contractivity] \label{lem:4-th-moment}
Given $d$-dimension random vector $\vx \sim \cD$ satisfying that $\|\vx\| \le D$ for some constant $D$, and the covariance matrix $\mH:= \E_{\vx \sim \cD}\left[\vx\vx^{\top}\right] = \diag(\lambda_1,\lambda_2,\dots,\lambda_d)$, where $\lambda_1 \ge \lambda_2 \ge \dots \ge \lambda_d \ge c$ for some constant $c>0$, then the following holds:
$$
\mathbb{E}\left[\vx \vx^\top \mP \vx \vx^\top \right] \leq \alpha \, \text{tr}(\mH \mP) \mH
$$
for some constant $\alpha>0$ independent of $\mP$.
\end{lemma}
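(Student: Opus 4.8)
The plan is to prove this hyper-contractivity bound (Lemma~\ref{lem:4-th-moment}) by a direct entry-wise estimate combined with the boundedness assumption $\|\vx\|\le D$. Writing $\mM := \mathbb{E}[\vx\vx^\top \mP \vx\vx^\top]$, the $(i,j)$ entry is $\mM_{ij} = \mathbb{E}[\vx_i\vx_j (\vx^\top \mP \vx)]$. The first step is to bound the off-diagonal behavior: since we only need a bound of the form $\mM \preceq \alpha\,\tr(\mH\mP)\mH$, and $\mH$ is diagonal with entries $\lambda_i \ge c$, it suffices to show $\mM \preceq \alpha c\,\tr(\mH\mP)\mI \preceq \alpha\,\tr(\mH\mP)\mH$ — wait, that inequality goes the wrong way. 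Instead I would bound $\mM$ in operator norm and then use $\mH \succeq c\mI$ to absorb it: show $\|\mM\| \le \alpha' D^2\,\tr(\mH\mP)$ for a constant $\alpha'$, and then $\mM \preceq \|\mM\|\,\mI \preceq \frac{\|\mM\|}{c}\mH \preceq \frac{\alpha' D^2}{c}\,\tr(\mH\mP)\mH$, so one can take $\alpha = \alpha' D^2/c$.

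The key estimate is thus $\|\mM\| \le \alpha' D^2 \tr(\mH\mP)$. For any unit vector $\vu$, $\vu^\top \mM \vu = \mathbb{E}[(\vu^\top\vx)^2(\vx^\top\mP\vx)]$. Since $\|\vx\| \le D$ and $\|\vu\|=1$, we have $(\vu^\top\vx)^2 \le \|\vx\|^2 \le D^2$, hence $\vu^\top\mM\vu \le D^2\,\mathbb{E}[\vx^\top\mP\vx] = D^2\,\tr(\mP\,\mathbb{E}[\vx\vx^\top]) = D^2\,\tr(\mP\mH) = D^2\,\tr(\mH\mP)$. This already gives $\|\mM\| \le D^2\tr(\mH\mP)$ with $\alpha'=1$, so $\alpha = D^2/c$ works. (Here I used that $\mP$ is PSD so $\vx^\top\mP\vx \ge 0$, which is consistent with how the lemma is invoked — $\mP$ plays the role of the PSD covariance $\mB_{T-1}$ in the bias recursion, Equation~\ref{equ:recursion-B-t}.) I would present the argument in this streamlined form, noting the reduction step and the one-line operator-norm bound.

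I do not expect a genuine obstacle here; the statement is a soft consequence of boundedness and strong convexity, not a sharp moment inequality. The only point requiring a little care is making sure the constant $\alpha$ genuinely does not depend on $\mP$: the bound $\vu^\top\mM\vu \le D^2\tr(\mH\mP)$ is uniform in $\vu$ and the only $\mP$-dependence is the explicit factor $\tr(\mH\mP)$ on the right-hand side, so this is immediate. A secondary cosmetic point is that the lemma as stated uses $\mathbb{E}[\vx\vx^\top\mP\vx\vx^\top] \le \alpha\tr(\mH\mP)\mH$ with ``$\le$'' meaning the PSD (Loewner) order, which should be made explicit. If one wanted a tighter constant one could avoid the crude $\mH \succeq c\mI$ step by arguing entry-wise, but since the downstream use (bounding $\mB_T$ in Lemma~\ref{lem:bias-estimate}) only needs \emph{some} constant, the simple route above suffices and I would not pursue the refinement.
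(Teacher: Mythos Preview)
Your argument is correct and complete (under the PSD assumption on $\mP$, which you rightly flag and which the paper's own proof also needs implicitly). The route, however, is genuinely different from the paper's. The paper does not bound $\mM=\E[\vx\vx^\top\mP\vx\vx^\top]$ directly; instead it invokes a result of \citet{dieuleveut2017harder} stating that the desired matrix inequality follows from a uniform fourth-moment (kurtosis) bound $\E[\langle\vv,\vx\rangle^4]\le\kappa\langle\vv,\mH\vv\rangle^2$, and then verifies this kurtosis condition via $\langle\vv,\vx\rangle^4\le D^4\|\vv\|^4$ and $\langle\vv,\mH\vv\rangle^2\ge c^2\|\vv\|^4$, yielding $\kappa=D^4/c^2$. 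Your approach bypasses the external reduction entirely: the one-line bound $\vu^\top\mM\vu=\E[(\vu^\top\vx)^2(\vx^\top\mP\vx)]\le D^2\tr(\mH\mP)$ followed by $\mI\preceq c^{-1}\mH$ is fully self-contained and yields the explicit constant $\alpha=D^2/c$, which is in fact no worse than what the kurtosis route delivers (since $D^2/c\ge\lambda_1/\lambda_d\ge 1$ here, so $D^2/c\le D^4/c^2$). The paper's route has the virtue of connecting to a standard assumption in the SGD-for-regression literature; yours has the virtue of being elementary and not depending on an outside lemma.
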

\begin{proof}
    By \citet{dieuleveut2017harder}, the above lemma holds for data distributions with a bounded kurtosis along every direction, i.e., there exists a constant $\kappa > 0$ such that
    \begin{align*}
        \text{for every } \vv \in \R^d,\;\E\left[\inp{\vv}{\vx}^4\right] \le \kappa \inp{\vv}{\mH\vv}^2.
    \end{align*}
    So that it suffices to verify the above inequality. Since $\lambda_d \ge c$, we have 
    \begin{align*}
        \inp{\vv}{\mH\vv}^2 \ge c^2 \|\vv\|^4.
    \end{align*}
    For the left side, by the triangle inequality and that $\|\vx\|$ is bounded 
    \begin{align*}
        \inp{\vv}{\vx}^4 \le \|\vv\|^4 \|\vx\|^4 \le D^4 \|\vv\|^4 .
    \end{align*}
    Combining the above two inequalities gives
    \begin{align*}
        \E\left[\inp{\vv}{\vx}^4\right] \le \frac{D^4}{c^2} \inp{\vv}{\mH\vv}^2.
    \end{align*}
    Now setting $\kappa=\frac{D^4}{c^2}$ completes the proof.
\end{proof}

\section{Theoretical Analysis under Learning Rate Decay} \label{app:discussion}
\newcommand{\Vol}{\mathrm{Vol}}


While the main body of this paper focuses on SGD with a constant learning rate, practical training almost always utilizes learning rate schedules. 
In this section, we study multi-epoch versus one-pass training under a class of polynomially decaying learning rate schedules.
We show that one-pass training with a polynomial learning rate schedule is suboptimal relative to the statistical minimax rate $\frac{\sigma^2 d}{N}$ implied by the Cramér--Rao lower bound, which is $\frac{\sigma^2d}{2N}$ \citep{jain2017markov}. Meanwhile, multi-epoch training can almost attain this rate.

\subsection{Problem Setup}
We consider the same linear regression model, i.i.d.\ training data and the initial parameter value $\vw_0 = 0$ as in \Cref{sec:setup}.
In addition, we assume the label noise is bounded almost surely as $|\xi|\le D'$ for technical convenience.
We run SGD algorithm with a learning rate schedule $E=\{\eta_t\}_{t\ge 0}$ (with $\eta_t\le D^{-2}$),
where the sampled index $j_t$ is defined as in \Cref{sec:setup}. The update is
\begin{equation}
\vw_{t+1} = \vw_{t} - \eta_t \nabla_{\vw}\ell(\vw_t; \vx_{j_t}, y_{j_t}) =\vw_t-\eta_t\vx_{j_t}\vx_{j_t}^{\top}\left(\vw_t-\vw^*\right)+\eta_t\xi_{j_t}\vx_{j_t} \nonumber
\end{equation}
Next, given a $K$-epoch SGD over $N$ data points, with learning rate schedule $E$, we define $\cW_{K,N,E}$ to be the distribution of $\vw_{KN}$. Based on this, we define the expected excess risk of a given $K$-epoch SGD over $N$ data points with learning rate schedule $E$ as $\bar{\cR}(K, N; E) := \E_{\vw \sim \cW_{K,N,E}}[\cR(\vw)]$. We further assume that \Cref{asp:strongly-convex} holds, and $\vx$ has a continuous probability density function $p(\vx)$ with upper bound $M$ on a $d$-dimension ball with radius $D$. We also define $\lambda_{\min}:=\lambda_d$, and $\kappa_{\mH}=\frac{\lambda_1}{\lambda_d}$ to be the condition number of $\mH$.

We then introduce a polynomial learning rate schedule class that we consider.
\begin{definition}[Polynomial Learning-Rate Schedule Class] Given the $K$-epoch SGD trained with $N$ fresh data samples, we define a learning rate schedule class $\Gamma:=\left\{E(\eta_0,b,\alpha):\eta_0>0, b>0,0.5\leq\alpha\leq1\right\}$, where $E(\eta_0,b,\alpha)=\{\eta_t(\eta_0,b,\alpha)\}_{t=0}^{KN-1}$ that satisfies $\eta_t = \frac{\eta_0}{1 + bt^{\alpha}}$.
\end{definition}


\subsection{Sub-Optimality of One-Pass Training}

\begin{theorem}[Sub-Optimality of One-Pass Training] \label{thm:sub-optimality-one-pass}
    For one-pass SGD with dataset size of $N$, given any learning rate schedule $E\in \Gamma$, for some constant $C$, when $N$ is sufficiently large, we have 
    \begin{align*}
        \Bar{\cR}(1,N;E) \geq \frac{C\sigma^2d }{N}\cdot\kappa_{\mH}.
    \end{align*}
\end{theorem}

\begin{proof}
    This can be seen directly from the proof of Theorem 1, strongly convex case in~\citet{ge2019step}.
\end{proof}
\Cref{thm:sub-optimality-one-pass} gives a lower bound for one pass training that scales proportionally to the condition number $\kappa_{\mH}$, to be specific, $\Bar{\cR}(1,N;E) \gtrsim \frac{\sigma^2 d}{N}\cdot \kappa_{\mH}$. Since the condition number can be arbitrarily large, the lower bound indicates that one-pass training may be arbitrarily far from the minimax rate, showing its suboptimality. 

\subsection{Optimality of Multi-Epoch Training}

\Cref{thm:optimality-multi-epoch} shows that multi-epoch training can remove the dependency on the condition number and reach the statistical minimax rate.

\begin{theorem}[Optimality of Multi-Epoch Training] \label{thm:optimality-multi-epoch}
    For multi-epoch SGD with dataset size $N$, there exists a learning rate schedule $E=E(\frac{1}{D^2},1,1)\in\Gamma$ that satisfies $\Bar{\cR}(\infty,N;E) = \frac{\sigma^2d}{2N}(1+o(1))$.
\end{theorem}
Before we begin our main part of the proof, we first give some additional notations, and introduce some technical lemmas.
\paragraph{Additional Notations.}    We denote $\vw_{i_t}^{k_t}:=\vw_t$ and $\eta_{i_t}^{k_t}:=\eta_t$, where $i_t$ and $k_t$ are defined in \Cref{sec:setup}. We view the SGD training dynamics as an empirical risk minimization (ERM) problem. To be specific, we define $f_i(\vw)=\frac{1}{2}(y_i-\langle\vw, \vx_i\rangle)^2$, and the ERM problem can be written as \[\min_\vw\cF(\vw)=\frac{1}{N}\sum_{i=0}^{N-1}f_i(\vw).\]
    We write its solution as $\hat{\vw}$. Define the empirical covariance and its minimal eigenvalue $
\widehat{\mH}:=\frac{1}{N}\sum_{i=1}^N \vx_i\vx_i^\top,
\mu:=\lambda_{\min}(\widehat{\mH})$.Note that $\cF(\vw)$ is $\mu$-strongly convex. As $\mathds{P}(\mu=0)=0$, we assume that $\mu>0$. Also, $f_i(\vw)$ is quadratic and $L$-smooth with $L\leq D^2$, since $f_i$ is quadratic with its quadratic coefficient $\vx_i\vx_i^T\lesssim D^2\mI$. We write $\kappa=\frac{D^2}{\mu}=O\left(\mu^{-1}\right)$. Furthermore, we adopt the convention that $\vw_{N}^k = \vw_{0}^{k+1}$. 
\paragraph{Technical Lemmas.}
\begin{lemma} \label{lem:small-tail-eigenvalue}
Define two events $\mathcal G:=\Bigl\{\mu\ge \frac{\lambda_{\min}}{2}\Bigr\}, 
\mathcal B:=\mathcal G^{c}$, and define $\delta_N:=2d\cdot \exp\!\Bigl(-\frac{N\lambda_{\min}^2}{32D^4}\Bigr)$. Then we have $\Pr(\mathcal B)\le \delta_N$.
\end{lemma}
\begin{proof}


Let
\[
\widehat{\mH}-\mH=\frac{1}{N}\sum_{i=1}^N \mZ_i,
\qquad
\mZ_i:=\vx_i\vx_i^\top-\mH.
\]
Then $\E[\mZ_i]=0$, and since $\vx_i\vx_i^\top\succeq 0$ and $\|\vx_i\|^2\le D^2$,
we have $\vx_i\vx_i^\top\preceq D^2\mI$, and also $\mH\preceq D^2\mI$ (because
$\lambda_{\max}(\mH)\le \E\|\vx\|^2\le D^2$). Hence
\[
-D^2\mI\ \preceq\ \mZ_i\ \preceq\ D^2\mI,
\qquad\Rightarrow\qquad
\|\mZ_i\|\le D^2\ \text{a.s.}
\]
By Theorem 1.3 in \citep{tropp2012user},
\begin{align}
    \Pr\!\left(\left\|\frac{1}{N}\sum_{i=1}^N \mZ_i\right\|\ge t\right)
\le
2d\cdot \exp\!\left(-\frac{Nt^2}{8D^4}\right). \label{ineq:hatH-H-error-bound}
\end{align}
Substituding $t=\frac{\lambda_{\min}}{2}$,
\[
\Pr\!\left(\left\|\frac{1}{N}\sum_{i=1}^N \mZ_i\right\|\ge \frac{\lambda_{\min}}{2}\right)
\le
2d\cdot \exp\!\left(-\frac{N\lambda_{\min}^2}{32D^4}\right)
=\delta_N.
\]
Finally, since
\[
\mu=\lambda_{\min}(\widehat{\mH})
\ge \lambda_{\min}(\mH)-\|\widehat{\mH}-\mH\|
=\lambda_{\min}-\|\widehat{\mH}-\mH\|,
\]
the event $\{\mu<\lambda_{\min}/2\}$ implies $\|\widehat{\mH}-\mH\|>\lambda_{\min}/2$.
Therefore $\Pr(\mathcal B)\le \delta_N$.
\end{proof}
\begin{lemma}[Finite negative moments of $\mu$ for large $N$]
\label{lem:eigenvalue-expectation-highd}
Fix any $p>0$. Then there exist constants $N_0\in\N$ and $C<\infty$, depending only on
$(p,d,c,D,M)$ (in particular, independent of $N$), such that for all $N\ge N_0$,
\[
\E[\mu^{-p}] \le C.
\]
\end{lemma}

\begin{proof}
Let $V_{d-1}:=\Vol_{d-1}(B_{d-1}(0,1))$ be the volume of $(d-1)$-dimension unit ball, and define
\[
A:=2M\,V_{d-1}\,D^{d-1}.
\]

Fix any $\vu\in\mathbb S^{d-1}$ and $r\in(0,D]$. Since $p(\vx)\le M$ on $\{\|\vx\|\le D\}$,
\[
\Pr\big(|\vu^\top \vx|\le r\big)
=\int_{\{\|\vx\|\le D,\ |\vu^\top \vx|\le r\}} p(\vx)\,d\vx
\le M \,\Vol_d\big(\{\|\vx\|\le D,\ |\vu^\top \vx|\le r\}\big).
\]
The set $\{\|\vx\|\le D,\ |\vu^\top \vx|\le r\}$ is a slab of thickness $2r$ intersected with the radius-$D$ ball,
hence its volume is at most $2r$ times the maximal $(d-1)$-dimensional cross-section $V_{d-1}D^{d-1}$, so
\begin{equation}\label{eq:proj-sb}
\Pr\big(|\vu^\top \vx|\le r\big)\le A\,r
\qquad (\vu\in\mathbb S^{d-1},\ r\in(0,D]).
\end{equation}
In particular, taking $r=D$ and using $|\vu^\top\vx|\le \|\vx\|\le D$ a.s.\ yields
$1=\Pr(|\vu^\top\vx|\le D)\le AD$, hence $A\ge 1/D$.

Fix $t\in\bigl(0,\frac{1}{16A^2N}\bigr]$ and set
\[
r:=2\sqrt{Nt},\qquad \varepsilon:=\frac{\sqrt{Nt}}{D}.
\]
Since $A\ge 1/D$, we have $\frac{1}{16A^2N}\le \frac{D^2}{16N}$, hence $r\le D/2$ and $\varepsilon\le 1/4$.

By the variational characterization,
\[
\mu=\min_{\|\vu\|=1}\frac1N\sum_{i=1}^N(\vu^\top\vx_i)^2.
\]
If $\mu\le t$, let $\vu_\star\in\mathbb S^{d-1}$ attain the minimum. Then for every $i$,
\[
(\vu_\star^\top\vx_i)^2\le \sum_{j=1}^N(\vu_\star^\top\vx_j)^2 \le Nt,
\quad\text{so}\quad
|\vu_\star^\top\vx_i|\le \sqrt{Nt}=\frac r2.
\]
Let $\mathcal N_\varepsilon$ be an $\varepsilon$-net of $\mathbb S^{d-1}$ in Euclidean norm with
$|\mathcal N_\varepsilon|\le (3/\varepsilon)^d$. Pick $\vu\in\mathcal N_\varepsilon$ with
$\|\vu-\vu_\star\|\le \varepsilon$. Using $\|\vx_i\|\le D$,
\[
|\vu^\top\vx_i|
\le |\vu_\star^\top\vx_i|+\|\vu-\vu_\star\|\,\|\vx_i\|
\le \sqrt{Nt}+\varepsilon D
=2\sqrt{Nt}=r
\qquad\forall i.
\]
Therefore,
\[
\{\mu\le t\}\subseteq \bigcup_{\vu\in\mathcal N_\varepsilon}\ \bigcap_{i=1}^N\{|\vu^\top\vx_i|\le r\}.
\]
Using independence and \eqref{eq:proj-sb} (note $r\le D$),
\begin{align}
\Pr(\mu\le t)
&\le |\mathcal N_\varepsilon|\cdot \sup_{\|\vu\|=1}\Pr(|\vu^\top\vx|\le r)^N
\le \Big(\frac{3}{\varepsilon}\Big)^d \,(A r)^N \nonumber\\
&= \Big(\frac{3D}{\sqrt{Nt}}\Big)^d \,\bigl(2A\sqrt{Nt}\bigr)^N
= (3D)^d (2A)^N \,N^{\frac{N-d}{2}}\, t^{\frac{N-d}{2}} .
\label{eq:mu-tiny-tail}
\end{align}

By \Cref{lem:small-tail-eigenvalue}, there exists a constant $\gamma$ such that $\Pr(\mu\le \lambda_d/2)\le d\,\exp(-\gamma N)$.
For $p>0$ and any nonnegative $Y$, we have
\[
\E[Y^{-p}] = p\int_0^\infty t^{-p-1}\Pr(Y\le t)\,dt.
\]
Apply this to $Y=\mu$. Let
\[
t_0:=\frac{1}{16A^2N}.
\]
Choose $N_0:=\left\lceil\max\{d+2p+2,\,(8A^2c)^{-1}\}\right\rceil$. Then when $N\geq N_0$, we have $N>d+2p$ and $t_0\le \frac{\lambda_d}{2}$.
We split the integral at $t_0$ and $\frac{\lambda_d}{2}$:
\begin{align*}
\E[\mu^{-p}]
&= p\int_0^{t_0} t^{-p-1}\Pr(\mu\le t)\,dt
+ p\int_{t_0}^{\frac{\lambda_d}{2}} t^{-p-1}\Pr(\mu\le t)\,dt
+ p\int_{\frac{\lambda_d}{2}}^{D^2} t^{-p-1}\Pr(\mu\le t)\,dt \\
&=: I_3+I_4+I_5.
\end{align*}

\emph{Bound on $I_5$.} Since $\Pr(\mu\le t)\le 1$,
\[
I_5 \le p\int_{\frac{\lambda_d}{2}}^{D^2} t^{-p-1}\,dt = (\frac{\lambda_d}{2})^{-p}.
\]

\emph{Bound on $I_4$.} For $t\le \frac{\lambda_d}{2}$, $\Pr(\mu\le t)\le \Pr(\mu\le \frac{\lambda_d}{2})$, hence by \Cref{lem:small-tail-eigenvalue},
\[
I_4 \le p\,\Pr(\mu\le \frac{\lambda_d}{2})\int_{t_0}^{\frac{\lambda_d}{2}} t^{-p-1}\,dt
\le \Pr(\mu\le \frac{\lambda_d}{2})\,t_0^{-p}
\le d\,e^{-\gamma N}\,(16A^2N)^p.
\]

\emph{Bound on $I_3$.} Using \eqref{eq:mu-tiny-tail} and $N>d+2p$,
\[
\begin{aligned}
I_3
&\le p(3D)^d (2A)^N N^{\frac{N-d}{2}}
\int_0^{t_0} t^{\frac{N-d}{2}-p-1}\,dt
= p(3D)^d (2A)^N N^{\frac{N-d}{2}} \cdot \frac{t_0^{\frac{N-d}{2}-p}}{\frac{N-d}{2}-p}.
\end{aligned}
\]
Since $N\ge d+2p+2$ implies $\frac{N-d}{2}-p\ge 1$, and $t_0=\frac{1}{16A^2N}$, a direct simplification gives
\[
I_3 \le p(3D)^d \,2^{2d+4p}\,A^{d+2p}\,N^p\,2^{-N}.
\]
Combining the bounds for $I_3, I_4$ and $I_5$, we obtain the result.
\end{proof}



    \begin{lemma}[Chung-type lemma for $\alpha< \beta$]\label{lem:chung-alpha-le-beta}
    Let $\{\xi_k\}_{k\ge 0}$ be a sequence of positive real numbers. Suppose there exist $k_0>0$, $A>0$ and
    $\alpha>0,\beta>0$ with $\alpha\le \beta$ such that for all $k\ge 0$,
    \begin{equation}\label{eq:rec-alpha-le-beta}
    \xi_{k+1}
    \;\le\;
    \exp\!\Bigl(-\frac{\alpha}{k_0+k+1}\Bigr)\,\xi_k
    \;+\;
    \frac{A}{(k_0+k+1)^{\beta+1}}.
    \end{equation}
    Then for any $K\ge 1$,
    \begin{equation}\label{eq:chung-master}
    \xi_K
    \;\le\;
    \exp\!\Bigl(-\alpha\sum_{i=1}^K\frac{1}{k_0+i}\Bigr)\,\xi_0+
    \frac{e^{\alpha/(k_0+1)}A}{(k_0+K)^{\alpha}}
    \sum_{k=1}^K (k_0+k)^{\alpha-\beta-1}.
    \end{equation}
    Moreover, when $\alpha < \beta$, \Cref{eq:chung-master} can be further simplified to:
    
    
    \begin{equation}\label{eq:chung-alpha-lt-beta}
    \xi_K
    \;\le\;
    \Bigl(\frac{k_0+1}{k_0+K}\Bigr)^{\alpha}\xi_0
    \;+\;
    \frac{e^{\alpha/(k_0+1)}A}{\beta-\alpha}\cdot
    \frac{\beta-\alpha+1}{(k_0+K)^{\alpha}}.
    \end{equation}
    \end{lemma}
    
    \begin{proof}
    Define for $m\ge 1$
    \[
    a_m:=\exp\!\Bigl(-\frac{\alpha}{k_0+m}\Bigr),
    \qquad
    c_m:=\frac{A}{(k_0+m)^{\beta+1}}.
    \]
    Then we can reweite \eqref{eq:rec-alpha-le-beta} as $\xi_{k+1}\le a_{k+1}\xi_k+c_{k+1}$.
    Unrolling the recursion gives
    \[
    \xi_K
    \le
    \xi_0\prod_{j=1}^K a_j
    +
    \Bigl(\prod_{j=1}^K a_j\Bigr)\sum_{k=1}^K
    \Bigl(\prod_{j=1}^k a_j\Bigr)^{-1}c_k.
    \]
    Note $\prod_{j=1}^m a_j=\exp\!\bigl(-\alpha\sum_{i=1}^m\frac{1}{k_0+i}\bigr)$. Using the standard bound
    \[
    \log\frac{k_0+m}{k_0+1}
    \;\le\;
    \sum_{i=1}^m\frac{1}{k_0+i}
    \;\le\;
    \log\frac{k_0+m}{k_0+1}+\frac{1}{k_0+1},
    \]
    we obtain for all $m\ge 1$:
    \[
    e^{-\alpha/(k_0+1)}\Bigl(\frac{k_0+1}{k_0+m}\Bigr)^{\alpha}
    \;\le\;
    \prod_{j=1}^m a_j
    \;\le\;
    \Bigl(\frac{k_0+1}{k_0+m}\Bigr)^{\alpha}.
    \]
    Hence $(\prod_{j=1}^k a_j)^{-1}\le e^{\alpha/(k_0+1)}\bigl(\frac{k_0+k}{k_0+1}\bigr)^{\alpha}$ and therefore
    \[
    \sum_{k=1}^K\Bigl(\prod_{j=1}^k a_j\Bigr)^{-1}c_k
    \le
    e^{\alpha/(k_0+1)}\frac{A}{(k_0+1)^{\alpha}}
    \sum_{k=1}^K (k_0+k)^{\alpha-\beta-1}.
    \]
    Multiplying by $\prod_{j=1}^K a_j\le \bigl(\frac{k_0+1}{k_0+K}\bigr)^{\alpha}$ yields \eqref{eq:chung-master}.
    
    
    
    When $\alpha<\beta$, since $x\mapsto x^{\alpha-\beta-1}$ is decreasing and convex,
    \begin{align*}
        \sum_{k=1}^K (k_0+k)^{\alpha-\beta-1}
        &\le
        (k_0+1)^{\alpha-\beta-1}+\int_{1}^{K}(k_0+x)^{\alpha-\beta-1}\,dx \\
        &=(k_0+1)^{\alpha-\beta-1}+
        \frac{(k_0+1)^{\alpha-\beta}-(k_0+K)^{\alpha-\beta}}{\beta-\alpha} \\
        &<
        (k_0+1)^{\alpha-\beta-1}+
        \frac{(k_0+1)^{\alpha-\beta}}{\beta-\alpha}]\leq\frac{\beta-\alpha+1}{\beta-\alpha}.
    \end{align*}
    
    Plugging this into \eqref{eq:chung-master} and using $\prod_{j=1}^K a_j\le \bigl(\frac{k_0+1}{k_0+K}\bigr)^\alpha$
    gives \eqref{eq:chung-alpha-lt-beta}.
    \end{proof}

    \begin{lemma}
        Define 
        \[G := \sup_{\cF(\vw)\leq\cF(\vw_0)}\max_{i\in [n]}\|\nabla f_i(\vw)\|.\]
        Then we have
        \begin{align}
            G\leq D\sup_{i\in[n]}|\xi_i|+D^2\left(\|\hat{\vw}-\vw^*\|+\sqrt{\frac{D^2}{2\mu}}\|\hat{\vw}\|\right). \label{ineq:G-bound}
        \end{align}
    \end{lemma}
    \begin{proof}
        By Section A.1 in \citep{ahn2020sgd}, we have 
        \begin{align*}
            \{\cF(\vw)\leq\cF(\vw_0)\}\subseteq \{\|\vw-\hat{\vw}\|^2\leq \frac{\cF(\vw_0)-\cF(\hat{\vw})}{2\mu}\}.
        \end{align*}
        Note that 
        \begin{align*}
            \cF(\vw_0)-\cF(\hat{\vw}) &=\frac{1}{2}\left(\vw_0-\hat{\vw}\right)^{\top}\hat{\mH}\left(\vw_0-\hat{\vw}\right) \\
            &\leq \frac{D^2\|\hat{\vw}\|^2}{2}
        \end{align*}
        and
        \begin{align*}
            \|\nabla f_i(\vw)\|&=\|\left(y_i-\langle\vw, \vx_i\rangle\right) \vx_i\|=\|\langle\vw^*-\vw, \vx_i\rangle\vx_i+\xi_i\vx_i\| \\
            &\leq D|\xi_i|+D^2\|\vw^*-\vw\|\leq D|\xi_i|+D^2\left(\|\vw^*-\hat{\vw}\|+\|\hat{\vw}-\vw\|\right).
        \end{align*}
        Therefore,
        \begin{align*}
            G&\leq\sup_{\|\vw-\hat{\vw}\|\leq\sqrt{\frac{D^2}{2\mu}}\|\hat{\vw}\|}\max_{i\in[n]}D|\xi_i|+D^2\left(\|\vw^*-\hat{\vw}\|+\|\hat{\vw}-\vw\|\right) \\
            &\leq D\sup_{i\in[n]}|\xi_i|+D^2\left(\|\hat{\vw}-\vw^*\|+\sqrt{\frac{D^2}{2\mu}}\|\hat{\vw}\|\right).
        \end{align*}
    \end{proof}
Combining \Cref{lem:chung-alpha-le-beta}, \Cref{ineq:G-bound} and Proposition~16 in \citep{ahn2020sgd}, we derive the following corollary.
    \begin{corollary}
    \label{cor:schedule-111-a<1}
    
    For any $K\ge 1$,
    \begin{align}
    \E_{\pi_1, \cdots, \pi_{K}}\left\|\vw_0^{K+1}-\hat{\vw}\right\|^2
    &\le \frac{\|\hat{\vw}\|^2+\frac{12\sqrt{e}}{D^2}(3+4\kappa)\left(\left(\sup_{i\in[n]}|\xi_i|\right)^2+D^2\|\hat{\vw}-\vw^*\|^2+\frac{D^4}{2\mu}\|\hat{\vw}\|^2\right)}{(NK)^{a}}. \label{ineq:ERM-error-bound}
    \end{align}
    \end{corollary}
    
    \begin{proof}
    Let $ a:=\; \frac{\mu}{2D^2}\in\Bigl(0,\frac{1}{2}\Bigr), B :=\; G^2\Bigl(\frac{3}{D^4}+\frac{4\,\kappa L}{D^6}\Bigr), \text{and} \ \zeta_t:=\mathbb E\|\vw_t-\vw^\star\|^2$ for the global iterates $\{x_t\}_{t=0}^{T}$ ($T=NK$).
    By Proposition~16 in \citep{ahn2020sgd}, for each update with stepsize $\eta_t$,
    \[
    \zeta_{t+1}
    \le \Bigl(1-\frac{\mu}{2}\eta_t\Bigr)\zeta_t
    +3\eta_t^2G^2+4\eta_t^3\kappa L G^2.
    \]
    Using $1-x\le e^{-x}$ and $\eta_t=\frac{1/D^2}{1+t}$, we obtain
    \[
    \zeta_{t+1}
    \le \exp\!\Bigl(-\frac{\mu}{2}\cdot \frac{1/D^2}{1+t}\Bigr)\zeta_t
    +\frac{3G^2}{D^4(1+t)^2}+\frac{4\kappa L G^2}{D^6(1+t)^3}.
    \]
    With $a=\frac{\mu}{2D^2}$ and $\frac{1}{(1+t)^3}\le \frac{1}{(1+t)^2}$, this simplifies to
    \[
    \zeta_{t+1}
    \le \exp\!\Bigl(-\frac{a}{1+t}\Bigr)\zeta_t
    +\frac{B}{(1+t)^2},
    \qquad
    B:=G^2\Bigl(\frac{3}{D^4}+\frac{4\,\kappa L}{D^6}\Bigr).
    \]
    
    Now apply Lemma~\ref{lem:chung-alpha-le-beta} with
    \[
    k_0=0,\qquad \alpha=a,\qquad \beta=1,\qquad A=B,
    \]
    at $K=T=NK$. This yields that
    \begin{align*}
    \E_{\pi_1, \cdots, \pi_{K}}\left\|\vw_0^{K+1}-\hat{\vw}\right\|^2
    &\le
    \frac{\|\vw_0-\hat{\vw}\|^2}{(NK)^{a}}+
    \frac{2e^{a}}{1-a}\cdot \frac{B}{(NK)^{a}} \\     
    &\le
    \frac{\|\vw_0-\hat{\vw}\|^2+\frac{4\sqrt{e}G^2}{D^4}(3+4\kappa)}{(NK)^{a}}.
    \end{align*}
    Recall that $\vw_0 = 0$. Combining this inequality with \Cref{ineq:G-bound}, and note that $(a+b+c)^2\leq3(a^2+b^2+c^2)$, we have the result.
    \end{proof}

    \begin{lemma}
\label{lem:bound-RHS-split}

Define $a_0:=\frac{\lambda_{\min}}{4D^2}.$
Then there exists a constant $N_0$ such that when $N>N_0$, we have
\begin{equation}\label{eq:exp-RHS-d}
\E[\|\vw_T-\hat{\vw}\|^2]
\;\le\;
\frac{C_9}
{(KN)^{a_0}}
\;+\;
C_{10}\delta_N^{\frac{1}{2}}.
\end{equation}
\end{lemma}

\begin{proof}
    Define $\mX=\left(\vx_1, \cdots, \vx_N\right)^{\top}, \vy = \left(y_1, \cdots, y_N\right)^{\top}, \bm\xi = \left(\xi_1, \cdots, \xi_N\right)^{\top}$. We have 
\begin{align}
    \hat{\vw} &= \left(\mX^{\top}\mX\right)^{-1}\mX^{\top}\vy \nonumber \\
    &= \left(\mX^{\top}\mX\right)^{-1}\mX^{\top}\left(\mX\vw^*+\bm\xi\right) \nonumber \\
    &=\vw^*+\left(\mX^{\top}\mX\right)^{-1}\mX^{\top}\bm\xi. \label{equ:hat-w}
\end{align}
Thus
\begin{align*}
    \E_{\bm\xi}\|\hat{\vw}-\vw^*\|^2&=\sigma^2\tr\left(\mX^{\top}\mX\right)^{-1}\leq\frac{\sigma^2d}{\mu N},
\end{align*}
and
\begin{align*}
    \E_{\bm\xi}\|\hat{\vw}\|^2\leq 2\left(\|\hat{\vw}-\vw^*\|^2+\|\vw^*\|^2\right)\leq\frac{2\sigma^2d}{\mu N}+2\|\vw^*\|^2.
\end{align*}
Consequently, we have
\begin{align*}
    \E_{\bm\xi}\E_{\text{shuffle}}\left\|\vw_0^{K+1}-\hat{\vw}\right\|^2
    &\le \frac{C_{11}\mu^{-2}+C_{12}\mu^{-1}+C_{13}}{(NK)^{a}}. 
\end{align*}


Therefore,
\begin{align*}
    \E[\|\hat{\vw}-\vw^*\|^2]
&=\E[\|\hat{\vw}-\vw^*\|^2\mathbf 1_{\mathcal G}]+\E[\|\hat{\vw}-\vw^*\|^2\mathbf 1_{\mathcal B}] \\
&\le
\frac{C_{11}\left(\frac{2}{\lambda_d}\right)^{2}+C_{12}\left(\frac{2}{\lambda_d}\right)+C_{13}}
{(KN)^{a_0}}+\E[\left(C_{11}\mu^{-2}+C_{12}\mu^{-1}+C_{13}\right)\mathbf 1_{\mathcal B}].
\end{align*}

By \Cref{lem:eigenvalue-expectation-highd}, we have $\E[\mu^{-4}]<\infty$ when $N>d+8$.
Thus by H\"older and $\Pr(\mathcal B)\le \delta_N$,
\[
\E[\kappa\mathbf 1_{\mathcal B}]
\le (\E[\kappa^2])^{1/2}\,\Pr(\mathcal B)^{1-1/2}
\]
Plugging these bounds into the previous display proves \eqref{eq:exp-RHS-d}.
\end{proof}
Now we begin the main part of the proof of \Cref{thm:optimality-multi-epoch}.
\begin{proof}[Proof of \Cref{thm:optimality-multi-epoch}]

    Note that through a direct calculation, we have $\cL(\vw)=\frac{\lambda}{2}(\vw_T-\vw^*)^2$. Then we can upper bound $\Bar{\cR}(K,N;E)$:
    \begin{align*}
        \Bar{\cR}(K,N;E)&=\E\frac{(\vw_T-\vw^*)^{\top}\mH(\vw_T-\vw^*)}{2} \\
        &=\E\frac{(\vw_T-\hat{\vw}+\hat{\vw}-\vw^*)^{\top}\mH(\vw_T-\hat{\vw}+\hat{\vw}-\vw^*)}{2} \nonumber \\
        &= \E\frac{1}{2}\langle\mH,(\hat{\vw}-\vw^*)(\hat{\vw}-\vw^*)^{\top}\rangle+\E\frac{1}{2}\langle\mH,(\vw_T-\hat{\vw})(\vw_T-\hat{\vw})^{\top}\rangle  \\
        &+\E{(\vw_T-\hat{\vw})^{\top}\mH(\hat{\vw}-\vw^*)}  \\
        &=I_6+I_7+I_8.
    \end{align*}
    Next, we characterize $I_6, I_7$ and $I_8$ separately.
    
    By \Cref{equ:hat-w}, we have \[I_6=\E\frac{\sigma^2d}{2N}\langle\mH, \hat{\mH}^{-1}\rangle=\frac{\sigma^2d}{2N}\left(1+\E\langle\mH,\hat{\mH}^{-1}-\mH^{-1}\rangle\right).\]
    We view $\frac{\sigma^2d}{2N}$ as the main term, and view $\frac{\sigma^2d}{2N}\E\langle\mH,\hat{\mH}^{-1}-\mH^{-1}\rangle$ as the error term. A naive bound of the error term can be given as follows:
    \begin{align*}
        \E\langle\mH,\hat{\mH}^{-1}-\mH^{-1}\rangle&\leq \E D^2\|\hat{\mH}^{-1}-\mH^{-1}\|=\E D^2\left\|\mH^{-1}\left(\mH-\hat{\mH}\right)\hat{\mH}^{-1}\right\| \\
        &\leq \E\frac{D^2}{\lambda_d\mu} \|\mH-\hat{\mH}\|\leq\frac{D^2}{\lambda_d}\sqrt{\E\mu^{-2}}\sqrt{\E\|\mH-\hat{\mH}\|^2}.
    \end{align*}
    By \Cref{lem:eigenvalue-expectation-highd}, $\E\mu^{-2}\leq C$ for some constant $C$ independent of $N$; by \Cref{ineq:hatH-H-error-bound} and Proposition 2.5.2 in \citep{vershynin2018high}, we have $\E\|\mH-\hat{\mH}\|^2\leq \Tilde{C}N^{-0.5}$ for some constant $\Tilde{C}$. Therefore, we have $I_6=\frac{\sigma^2d}{2N}(1+o(1))$.
    
    By \Cref{lem:bound-RHS-split}, we can derive an upper bound for $I_7$:
    \begin{align*}
        I_7\leq \frac{D^2}{2}\E\left\|\vw_T-\hat{\vw}\right\|^2\leq \frac{D^2}{2}\left(\frac{C_9}{(KN)^{a_0}}+C_{10}\sqrt{\delta_N}\right),
    \end{align*}
    where $\delta_N=o\left(N^{-1}\right)$.

    $I_8$ can be bounded by $I_6$ and $I_7$ using the Cauchy-Schwarz inequality:
    \begin{align*}
        I_8&=\langle{\mH^{0.5}(\vw_T-\hat{\vw}), \mH^{0.5}(\hat{\vw}-\vw^*)}\rangle\leq\sqrt{2I_7}\sqrt{2I_6}=2\sqrt{I_6I_7}.
    \end{align*}
    By letting $K\to\infty$ and combining the characterizations of $I_6,I_7$ and $I_8$, we obtain the result.
\end{proof}

\Cref{thm:optimality-multi-epoch} together with \Cref{thm:sub-optimality-one-pass} gives a characterization for $E(\infty,N)$. Specifically, we solve \[\frac{\sigma^2d}{2N}(1+o(1))=\bar{\cR}(\infty, N, E)=\bar{\cR}(1, N',E)\geq\frac{C\sigma^2d\kappa_{\mH}}{N'}.\] 
Thus we have 
\[E(\infty, N)\geq 2C\kappa_{\mH}.\]
$E(\infty, N)$ grows linearly with the condition number $\kappa_{\mH}$, which can be arbitrarily large. This proves that under practical training setups using polynomial learning rate schedules, multi-epoch training shows a significant advantage over one-pass training.

\end{document}